\title{Optimism in Face of a Context:\\
Regret Guarantees for Stochastic Contextual MDP}
\author {
    Orin Levy,\textsuperscript{\rm 1}
    Yishay Mansour \textsuperscript{\rm 1,2}
}
\newcommand{\R}{\mathbb{R}} 
\newcommand{\N}{\mathbb{N}} 
\newcommand{\Reg}{\text{E.Regret}}
\newcommand{\Regrv}{\text{Regret}}
\newcommand{\E}{\mathbb{E}} 
\newcommand{\Prob}{\mathbb{P}} 
\newcommand{\D}{\mathcal{D}} 
\newcommand{\C}{\mathcal{C}} 
\newcommand{\F}{\mathcal{F}} 
\newcommand{\G}{\mathcal{G}} 
\newcommand{\Fp}{\mathcal{P}} 
\newcommand{\M}{\mathcal{M}} 
\newcommand{\Hist}{\mathbb{H}} 
\newcommand{\Mhat}{\widehat{\mathcal{M}}}
\newcommand{\I}{\mathbb{I}} 
\newcommand{\B}{\mathcal{B}} 
\newcommand{\V}{\mathbb{V}} 
\newtheorem{theorem}{Theorem}[section]
\newtheorem{lemma}[theorem]{Lemma}
\newtheorem{claim}[theorem]{Claim}
\newtheorem{corollary}[theorem]{Corollary}
\newtheorem{definition}[theorem]{Definition}
\newtheorem{remark}[theorem]{Remark}
\newtheorem{conclusion}[theorem]{Conclusion}
\newtheorem{assumption}[theorem]{Assumption}
\newtheorem{observation}[theorem]{Observation}
\begin{document}

\maketitle

\begin{abstract}
    We present regret minimization algorithms for stochastic contextual MDPs under minimum reachability assumption, using an access to an offline least square regression oracle.
    We analyze three different settings: where the dynamics is known, where the dynamics is unknown but independent of the context and the most challenging setting where the dynamics is unknown and context-dependent. For the latter, our algorithm obtains regret bound of
    $\widetilde{O}( (H+{1}/{p_{min}})H|S|^{3/2}\sqrt{|A|T\log(\max\{|\mathcal{G}|,|\mathcal{P}|\}/\delta)})$ with probability $1-\delta$, where $\mathcal{P}$ and $\mathcal{G}$ are finite and realizable function classes used to approximate the dynamics and rewards respectively, $p_{min}$ is the minimum reachability parameter, $S$ is the set of states, $A$ the set of actions, $H$ the horizon, and $T$ the number of episodes.
    To our knowledge, our approach is the first optimistic approach applied to contextual MDPs with general function approximation (i.e., without additional knowledge regarding the function class, such as it being linear and etc.).
    We present a lower bound of $\Omega(\sqrt{T  H |S| |A| \ln(|\mathcal{G}|)/\ln(|A|)})$, on the expected regret which holds even in the case of known dynamics.
    Lastly, we discuss an extension of our results to CMDPs without minimum reachability, that obtains $\widetilde{O}(T^{3/4})$ regret. 
\end{abstract}

\section{Introduction}
Markov decision processes (MDPs) have been extensively studied, and are commonly used to describe dynamic environments. MDPs characterize a variety of real-life tasks and applications including: advertising, healthcare, games, robotics and more, where at each episode an agent interacts with the environment with the goal of maximizing her return. (See, e.g., \citet{Sutton2018,MannorMT-RLbook}.)

In many applications, in each episode, there are additional external factors that affect the environment, which we refer to as the \emph{context}.
One way to handle this is to extend the state space to include the context. This approach has the disadvantage of greatly increasing the state space, and, as a result, the complexity of learning and even the representation of a policy. An alternative approach, is to keep a small state space, and regard the context as an additional side-information. Contextual Markov Decision Process (CMDP) describes such a model, where for each context there is a potentially different optimal policy \cite{hallak2015contextual}.

CMDPs are useful to model many user-driven applications, where the context is a user-related information which influences the optimal decision making.
%
%
One natural application is in recommendation systems,
%
%
where two different users might behave completely different from one another, hence, a single MDP can not describe them both. 
%
In those systems, users behavior can be described using a side information about them, such as age, gender, interest fields and hobbies. This information
is referred to as the \emph{context} which influences the environment.
%
CMDP defines a mapping from context to a related MDP, and the optimal policy given a context is the optimal policy in the related MDP.
%
%

\noindent\textbf{Our contributions.}
We present regret minimization algorithms for CMDP under three different settings: (1) known dynamics, (2) unknown context-independent dynamics and (3) unknown context-dependent dynamics, which is the most challenging.
In all settings we assume an access a least square regression oracle, and finite function classes $\G$ and $\Fp$ used to approximate the rewards and dynamics, respectively. In addition, we assume minimum reachability, where any policy for any context has a probability of at least $p_{min}$ to reach any state.
For the known dynamics setting we obtain  
$ \widetilde{O}(( H+ {1}/{p_{min}})\cdot|S|\sqrt{T |A|  \log({|\G|}/{\delta}}))$  
regret.
For the unknown context-independent dynamics we obtain 
$\widetilde{O} \Big(
H^{1.5}|S| \sqrt{T |A|}\log({1}/{\delta})
+ 
(H+1/{p_{min}} )\cdot |S| \sqrt{ |A| T\log(|\G|/\delta)}\Big)$
regret.
For the unknown context-dependent dynamics we obtain regret of $\widetilde{O}(( H+ {1}/{p_{min}})\cdot H|S|^{3/2}\sqrt{|A|T\log(\max\{|\G|,|\Fp|\}/\delta)})$.
All of the bounds hold with high probability.
We also show a lower bound of $\Omega(\sqrt{T H |S| |A| \ln(|\G|)/\ln(|A|)})$ on the expected regret.
 Lastly, we discuss an extension of our results to CMDP without minimum reachability, that obtains $\widetilde{O}(T^{3/4})$ regret, in~\cref{sec:extention}.

Our approach applies the ``optimism in face of uncertainty'' principle to CMDPs and achieves a sub-linear 
regret.
Our algorithms and analysis were inspired by the optimistic approach of~\citet{xu2020upper} for learning contextual multi armed bandits using least square regression oracle.
We extended their approach to handle CMDPs and even a context-dependent dynamics.

\subsection{Related Work}\label{subsec:related-work}
%
%
\noindent\textbf{Contextual Reinforcement Leaning.}
CMDP was first introduce by \citet{hallak2015contextual}. \citet{modi2018markov} gives a general framework for deriving generalization bounds for smooth CMDPs and finite contextual linear combination of MDPs. \citet{modi2020no}
gives a regret bound of $\widetilde{O}(\sqrt{T})$ for Generalized Linear Models (GLMs). Our regret function approximation framework is more general than GLM.

\citet{foster2021statistical} 
present a new statistical complexity measure
for interactive decision making, and show an application of it to obtain $\widetilde{O}(\sqrt{T})$ regret for Contextual RL. They assume an access to an online estimation oracle with regret guarantees, that maximizes over models and policies together. It is unclear when is this oracle implementable in polynomial time. In contrast, we make a significantly weaker and standard assumption regarding an offline regression oracle. Another difference is that we use an optimistic approach while they use inverse gap weighting. (More details later.)
%

\citet{jiang2017contextual} present OLIVE which is sample efficient for Contextual Decision Processes (CDP) with low Bellman rank. We do not make any assumptions on the Bellman rank.

\citet{levy2022learning} consider the sample complexity of learning CMDPs using function approximation. They provide the first general and efficient reduction from CMDP to offline supervised learning. Their sample complexity varies from $\widetilde{O}(1/\epsilon^2)$ to $\widetilde{O}(1/\epsilon^8)$, depending on the setting. We, in contrast, consider regret minimization and obtain $\widetilde{O}(\sqrt{T})$ regret under  the minimum reachability assumption.



\noindent\textbf{Contextual Bandits.} Contextual bandits (CMAB) are a natural extension of the Multi-Arm Bandit (MAB), augmented by a context which influences the rewards \cite{Slivkins-book,MAB-book}. \citet{agarwal2014taming} use efficiently an optimization oracle  to derive an optimal regret bound. 
Regression based approaches appear in \citet{agarwal2012contextual,foster2018practical,foster2020beyond,simchi2021bypassing}.
We differ from CMAB, since our main challenge is the dynamics, and the need to optimize future rewards, which is the case in most RL settings.\\
\citet{xu2020upper} present the first optimistic algorithm for CMAB. They assume an access to a least-square regression oracle and achieve $\widetilde{O}(\sqrt{T |A| \log |\F|})$ regret, where $\F$ is a finite and realizable function class uses to approximate the rewards. 
Our algorithms and analysis are inspired by their optimistic approach and we extend it to CMDP.


\noindent\textbf{Inverse Gap Weighting (IGW) technique.}\label{IGM}
\citet{foster2020beyond,simchi2021bypassing}
apply the IGW technique to CMAB and obtain $\widetilde{O}(\sqrt{T|A|})$ regret, assuming an access to a least square regression oracle.
However, we do not see any straight-forward extension of their approach to CMDP which is both computationally efficient and has an optimal regret, under the same least-square oracle assumption (even when the dynamics is known to the learner).
%
%
\citet{foster2021statistical} apply IGW to CMDP and obtain optimal regret. However they use the much strong online estimation oracle as discussed above. 
%

\noindent\textbf{Paper organization.} \cref{sec:prelimineries} contains the notations we use, and our assumptions.~\cref{sec:KCDD,sec:UCFD,sec:UCDD} contain an outline of our algorithms and regret analysis for each one of the settings. 
\cref{sec:LB} presents our lower bound and \cref{sec:extention} sketches an extension of our result to CMDPs without minimum reachability. We discuss our results in \cref{sec:disscution}.
The supplementary material
can be found in~\citet{levy2022optimism}.

\section{Preliminaries and Notations}\label{sec:prelimineries}
\noindent\textbf{Markov Decision Process (MDP)}
   is a tuple $(S,A,P,r,s_0, H)$, where (1) $S$ is a finite state space,  (2) $A$ is a finite action space, (3) $s_0\in S$ is the unique start state, 
   (4) $P(\cdot|s,a)$ defines the transition probability function, i.e.,  $P(s' | s,a)$ is the probability that we reach state $s'$ given that we are in state $s$ and perform action $a$,
   (5) $R(s,a)\in[0,1]$ is a  random variable  for the reward of performing action $a$ in state $s$, and $r(s,a)$ is its expectation, i.e., $r(s,a) = \mathbb{E}[R(s,a)|s,a] $, and 
   (6) $H$ is the finite horizon.
    
    The state space is decomposed into $H+1$ disjoint subsets (layers) $S_0, S_1, \ldots, S_{H-1}, S_H$ such that transitions are only possible between consecutive layers (i.e., loop-free). 
     There is a unique final state, i.e., $S_H = \{s_H\}$, with reward $0$.

\noindent\textbf{Policy.}
    A \emph{stochastic policy} $\pi$ is a mapping from states to distribution over actions, i.e., $\pi : S \to \Delta(A)$.  
    A \emph{deterministic policy} $\pi$ is a mapping from states to actions, i.e., $\pi : S \to A$.

\noindent\textbf{Occupancy measure}~(see e.g., \citet{puterman2014markov,zimin2013online}).
Let $q_h(s,a | \pi, P)$ denote the probability of reaching state $s\in S$ and performing action $a\in A$ at time $h \in [H]$ of an episode generated using  policy $\pi$ and dynamics $P$. Let $q_h(s | \pi,P)=\sum_{a\in A}q_h(s,a | \pi, P)$ be the probability to visit state $s\in S$ at time $h$. 
%
%



\noindent\textbf{Episode and trajectory.}
At the start of each episode we select a policy $\pi$.
The episode starts at the unique initial state $s_0$. In state $s_h \in S_h$, we play action $a_h\sim\pi(\cdot | s_h)$, observe a reward $r_h \sim R(s_h, a_h)$ and move to $s_{h+1} \sim P(\cdot| s_h, a_h)$. We generate
%
a trajectory $\sigma_{H+1} = (s_0, a_0, r_0, s_1, \ldots, s_{H-1}, a_{H-1}, r_{H-1},s_H)$ of length $H+1$.

\noindent\textbf{Value functions.}
Given a policy $\pi$ and a MDP 
    $
        M 
        = 
        (S,A,P,r,s_0, H)
    $, 
the
$h \in [H-1] $ stage value function of a state $s \in S_h$ is defined as 
    $
        V^{\pi}_{M,h} (s)
        = 
        \mathbb{E}_{\pi, M} 
        [
        \sum_{k=h}^{H-1} r(s_k, \pi(s_k))|s_h = s ]
    $ and for an action ${a \in A}$ we have
    $
        Q^{\pi}_{M,h} (s,a)
        = 
        \mathbb{E}_{\pi, M}
        [
        \sum_{k=h}^{H-1} r(s_k, \pi(s_k))|s_h = s, a_h=a ]
    $.
When $h = 0$ we denote $V^{\pi}_{M,0}(s_0) := V^{\pi}_M (s_0)$.
%
%

\noindent\textbf{Optimal policy}
   $\pi^\star_M$ for MDP $M$ satisfies, for every stage $h \in [H-1]$ and a state $s\in S_h$, 
    $
        \pi^\star_{M,h} (s) \in
        \arg \max_{\pi}\{V^{\pi}_{M,h}(s)\}
    $, and w.l.o.g it is a deterministic policy. 

\noindent\textbf{Planning.}
Given an MDP $M= (S, A, P, r, s_0, H)$ the algorithm $\texttt{Planning}(M)$ returns an optimal policy $\pi^\star_M$ and its value $V^\star_M(s_0)$ and runs in time $O(|S|^2\; |A|\; H)$.


\noindent\textbf{Contextual Markov Decision Process (CMDP)} is a tuple $(\mathcal{C},S, A, \mathcal{M})$ where $\mathcal{C}\subseteq \mathbb{R}^{d'}$ is the context space, $S$  the state space and $A$  the action space. The mapping $\mathcal{M}$  maps a context $c\in \mathcal{C}$ to a MDP
    $
        \mathcal{M}(c) 
        =
        (S, A, P^c_\star, r^c_\star,s_0, H)
    $, 
where $r^c_\star(s,a) = \E[R^c_\star(s,a)|c,s,a]$, $R^c_\star(s,a) \sim \D_{c,s,a}$.

There is an unknown distribution $\mathcal{D}$ over the context space $\mathcal{C}$, and for each episode a context is sampled i.i.d. from $\mathcal{D}$.
For mathematical convenience, we assume the context space is finite (but potentially huge). Our results naturally extend to infinite contexts space.

\noindent\textbf{Context-Independent and Context-Dependent dynamics.}
A CMDP has a \emph{context-independent} dynamics when the context effects only the rewards function, while the dynamics are identical for all contexts, i.e., $ P^c_\star = P$ for any context $c$.
A \emph{context-dependent} dynamics has a potentially different dynamics $P^c_\star$ for each context $c$.
Hence, the partition of the states space into layers is also context-dependent. We denote by $S^c_h$ the $h$ layer of context $c$.


\noindent\textbf{Context-dependent policies.} A stochastic context-dependent policy $\pi = \left( \pi(c;\cdot): S \to \Delta(A) \right)_{c \in \mathcal{C}}$ maps a context $c \in \mathcal{C}$ to a stochastic policy $\pi(c;\cdot) : S \to \Delta(A)$.
    A deterministic context-dependent policy $\pi = \left( \pi(c;\cdot): S \to A) \right)_{c \in \mathcal{C}}$ maps a context $c \in \mathcal{C}$ to a policy $\pi(c;\cdot) : S \to A$.    
    Let $\Pi_\C$ denote the class of all deterministic context-dependent policies.
%
%
    A 
    context-dependent policy $\pi^\star \in \Pi_\C$ is \emph{optimal} if for all $c \in \C$ it holds that
    $
        \pi^\star(c;\cdot) \in 
        \arg\max_{\pi}
        V^{\pi(c;\cdot)}_{\M(c)}(s_0)
    $
    \footnote{As for non-contextual MDP, there always exists a deterministic context-dependent policy that is optimal.}.


\noindent\textbf{Minimum reachability.}
We assume that there exists $p_{min} \in (0,1]$ such that for every $c \in \C$, $h \in [H-1]$ and $s_h \in S^c_h$, any context-dependent policy $\pi \in \Pi_{\C}$ satisfies
$
    q_h(s_h | \pi(c;\cdot), P^c_\star) \geq p_{min}.
$
%
Let $q(s|\pi(c;\cdot),P^c_\star)$ denote the probability of visiting state $s$ when playing $\pi$ on the dynamics $P^c_\star$.
When the dynamics is layered and loop-free, then 
$
    q(s|\pi(c;\cdot),P^c_\star) = q_h(s | \pi(c;\cdot), P^c_\star) \geq p_{min} 
$ iff $ s \in S^c_h$.
We remark that our minimum reachability assumption is more refined than that usually used in RL literature, that $P^c_\star(s'|s,a) \geq p_{min}$ (see, e.g.,~\citet{wei2021last}) for every context $c $ and $(s,a,s')$.
Clearly, this requirement implies our minimum reachability, but the other direction does not necessarily hold.
An \emph{example} for a large class of (non-layered) CMDPs that satisfies that assumption is as follows.
(1) At the initial step, for all $c \in \C, a \in A, s' \in S: P^c_{\star,0}(s'|s_0,a) \geq p_{min}$.
(2) For every step $h >0$, the transition probability matrix $P^c_{\star,h}(\cdot|\cdot,a)$ is double stochastic for all $c \in \C$ and $a \in A$.
This guarantees that for any policy $\pi$, the occupancy measure is at least $p_{min}$. 


\noindent\textbf{Interaction protocol.}
    In each episode $t=1,2,...,T$ the agent:
    (1) Observes context $c_t \in \mathcal{C}$.
    (2) Chooses a policy $\pi_t$ (based on $c_t$ and the observed history). 
    (3) Observes a trajectory of $\pi_t$  in $\mathcal{M}(c_t)$.

\noindent\textbf{Trajectories and History.}{
Each episode is of length $H$.
A trajectory $\sigma = (c; s_0, a_0, r_0, \ldots, s_{h-1},a_{h-1},r_{h-1})$ is generated using the dynamics $P^c_\star$ and the played policy $\pi(c;\cdot)$.
We denote the history up to time $t-1$ by $\Hist_{t-1} = (\sigma^1, \ldots, \sigma^{t-1})$ where $\sigma^i$ is the trajectory observed in time $i \in [t-1]$, i.e., $\sigma^i= (c_i, s^i_0, a^i_0, r^i_0, \ldots, s^i_{H-1},a^i_{H-1},r^i_{H-1},s^i_H)$.
}

\noindent\textbf{Offline least square regression (LSR) oracle}
solves the optimization problem
$
    {\hat{f} \in \arg\min_{f \in \F }\sum_{i=1}^n (f(x_i) - y_i)^2}
$, 
given a data set $D = \{(x_i, y_i)\}_{i=1}^n$.
We remark that there exist function classes for them the LSR  oracle can be implemented efficiently. Clearly, this holds for linear functions.

\noindent\textbf{Reward function approximation.}\label{par:reward-function-approx}
We consider a finite function class $\G \subseteq (\C \times A \to [0,1])$ to approximate the context-dependent rewards function of each state $s \in S$.
Many times it would be more convenient to consider a finite function class 
$\F = \G^{S}$ where $f\in \F$ are functions of the form 
$f(c,s,a) = g_s(c,a)$ where $g_s \in \G$. Note that, $\log(|\F|) = |S|\log(|\G|)$.
Our algorithms get as input the finite function class ${\F \subseteq (\C \times S \times A \to [0,1]})$.
Each function $f \in \F$ maps context $c \in \C$, state $s \in S$ and action $a \in A$ to a (approximate) reward $r \in [0,1]$.
We use $\F$ to approximate the context-dependent rewards function using the LSR oracle under the following realizability assumption.
\begin{assumption}[rewards realizability]\label{assm:rewards-realizability}
    We assume that $\F$ is realizable, meaning, there exists a function $f_\star \in \F$ such that 
    $
       f_\star(c,s,a) = r^c_\star(s,a) = \E[R^c_\star(s,a)|c,s,a]
    $. 
\end{assumption}
For mathematical convenience, we state our algorithms and regret upper bounds in terms of the cardinality of $|\F|$, and use the cardinality of $|\G|=S^{-1}\log |\F|$ for our lower bound. We present a comparison between the bounds in~\cref{sec:disscution}. 

\noindent\textbf{Dynamics function approximation.}
For the unknown context-independent dynamics case we simply use a tabular approximation (see~\cref{sec:UCFD}). For the unknown context-dependent case, our algorithm gets as input a finite  function class $\Fp \subseteq  (S \times ( S \times A \times \C) \to [0,1])$, where every function $P \in \Fp$ satisfies 
$
    \sum_{s' \in S} P(s'|s,a,c)  = 1$ for all $c \in \C$ and $(s,a) \in S \times A
$.
We use $\Fp$ to approximate the context-dependent dynamics using LSR oracle under the following realizability assumption. We denote $P^c(s'|s,a) = P(s'|s,a,c)$ for all $P \in \Fp$. 
%
%

\begin{assumption}[dynamics realizability]\label{assm:dynamics-realizability}
We assume that $\Fp$ is realizable, meaning, there exists a function $P_\star \in \Fp$ which is the true context-dependent dynamics.
\end{assumption}

\noindent\textbf{Learning goal. }
Our goal is to minimize the regret,
relative to the optimal context-dependent policy $\pi^\star$,
which defined as
$\Regrv_T:= \sum_{t=1}^T 
    V^{\pi^\star(c_t;\cdot)}_{\M(c_t)}(s_0)
    -
    V^{\pi_t(c_t;\cdot)}_{\M(c_t)}(s_0),
$
where $c_t \in \C$, $\pi^\star$ is an optimal context-dependent policy 
and $\pi_t \in \Pi_\C$ are the context and the selected policy at round $t$.
 We denote the expected regret as 
 $
     {\Reg_T:=
     \E \left[ \Regrv_T \right]}
 $ where the expectation is over the contexts, the randomization of the algorithm and the history.


\section{Known Context-Dependent Dynamics}\label{sec:KCDD}
In this section, we present a regret minimization algorithm for contextual MDPs under the minimum reachability assumption, where the context-dependent dynamics $P^c_\star$ is known to the learner.
We remark that the minimum reachability parameter $p_{min}$ is unknown to the learner. 
This section sets the main building blocks of our approach, which we will later extend to handle the unknown dynamics cases.

\noindent\textbf{Algorithm outline.}
For the first $|A|$ rounds, 
in each round $i \in\{1,2,\ldots,|A|\}$ the agent plays the policy $\pi_i \in \Pi_\C$ that always selects action $a_i$, regardless of the context and the state.
At every round $t>|A|$ we approximate the context-dependent rewards function using a least-square minimizer.
Using it, we build an ``optimistic in expectation'' rewards function, and compute an optimal policy for that optimistic model. We run it to generate a trajectory and update the oracle. 
Here, we take an advantage of the ability to compute the optimal policy $\pi_k(c;\cdot)$ for every context $c \in \C$ separately, for all $k = |A|+1, \ldots, t$, to obtain computationally efficient algorithm. (We discuss this challenge later.)

\begin{algorithm}
    \caption{Regret Minimization for CMDP with Known Dynamics (RM-KD)}
    \label{alg:RM-CMDP-KD-main}
    \begin{algorithmic}[1]
        \STATE
        { 
            \textbf{inputs:}  MDP parameters: 
                $S$, $A$, $P_\star$, $s_0$, $H$. 
                Confidence $\delta>0$ and tuning parameters $\{\beta_t\}_{t=1}^T$.
        } 
        \STATE{\textbf{initialization:}
        in round $i\leq |A|$
        run $\pi_i(c;s) := a_i$}
        \FOR{round $t = |A| + 1, \ldots, T$}
            \STATE{ 
            $
                \hat{f}_t \in 
                \arg\min_{f \in \F}
                \sum_{i=1}^{t-1} \sum_{h=0}^{H-1} ( f(c_i,s^i_h,a^i_h) - r^i_h)^2
            $ is being computed using the LSR oracle
            }
            \STATE{observe a fresh context $c_t \sim \D$}
            \FOR{$k= |A|+1 ,|A|+2, \ldots, t$}
                \STATE{compute for all $(s,a) \in S \times  A:
                    $
                    $    
                        \widehat{r}_k^{c_t}(s, a) = \hat{f}_k(c_t,s,a) + \frac{\beta_k}{  \sum_{i=1}^{k-1}\I[a = \pi_i(c_t;s)]q(s | \pi_i(c_t;\cdot),P^{c_t}_\star)}
                    $ 
                    }
                \STATE{define the optimistic approximated MDP\\$\Mhat_k(c_t) = (S, A, P^{c_t}_\star, \hat{r}^{c_t}_k, s_0, H)$}
                \STATE{compute $\pi_k(c_t;\cdot) \in \arg\max_{\pi \in S \to A}V^{\pi}_{\Mhat_k(c_t)}(s_0)$ using a planning algorithm}
            \ENDFOR{}    
            \STATE{play $\pi_t(c_t;\cdot)$ and update oracle using\\${\sigma^t = (c_t, s^t_0, a^t_0, r^t_0, s^t_1, \ldots, s^t_{H-1}, a^t_{H-1}, r^t_{H-1},s^t_H) }$}
            \ENDFOR{}
    \end{algorithmic}
\end{algorithm}

\begin{remark}
    Since the CMDP is layered, for every context $c \in \C$, layer $h \in [H]$ and state $s_h\in S_h^c$ we have $q_h(s_h | \pi_i(c;\cdot),P^{c}_\star)=q(s_h | \pi_i(c;\cdot),P^{c}_\star)$.
    For convenience, in~\cref{alg:RM-CMDP-KD-main} we use $q$ to compute the approximated rewards function, but in the regret analysis we use $q_h$.
\end{remark}

\noindent\textbf{Analysis outline.} 
Our analysis consists of four main steps.

\noindent\textbf{Step 1}:\label{step-1}  
establish uniform convergence bound over any $t \geq 2$ and a fixed sequence of functions $f_2,f_3,\ldots \in \F$. 
(See~\cref{lemma:UC-lemma-5}). 
Our bound implies for the least square minimizers sequence $\{\hat{f}_t\}_{t=|A|+1}^T$ and any $\delta \in (0,1)$,
that with probability at least $1-\delta/2$ for all $t\geq 2$ it holds that
\begin{align*}
    \sum_{i=1}^{t-1} 
    \sum_{h=0}^{H-1} \mathop{\E}_{c_i,s^i_h,a^i_h}
    \Big[(\hat{f}_t(c_i, s^i_h, a^i_h)-&f_\star(c_i, s^i_h, a^i_h) )^2 | \Hist_{i-1}\Big]
    \\
    &\leq
    68 H \log(4|\F|t^3/\delta).
\end{align*}

\noindent\textbf{Step 2:}\label{step-2} 
construct a confidence bound over the value of any given policy w.r.t the true rewards function $f_\star$ and the least square minimizer at round $t$, $\hat{f_t}$. The confidence bound holds with high probability, in expectation over the contexts (i.e., ``optimism in expectation''). 
Formally, in~\cref{lemma:CB-policy}
we show that with probability at least $1-\delta/2$ for all $t>|A|$ and any policy $\pi \in \Pi_\C$ it holds that  
\begin{align*}
    |\E_c[V^{\pi(c;\cdot)}_{\M(c)}(s_0)] - \E_c&[V^{\pi(c;\cdot)}_{\M^{(\hat{f}_t,P_\star)}(c)}(s_0)]|
    \\
    & 
    \leq
    \sqrt{\phi_t(\pi)}
    \cdot \sqrt{ 68H\log(4|\F|t^3/\delta)},
\end{align*}
where $\phi_t(\pi)$ is the contextual potential of $\pi$ at round $t$, which is defined as 
$
    \phi_t(\pi) : = \E_c \left[ \sum_{h=0}^{H-1} \sum_{s_h \in S^c_h} \frac{q_h(s_h|\pi(c;\cdot) ,P^c_\star)}{ \sum_{i=1}^{t-1} \I[\pi(c;s_h)= \pi_i(c;s_h)]q_h(s_h | \pi_i(c;\cdot),P^c_\star)}\right]
$ and 
$\M^{(f,P_\star)}(c) = (S,A,P^c_\star,f(c,\cdot,\cdot),s_0, H)$ for any $f \in \F$. The true MDP is $\M(c) = \M^{(f_\star,P_\star)}(c)$. Also, $\pi_i$ is the selected policy at round $i$.

\noindent\textbf{Step 3:}\label{step-3}  
relax the confidence bound of step $2$ to be additive. 
In~\cref{lemma:sq-trick} we show that under the good event of step 2, 
for all $t>|A|$ and any policy $\pi \in \Pi_\C$, for $\beta_t = \sqrt{ \frac{17 t \log(4|\F|t^3 /\delta)  }{|S||A|}}$ it holds that
\begin{align*}
    |\E_c[V^{\pi(c;\cdot)}_{\M(c)}(s_0) ]-  \E_c&[V^{\pi(c;\cdot)}_{\M^{(\hat{f}_t,P_\star)}(c)}(s_0)]| 
    \\
    &
    \leq
    \beta_t \cdot (\phi_t(\pi)
    +    {H |S| |A|}/{t} )
    .
\end{align*}

\noindent\textbf{Step 4:}\label{step-4} 
bound the cumulative contextual potential $\phi_t$
over every round $t =|A|+1, |A|+2, \ldots ,T$.
In~\cref{lemma:Contextual-Potential} we show that  
For any sequence of selected policies $\{\pi_t \in \Pi_\C\}_{t=1}^T$ it holds that 
$$
    \sum_{t=|A| +1}^T \phi_t(\pi_t)\leq {|S||A|}{p^{-1}_{min}}(1+ \log(T/|A|))
    .
$$
%
By combining all the steps and applying Azuma's inequality, we obtain the following regret bound.
(See~\cref{thm:regret-bound-kd} and for a bound in terms of $|\G|$ see~\cref{corl:regret-kd-}).
%
\begin{theorem}[regret bound]\label{thm:regret-KD-main}
    For any $T > |A|$, finite function class $\F$ and $\delta \in (0,1)$, let $\beta_t = \sqrt{ \frac{17 t \log(4|\F|t^3/\delta) }{|S||A|}}$ for all $t \in [T]$.
    Then, with probability at least $1-\delta$ we have
    $$
        \Regrv_T(\text{RM-KD})
        \le
        \widetilde{O}(
        (  p^{-1}_{min} + H )\sqrt{T |S||A|  \log{|\F|}/{\delta}}
        )
    .
    $$
\end{theorem}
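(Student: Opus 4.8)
The plan is to follow the four-step decomposition already set up, combining the lemmas through an optimism argument after first passing from the realized regret to its conditional expectation via a martingale bound. First I would split the regret. Since $\hat f_t$, and hence the context-dependent policy $\pi_t$, is fully determined by $\Hist_{t-1}$ while $c_t \sim \D$ is drawn fresh, the quantity $X_t := \E_c[V^{\pi^\star(c;\cdot)}_{\M(c)}(s_0) - V^{\pi_t(c;\cdot)}_{\M(c)}(s_0)]$ is $\Hist_{t-1}$-measurable and equals the conditional expectation of the realized per-round regret (the regret is defined through value functions, so the only per-round randomness given $\Hist_{t-1}$ is the context). Therefore $\Regrv_T - \sum_{t} X_t$ is a sum of martingale differences bounded by $H$, and Azuma's inequality controls it by $\widetilde O(H\sqrt{T\log(1/\delta)})$, which is lower order. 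It remains to bound $\sum_{t=|A|+1}^T X_t$ on the good events of Steps 1--3.

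The core is an optimism argument driven by one algebraic identity: for any $\pi\in\Pi_\C$ the expected reward inflation produced by the exploration bonus is exactly $\beta_t\phi_t(\pi)$, i.e.\ $\E_c[V^{\pi(c;\cdot)}_{\Mhat_t(c)}(s_0)] = \E_c[V^{\pi(c;\cdot)}_{\M^{(\hat f_t,P_\star)}(c)}(s_0)] + \beta_t\phi_t(\pi)$, because under fixed (known) dynamics a value difference equals the occupancy-weighted sum of per-state reward differences, and the bonus summed against $q_h(\cdot\mid\pi)$ reproduces $\phi_t(\pi)$. Feeding $\pi=\pi^\star$ into this identity together with the additive confidence bound of \cref{lemma:sq-trick} makes the $\beta_t\phi_t(\pi^\star)$ terms cancel, yielding $\E_c[V^{\pi^\star(c;\cdot)}_{\Mhat_t(c)}(s_0)] \ge \E_c[V^{\pi^\star(c;\cdot)}_{\M(c)}(s_0)] - \beta_t H|S||A|/t$. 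Since the algorithm computes $\pi_t(c;\cdot)$ as the $\Mhat_t(c)$-optimal policy for every context separately, $\E_c[V^{\pi_t(c;\cdot)}_{\Mhat_t(c)}(s_0)] \ge \E_c[V^{\pi^\star(c;\cdot)}_{\Mhat_t(c)}(s_0)]$, giving the optimistic upper bound $\E_c[V^{\pi^\star(c;\cdot)}_{\M(c)}(s_0)] \le \E_c[V^{\pi_t(c;\cdot)}_{\Mhat_t(c)}(s_0)] + \beta_t H|S||A|/t$. Applying the same identity and \cref{lemma:sq-trick} to $\pi_t$ lower-bounds the true value of the played policy, $\E_c[V^{\pi_t(c;\cdot)}_{\M(c)}(s_0)] \ge \E_c[V^{\pi_t(c;\cdot)}_{\Mhat_t(c)}(s_0)] - 2\beta_t\phi_t(\pi_t) - \beta_t H|S||A|/t$. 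Subtracting the two bounds, the $\Mhat_t$-value of $\pi_t$ cancels and I obtain the per-round estimate $X_t \le 2\beta_t\phi_t(\pi_t) + 2\beta_t H|S||A|/t$.

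Finally I would sum over $t$. Since $\beta_t$ is nondecreasing, $\beta_t\le\beta_T$, so the first term sums to at most $2\beta_T\sum_t\phi_t(\pi_t)$, which by \cref{lemma:Contextual-Potential} is bounded by $2\beta_T |S||A|p^{-1}_{min}(1+\log(T/|A|))$; plugging in $\beta_T = \sqrt{17T\log(4|\F|T^3/\delta)/(|S||A|)}$ gives $\widetilde O(p^{-1}_{min}\sqrt{T|S||A|\log(|\F|/\delta)})$. The second term is at most $2\beta_T H|S||A|\sum_t 1/t = \widetilde O(H\sqrt{T|S||A|\log(|\F|/\delta)})$, where $\sqrt{|S||A|}$ survives from the ratio $|S||A|/\sqrt{|S||A|}$. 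Adding the Azuma contribution and taking a union bound over the good events of \cref{lemma:UC-lemma-5,lemma:CB-policy} yields, with probability at least $1-\delta$, the claimed $\widetilde O((p^{-1}_{min}+H)\sqrt{T|S||A|\log(|\F|/\delta)})$ bound.

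I expect the main obstacle to be the optimism step: one must verify that the exploration bonus is calibrated so that its expected value inflation matches precisely the $\phi_t$-term of the confidence width, and that the per-context greedy planning actually delivers optimism \emph{in expectation over contexts} rather than only pointwise. Reconciling the per-context optimization carried out by the algorithm with the in-expectation confidence bounds of Steps~2--3 is the delicate conceptual point; the summation and the martingale reduction are otherwise routine.
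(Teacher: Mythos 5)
Your proposal is correct and follows essentially the same route as the paper: the martingale/Azuma reduction, the exact identity $\E_c[V^{\pi(c;\cdot)}_{\Mhat_t(c)}(s_0)] = \E_c[V^{\pi(c;\cdot)}_{\M^{(\hat f_t,P_\star)}(c)}(s_0)] + \beta_t\phi_t(\pi)$ (the paper's Eq.~(\ref{eq:V-t-def})), optimism via \cref{lemma:sq-trick} combined with per-context planning optimality (\cref{lemma:Policies-equivalence}), the per-round bound $2\beta_t\phi_t(\pi_t)+2\beta_t H|S||A|/t$ (exactly the paper's policy-difference lemma, \cref{lemma:policy-difference-KD}), and the summation via \cref{lemma:Contextual-Potential}. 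Your reorganization of the optimism step as two inequalities with the $\beta_t\phi_t(\pi^\star)$ cancellation is just a rephrasing of the paper's chain through $\max_{\pi\in\Pi_\C}$, and the "delicate point" you flag is in fact resolved pointwise per context, so the argument goes through as written.
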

We remark that in all of our algorithms, for $T \le |S||A|$ the regret is trivially bounded by $|S||A|H$. 

\subsection{Main Technical Challenges and Our Technique}\label{subsec:main-technical-challenges}
Following steps $2$ and $3$, a natural ``optimistic in expectation'' strategy is to select at round $t$
\begin{align*}
    \pi_t &\in \arg\max_{\pi \in \Pi_\C}
     \left\{ \E_c[V^{\pi(c;\cdot)}_{\M^{(\hat{f}_t,P_\star)}(c)}(s_0)] 
    + \beta_t \cdot \phi_t(\pi)
    \right\}
    \\
    &
    = \arg\max_{\pi \in \Pi_\C}
    \left\{\E_c[V^{\pi(c;\cdot)}_{\Mhat_t(c)}(s_0)]\right\}.
\end{align*}
This approach has an obvious three major drawbacks.\\
(1) The distribution over the contexts, $\D$, is unknown. Hence, we cannot compute $\E_c\left[V^{\pi(c;\cdot)}_{\Mhat_t(c)}(s_0)\right]$, for any policy $\pi$.\\
(2) Even when $\D$ is known, computing  $\pi_t \in \Pi_\C$ is intractable when the context space $\C$ is large.\\
(3) The representation of a context-dependent policy $\pi_t$ scales with the size of the context space $|\C|$, which can be huge. 

We overcome these hurdles using two observations.
The first observation is that
$$
    \max_{\pi \in \Pi_\C}\left\{\E_c \left[V^{\pi(c;\cdot)}_{\Mhat_t(c)}(s_0)\right]\right\} =
    \E_c \left[ \max_{\pi(c;\cdot) \in S \to A} V^{\pi(c;\cdot)}_{\Mhat_t(c)}(s_0)  \right]
    .
$$
We conclude that 
to compute a context-dependent policy $\pi_t \in \Pi_\C$ which maximizes LHS, we can compute for each context $c \in \C$ separately, a policy
$\pi_t(c;\cdot): S \to A$ that is optimal for $\Mhat_t(c)$.
For each context $c \in \C$ separately, solving the maximization problem in RHS can be done efficiently using a standard planning algorithm. 

The second observation is that in every round $t$, we do not have to know the full representation of $\pi_k$, for all $k \leq t$, but only
the mappings $\{\pi_k(c_i;\cdot)\}_{k=1}^t$ for the observed contexts $\{c_i\}_{i=1}^t$. 
By taking an advantage of these two observations,
at every round $t \leq T$, we compute $\pi_k(c_t;\cdot)$ for all $k \leq t$, which can be done in $poly(|S|,|A|, H,t)$ using a planning algorithm. Using this, we obtain an efficient algorithm which is independent of $|\C |$.


\section{Unknown Context-Independent Dynamics}\label{sec:UCFD}
In this section, we assume the dynamics is unknown to the learner, but is independent of the context. Meaning, for all $c \in \C$, $P^c_\star = P_\star$.
We also assume the learner knows the (context-independent) partition of the states space to layers, $S = \{S_0, \ldots, S_H\}$, and the minimum reachability $p_{min}$.

\noindent\textbf{Algorithm overview.}
Similarly to~\cref{alg:RM-CMDP-KD-main}, we define an optimistic-in-expectation rewards function, but, since the dynamics is unknown, we replace $q(s|\pi_i(c_t;\cdot),P^{c_t}_\star)$ with its lower bound $p_{min}$.
We denote by $N_t(s,a)$ and $N_t(s,a,s')$ the number of visits to $(s,a)$ and $(s,a,s')$, respectively, up to round $t$.
To approximate the dynamics, we use a tabular approximation and maintain the following confidence bounds over it, denote them by $\xi_t(s,a) = 2\sqrt{\frac{|S| + 2 \log(4 |S||A|T^2/\delta)}{\max\{1, N_t(s,a)\}}}$, for all $(s,a)\in S \times A$.  
At round $t$, we compute an optimistic model w.r.t the rewards function $\widehat{r}^{c_t}_t$ and a deterministic optimal policy $\pi_t(c_t;\cdot)$ for it
, under the constraints that the optimistic dynamics is within the confidence interval. 
(See~\cref{Appendix:UCID} for an efficient implementation).
We remark that the resulting optimistic approximated dynamics is context-dependent, since it was computed w.r.t the context-dependent approximated rewards function.
%
\begin{algorithm}
    \caption{(sketch) Regret Minimization for Unknown Context Independent Dynamics (RM-UCID)}
    \label{alg:RM-UCID-main}
    \begin{algorithmic}[1]
        \FOR{round $t >|A|$}
            \STATE{
            $
                \hat{f}_t \in 
                \arg\min_{f \in \F}
                \sum_{i=1}^{t-1} \sum_{h=0}^{H-1} ( f(c_i,s^i_h,a^i_h) - r^i_h)^2
            $ 
            is computed using the LSR oracle
            }
            \STATE{compute the empirical model for all $(s,a,s'): \Bar{P}_t(s'|s,a) = \frac{N_t(s,a,s')}{\max\{1, N_t(s,a)\} }$}
            \STATE{observe a fresh context $c_t \sim \D$}
            \FOR{$k= |A|+ 1, \ldots, t$}
                \STATE{compute for all
                    $
                        (s,a) \in S \times A:$\\
                    $
                        \widehat{r}_k^{c_t}(s, a) = \hat{f}_k(c_t,s,a) +
                        \frac{\beta_k}{ p_{min} \sum_{i=1}^{k-1}\I[a = \pi_i(c_t;s)]}
                    $}
                \STATE{compute an optimistic model $\Mhat_k(c_t) = (S, A, \widehat{P}^{c_t}_k, \hat{r}^{c_t}_k, s_0, H)$ and policy $\pi_k(c_t;\cdot)$}
                \COMMENT{See~\cref{alg:foa} in~\cref{Appendix:UCID}}
            \ENDFOR{}    
            \STATE{play $\pi_t(c_t;\cdot)$, observe trajectory $\sigma^t$ and update oracle}
            \ENDFOR{}
    \end{algorithmic}
\end{algorithm}

\noindent\textbf{Analysis overview.}
We construct confidence intervals for both the dynamics and rewards. 
For the analysis, we define an intermediate CMDPs where for all $t>|A|$ and $c \in \C$: 
(1) $\M^{(f,\widehat{P}_t)}(c) = (S, A, \widehat{P}^c_t, f(c,\cdot,\cdot), s_0, H)$, $f \in \F$ and $\widehat{P}^c_t$ is the optimistic dynamics w.r.t $\widehat{r}^c_t$ defined in~\cref{alg:RM-UCID-main}.
(2) $\M^{(f,P_\star)}(c) = (S, A, P_\star, f(c,\cdot,\cdot), s_0, H)$, $f \in \F$ and $P_\star$ is the true dynamics.
(3) $\M^{(\widehat{r}_t, P_\star)}(c) = (S,A,P_\star,\widehat{r}^c_t,s_0, H)$.
Let $\pi^\star \in \Pi_\C$ be an optimal policy of the true CMDP.

\noindent\textbf{Analysing the error caused by the rewards approximation.}
Similar to the analysis for the known dynamics (\cref{sec:KCDD}), we show in~\cref{lemma:sq-trick-UCID}
that with high probability,
for all $t>|A|$, and any policy $\pi \in \Pi_\C$ the following holds:
\begin{align*}
    | 
    \E_c[V^{\pi(c;\cdot)}_{\M(c)}(s_0)]
    -
    \E_c&[V^{\pi(c;\cdot)}_{\M^{(\hat{f}_t ,P_\star)}(c)}(s_0)]
    |
    \\
    &\hspace{10mm} \leq 
   \beta_t \left(\phi_t(\pi) + {H|S||A|}/{t} \right),
\end{align*}
where we abuse the contextual potential in round $t$ as
$
    \phi_t(\pi):=\E_c[ \sum_{h = 0}^{H-1} \mathop{\sum}_{s_h \in S_h} 
    \frac{ q_h(s_h, \pi(c;s_h)| \pi(c;\cdot),P_\star)}{  p_{min}\sum_{i=1}^{t-1}\I[ \pi(c;s_h) = \pi_i(c;s_h)] } ]
$.

\noindent\textbf{Analysing the error caused by the dynamics approximation.}
We show that with high probability the following good event holds.
For all $t > |A|$ and $(s,a) \in S\times A$, we have that 
$\|\Bar{P}_t(\cdot|s,a) - P_\star(\cdot|s,a)\|_1 \leq \xi_t(s,a)$.
In~\cref{lemma:optimality-of-pi-t-and-p-t} we show that 
under this good event, our optimistic approximated model $\Mhat_t(c) = (S,A,\widehat{P}^c_t,\widehat{r}^c_t,s_0, H)$ and the selected policy $\pi_t(c;\cdot)$ satisfy for all $c \in \C$ and $t>|A|$ that
$
    V^{\pi_t(c;\cdot)}_{\Mhat_t(c)}(s_0)
    \geq 
     V^{\pi^\star(c;\cdot)}_{\M(\widehat{r}^c_t,P_\star)}(s_0)
$.
When combining the latter inequality with the confidence bounds over the rewards, we obtain 
(see~\cref{lemma:optimism}), 
for all $t>|A|$ that
\begin{align*}
    \E_c [ V^{\pi^\star(c;\cdot)}_{\M(c)}(s_0)]
    -\E_c [V^{\pi_t(c;\cdot)}_{\Mhat_t(c)} (s_0) ]
    \leq 
    \beta_t \cdot
    {H|S||A|}/{t} 
    .
\end{align*}
Moreover,
in~\cref{lemma:UCID-dynamixs-value-error-over-t} 
we show that under good event of the dynamics approximation, for $T>|S||A|$ with high probability it holds that 
\begin{align*}
        \sum_{t= |A| + 1}^T \E_c[&V^{\pi_t(c;\cdot)}_{\M^{(\hat{f}_t, \widehat{P}_t)}(c)}] - \E_c[V^{\pi_t(c;\cdot)}_{\M^{(\hat{f}_t ,P_\star)}(c)}]
        \\
        &\hspace{8mm}\leq 
        {O}(H^{1.5}|S|\sqrt{|A|T  }\log(|S||A|T^2/\delta))
    .
\end{align*}
Lastly, we bound $\sum_{t= |A|+1}^T \phi_t(\pi_t)$ similarly to~\cref{sec:KCDD}.
\\
By combining all the above, and applying Azuma's inequality, we obtain the following regret bound
(see~\cref{thm:regret-bound-ucid} and for bound in terms of $|\G|$ see~\cref{corl:regret-bound-ucid-g}).
\begin{theorem}[regret bound]
    For any $T >|S||A|$, finite function class $\F$ and $\delta \in (0,1)$, for the choice of $\beta_t = \sqrt{ \frac{17 t\log(8|\F|t^3/\delta)}{ |S||A|}}$ for all $t$, 
    with probability at least $1-\delta$, 
    \begin{align*}
        \Regrv_T(\text{RM-UCID}&)
        \leq
        \widetilde{O} \Big(
         H^{1.5}|S| \sqrt{T |A|}\log({1}/{\delta})
         \\
         + &
         (H+{p^{-1}_{min}} )\cdot \sqrt{ 
        |S| |A| T\log(|\F|/\delta)}
        \Big)
        .  
    \end{align*}
\end{theorem}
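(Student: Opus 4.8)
The plan is to bound the realized regret $\Regrv_T(\text{RM-UCID})$ by charging the $|A|$ initialization rounds at most $|A|H$ and controlling the remaining rounds through the stated lemmas. Since $c_t\sim\D$ is sampled afresh and $\pi_t(c_t;\cdot)$ depends only on $\Hist_{t-1}$ and $c_t$, the realized per-round regret $V^{\pi^\star(c_t;\cdot)}_{\M(c_t)}(s_0)-V^{\pi_t(c_t;\cdot)}_{\M(c_t)}(s_0)$ has conditional mean $\E_c[V^{\pi^\star(c;\cdot)}_{\M(c)}(s_0)-V^{\pi_t(c;\cdot)}_{\M(c)}(s_0)]$ given $\Hist_{t-1}$, and lies in $[0,H]$. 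Azuma's inequality then gives, with probability at least $1-\delta/4$,
\begin{align*}
    \Regrv_T \le {}& |A|H + O\!\big(H\sqrt{T\log(1/\delta)}\big)
    \\
    &+ \sum_{t=|A|+1}^T \E_c\big[V^{\pi^\star(c;\cdot)}_{\M(c)}(s_0)-V^{\pi_t(c;\cdot)}_{\M(c)}(s_0)\big],
\end{align*}
so it remains to bound the cumulative conditional expected regret.

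For each $t>|A|$ I would first apply the optimism guarantee of~\cref{lemma:optimism} (which itself builds on~\cref{lemma:optimality-of-pi-t-and-p-t}) to replace $\E_c[V^{\pi^\star(c;\cdot)}_{\M(c)}]$ by $\E_c[V^{\pi_t(c;\cdot)}_{\Mhat_t(c)}]+\beta_tH|S||A|/t$, reducing the summand to $\E_c[V^{\pi_t(c;\cdot)}_{\Mhat_t(c)}-V^{\pi_t(c;\cdot)}_{\M(c)}]+\beta_tH|S||A|/t$ (all at $s_0$). I would then telescope $V^{\pi_t}_{\Mhat_t}-V^{\pi_t}_{\M}$ through the intermediate models $\M^{(\hat{f}_t,\widehat{P}_t)}(c)$ and $\M^{(\hat{f}_t,P_\star)}(c)$:
\begin{align*}
    V^{\pi_t}_{\Mhat_t}-V^{\pi_t}_{\M}
    ={}& \big(V^{\pi_t}_{\Mhat_t}-V^{\pi_t}_{\M^{(\hat{f}_t,\widehat{P}_t)}}\big)
    \\
    &+ \big(V^{\pi_t}_{\M^{(\hat{f}_t,\widehat{P}_t)}}-V^{\pi_t}_{\M^{(\hat{f}_t,P_\star)}}\big)
    \\
    &+ \big(V^{\pi_t}_{\M^{(\hat{f}_t,P_\star)}}-V^{\pi_t}_{\M}\big).
\end{align*}
The first difference is the expected cumulative exploration bonus (the only gap between $\widehat{r}^c_t$ and $\hat{f}_t$), which equals $\beta_t$ times a contextual-potential term amenable to the same counting argument as $\phi_t(\pi_t)$; the second is bounded, after summing over $t$, by~\cref{lemma:UCID-dynamixs-value-error-over-t}; and the third is bounded by $\beta_t(\phi_t(\pi_t)+H|S||A|/t)$ via~\cref{lemma:sq-trick-UCID}.

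Summing over $t$, the $\beta_tH|S||A|/t$ terms contribute $\sum_t\beta_tH|S||A|/t=\widetilde O(H\sqrt{T|S||A|\log(|\F|/\delta)})$, using $\beta_t=\sqrt{17t\log(8|\F|t^3/\delta)/(|S||A|)}$ and $\sum_{t\le T}t^{-1/2}=O(\sqrt T)$. For the potential terms, since $\beta_t$ is nondecreasing I would bound $\sum_t\beta_t\phi_t(\pi_t)\le\beta_T\sum_t\phi_t(\pi_t)$ and invoke the contextual-potential bound (the analogue of~\cref{lemma:Contextual-Potential}), $\sum_{t>|A|}\phi_t(\pi_t)\le|S||A|p_{min}^{-1}(1+\log(T/|A|))$; with $\beta_T=\sqrt{17T\log(8|\F|T^3/\delta)/(|S||A|)}$ this gives $\widetilde O(p_{min}^{-1}\sqrt{T|S||A|\log(|\F|/\delta)})$ (and the exploration-bonus term is of the same order). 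Adding the dynamics contribution $\widetilde O(H^{1.5}|S|\sqrt{|A|T}\log(1/\delta))$, the Azuma and initialization terms, and taking a union bound over the good events of the invoked lemmas and the Azuma event (each failing with probability at most a constant fraction of $\delta$), yields the stated bound.

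I expect the main obstacle to be the dynamics contribution, i.e.\ establishing~\cref{lemma:UCID-dynamixs-value-error-over-t}. One must convert the per-pair $\ell_1$ confidence widths $\xi_t(s,a)=2\sqrt{(|S|+2\log(4|S||A|T^2/\delta))/\max\{1,N_t(s,a)\}}$ into a horizon-amplified value gap between the optimistic dynamics $\widehat{P}_t$ and the true $P_\star$ along the played policy, and then sum $\sum_t\sum_{s,a}q_t(s,a)\xi_t(s,a)$ by a visitation-counting argument using $\sum_t N_t(s,a)^{-1/2}=O(\sqrt T)$ (after splitting off the low-count regime). The $\sqrt{|S|}$ inside $\xi_t$ together with the $O(H)$ amplification of one-step transition errors over the horizon are precisely what produce the $H^{1.5}|S|\sqrt{|A|T}$ scaling. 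A secondary subtlety is bookkeeping: the reward analysis and the dynamics analysis hold on separate good events and must be combined by a single union bound, while ensuring the optimism step of~\cref{lemma:optimism} remains valid on the intersection.
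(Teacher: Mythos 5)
Your proposal is correct and follows essentially the same route as the paper's proof: Azuma's inequality to pass from realized to conditionally expected regret (charging $|A|H$ for initialization), the optimism lemma to replace $\E_c[V^{\pi^\star}_{\M(c)}]$ by $\E_c[V^{\pi_t}_{\Mhat_t(c)}]+\beta_t H|S||A|/t$, a telescoping through the same intermediate models $\M^{(\hat f_t,\widehat P_t)}(c)$ and $\M^{(\hat f_t,P_\star)}(c)$, with the dynamics term handled by \cref{lemma:UCID-dynamixs-value-error-over-t}, the rewards term by \cref{lemma:sq-trick-UCID} plus the contextual-potential bound, the exploration-bonus term by the $\widehat P_t$-version of the potential bound (\cref{lemma:Contextual-Potential-UCID}), and a final union bound over the good events. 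Your sketch of how \cref{lemma:UCID-dynamixs-value-error-over-t} is proved (value-difference plus $\xi_t$ widths plus visitation counting) also matches the paper's argument.
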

For more details see~\cref{Appendix:UCID}.

\section{Unknown Context-Dependent Dynamics}\label{sec:UCDD}
In this section, we consider the most challenging case, where the dynamics is unknown and context-dependent.
We assume an access to a finite  function class $\Fp \subseteq  (S \times (S \times A \times \C) \to [0,1])$, for which every function $P \in \Fp$ satisfies 
$
    \sum_{s' \in S} P(s'|s,a,c)  = 1, \;\; \forall (s,a,c) \in S \times A \times \C
$.
We use $\Fp$ to approximate the context-dependent dynamics under the dynamics realizability assumption (\cref{assm:dynamics-realizability}).

\noindent\textbf{Algorithm outline.}
In Algorithm RM-UCDD (\cref{alg:RM-CMDP-UCDD-main}),
we approximate both the rewards and the dynamics using a LSR oracle. The first $|A|$ rounds are initialization rounds, as before. 
At round $t> |A|$, we compute the approximated rewards function for the context $c_t$ as is done in previous sections. For the dynamics approximation, we use the least square minimizer $\widehat{P}_t$. 
We define the approximated model for $c_t$, compute an optimal policy $\pi_t(c_t;\cdot)$ for it and run it to generate a trajectory and update the oracles.
We feed the LSR oracle for the dynamics with samples of the form $((c_t, s^t_h,a^t_h, s'), \I[s'=s^t_{h+1}])$ for all $t\le T$, $h \in [H-1]$ for every $s' \in S$, where $\I$ is an indicator function.
\begin{algorithm}
    \caption{Regret Minimization for CMDP with Unknown Context-Dependent Dynamics}
    \label{alg:RM-CMDP-UCDD-main}
    \begin{algorithmic}[1]
        \STATE
        { 
            \textbf{inputs:} MDP parameters: 
                $S $, $A$, $H$, $s_0$. Confidence  $\delta>0$ and tuning parameters $\{\beta_t,\gamma_t\}_{t=1}^T$. Minimum reachability parameter $p_{min}>0$.
        } 
        \STATE{\textbf{initialization:} in round $i\leq |A|$
        run $\pi_i(c;s) := a_i$}
        \FOR{round $t =  |A| + 1, \ldots, T$}
            \STATE{
            $
                \hat{f}_t \in 
                \arg\min_{f \in \F}
                \sum_{i=1}^{t-1} \sum_{h=0}^{H-1} ( f(c_i,s^i_h,a^i_h) - r^i_h)^2
            $ is computed using the LSR oracle 
            }
            \STATE{ also compute  
            $
                {\widehat{P}_t \in 
                \arg\min_{\widetilde{P} \in \Fp}
                \sum_{i=1}^{t-1} \sum_{h=0}^{H-1} \sum_{s' \in S}}$
                $
                ( \widetilde{P}^{c_i} (s'| s^i_h,a^i_h) - \I[s' = s^i_{h+1}])^2
            $ using the LSR oracle
            }
            \STATE{observe a fresh context $c_t \in \C$}
            \FOR{$k=|A|+1, |A|+ 2, \ldots, t$}
                \STATE{compute for all 
                    $
                        (s,a)\in S \times A :$\\
                    $
                        \hat{r}_k^{c_t}(s, a) = \hat{f}_k(c_t,s,a) + 
                        \frac{\beta_k + H|S|\gamma_k}{  p_{min}\sum_{i=1}^{k-1}\I[a = \pi_i(c_t;s)]}
                    $
                    }
                \STATE{define $\Mhat_k(c_t) = (S, A, \widehat{P}^{c_t}_k, \hat{r}^{c_t}_k, s_0, H)$}
                \STATE{
                compute $\pi_k(c_t,\cdot) \in \arg\max_{\pi \in S \to A}V^{\pi}_{\Mhat_k(c_t)}(s_0)$ using planning algorithm}
            \ENDFOR{}    
            \STATE{play $\pi_t(c_t,\cdot)$, observe trajectory $\sigma^t$ and update oracles}
            \ENDFOR{}
    \end{algorithmic}
\end{algorithm}

\noindent\textbf{Analysis outline.}
In the analysis, we define the following intermediate MDPs for any context $c \in \C$:
(1) $\M^{(\widehat{r}_t, P)}(c) = (S,A,P^c,\widehat{r}^c_t,s_0, H)$ 
for context-dependent dynamics $P \in \Fp$, where $\widehat{r}^c_t$ is the approximated rewards function in round $t$, which defined in~\cref{alg:RM-CMDP-UCDD-main}. By definition, $\Mhat_t(c) = \M^{(\widehat{r}_t, \widehat{P}_t)}(c)$.
(2) $\M^{(f, P)}(c) = (S,A,P^c,f(c,\cdot,\cdot),s_0, H)$ for any $f \in \F$ and $P \in \Fp$.
By definition, $\M(c) = \M^{(f_\star, P_\star)}(c)$.
We denote by $\psi_t(\pi)$ the contextual potential at round $t$, which is defined as
$
    \psi_t(\pi) : = \E_c [ \sum_{h=0}^{H-1} \sum_{s_h \in S^c_h} \frac{q_h(s_h|\pi(c;\cdot) ,\widehat{P}^c_t)}{ p_{min}\sum_{i=1}^{t-1} \I[\pi(c;s_h)= \pi_i(c;s_h)]}]
$.

\noindent\textbf{Analysing the error caused by the rewards approximation.}
Similar to the known dynamics setting (\cref{sec:KCDD}), 
we show 
in~\cref{lemma:sq-trick-rewards-UCDD}
that with 
probability at least $1-\delta/4$, 
for all $t>|A|$ and $\pi \in \Pi_\C$ the following holds.
\begin{align*}
    |\E_c[V^{\pi(c;\cdot)}_{\M^{(\hat{f}_t ,\widehat{P}_t)}(c)}(s_0)] &-\E_c[V^{\pi(c;\cdot)}_{\M^{({f}_\star ,\widehat{P}_t)}(c)}(s_0)]|
    \\
    &\hspace{10mm}\leq  
    \beta_t (\psi_t(\pi) + {H|S||A|}/{t} )
    .
\end{align*}

\noindent\textbf{Analysing the error of the dynamics approximation.}

\noindent\textbf{Key observation.}
Let $\B$ be a random variable which generates the next state $s_{h+1}$ given the true dynamics associated with $c$, $P^c_\star$, the state $s_h$ and the action $a_h$.
The random variable 
$\B(P^c_\star,s_h,a_h)$ is distributed $P^c_\star(\cdot|s_h,a_h)$.
Our observation is that since the CMDP is layered, given the context $c_t$ state $s_h^t$ and action $a_h^t$, we have that the random variables
$\B(P^{c_t}_\star,s_h^t,a_h^t)$ and $( s^t_0, a^t_0, s^t_1, \ldots, s^t_{h-1}, a^t_{h-1}) $ are independent random variables.
Using that observation, we are able to extend our uniform convergence bound to the dynamics approximation. Hence, we can apply the four steps strategy above for the dynamics approximation as well.

\noindent\textbf{Step 1:}\label{step-1-dynamics}  
establish uniform convergence bound over any $t \geq 2$ and a fixed sequence of functions $P_2,P_3,\ldots \in \Fp$.
(See~\cref{lemma:UC-lemma-5-dynamics}).
The bound implies that for the least square minimizers sequence $\{\widehat{P}_t\}_{t=|A|+1}^T$ 
with high probability, for all $t>|A|$,
\begin{align*}
    \sum_{i=1}^{t-1} \sum_{h=0}^{H-1} 
    \mathop{\E}_{c_i,s^i_h,a^i_h}
    \Big[
    \|\widehat{P}^{c_i}_t(\cdot| s^i_h, a^i_h)& - P^{c_i}_\star(\cdot| s^i_h, a^i_h)\|^2_2
    \;\Big| \Hist_{i-1}\Big]
    \\    
    &\leq 
    72H|S|\log(8|\Fp|t^3/\delta)
    .
\end{align*}

\noindent\textbf{Step 2:}\label{step-2-dynamics} 
construct a confidence bound over the value of any given policy w.r.t the approximated and true dynamics, where the rewards function is $f_\star$.
The confidence bound holds with high probability, in expectation over the contexts.\\ 
Formally, in~\cref{lemma:CB-policy-dynamics}
we show that with probability at least $1-\delta/4$, for all $t>|A|$ and any policy $\pi \in \Pi_\C$ we have 
\begin{align*}
    |\E_c[&V^{\pi(c;\cdot)}_{\M^{(f_\star,P_\star)}(c)}(s_0)]  - \E_c[V^{\pi(c;\cdot)}_{\M^{(f_\star, \widehat{P}_t)}(c)}(s_0)]|
    \\
    &\hspace{14mm}\leq
   \sqrt{H|S| \psi_t(\pi)}
    \cdot \sqrt{72 H^2|S|\log(8|\Fp|t^3/\delta)}
    .   
\end{align*}

\noindent\textbf{Step 3:}\label{step-3-dynamics}  
relax the confidence bound in step $2$ to be additive.
In~\cref{lemma:sq-trick-UCDD} 
we show that under the good event of step 2 
for all $t>|A|$ and any policy $\pi \in \Pi_\C$, for $\gamma_t = \sqrt{ \frac{18t \log(8|\Fp|t^3/\delta) }{|S| |A|}}$ it holds that, 
\begin{align*}
   |\E_c[V^{\pi(c;\cdot)}_{\M^{(f_\star,P_\star)}(c)}(s_0)] & - \E_c[V^{\pi(c;\cdot)}_{\M^{(f_\star, \widehat{P}_t)}(c)}(s_0)]|
    \\
    &\hspace{4mm} \leq 
    \gamma_t  H  |S| \left( \psi_t(\pi) + {H |S|  |A|}/{t}\right)  
    .
\end{align*}

\noindent\textbf{Step 4:} bound the sum of contextual potential functions similarly to shown for the rewards, in previous sections.

Using all the above, we obtain the optimism lemma (\cref{lemma:optimism-UCDD}) 
which states that under the good events of step 2 for both the dynamics and rewards approximation, for all $t>|A|$,
    \begin{align*}
        \E_c[V^{\pi^\star(c;\cdot)}_{\M(c)}(s_0) -  V^{\pi_t(c;\cdot)}_{\Mhat_t(c)}(s_0)]
        \leq 
        \frac{(H |S| \gamma_t + \beta_t) H|S||A|}{t}
        ,
    \end{align*}
yielding the following regret bound. (See~\cref{thm:regret-bound-ucdd,corl:regret-bound-ucdd-g}.)

\begin{theorem}[regret bound]
For any $\delta \in (0,1)$, $T>|A|$ and finite function classes $\F$ and $\Fp$, for the choice of $\beta_t = \sqrt{ \frac{17t  \log(8|\F|t^3/\delta) }{|S||A|}}$ and  $\gamma_t = \sqrt{ \frac{18t \log(8|\Fp|t^3/\delta) }{|S| |A|}}$ for all $t$, with probability at least $1-\delta$
it holds that 
\begin{align*}
    &\Regrv_T(\text{RM-UCDD})
    \leq
    \\
    &\widetilde{O}( 
           (H+1/{p_{min}}) H|S|^{3/2}\sqrt{|A|T\log(\max\{|\F|,|\Fp|\}/{\delta})}
        ).
\end{align*}
\end{theorem}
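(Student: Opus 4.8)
The plan is to assemble the per-round building blocks already established for RM-UCDD into a cumulative bound, and then pass from expected to realized regret by martingale concentration. The $|A|$ initialization rounds contribute at most $|A|H$ to $\Regrv_T$, which is lower order, so I focus on $t>|A|$. For such $t$, the full context-dependent policy $\pi_t$ is determined by $\hat{f}_t$, $\widehat{P}_t$ and the visit counts, all of which are $\Hist_{t-1}$-measurable (by the first observation of \cref{subsec:main-technical-challenges}, planning per context yields the maximizer of $\E_c[V^{\pi(c;\cdot)}_{\Mhat_t(c)}(s_0)]$). Hence the conditional-expected instantaneous regret equals $\E_c[V^{\pi^\star(c;\cdot)}_{\M(c)}(s_0)-V^{\pi_t(c;\cdot)}_{\M(c)}(s_0)]$, and I decompose it as
\begin{align*}
\E_c[V^{\pi^\star(c;\cdot)}_{\M(c)}(s_0)-V^{\pi_t(c;\cdot)}_{\M(c)}(s_0)]
&=\E_c[V^{\pi^\star(c;\cdot)}_{\M(c)}(s_0)-V^{\pi_t(c;\cdot)}_{\Mhat_t(c)}(s_0)]\\
&\quad+\E_c[V^{\pi_t(c;\cdot)}_{\Mhat_t(c)}(s_0)-V^{\pi_t(c;\cdot)}_{\M(c)}(s_0)].
\end{align*}

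The first term is exactly the optimism bound (\cref{lemma:optimism-UCDD}), at most $(H|S|\gamma_t+\beta_t)H|S||A|/t$. For the second term I insert the intermediate models $\M^{(\hat{f}_t,\widehat{P}_t)}$ and $\M^{(f_\star,\widehat{P}_t)}$ between $\Mhat_t=\M^{(\widehat{r}_t,\widehat{P}_t)}$ and $\M=\M^{(f_\star,P_\star)}$, giving three pieces. (A) The exploration-bonus gap $V^{\pi_t}_{\M^{(\widehat{r}_t,\widehat{P}_t)}}-V^{\pi_t}_{\M^{(\hat{f}_t,\widehat{P}_t)}}$: since the bonus at $(s,\pi_t(c;s))$ is $(\beta_t+H|S|\gamma_t)/(p_{min}\sum_{i<t}\I[\pi_t(c;s)=\pi_i(c;s)])$, integrating it against $q_h(\cdot\mid\pi_t,\widehat{P}_t)$ reproduces $\psi_t(\pi_t)$, so this piece equals $(\beta_t+H|S|\gamma_t)\psi_t(\pi_t)$ in expectation. (B) The reward-approximation gap, bounded by $\beta_t(\psi_t(\pi_t)+H|S||A|/t)$ via \cref{lemma:sq-trick-rewards-UCDD}. (C) The dynamics-approximation gap, bounded by $\gamma_t H|S|(\psi_t(\pi_t)+H|S||A|/t)$ via \cref{lemma:sq-trick-UCDD}. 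Folding in the optimism term (which is itself of the form $(\beta_t+H|S|\gamma_t)\cdot H|S||A|/t$), the instantaneous expected regret is at most $C(\beta_t+H|S|\gamma_t)(\psi_t(\pi_t)+H|S||A|/t)$ for an absolute constant $C$.

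Next I sum over $t=|A|+1,\dots,T$. Because $\beta_t,\gamma_t$ are nondecreasing, I pull out $\beta_T+H|S|\gamma_T$ and am left with $\sum_t\psi_t(\pi_t)$ and the harmonic sum $\sum_t H|S||A|/t$. Step 4, the $\psi_t$-analogue of \cref{lemma:Contextual-Potential} proved by the same pigeonhole/potential argument (each action's count is at least one after initialization, so for each $(c,s_h,a)$ the terms $1/\text{count}$ telescope to $\widetilde{O}(1)$), gives $\sum_t\psi_t(\pi_t)\le\widetilde{O}(|S||A|/p_{min})$; the harmonic sum is $\widetilde{O}(H|S||A|)$. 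Together these supply the factor $|S||A|(1/p_{min}+H)$. Substituting $\beta_T+H|S|\gamma_T\le\widetilde{O}\big(H\sqrt{T|S|\log(\max\{|\F|,|\Fp|\}/\delta)/|A|}\big)$ (the $H|S|\gamma_T$ term dominates $\beta_T$, and the two logs merge into a $\max$) and simplifying $H\sqrt{|S|/|A|}\cdot|S||A|=H|S|^{3/2}|A|^{1/2}$ yields the claimed rate for the sum of conditional-expected regrets.

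Finally I convert the bound on $\sum_t\E_c[\cdot\mid\Hist_{t-1}]$ into a bound on the realized $\Regrv_T$. Since the realized instantaneous regret has conditional mean equal to the quantity just bounded and the differences form a martingale-difference sequence bounded by $H$, Azuma's inequality adds only a lower-order $\widetilde{O}(H\sqrt{T\log(1/\delta)})$ term. A union bound over the good events of the reward and dynamics uniform-convergence/confidence lemmas (Steps~1--3) and the Azuma event, each of probability at least $1-\delta/4$, gives overall probability at least $1-\delta$. The main obstacle is controlling pieces (A)--(C) jointly: the bonus $(\beta_k+H|S|\gamma_k)/p_{min}$ must simultaneously dominate the reward error (scale $\beta_t$) and the dynamics error (scale $\gamma_tH|S|$, which carries the extra $H|S|$ factor from \cref{lemma:sq-trick-UCDD} due to the layered-independence argument), so that optimism is preserved while the accumulated bonus still telescopes through $\sum_t\psi_t(\pi_t)$ without inflating the target rate.
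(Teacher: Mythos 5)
Your proposal is correct and follows essentially the same route as the paper's proof of \cref{thm:regret-bound-ucdd}: Azuma plus a union bound over the two good events, the optimism lemma (\cref{lemma:optimism-UCDD}), the insertion of the intermediate models $\M^{(\hat{f}_t,\widehat{P}_t)}$ and $\M^{(f_\star,\widehat{P}_t)}$ with the bonus identity of \cref{obs:value-decomp-UCDD}, the two square-trick bounds (\cref{lemma:sq-trick-rewards-UCDD,lemma:sq-trick-UCDD}), and the contextual potential bound (\cref{lemma:Contextual-Potential-dynamics}). The only (immaterial) deviations are bookkeeping: you bound $\sum_t (\beta_t+H|S|\gamma_t)/t$ by pulling out $\beta_T+H|S|\gamma_T$ against a harmonic sum (costing an extra $\log T$ absorbed by $\widetilde{O}$, where the paper uses $\sum_t t^{-1/2}\le 2\sqrt{T}$), and your $\delta/4$-per-event split versus the paper's $\delta/2+\delta/2$.
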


\section{Lower bound}\label{sec:LB}
\begin{figure}
    \centering
    \caption{Lower bound illustration}\label{fig:lower-bound-main}
    \includegraphics[scale=1]{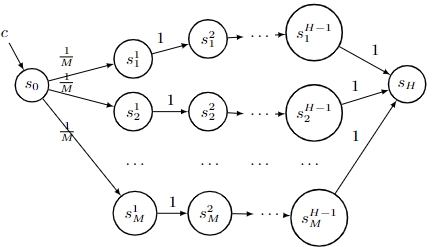}
\end{figure}

We present a lower bound for layered CMDP, where the dynamics is known and context-independent, which based on the lower bound for CMAB presented by~\citet{agarwal2012contextual}, in which $K=|A|$, $\G \subseteq (\C \times A \to [0,1])$ and $N \in \N$.

\begin{theorem}[Theorem $5.1$,~\citet{agarwal2012contextual}]\label{thm:lower-bound-CMAB-main}
For every $N $ and $K $ such that $\ln N / \ln K \leq T$, and every algorithm $\mathfrak{A}$, there exist a functions class $\G $ of 
cardinality at most $N$ and a distribution $D(c,r)$  for which
the realizability assumption holds, but the expected regret of $\mathfrak{A}$ is $\Omega(\sqrt{KT \ln  N/ \ln K})$.
\end{theorem}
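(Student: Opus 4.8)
The plan is to exhibit a single hard family of contextual bandit instances, all realizable with respect to a function class $\G$ of cardinality at most $N$, on which every algorithm suffers the stated regret in expectation. The construction reduces the contextual problem to $d := \lfloor \ln N / \ln K \rfloor$ \emph{independent} $K$-armed bandit instances, one per context, so that the $\sqrt{KT\ln N/\ln K}$ bound emerges as $d$ copies of the classical $\sqrt{K\,(T/d)}$ minimax bandit regret. The hypothesis $\ln N/\ln K \le T$ guarantees $d \le T$, so every context is sampled enough to keep the construction non-degenerate.

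First I would fix $d$ context types $c_1,\dots,c_d$ and let the context distribution $\D$ be uniform over them, each context revealed to the learner at the start of a round. For every vector $v=(v_1,\dots,v_d)\in[K]^d$ I define a reward function $g_v$ by $g_v(c_j,a)=\tfrac12+\varepsilon\,\I[a=v_j]$, and let the reward of arm $a$ in context $c_j$ be $\mathrm{Bernoulli}(g_v(c_j,a))$. The function class is $\G=\{g_v: v\in[K]^d\}$, so $|\G|=K^d\le N$ and realizability holds for whichever $v$ is in force. The hard prior samples $v$ uniformly from $[K]^d$; the gap $\varepsilon=\Theta(\sqrt{Kd/T})$ will be tuned at the end so as to balance the per-context regret.

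The core of the argument is a coordinate-wise change-of-measure / information-theoretic lower bound. Because rewards depend only on $(c_j,a)$ and contexts are i.i.d., the interaction decomposes across the $d$ context types: context $c_j$ is visited $N_j\sim\mathrm{Binomial}(T,1/d)$ times with $\E[N_j]=T/d$. For each coordinate $j$ I would compare the true instance against the $K-1$ instances differing only in the planted arm $v_j$, bounding the KL-divergence contributed by each pull in context $c_j$ by $O(\varepsilon^2)$ and invoking the standard divergence-decomposition lemma (Le Cam / Bretagnolle--Huber, as in Lattimore--Szepesv\'ari). Averaging over the uniform choice of $v_j$ shows that no algorithm can identify the best arm of context $c_j$ from $O(K/\varepsilon^2)$ samples, so its expected regret on that context is $\Omega(\min\{N_j\varepsilon,\ \sqrt{K N_j}\})=\Omega(\sqrt{K\,T/d})$. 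Summing over $j=1,\dots,d$ and substituting $\varepsilon=\Theta(\sqrt{Kd/T})$ yields total expected regret $\Omega\!\left(d\sqrt{KT/d}\right)=\Omega(\sqrt{KTd})=\Omega(\sqrt{KT\ln N/\ln K})$.

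The main obstacle is making the coordinate-wise bound rigorous in the presence of two sources of dependence: the random counts $N_j$, and the coupling from a single adaptive algorithm interacting with all coordinates simultaneously. I would handle $N_j$ by conditioning on the high-probability event $N_j=\Theta(T/d)$ via Binomial concentration (or by arguing directly in expectation over $N_j$). The adaptivity is resolved by lower bounding regret in expectation over the uniform prior on $v$ and exchanging the order of summation, so each coordinate is analyzed by a change of measure depending only on the number of suboptimal pulls in that context; the adaptivity then enters solely through the budget constraint $\sum_j N_j = T$, which is precisely what forces the $\sqrt{d}=\sqrt{\ln N/\ln K}$ factor and not a larger one.
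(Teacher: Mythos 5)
Your proposal is sound in outline, but note that the paper itself contains no proof of this statement: it is imported verbatim as Theorem 5.1 of \citet{agarwal2012contextual}, and is used only as a black box to build the CMDP lower bound of \cref{thm:LB-CMDP}. What you have written reconstructs essentially the original argument from that source — a uniform distribution over $d=\lfloor \ln N/\ln K\rfloor$ contexts carrying independent $K$-armed bandit instances, the product class $\{g_v : v\in[K]^d\}$ of size $K^d\le N$ to ensure realizability, and a per-coordinate change-of-measure bound of $\Omega(\sqrt{K\,T/d})$ summed over coordinates to give $\Omega(\sqrt{KTd})$ — so there is no gap to report, only the observation that you proved a cited result rather than one the paper argues for.
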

%
%
\begin{theorem}[Lower bound for CMDP]
    Let $\delta \in (0,1)$, horizon $H \geq 2$ and $M, N \in \N$. 
    Let $T \geq 8 M \log ({|S|}/{\delta})+ 2M\ln N / \ln |A|$ and consider a CMDP $(\C,S,A, \M)$ for which ${|S| = M \cdot (H-1) + 2}$.

    Then, for any algorithm $\mathfrak{A}$, there exist a  base function class $\G \subseteq (\C \times A \to [0,1])$ of cardinality at most $N$,
    and a distribution $D(c,s,a,r)$ for which the realizability assumption holds for $\F= \G^S$ and, with probability at least $1-\delta$, the expected regret of $\mathfrak{A}$ is 
    $\Omega (
    \sqrt{T  H |S| |A| \ln(N)/\ln(|A|)} )$.
\end{theorem}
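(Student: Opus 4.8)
The plan is to embed many independent copies of the CMAB lower bound of \cref{thm:lower-bound-CMAB-main} into a single layered, known, context-independent CMDP, so that the per-state regrets add up. Concretely, I would build the CMDP with layers $0,1,\dots,H$, where layer $0$ is $\{s_0\}$, layer $H$ is $\{s_H\}$, and each intermediate layer $S_h$ ($1\le h\le H-1$) contains exactly $M$ states, giving $|S|=M(H-1)+2$. The (known) dynamics is \emph{action-independent}: from every state in layer $h$ the next state is drawn uniformly among the $M$ states of layer $h+1$, independent of the chosen action and of the context. Hence the occupancy $q_h(s)=1/M$ for every intermediate state $s$ and every policy. At each intermediate state $s$ I plant an independent copy of the hard CMAB instance from the construction behind \cref{thm:lower-bound-CMAB-main}: the episode context $c_t\sim\D$ is shared by all states and revealed to the learner, while the true reward function at $s$ is an independently drawn $g_{s,\star}\in\G$ with $r^{c}(s,a)=g_{s,\star}(c,a)$. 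Since every $g_{s,\star}$ lies in the same base class $\G$ of cardinality $\le N$, realizability holds for $\F=\G^{S}$.

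The second step is the regret decomposition. Because transitions are action- and context-independent with $q_h(s)=1/M$, the value of any policy factorizes as $V^{\pi}_{\M(c)}(s_0)=\frac1M\sum_{h=1}^{H-1}\sum_{s\in S_h} r^{c}(s,\pi(c;s))$, and the optimal policy maximizes each state's reward separately. Thus
\[
 \Regrv_T=\frac1M\sum_{h=1}^{H-1}\sum_{s\in S_h}\sum_{t=1}^{T}\bigl(r^{c_t}(s,\pi^\star(c_t;s))-r^{c_t}(s,\pi_t(c_t;s))\bigr),
\]
i.e.\ a sum over the $M(H-1)$ intermediate states of a per-state CMAB regret. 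The key observation is that the indicator that state $s$ is visited in episode $t$ is determined by the random transitions alone and is therefore independent of $(c_t,\pi_t)$ with probability $1/M$; consequently the $1/M$-weighted regret summed over all rounds equals, in expectation, the unweighted regret over the rounds in which $s$ is actually visited and the learner receives (bandit) feedback about $g_{s,\star}$.

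Third, I would lower bound each per-state term. Fix an intermediate state $s^\star$ and condition on the planted truths at all other states. Because the truths are drawn independently and the context is revealed anyway, the rewards observed at the other states are independent of $g_{s^\star,\star}$ and hence carry no information about $s^\star$'s optimal action; the only informative feedback about $s^\star$ comes from the $\mathrm{Bin}(T,1/M)$ episodes in which $s^\star$ is visited. A Chernoff bound together with a union bound over the $|S|$ states shows that, since $T\ge 8M\log(|S|/\delta)$, with probability at least $1-\delta$ every intermediate state is visited at least $T/(2M)$ times. On this event each state faces a genuine CMAB instance over $n\ge T/(2M)$ informative rounds, and the condition $T\ge 2M\ln N/\ln|A|$ guarantees $\ln N/\ln|A|\le n$, so the (Bayes/minimax form of the) bound in \cref{thm:lower-bound-CMAB-main} with $K=|A|$ applies and yields expected per-state regret $\Omega(\sqrt{|A|(T/M)\ln N/\ln|A|})$. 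Summing over the $M(H-1)$ states gives $\Omega\bigl((H-1)\sqrt{|A| M T\ln N/\ln|A|}\bigr)$, and substituting $|S|=M(H-1)+2$ (so $H|S|=\Theta(MH^2)$) recovers $\Omega(\sqrt{T H |S| |A|\ln N/\ln|A|})$. Since this holds in expectation over the random planted truths, the probabilistic method yields a single realization of $\{g_{s,\star}\}$ and hence a fixed $(\G,D)$ achieving the bound.

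The main obstacle is decoupling the $M(H-1)$ sub-problems despite the shared context and the partial, bandit-style feedback: I must argue that observing the context and the rewards of other states gives the learner no advantage on a given state, which is what licenses applying the CMAB lower bound independently to each state and then summing. This hinges on drawing the per-state truths independently and on the fact that the context is observed (so revealing it leaks nothing hidden), and it requires the minimax/Bayesian form of \cref{thm:lower-bound-CMAB-main} rather than its bare existential statement, together with the concentration argument that controls the random number of visits via the two terms in the hypothesis on $T$.
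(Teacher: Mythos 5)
Your proposal is correct and follows essentially the same route as the paper's proof: embed independent copies of the CMAB lower bound of \cref{thm:lower-bound-CMAB-main} at the states of a layered CMDP with known, action-independent dynamics and occupancy $1/M$ per intermediate state, use a Chernoff bound plus union bound (via $T \geq 8M\log(|S|/\delta)$) to guarantee every state is visited at least $T/(2M)$ times, apply the per-state CMAB bound (via $T \geq 2M\ln N/\ln|A|$), and sum over the $\Theta(M(H-1))$ states with $M = (|S|-2)/(H-1)$. The only differences are cosmetic: you randomize transitions uniformly at every layer whereas the paper randomizes only at the first step and uses deterministic chains thereafter, and you spell out the decoupling/Bayesian argument for applying the CMAB bound simultaneously at all states, a point the paper treats more informally.
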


\noindent\emph{proof idea.}
Solving the CMDP illustrated in~\cref{fig:lower-bound-main} is equivalent to solving $M(H-1) + 1$ CMAB problems. Hence, the theorem follows by~\cref{thm:lower-bound-CMAB-main}.(See~\cref{Appendix:LB}.)

\section{Extension: Remove the Reachability}\label{sec:extention}
The minimum reachability assumption allows us to limit the exploration-exploitation trade-off only to the actions selection, since any state is reached with probability at least $p_{min}>0$. 
In this section we sketch a derivation of $\widetilde{O}(T^{3/4})$ regret, without the reachability assumption.

A first step towards removing the reachability assumption is to consider a dynamics class that is mixed with the uniform distribution with probability $\rho>0$.
%
For every $P \in \Fp$ let $S^c_h(P^c)$ denote the $h$-layer defined by the transition matrix $P^c$.
Using $\Fp$ we define $\Fp(\rho)$, where for each dynamics $P^c\in \Fp$ there is a dynamics $\widetilde{P}^c\in \Fp(\rho)$, where in time $h$ with probability $\rho$ we transition to a random state in $S^c_h$.
%
%
Assume that we have an access to a LSR oracle that gets as inputs a parameter $\rho$ and a realizable function class $\Fp$.
Let $P_\star \in \Fp$ denote the true context-dependent dynamics, and $\widetilde{P}_\star \in \Fp(\rho)$ be the related dynamics in $\Fp(\rho)$.\\
Please notice the following observations:\\
(1) $\widetilde{P}_\star$ has the minimum reachability property for $p_{min} = \rho/|S|$, even if $P_\star$ does not have it. \\
(2) For every context $c \in \C$, layer $h \in [H-1]$ and state-action $(s,a) \in S^c_h \times A$, it holds that 
$
    \| P^c_\star(\cdot|s,a) - \widetilde{P}^c_\star(\cdot|s,a)\|_1 \leq 2 \rho
$. \\
For $\rho<1/2$ this also implies that
the function class $\Fp(\rho)$ has an agnostic approximation error of at most $2\rho$ (w.r.t the square loss).\\
(3) Let any rewards function $r \in [0,1]$, context $c \in \C$ and policy $\pi$. Then, the value of $\pi$ on the model defined by $(r, P^c_\star)$ and the value of $\pi$ on the model defined by $(r, \widetilde{P}^c_\star)$ differ by at most $\widetilde{O}(\rho H^2)$. 
This implies that the optimal policy for one of them is near optimal for the other.\\
By (2), in this setting, our uniform convergence bound of Step 1 has an additional error of $2\rho$, 
which yields (approximately) an additional term of $\widetilde{O}(\rho H^2|S|)$ in the additive confidence bound of a policy (Step 3) for the dynamics approximation.\\ 
Hence, the overall regret bound is 
$
    \widetilde{O}( \rho H^2|S|T + H^2|S|^{3/2}\sqrt{T|A|\log(\max\{|\F|,|\Fp|\}/\delta)}|S|/ \rho )
$.\\ 
For $\rho \approx  |S|^{3/4} T^{-1/4}$, we obtain a regret bound of $\widetilde{O}(H^2|S|^{7/4}T^{3/4}\sqrt{|A|\log(\max\{|\F|,|\Fp|\}/\delta)})$.
%
Therefore, our approach yields a sub-linear regret bound that does not depend on the minimum reachability parameter $p_{min}$, which is now a tuned parameter.

\section{Discussion}\label{sec:disscution}
To the best of our knowledge, this work is the first that obtains sub-liner regret bounds using general function approximation (i.e., without additional structural assumption regarding the CMDP or the function classes) and to present an expected regret lower bound.
Our results can be naturally extended to infinite function classes using covering numbers analysis (see e.g.,~\citet{shalev2014UnderstandingMLBook}).
%
Our algorithms has $poly(|S|,|A|,H, T)$ running time and space complexity, assuming an efficient least-square regression oracle.\\
\textbf{The main advantages of our technique:} (1) We present a novel confidence interval for general function approximation in CMDPs.
(2) We use an access to a standard offline least-square regression oracle, which we call only $O(T)$ times.
(3) Our algorithms do no fully represent the selected context-dependent policy at each time step, as the representation of it scales linearly in the context space size $|\C|$, which can be huge, but rather compute it only for the observed contexts.

\noindent\textbf{Tightness of our bounds.}
Consider our regret upper bounds in terms of the base  class $\G$ cardinality, recalling that $\F = \G^S$.
\emph{Known context-dependent} dynamics:
$
    \widetilde{O}(
    (H+{p^{-1}_{min}})|S|\sqrt{T|A|\log({|\G|}/{\delta}}))
$. 
For the \emph{Unknown context-independent} dynamics:
$\widetilde{O} \Big(
H^{1.5}|S| \sqrt{T |A|}\log({1}/{\delta})
+ 
(H+{p^{-1}_{min}} )\cdot |S| \sqrt{ |A| T\log(|\G|/\delta)}\Big)
$. 
\emph{Unknown context-dependent} dynamics:
$
    \widetilde{O}( 
    (H+{p^{-1}_{min}}) H|S|^{3/2}\sqrt{|A|T\log(\max\{|\G|,|\Fp|\}/{\delta})})
$.
On the other hand, recall 
our lower bound is $\Omega (\sqrt{T  H |S| |A| \ln(|\G|)/\ln(|A|)} )$. While our dependency in $T$, $|A|$ and $|\G|$ is near-optimal, bridging the gap in $|S|$, $H$ and $p_{min}$ is an important open question.

\section*{Acknowledgements}
This project has received funding from the European Research Council (ERC) under the European Union’s Horizon 2020 research and innovation program (grant agreement No. 882396), by the Israel Science Foundation(grant number 993/17), Tel Aviv University Center for AI and Data Science (TAD), and the Yandex Initiative for Machine Learning at Tel Aviv University.

\bibliography{references}

\newpage
\appendix
\onecolumn

\newpage\section{Extended Related Work}\label{Appendix:Extended-Related-Work}
Bellow we provide an extended related literature review.

\noindent\textbf{Contextual Reinforcement Leaning.}
CMDP was first introduce by \citet{hallak2015contextual}. \citet{modi2018markov} gives a general framework for deriving generalization bounds as a function of the covering number for smooth CMDPs and contextual linear combination of $d$ MDPs, where $d$ is a finite number of MDPs, and the context-space is the $d-1$ probability simplex.  \citet{modi2020no}
give a regret analysis for Generalized Linear Models (GLMs). Our function approximation framework is much more general than GLM and smooth CMDP.

\citet{foster2021statistical} 
present new statistical complexity measure, the decision-estimation coefficient, for interactive decision making. They show an application of it to obtain regret upper bound for Contextual RL. They assume an access to an online estimation oracle with regret guarantees, denote it $\textbf{Est}$.
Their reference model class is the class of all contextual MDPs $\mathcal{M}$ and randomized policies $\Pi$.
Given the observations and played policies up to the current time step, the online estimation oracle returns an estimated CMDP. Given the current context, they use it to compute a distribution over policies, and sample the played policy from it. They obtain $\tilde{O}(\sqrt{T \cdot \textbf{Est}})$ regret.

The main disadvantages of their approach are
that their online estimation oracle is very strong and might be computationally inefficient. It is unclear whether their algorithmic results can be extended to support offline oracles for estimation.
In addition, the sample complexity of such an oracle is unclear. 
Moreover, the relation between their new complexity measure and known complexity measures for function approximation (i.e, VC/Pseudo/Fat-shattering/Natrajan dimension) that are commonly-used in offline supervised learning is unclear.
Their results are very general and capture many RL settings. 
In contrast, we use a standard and efficient offline least square oracle to build an approximated optimistic CMDP from scratch. Hence, we have much refined assumptions which allows us to use standard tools of supervised learning.

\citet{jiang2017contextual} consider Contextual Decision Processes (CDP) with low Bellman rank, and present  OLIVE, which is sample efficient for CDPs with a small Bellman rank. We do not make any assumptions on the Bellman rank.

\citet{levy2022learning} consider learning CMDPs using function approximation. They assume an access to an ERM oracle and derive sample complexity bounds to compute $\epsilon$-optimal policy under four different settings: where the dynamics known and unknown, and context dependent or context-free. They assume no additional structural assumption regarding the CMDP. Their method provide the first general and efficient reduction from CMDP to offline supervised learning. However, their sample complexity bounds are not optimal, as our lower bound shows.
We, in contrast, consider online learning problem where the goal is regret minimization. We obtain $\sqrt{T}$ regret under the minimum reachability assumption while \citet{levy2022learning} has no such an assumption but obtain much higher dependency on $T$.

Contextual MDPs are an extension of Contextual multi-armed bandits (CMAB) to MDPs. 
CMAB are a natural extension of the Multi-Arm Bandit (MAB), augmented by a context which influences the rewards \cite{Slivkins-book,MAB-book}. \citet{agarwal2014taming}  use efficiently an optimization oracle  to derive an optimal regret bound. 
Squared-loss regression based approaches appear in \citet{agarwal2012contextual,foster2018practical,foster2020beyond,simchi2021bypassing}.
We differ from CMAB, since our main challenge is the dynamics, and the need to optimize future rewards, which is the case in most RL settings.

\citet{xu2020upper} present the first optimistic algorithm for CMAB. They assume an access to a least-square regression oracle and achieve $\tilde{O}(\sqrt{T |A| \log |\F|})$ regret, where $\F$ is a finite and realizable function class uses to approximate the rewards. They also show a result for infinite function class using covering numbers analysis. We adapt their approach and extend it to CMDP.

\noindent\textbf{Inverse Gap Weighting (IGW) technique.} \citet{foster2020beyond,simchi2021bypassing}
apply the IGW technique to CMAB and obtain $\tilde{O}(\sqrt{T|A|})$ regret, assuming an access to an online/offline least square regression oracle, respectively. However, we do not see any straight-forward extension of their approach to CMDP which is both computationally efficient and has an optimal regret, under the same least-square oracle assumption (even when the dynamics is known to the learner).
In more detail, 
consider the following application of IGW to CMDP. 
At time $t$:
(1) Compute an approximation for the context-dependent rewards function $\hat{f}_t$.
(2) Observe a context $c_t$ and choose policy $\pi_{t}$ according to a distribution over deterministic policies. The probability of a policy $\pi$ is proportional to $1/{(V^{\pi^\star_t}_t(s_0) - V^\pi_t(s_0))}$, where $\pi^\star_t$ is the optimal policy for the rewards $\hat{f}_t$. 
(3) Experience trajectory and update the function approximation.
Using the analysis of \citet{foster2020beyond,simchi2021bypassing} one can obtain $\tilde{O}(\sqrt{T \cdot 2^{|S||A|}})$ regret and it is computationally inefficient.
A similar approach using the $Q$ function will also similarly fail.
Consider at time $t$ selecting a stochastic policy such that $\pi_{t}(a|s)$ is proportional to $1/{(Q^{\pi_t}_t(s, \pi^\star_t(s)) - Q^{\pi_t}_t(s, a))}$. This attempt will fail due to the lack of optimism and the changing-per-context optimal policy.
\citet{foster2021statistical} apply IGW to CMDP and obtain optimal regret. However they use the strong online estimation oracle discussed above.

\noindent\textbf{Additional works.} There are works that considered the case were the contexts are unobservable \cite{Latent-context-MDP,eghbal-zadeh2021learning,eghbal2021context}, latent states \cite{KrishnamurthyAL16}, spectral methods \cite{sprctelCMDP2016}, transfer learning \cite{zhang2020transfer}, and more. All those issue are somewhat unrelated to our main focus.

\section{Known Dynamics}\label{Appendix:KD}

In this section we present our algorithm and regret result for the known context-dependent dynamics setting. We present this section to introduce the main building blocks in our approach, which we will later extend to the unknown dynamics cases.
We operate under the following assumptions.

\noindent\textbf{Known context-dependent dynamics.}{
We assume that there is a mapping $P_\star$ from any context $c\in \C$ to a dynamics $P^c_\star$. In the known dynamics case the learner is familiar with  that mapping. 
Recall we assume that for each context $c$, the related MDP $\M(c)$ is layered and loop free.
We remark that the partition to layers is context-dependent. Since the dynamics is known, the partition of the state space into layers can be easily computed for every context $c \in \C$.
Hence, we assume that for each context, the learner also knows the partition to layers, which naturally reveals to her when the dynamics is known.
%
\\
For every context $c$, we denote by $S^c_0, S^c_1, \ldots ,S^c_{H-1}, S^c_H$ the disjoint layers, and for all $c \in \C$, $ S = \bigcup_{h \in [H]}S^c_h$. 
Recall that for every context $c$ we have that $S^c_0 = \{s_0\}$ and $S^c_H = \{s_H\}$, meaning there are unique start and final states, which are identical for all of the contexts.
Clearly, the above assumptions do not limit the generality of our results. 
}
%

\noindent\textbf{Minimum reachability.}{
Recall the definition of a context-dependent policy.\\
A stochastic context-dependent policy $\pi = \left( \pi(c;\cdot): S \to \Delta(A) \right)_{c \in \mathcal{C}}$ maps a context $c \in \mathcal{C}$ to a policy $\pi(c;\cdot) : S \to \Delta(A)$.\\
A deterministic context-dependent policy $\pi = \left( \pi(c;\cdot): S \to A) \right)_{c \in \mathcal{C}}$ maps a context $c \in \mathcal{C}$ to a policy $\pi(c;\cdot) : S \to A$.\\
Let $\Pi_\C$ denote the class of all deterministic context-dependent policies.
\\
Using the notation of context-dependent policy, we define the following reachability assumption for contextual MDPs.
We assume that there exists \textbf{$p_{min} \in (0,1]$} such that for every context $c \in \C$, layer $h \in [H]$ and a state $s_h \in S^c_h$ and any context-dependent policy $\pi \in \Pi_{\C}$ it holds that 
$
    q_h(s_h | \pi(c;\cdot), P^c_\star) \geq p_{min}
$. 
We remark that $p_{min}$ is not (explicitly) given to the learner in this section.
Recall $q(s|\pi(c;\cdot),P^c_\star)$ denotes the probability of visiting state $s \in S$ when playing $\pi(c;\cdot)$, which is the policy that $\pi$ defines for the context $c$, on the dynamics $P^c_\star$.
When the dynamics is layered and loop-free, then 
$
    q(s|\pi(c;\cdot),P^c_\star) = q_h(s | \pi(c;\cdot), P^c_\star) \geq p_{min}
$ if and only if $s \in S_h$.
%
}
\noindent\textbf{Reward function approximation.}{
We consider a finite function class $\G \subseteq (\C \times A \to [0,1])$.
Many times it would be more convenient to consider a finite function class 
$\F = \G^{S}$ where $f\in \F$ are functions of the form 
$f(c,s,a) = g_s(c,a)$ where $g_s \in \G$. Note that, $\log(|\F|) = |S|\log(|\G|)$.
Our algorithm gets as input the finite function class ${\F \subseteq \C \times S \times A \to [0,1]}$.
Each function $f \in \F$ maps context $c \in \C$, state $s \in S$ and action $a \in A$ to a (approximate) reward $r \in [0,1]$.
We use $\F$ to approximate the context-dependent rewards function using the LSR oracle under the  realizability assumption (\cref{assm:rewards-realizability}). The assumption states that 
there exists a function $f_\star \in \F$ such that 
    $
       f_\star(c,s,a) = r^c_\star(s,a) = \E[R^c_\star(s,a)|c,s,a]
    $. \\
For mathematical convenience, we state our algorithms and regret upper bounds in terms of the cardinality of $|\F|$.
}

\noindent In the following, we use the \textbf{value function representation using occupancy measures}.
For any (non-contextual) policy $\pi$ and MDP $M = (S,A,P,r,s_0,H)$ the value function at any state $s \in S$ and layer $h' \in [H]$ can be written as
\[
    V^{\pi}_{M,h'}(s) 
    =
    \sum_{h = h'}^{H-1} \sum_{s_h \in S_h} \sum_{a_h \in A}
    q_h(s_h, a_h | \pi, P) \cdot r(s_h, a_h)
    .
\]

\subsection{Main Technical Challenges of the Optimism in Expectation Approach}\label{subsec:opt-in-exp}
In~\cref{lemma:sq-trick} we will later present, we show that with high probability, for all $t>|A|$ it holds that
\begin{align*}
    \left|\E_c\left[V^{\pi(c;\cdot)}_{\M(c)}(s_0)\right] - \E_c\left[V^{\pi(c;\cdot)}_{\M^{(\hat{f}_t,P_\star)}(c)}(s_0)\right]\right|
    \leq &
    \beta_t \cdot \E_c \left[ \sum_{h=0}^{H-1} \sum_{s_h \in S^c_h} \frac{  q_h(s_h|\pi(c;\cdot) ,P^c_\star)}{\sum_{i=1}^{t-1} \I[\pi(c;s_h)= \pi_i(c;s_h)]q_h(s_h | \pi_i(c;\cdot),P^c_\star)}\right]
    \\
    &+
    \beta_t \frac{H  |S| |A|}{t},
\end{align*}
where $\hat{f}_t$ is the least square minimizer at round $t$, $\M^{(\hat{f}_t,P_\star)}(c) = (S,A,P^c_\star,\hat{f}_t(c,\cdot,\cdot),s_0,H)$,
$\pi_i$ is the context-dependent policy selected in round $i \in [t-1]$ and $\beta_t$ is a parameter related to the size of the function class $\F$.
%
Consider the approximated MDP $\Mhat_t(c)$ defined in Algorithm RM-KD (\cref{alg:RM-CMDP-KD}).
For every context-dependent policy $\pi \in \Pi_{\C}$, consider the following explicit representation of the value function for any given context $c \in \C$.
\begin{equation}\label{eq:V-t-def}
    \begin{split}
        &V^{\pi(c;\cdot)}_{\Mhat_t(c)}(s_0)
        \\
        = &
        \sum_{h = 0}^{H-1} \sum_{s_h \in S^c_h} \sum_{a_h \in A}
        q_h(s_h, a_h | \pi(c;\cdot),P^c_\star) \cdot \widehat{r}_t^c(s_h, a_h)
        \\
        = &
        \sum_{h = 0}^{H-1} \sum_{s_h \in S^c_h} \sum_{a_h \in A}
        q_h(s_h, a_h | \pi(c;\cdot),P^c_\star) \cdot 
        \left( \hat{f}_t (c,s_h,a_h) + \frac{\beta_t}{\sum_{i=1}^{t-1}\I[a_h = \pi_i(c;s_h)]q_h(s_h|\pi_i(c;\cdot),P^c_\star) }\right)
        \\
        = &
        \sum_{h = 0}^{H-1} \sum_{s_h \in S^c_h} \sum_{a_h \in A}
        q_h(s_h|  \pi(c;\cdot), P^c_\star) \I[a_h = \pi(c;s_h)] \cdot 
        \left( \hat{f}_t (c,s_h,a_h) + \frac{\beta_t}{\sum_{i=1}^{t-1}\I[a_h = \pi_i(c;s_h)]q_h(s_h|\pi_i(c;\cdot),P^c_\star) }\right)
        \\
        = &
        \sum_{h = 0}^{H-1} \sum_{s_h \in S^c_h} 
        q_h(s_h|\pi(c;\cdot), P^c_\star) \cdot 
        \left( \hat{f}_t (c, s_h,  \pi(c;s_h)) + \frac{\beta_t}{  \sum_{i=1}^{t-1}\I[ \pi(c;s_h) = \pi_i(c;s_h)]q_h(s_h|\pi_i(c;\cdot),P^c_\star) }\right)
        \\
        = &
        V^{\pi(c;\cdot)}_{\M^{(\hat{f}_t,P_\star)}(c)}(s_0) +  \sum_{h=0}^{H-1} \sum_{s_h \in S^c_h} \frac{ q_h(s_h|\pi(c;\cdot),P^c_\star) \cdot \beta_t}{  \sum_{i=1}^{t-1} \I[\pi(c;s_h)= \pi_i(c;s_h)]q_h(s_h | \pi_i(c;\cdot),P^c_\star)}
        ,        
    \end{split}
\end{equation}
where the third identity is since $\pi$ is a deterministic policy. The forth identity is since the non-zero terms are where $a_h = \pi(c;s_h)$.

Hence, following the above, 
a natural ``optimistic in expectation'' approach is to choose at round $t$ the policy
    \begin{align*}
        \pi_t &\in \arg\max_{\pi \in \Pi_\C}
        \left\{
            \E_c[V^{\pi(c;\cdot)}_{\M^{(\hat{f}_t,P_\star)}(c)}(s_0)] 
            +
            \beta_t \cdot
            \E_c \left[ \sum_{h=0}^{H-1} \sum_{s_h \in S^c_h} \frac{ q_h(s_h|\pi(c;\cdot),P^c_\star)}{  \sum_{i=1}^{t-1} \I[\pi(c;s_h)= \pi_i(c;s_h)]q_h(s_h | \pi_i(c;\cdot),P^c_\star)}\right]
        \right\}
        \\
        &= \arg\max_{\pi \in \Pi_\C}
        \{
            \E_c[V^{\pi(c;\cdot)}_{\Mhat_t(c)}(s_0)]
        \},
    \end{align*}
where the equality follows from the value function derivation in~\cref{eq:V-t-def}.

This approach has three major drawbacks.
\begin{enumerate}
    \item The distribution over the context $\D$ is unknown. Hence, we cannot compute $\E_c\left[V^{\pi(c;\cdot)}_{\Mhat_t(c)}(s_0)\right]$, for any policy $\pi$.
    \item Even when $\D$ is known, computing  $\pi_t \in \Pi_\C$ is intractable when the context space $\C$ is large.
    \item The representation of a context-dependent policy $\pi$ scales with the size of the context space $|\C|$, which can be huge.
\end{enumerate}
We overcome those hurdles using two observations.

\noindent\textbf{Observation 1:} it holds that
$$
    \max_{\pi \in \Pi_\C}\left\{\E_c \left[V^{\pi(c;\cdot)}_{\Mhat_t(c)}(s_0)\right]\right\} =
    \E_c \left[ \max_{\pi(c;\cdot) \in S \to A} \{ V^{\pi(c;\cdot)}_{\Mhat_t(c)}(s_0) \} \right]
    .
$$
We conclude that 
to compute a context-dependent policy $\pi_t \in \Pi_\C$ which maximizing LHS, we can compute for each context $c \in \C$ separately, a policy
$\pi_t(c;\cdot): S \to A$ that is optimal for $\Mhat_t(c)$.
For each context $c \in \C$ separately, solving the maximization problem in RHS can be done efficiently using a standard planning algorithm. 

\noindent\textbf{Observation 2:} in each round $t \leq T$, we do not have to know the full representation of $\pi_k$, for all $k \leq t$.
Notice that at every round $t \leq T$, we only use the mappings $\{\pi_k\}_{k=1}^t$ for the contexts $\{c_k\}_{k=1}^t$. 

By taking an advantage of those two observation,
at every round $t \leq T$, we compute $\pi_k(c_t;\cdot)$ for all $k \leq t$, which can be done in $poly(|S|,|A|, H,t)$ using a planing algorithm.
By that, we overcome the hardness of fully computing $\pi_k$, and avoid the memory-overload of saving $\{\pi_k\}_{k=1}^T$ full representation.

\subsection{Regret Minimization Algorithm}

For the known dynamics case, our algorithm works as follows.\\
Rounds $i= 1,2, \ldots, |A|$ are initialization rounds, where for each action $a_i \in A$ in turn, the agent simply runs the policy $\pi_i$ that always selects action $a_i$, for any  context $c_i \in \C$ and state $s \in S$.\\
In rounds $t = |A|+1, \ldots , T$ the agent observes context $c_t \sim \D$. Then the agent performs the following computations.
For every (previous) round $k  = |A|+1, \ldots ,t-1$ the agent computes the policy $\pi_k(c_t;\cdot)$
which is the optimal policy of $\Mhat_k(c_t)$. I.e., the optimal policy for $c_t$ with respect to the approximated model that uses the function $\hat{f}_k$, which was computed in round $k \leq t$. The agent computes $\pi_k(c_t;\cdot)$ in the following manner.
\begin{enumerate}
    \item The agent computes the optimistic reward function of round $k$ for the context $c_t$, using $\hat{f}_k$ and the policies $\{\pi_i(c_t;\cdot)\}_{i=1}^{k-1}$. The rewards function is
    \[
        \forall (s,a) \in S \times  A: \;\;
        \widehat{r}_k^{c_t}(s, a) = \hat{f}_k(c_t,s,a) + 
        \frac{\beta_k}{  \sum_{i=1}^{k-1}\I[a = \pi_i(c_t;s)]q(s | \pi_i(c_t;\cdot),P^{c_t}_\star)}. 
    \]
    We remark that since $P^{c_t}_\star$ is known, $q(s |\pi_i(c_t;\cdot),P^{c_t}_\star)$ can also be computed in polynomial time in $|S|, |A|, H$, for all $s \in S$.
    \item The agent defines the approximated MDP for $c_t$ at round $k$, $\Mhat_k(c_t) = (S, A, P^{c_t}_\star, \widehat{r}^{c_t}_k, s_0, H)$.
    \item The agent computes ${\pi_k(c_t,\cdot) \in \arg\max_{\pi \in S \to A}V^{\pi}_{\Mhat_k(c_t)}(s_0)}$ using a planning algorithm.
\end{enumerate}
%
The agent computes $\pi_t(c_t;\cdot)$ in the same way, and then run it to generate trajectory $\sigma^t$. Lastly, the agent updates the least square regression (LSR) oracle using $\sigma^t$. 
For more details, see~\cref{alg:RM-CMDP-KD}.
\begin{algorithm}
    \caption{Regret Minimization for CMDP with Known Dynamics (RM-KD)}
    \label{alg:RM-CMDP-KD}
    \begin{algorithmic}[1]
        \STATE
        { 
            \textbf{inputs:}
            \begin{itemize}
                \item MDP parameters: 
                $S$ 
                , $A$, $P_\star$, $s_0$, $H$. 
                \item Confidence parameter $\delta$ and tuning parameters $\{\beta_t\}_{t=1}^T$.
            \end{itemize}
        } 
        \STATE{\textbf{initialization:} for the first $|A|$ rounds, for each action $a_i$ in turn, run once the policy $\pi_i(c,s) = a_i 
        $ that
        at any state $s$ plays action $a_i$, regardless of the context $c$.}
        \FOR{round $t = |A| + 1, \ldots, T$}
            \STATE{compute
            $
                \hat{f}_t \in 
                \arg\min_{f \in \F}
                \sum_{i=1}^{t-1} \sum_{h=0}^{H-1} ( f(c_i,s^i_h,a^i_h) - r^i_h)^2
            $ using the LSR oracle
            }
            \STATE{observe context $c_t \sim \D$}
            \FOR{$k= |A|+1 ,|A|+2, \ldots, t$}
                \STATE{compute 
                    $
                        \forall (s,a) \in S \times  A: \;\;
                        \widehat{r}_k^{c_t}(s, a) = \hat{f}_k(c_t,s,a) + \frac{\beta_k}{  \sum_{i=1}^{k-1}\I[a = \pi_i(c_t,s)]q(s | \pi_i(c_t,\cdot),P^{c_t}_\star)}
                    $ 
                    }
                \STATE{define $\Mhat_k(c_t) = (S, A, P^{c_t}_\star, \widehat{r}^{c_t}_k, s_0, H)$}
                \STATE{compute $\pi_k(c_t,\cdot) \in \arg\max_{\pi \in S \to A}V^{\pi}_{\Mhat_k(c_t)}(s_0)$ using a planning algorithm}
           \ENDFOR{}    
            \STATE{play $\pi_t(c_t,\cdot)$ and update oracle using $\sigma^t = (c_t, s^t_0, a^t_0, r^t_0, s^t_1, \ldots, s^t_{H-1}, a^t_{H-1}, r^t_{H-1},s^t_H) $}
           \ENDFOR{}
    \end{algorithmic}
\end{algorithm}

\begin{observation}
    For any $|A|+1 \leq t \leq T$, given the policy $\pi_t(c_t;\cdot)$, the trajectory $\sigma^t$ is independent from the previous trajectories $\sigma^1, \ldots \sigma^{t-1}$.
\end{observation}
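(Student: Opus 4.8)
The plan is to view the generation of $\sigma^t$ as a deterministic rollout driven by fresh, per-episode randomness, and to argue that all of the dependence on the past is funneled through the single object $\pi_t(c_t;\cdot)$. First I would fix a probability space in which, for each episode $i$, there is an independent noise variable $\omega_i$ that packages all the random draws used to run that episode: the reward realizations of $R^{c_i}_\star(s,a)$ and the next-state samples used at each layer. By the CMDP model, the families $\{\omega_i\}_{i=1}^T$ are mutually independent and independent of the i.i.d.\ context sequence $\{c_i\}_{i=1}^T \sim \D$.

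Next I would write $\sigma^t$ explicitly as a measurable function $\sigma^t = F(c_t,\pi_t(c_t;\cdot),\omega_t)$: starting from $s_0$, the interaction protocol sets $a^t_h = \pi_t(c_t;s^t_h)$ (deterministic, since $\pi_t \in \Pi_\C$), then draws the reward $r^t_h$ and the next state $s^t_{h+1}$ from the slice of $\omega_t$ corresponding to $(s^t_h,a^t_h)$ under $P^{c_t}_\star$ and $R^{c_t}_\star$. The key structural fact is that once the context $c_t$ and the mapping $\pi_t(c_t;\cdot)$ are fixed, the trajectory is a function of $\omega_t$ alone.

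Then I would invoke measurability: the selected policy $\pi_t$ is $\Hist_{t-1}$-measurable (in Algorithm RM-KD it is computed from $\hat{f}_t$ and $\{\pi_i(c_t;\cdot)\}_{i<t}$, all functions of $\Hist_{t-1}$ and $c_t$), while each past trajectory $\sigma^i$ with $i<t$ is a function of $(c_i,\omega_i)$ and hence $\sigma(\{c_j,\omega_j\}_{j<t})$-measurable. Conditioning on $\sigma(\Hist_{t-1},c_t)$ fixes both $c_t$ and $\pi_t(c_t;\cdot)$, so under this conditioning $\sigma^t$ is a deterministic function of $\omega_t$ only. Since $\omega_t$ is independent of $\sigma(\{c_j,\omega_j\}_{j<t})$, the conditional law of $\sigma^t$ does not depend on $(\sigma^1,\ldots,\sigma^{t-1})$; coarsening the conditioning to the pair consisting of $\pi_t(c_t;\cdot)$ and the context recorded as the first coordinate of $\sigma^t$ yields the claimed conditional independence.

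The step I expect to be the main obstacle is the bookkeeping around the conditioning set: $\pi_t(c_t;\cdot)$ depends on $c_t$, which is itself part of $\sigma^t$, so I must be careful to state independence conditionally on the policy \emph{together with} the context (or to observe that $c_t$ is recoverable from $\sigma^t$), rather than on the full history. Once the correct conditioning $\sigma$-algebra is identified, the remainder is the standard fresh-randomness/Markov factorization: the per-episode noise $\omega_t$ being independent of the past is exactly what decouples $\sigma^t$ from $\sigma^1,\ldots,\sigma^{t-1}$, and this is precisely the property the paper later relies on to form a martingale difference sequence and apply Azuma's inequality.
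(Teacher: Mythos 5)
Your proof is correct, and it supplies something the paper never actually writes down: the paper states this observation with no proof at all, treating it as immediate from the interaction protocol, so there is no ``paper proof'' to compare against beyond that implicit appeal to fresh randomness. Your formalization — packaging each episode's reward and transition draws into an independent noise variable $\omega_t$, writing $\sigma^t = F\bigl(c_t,\pi_t(c_t;\cdot),\omega_t\bigr)$, and then coarsening the conditioning from $\sigma(\Hist_{t-1},c_t)$ down to $\sigma\bigl(c_t,\pi_t(c_t;\cdot)\bigr)$ — is exactly the standard way to make that appeal rigorous, and the coarsening step is valid: since the conditional law of $\sigma^t$ given $\sigma(\Hist_{t-1},c_t)$ is measurable with respect to the sub-$\sigma$-algebra generated by $\bigl(c_t,\pi_t(c_t;\cdot)\bigr)$, the tower property gives conditional independence of $\sigma^t$ from $\Hist_{t-1}$ given that pair. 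The subtlety you flagged is also genuine and worth emphasizing: the observation as literally phrased (conditioning on the realized policy $\pi_t(c_t;\cdot)$ alone) can fail, because the event $\{\pi_t(c_t;\cdot)=\pi\}$ couples $c_t$ with the history — e.g., if under one history only context $c_1$ is mapped to $\pi$ while under another history both contexts are, then conditioning on $\pi$ alone makes the distribution of $c_t$ (the first coordinate of $\sigma^t$) depend on the past. The version you prove, with the context included in the conditioning, is the statement the paper actually relies on downstream, where every martingale and expectation argument conditions on $\Hist_{t-1}$, under which both the policy map and the (fresh) law of $c_t$ are pinned down.
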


\begin{observation}\label{obs:q-vs-q_h}
    Since the CMDP is layered, for every $c \in \C$, $h \in [H]$ and $s_h\in S_h^c$ it holds that
    $$q_h(s_h | \pi_i(c;\cdot),P^{c}_\star)=q(s | \pi_i(c;\cdot),P^{c}_\star).$$
\end{observation}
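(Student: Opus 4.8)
The plan is to exploit the layered, loop-free structure directly, reducing the claim to the observation that a state lying in layer $S_h^c$ can be occupied at one and only one time step, namely $h$. First I would recall the two quantities being compared: $q_{h'}(s_h \mid \pi_i(c;\cdot), P^c_\star)$ is the probability that the trajectory occupies $s_h$ at time $h'$, whereas $q(s_h \mid \pi_i(c;\cdot), P^c_\star)$ is the probability of ever visiting $s_h$ during the episode. Writing the overall visitation probability as a sum of the per-time-step visitation events, I would obtain
\[
    q(s_h \mid \pi_i(c;\cdot), P^c_\star) = \sum_{h'=0}^{H} q_{h'}(s_h \mid \pi_i(c;\cdot), P^c_\star),
\]
which is legitimate precisely because loop-freeness guarantees that each state is visited at most once along any trajectory, so the events ``visit $s_h$ at time $h'$'' are disjoint across $h'$ and their probabilities add up to the total visitation probability.

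Next I would invoke the defining property of the layer decomposition. By the MDP definition the state space splits into disjoint layers $S_0^c,\dots,S_H^c$ and transitions occur only between consecutive layers; consequently, starting from $s_0 \in S_0^c$, at every time $h'$ the chain is necessarily in layer $S_{h'}^c$. Since the layers are pairwise disjoint, a fixed state $s_h \in S_h^c$ belongs to no layer other than $S_h^c$, and therefore $q_{h'}(s_h \mid \pi_i(c;\cdot), P^c_\star) = 0$ for every $h' \neq h$. Substituting this into the sum above collapses it to the single surviving term $q_h(s_h \mid \pi_i(c;\cdot), P^c_\star)$, which is exactly the claimed identity.

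There is no genuine obstacle here; the statement is a structural bookkeeping fact rather than a quantitative one, and the only point requiring a little care is justifying the decomposition of the total visitation probability into a sum over time steps, which is where loop-freeness (at most one visit per trajectory) is used. I would state this justification explicitly rather than treat it as immediate, since it is the sole place the loop-free hypothesis enters; everything else follows from the disjointness of the layers.
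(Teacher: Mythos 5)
Your proof is correct and follows exactly the reasoning the paper treats as immediate: the paper states this as an unproven observation, and your argument---that at time $h'$ the layered, loop-free structure forces the chain into layer $S^c_{h'}$, so $q_{h'}(s_h\mid\pi_i(c;\cdot),P^c_\star)=0$ for all $h'\neq h$ and the disjoint sum over time steps collapses to the single term $q_h(s_h\mid\pi_i(c;\cdot),P^c_\star)$---is precisely the standard justification being invoked. Your explicit care with the decomposition of the total visitation probability is sound, though one could note it is even slightly more than needed, since the events ``visit $s_h$ at time $h'$'' for $h'\neq h$ are already null by layer disjointness.
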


\begin{corollary}
    Following~\cref{obs:q-vs-q_h}, the following two definitions of the rewards function are equivalent.
    \begin{equation}\label{eq:def-r-q-h} 
    \begin{split}
        \forall h \in [H-1], \; s_h \in S^c_h,\; a_h \in A:\;\;
        \widehat{r}_k^{c}(s_h, a_h) = \hat{f}_k(c,s_h,a_h) + 
        \frac{\beta_k}{  \sum_{i=1}^{k-1}\I[a_h = \pi_i(c,s_h)]q_h(s_h | \pi_i(c;\cdot),P^{c}_\star)},
    \end{split}
    \end{equation}
    and
    \begin{equation}\label{eq:def-r-q}
    \begin{split}
        \forall (s,a) \in S \times  A: \;\;
        \widehat{r}_k^{c}(s, a) = \hat{f}_k(c,s,a) + 
        \frac{\beta_k}{  \sum_{i=1}^{k-1}\I[a = \pi_i(c,s)]q(s | \pi_i(c;\cdot),P^{c}_\star)}.
    \end{split}
    \end{equation}
\end{corollary}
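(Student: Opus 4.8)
\emph{Proof proposal.} The plan is to observe that the two displayed definitions of $\widehat{r}_k^{c}$ differ \emph{only} in whether the occupancy measure appearing in the denominator is written as the layer-indexed $q_h(s_h \mid \pi_i(c;\cdot), P^{c}_\star)$ or as the total visitation probability $q(s \mid \pi_i(c;\cdot), P^{c}_\star)$, and that \cref{obs:q-vs-q_h} already identifies these two quantities whenever $s \in S^c_h$. Hence the whole claim reduces to a term-by-term substitution inside the denominator sum, requiring no actual computation.

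Concretely, I would first record the \textbf{well-definedness} of the layer assignment: since the CMDP is loop-free and its state space is partitioned into the disjoint layers $S^c_0, \ldots, S^c_H$, every state belongs to exactly one layer, so for each $s$ there is a unique index $h$ with $s \in S^c_h$. This makes the two index ranges---``$h \in [H-1]$, $s_h \in S^c_h$'' in \cref{eq:def-r-q-h} and ``$(s,a) \in S \times A$'' in \cref{eq:def-r-q}---exact relabelings of one another over the states where rewards are defined.

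Next, I would fix an arbitrary round $k$, context $c \in \C$, state $s$ with its unique layer index $h$, and action $a \in A$, and compare the two expressions directly. Both share the identical leading term $\hat{f}_k(c,s,a)$ and, in the bonus term, the identical numerator $\beta_k$ and the identical indicators $\I[a = \pi_i(c,s)]$ for each $i \in \{1, \ldots, k-1\}$. Invoking \cref{obs:q-vs-q_h}, which gives $q_h(s \mid \pi_i(c;\cdot), P^{c}_\star) = q(s \mid \pi_i(c;\cdot), P^{c}_\star)$ for every such $i$, the two denominator sums agree summand-by-summand; therefore the bonus terms coincide and the two definitions yield the same value $\widehat{r}_k^{c}(s,a)$.

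The only point requiring a little care---and the closest thing to an obstacle---is the bookkeeping of the index ranges together with the terminal layer: \cref{eq:def-r-q-h} ranges over $h \in [H-1]$, whereas \cref{eq:def-r-q} formally ranges over all of $S$, including the final state $s_H$. Since $S^c_H = \{s_H\}$ is the unique final state with reward $0$, the value of the reward there is irrelevant to the model, so the two definitions agree on every state that matters. Everything else is an immediate consequence of \cref{obs:q-vs-q_h}.
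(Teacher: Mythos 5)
Your proposal is correct and matches the paper's (implicit) argument: the corollary is just a term-by-term substitution of $q_h(s_h\mid\pi_i(c;\cdot),P^c_\star)=q(s_h\mid\pi_i(c;\cdot),P^c_\star)$ from \cref{obs:q-vs-q_h} into the denominator, which the paper treats as immediate and states without a separate proof. Your extra bookkeeping (uniqueness of the layer index for each state, and the irrelevance of the zero-reward terminal state $s_H$) is sound and only makes explicit what the paper leaves tacit.
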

    
\noindent For convenience, in~\cref{alg:RM-CMDP-KD} we use~\cref{eq:def-r-q} the approximated rewards function, but in the regret analysis we use~\cref{eq:def-r-q-h}.

\subsection{Regret Analysis}

\subsubsection{Analysis Outline.}

In the following regent analysis, for any $t > |A|$ we define the following MDPs for every context $c \in \C$.
\begin{enumerate}
    \item $\M^{(f,P_\star)}(c) = (S,A,P^c_\star,f(c,\cdot,\cdot),s_0,H)$, for any $f \in \F$.
    \item $\M^{(f_\star,P_\star)}(c)$ is the true model, where $f_\star(c, \cdot, \cdot) = r^c_\star$ is the true context dependent rewards and $P^c_\star$ is the true context-dependent dynamics, which we also denote by $\M(c)$.
    \item $\Mhat_t(c)=\M^{(\widehat{r}_t,P_\star)}(c)$ is the approximated MDP defined in~\cref{alg:RM-CMDP-KD}.
\end{enumerate}

\noindent Inspired by the analysis of~\cite{xu2020upper} for CMAB, our analysis consists of four main steps.

\paragraph{Step 1:} 
Establish uniform convergence bound for any $t \geq 2$ and a fixed sequence of functions $f_2,f_3,\ldots \in \F$ which states the following (see~\cref{lemma:UC-lemma-5}).\\
For any $\delta \in (0,1)$, with probability at least $1-\delta/2$ it holds that 
    \begin{align*}
        \sum_{i=1}^{t-1} \mathop{\E}
        \left[  \sum_{h=0}^{H-1} (f_t(c_i, s^i_h, a^i_h)-f_\star(c_i, s^i_h, a^i_h) )^2 | \Hist_{i-1}
        \right]
        &\leq
        68 H \log(4|\F|t^3/\delta)
        \\
        + &
        2 \sum_{i=1}^{t-1} \sum_{h=0}^{H-1} (f_t(c_i, s^i_h, a^i_h)- r^i_h )^2 - (f_\star(c_i, s^i_h, a^i_h) -r^i_h )^2).
    \end{align*}
To derive the above, we use~\cref{lemma:4.2-agrwl,lemma:UC-F}.

\paragraph{Step 2:}
Constructing confidence bound over the value of any given context-dependent policy with respect to the true rewards function $f_\star$ and the least square minimizer at round $t$, $\hat{f_t}$. The confidence bound holds with high probability.
\\
In~\cref{lemma:CB-policy} we show that with probability at least $1-\delta/2$ for all $t>|A|$ and every context-dependent policy $\pi \in \Pi_\C$ it holds that
\begin{align*}
    \left|\E_c\left[V^{\pi(c;\cdot)}_{\M(c)}(s_0)\right] - \E_c\left[V^{\pi(c;\cdot)}_{\M^{(\hat{f}_t,P_\star)}(c)}(s_0)\right]\right|
    \leq
    \sqrt{\phi_t(\pi)}
    \cdot \sqrt{ 68H\log(4|\F|t^3/\delta)},    
\end{align*}
where $\phi_t(\pi)$ is the contextual potential of policy $\pi \in \Pi_\C$ at time $|A|<t \leq T$ which is defined as follows.   
$$
    \phi_t(\pi) : = \E_c \left[ \sum_{h=0}^{H-1} \sum_{s_h \in S^c_h} \frac{q_h(s_h|\pi(c;\cdot) ,P^c_\star)}{ \sum_{i=1}^{t-1} \I[\pi(c;s_h)= \pi_i(c;s_h)]q_h(s_h | \pi_i(c;\cdot),P^c_\star)}\right].
$$

\paragraph{Step 3:}
Relax the confidence bound of step 2 to be additive.
In~\cref{lemma:sq-trick} we show that with probability at least $1-\delta/2$,
for all $t>|A|$ and any policy $\pi \in \Pi_\C$ for $\beta_t = \sqrt{ \frac{17 t \log(4|\F|t^3 /\delta) }{|S||A|}}$ it holds that
    $$
        \left|\E_c[V^{\pi(c;\cdot)}_{\M(c)}(s_0)] - \E_c[V^{\pi(c;\cdot)}_{\M^{(\hat{f}_t,P_\star)}(c)}(s_0)]\right| 
        \leq
        \beta_t \cdot \phi_t(\pi)
        +   \beta_t\cdot \frac{H \;|S|\; |A|}{t}
        .
    $$

\paragraph{Step 4:} 
Upper bound the cumulative contextual potential $\phi_t$
over every round $t >|A|$.\\ 
In~\cref{lemma:Contextual-Potential}
we show that for the sequence of selected context-dependent policies $\{\pi_t \in \Pi_\C\}_{t=1}^T$ it holds that
$$
    \sum_{t=|A| +1}^T \phi_t(\pi_t)\leq \frac{|S||A|}{p_{min}}(1+ \log(T/|A|))
    .
$$

\paragraph{Deriving regret bound.}
Using the results of steps 2 and 3, we derive the policy-difference lemma (\cref{lemma:policy-difference-KD}).
The lemma states that under the good event established in step 2, %
the following holds for any $t >|A|$.
    \begin{align*}
        \E_c\left[V^{\pi^{\star}(c;\cdot)}_{\M(c)}(s_0) \right]
        \leq
        \E_c\left[V^{\pi_t(c;\cdot)}_{\M(c)}\right]
        & +
        2 \beta_t \cdot
        \phi_t(\pi_t)
        +
        2 \beta_t \frac{H|S||A|}{t},  
    \end{align*}
where $\pi^\star \in \Pi_\C$ is an optimal context-dependent policy, and $\pi_t \in \Pi_\C$ is the selected context-dependent policy at round $t$.
\\
By summing the above inequality for all $t>|A|$ and applying the resoults of step $4$, in~\cref{thm:regret-bound-kd} 
we obtain a regret bound
of
$$   
    \tilde{O}\left(
    \frac{ \sqrt{T |S||A| \log\frac{|\F|}{\delta}}}{p_{min} }+
    H  \sqrt{ T |S||A|\log\frac{|\F|}{\delta}}+
    |A| H
    \right),
$$
which holds with probability at least $1-\delta$ for all $T \geq 1$.
We derive a regret bound in terms of $|\G|$ in~\cref{corl:regret-kd-}.
\\
In the following analysis, we use the explicit expression of the contextual potential $\phi_t$.

\subsubsection{Step 1: Establishing Uniform Convergence Bound Over \texorpdfstring{$\F$}{Lg}.}

The following lemma is an adaption of Lemma $4.2$ in \citet{agarwal2012contextual}.
\begin{lemma}\label{lemma:4.2-agrwl}
    Fix a function $f \in \F$. Suppose we sample context $c$ from the data distribution $\D$,  and rewards $r(s,a)$ from $\D_{c,s,a}$ for any state-action pair $(s,a)$.
    Define the random variable
    \[
        Y_{c,s,a,r} = (f(c,s,a) - r(s,a))^2 - (f_\star(c,s,a) - r(s,a))^2.
    \]
    Then, the followings hold.
    \begin{enumerate}
        \item $\mathop{\E}_{c,s,a,r}[Y_{c,s,a,r}] = \mathop{\E}_{c,s,a}\left[( f(c,s,a) - f_\star(c,s,a))^2 \right] $.
        \item $\mathop{\V}_{c,s,a,r}[Y_{c,s,a,r}] \leq 4\mathop{\E}_{c,s,a,r}[Y_{c,s,a,r}]$
        \item $\V \left[\sum_{h=0}^{H-1} Y_{c,s_h,a_h,r} \right] \leq 4 H \E \left[ \sum_{h=0}^{H-1} Y_{c,s_h,a_h,r} \right]$.
    \end{enumerate}
\end{lemma}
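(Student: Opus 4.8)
The plan is to reduce all three claims to a single algebraic factorization of $Y_{c,s,a,r}$, combined with the realizability assumption (\cref{assm:rewards-realizability}) and the fact that rewards and functions take values in $[0,1]$. First I would rewrite the difference of squares as
$$
Y_{c,s,a,r} = (f(c,s,a) - r(s,a))^2 - (f_\star(c,s,a) - r(s,a))^2 = (f(c,s,a) - f_\star(c,s,a))\big(f(c,s,a) + f_\star(c,s,a) - 2r(s,a)\big).
$$
For part (1), I would condition on $(c,s,a)$ and take expectation over the reward noise only. Since realizability gives $\E[r(s,a)\mid c,s,a] = r^c_\star(s,a) = f_\star(c,s,a)$, the second factor has conditional mean $f(c,s,a) - f_\star(c,s,a)$, so that $\E_r[Y_{c,s,a,r}\mid c,s,a] = (f(c,s,a) - f_\star(c,s,a))^2$; taking the outer expectation over $(c,s,a)$ yields claim (1).

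For part (2), I would bound the variance by the second moment, $\V[Y_{c,s,a,r}] \le \E[Y_{c,s,a,r}^2]$, and control $Y_{c,s,a,r}^2$ pointwise. Because $f, f_\star, r \in [0,1]$, the second factor satisfies $|f(c,s,a) + f_\star(c,s,a) - 2r(s,a)| \le 2$, hence $Y_{c,s,a,r}^2 \le 4 (f(c,s,a) - f_\star(c,s,a))^2$. Taking expectations and invoking part (1) gives $\E[Y_{c,s,a,r}^2] \le 4\,\E[(f - f_\star)^2] = 4\,\E[Y_{c,s,a,r}]$, which is exactly the claimed inequality.

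For part (3), the subtlety is that the per-step terms $Y_h := Y_{c,s_h,a_h,r}$ along a trajectory are correlated through the dynamics, so I cannot simply add variances. Instead I would again pass to the second moment and apply Cauchy--Schwarz in the time index, $\big(\sum_{h=0}^{H-1} Y_h\big)^2 \le H \sum_{h=0}^{H-1} Y_h^2$, which absorbs all the cross-terms at the cost of the factor $H$. Taking expectation and using the per-step version of the bound from part (2), namely $\E[Y_h^2] \le 4\,\E[Y_h]$ (valid because the reward noise at step $h$ is conditionally mean-zero given $(c,s_h,a_h)$, regardless of the trajectory history), I would obtain
$$
\V\Big[\sum_{h=0}^{H-1} Y_h\Big] \le \E\Big[\big(\textstyle\sum_{h=0}^{H-1} Y_h\big)^2\Big] \le H \sum_{h=0}^{H-1} \E[Y_h^2] \le 4H \sum_{h=0}^{H-1} \E[Y_h] = 4H\,\E\Big[\sum_{h=0}^{H-1} Y_h\Big].
$$
The only point requiring care is that the conditional-mean identity for $r(s_h,a_h)$ holds irrespective of which states and actions precede step $h$, so the per-step moment bounds remain valid inside the sum. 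I expect this handling of the temporal dependence to be the only real obstacle, and it is dispatched cleanly by the power-mean (Cauchy--Schwarz) step, with the resulting factor $H$ being exactly what propagates into the horizon dependence of the uniform convergence bound in Step 1.
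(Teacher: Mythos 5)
Your proposal is correct and follows essentially the same route as the paper's proof: the same difference-of-squares factorization of $Y_{c,s,a,r}$, the same use of realizability to compute the conditional mean for part (1), the same pointwise bound $Y^2 \le 4(f-f_\star)^2$ via second moments for part (2), and the same Cauchy--Schwarz step in the time index (with the per-step conditional-mean identity) for part (3). Nothing substantive differs from the paper's argument.
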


\begin{proof}
    Let us rearrange the definition of $Y_{c,s,a,r}$ as 
    \begin{equation}\label{eq:Y-eq}
        Y_{c,s,a,r} = (f(c,s,a) - f_\star(c,s,a))(f(c,s,a) + f_\star(c,s,a) - 2r(s,a)).
    \end{equation}
    Hence, we have
    \begin{align*}
        \mathop{\E}_{c,s,a,r}[Y_{c,s,a,r}]
        = &
        \mathop{\E}_{c,s,a,r}[(f(c,s,a) - f_\star(c,s,a))(f(c,s,a) + f_\star(c,s,a) - 2r(s,a))]
        \\
        = &
        \mathop{\E}_{c,s,a} \left[\mathop{\E}_{r}\left[(f(c,s,a) - f_\star(c,s,a))(f(c,s,a) + f_\star(c,s,a) - 2r(s,a)) \;|c,s,a \right]\right]
        \\
        = &
        \mathop{\E}_{c,s,a}\left[(f(c,s,a) - f_\star(c,s,a))(f(c,s,a) + f_\star(c,s,a) - 2\mathop{\E}_{r}[r(s,a)\;|c,s,a])\right]
        \\
        = &
        \mathop{\E}_{c,s,a}\left[ (f(c,s,a) - f_\star(c,s,a))^2 \right],    
    \end{align*}
    where the third identity uses that $r(s,a) \sim \D_{c,s,a}$ and the forth identity uses the realizability assumption that $f_\star(c,s,a)=\mathop{\E}_{r}[r(s,a)|c,s,a]$,
    proving the first part of the lemma.
    
    For the second part, note that $f$, $f_\star$ and $r$ are all between $0$ and $1$. 
    Hence using~\cref{eq:Y-eq} we obtain
    \begin{equation}\label{ineq:awrawl-rewards}
        \begin{split}
            Y^2_{c,s,a,r}
            = &
            (f(c,s,a) - f_\star(c,s,a))^2(f(c,s,a) + f_\star(c,s,a) - 2r(s,a))^2
            \\
            \leq &
            4(f(c,s,a) - f_\star(c,s,a))^2,
        \end{split}
    \end{equation}
    yielding the second part of the lemma as
    \begin{align*}
        \mathop{\V}_{c,s,a,r}[Y_{c,s,a,r}]
        \leq 
        \mathop{\E}_{c,s,a,r}[Y^2_{c,s,a,r}]
        \leq
        4\mathop{\E}_{c,s,a}[(f(c,s,a) - f_\star(c,s,a))^2]
        =
        4\mathop{\E}_{c,s,a,r}[Y_{c,s,a,r}].
    \end{align*}

    For the third part, by norms inequality and~\cref{ineq:awrawl-rewards} we obtain,
    \begin{align*}
        \left( \sum_{h=0}^{H-1}Y_{c,s_h,a_h,r_h} \right)^2
        \leq
        H\sum_{h=0}^{H-1}Y^2_{c,s_h,a_h,r_h}
        \leq 4 H \sum_{h=0}^{H-1}(f(c,s_h,a_h) - f_\star(c,s_h,a_h))^2, 
    \end{align*}
    yielding the third part of the lemma as
    \begin{align*}
        \mathop{\V}\left[\sum_{h=0}^{H-1} Y_{c,s_h,a_h,r}\right]
        \leq 
        \mathop{\E}\left[\left( \sum_{h=0}^{H-1}Y_{c,s_h,a_h,r} \right)^2\right]
        \leq
        4H \mathop{\E}\left[\sum_{h=0}^{H-1}(f(c,s_h,a_h) - f_\star(c,s_h,a_h))^2\right]
        =
        4H \mathop{\E}\left[\sum_{h=0}^{H-1} Y_{c,s_h,a_h,r}\right].
    \end{align*}
\end{proof}

\begin{lemma}[Freedman's inequality, \citet{bartlett2008high}]\label{lemma:fridman's-ineq}
    Suppose $Z_1, Z_2,\ldots, Z_t$ is a martingale difference sequence with $|Z_i| \leq b$ for all $i=1,2,\ldots,t$. Then for any $\delta< 1/e^2$ with probability at least $1-\log_2(t)\delta$,
    \[
        \sum_{i=1}^t Z_i \leq 
        4\sqrt{\sum_{i=1}^t \V[Z_i|Z_1,\ldots,Z_{i-1}]\log(1/\delta)} 
        +
        2b\log(1/\delta).
    \]
\end{lemma}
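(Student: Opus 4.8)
The plan is to reduce the stated variance-adaptive, uniform-in-$V_t$ bound to the classical fixed-variance Freedman/Bernstein tail inequality, and then remove the dependence on the unknown random variance $V_t := \sum_{i=1}^t \V[Z_i \mid Z_1,\ldots,Z_{i-1}]$ by a dyadic peeling argument, which is exactly what produces the $\log_2(t)$ overhead in the failure probability.

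First I would build the standard exponential supermartingale. Writing $\sigma_i^2 = \V[Z_i \mid Z_1,\ldots,Z_{i-1}]$ and letting $\mathcal{F}_{i-1}$ denote the $\sigma$-algebra generated by $Z_1,\ldots,Z_{i-1}$, the elementary inequality $\E[e^{\lambda Z_i}\mid \mathcal{F}_{i-1}] \le \exp(\sigma_i^2 \psi(\lambda))$ with $\psi(\lambda)=(e^{\lambda b}-1-\lambda b)/b^2$ (valid since each $Z_i$ is mean-zero with $|Z_i|\le b$) shows that $M_k=\exp(\lambda\sum_{i\le k}Z_i-\psi(\lambda)\sum_{i\le k}\sigma_i^2)$ is a nonnegative supermartingale with $\E[M_0]=1$. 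Applying Markov's inequality to $M_t$, optimizing over $\lambda>0$, and simplifying $\psi$ gives the classical tail form: for any fixed $v>0$,
\[
    \Prob\!\left[\sum_{i=1}^t Z_i \ge s,\ V_t \le v\right] \le \exp\!\left(-\frac{s^2}{2(v + bs/3)}\right).
\]
Solving the quadratic $s^2 = 2\log(1/\delta)(v+bs/3)$ for $s$ yields, on the event $V_t\le v$, the additive bound $\sum_i Z_i \le \sqrt{2 v\log(1/\delta)} + \tfrac{2}{3}b\log(1/\delta)$ with probability at least $1-\delta$. Crucially, the constants $\sqrt2$ and $2/3$ sit strictly below the target constants $4$ and $2$, leaving slack to absorb the losses of the next step.

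The crux is that $V_t$ is random and unknown in advance, so I cannot simply take $v=V_t$; instead I would stratify its range. Since each $\sigma_i^2\le b^2$ we have $V_t\le t b^2$, so I partition $(0,tb^2]$ into the dyadic buckets $I_j=(b^2 2^{j-1}, b^2 2^j]$ for $j=0,1,\ldots,\lceil\log_2 t\rceil$ (treating $V_t\le b^2$ as the base case, where $\sigma_i^2=0$ forces $Z_i=0$ and the deterministic $2b\log(1/\delta)$ term already dominates). For each of the $O(\log_2 t)$ buckets I apply the fixed-$v$ bound with $v=b^2 2^j$ at confidence $\delta$ and take a union bound, which costs the factor $\log_2(t)$ in the failure probability and accounts for the hypothesis $\delta<1/e^2$. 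On the event $V_t\in I_j$ the chosen $v=b^2 2^j\le 2V_t$, so $\sqrt{2v\log(1/\delta)}\le 2\sqrt{V_t\log(1/\delta)}\le 4\sqrt{V_t\log(1/\delta)}$; combined with $\tfrac23 b\log(1/\delta)\le 2b\log(1/\delta)$, this delivers the claimed bound simultaneously for whichever bucket $V_t$ falls in.

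I expect the main obstacle to be the peeling step rather than the concentration step: one must choose the bucket endpoints so that the doubling overhead ($v\le 2V_t$) is absorbed into the gap between the constants $(\sqrt2,\tfrac23)$ and the target $(4,2)$, correctly handle the smallest-variance regime where the dyadic cover degenerates, and verify that $O(\log_2 t)$ buckets suffice given $V_t\le tb^2$, so that the union bound produces exactly the stated $1-\log_2(t)\delta$ confidence.
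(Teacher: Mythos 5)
The paper itself never proves this lemma: it is imported as a black box from \citet{bartlett2008high}, so there is no internal proof to compare against. Your proposal reconstructs the standard argument behind the cited result, and its architecture is the right one: the exponential supermartingale $M_k=\exp\bigl(\lambda\sum_{i\le k}Z_i-\psi(\lambda)\sum_{i\le k}\sigma_i^2\bigr)$ yields the fixed-variance Freedman tail $\Prob\bigl[\sum_{i=1}^t Z_i\ge s,\ V_t\le v\bigr]\le\exp\bigl(-s^2/(2(v+bs/3))\bigr)$, solving the quadratic gives the bound $\sqrt{2v\log(1/\delta)}+\tfrac{2}{3}b\log(1/\delta)$ on the event $\{V_t\le v\}$, and peeling over the unknown random variance plus a union bound produces the $\log_2(t)$ overhead. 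You also correctly identify that the entire game is the slack between the constants $(\sqrt{2},\tfrac23)$ and the targets $(4,2)$.

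Two concrete details in the peeling step (the step you yourself flagged as delicate) are broken as written. First, your base case justification is wrong: $V_t\le b^2$ does not force any $\sigma_i^2=0$, let alone $Z_i=0$. The correct handling applies the fixed-$v$ bound with $v=b^2$ and invokes $\delta<1/e^2$, i.e.\ $\log(1/\delta)\ge 2$, giving $\sqrt{2b^2\log(1/\delta)}+\tfrac23 b\log(1/\delta)\le b\log(1/\delta)+\tfrac23 b\log(1/\delta)\le 2b\log(1/\delta)$; this is where the hypothesis on $\delta$ actually enters (it also makes the lemma deterministic for $t\le 4$, since then $\sum_i Z_i\le tb\le 4b\le 2b\log(1/\delta)$). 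Note this base case costs one application of the tail bound, hence one more $\delta$. Second, your bucket count overshoots the stated confidence: the buckets $(b^2 2^{j-1},b^2 2^j]$ for $j=0,\ldots,\lceil\log_2 t\rceil$ together with the base case are $\lceil\log_2 t\rceil+2$ events, so your union bound gives failure probability roughly $(\log_2 t+2)\delta$, strictly weaker than the claimed $\log_2(t)\delta$. The repair is to peel $V_t$ in coarser multiplicative steps, e.g.\ factors of $8$: on the bucket containing $V_t$ one then has $v\le 8V_t$, and $\sqrt{2\cdot 8\,V_t\log(1/\delta)}=4\sqrt{V_t\log(1/\delta)}$ meets the target constant exactly, while the number of events drops to $1+\lceil\tfrac13\log_2 t\rceil\le\log_2 t$ for all $t\ge 5$ (with $t\le 4$ handled deterministically as above). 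So the stated lemma is recoverable by your method, but only after re-tuning the bucket geometry against the constant slack; as written, the proof establishes the inequality only with failure probability $(\log_2 t+O(1))\delta$.
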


\begin{definition}\label{def:Y-f-RV}
    For every round $t \geq 1$, layer $h \in [H-1]$ and a function $f \in \F$ we define the random variable
    \[
        Y_{f,c_t,s^t_h,a^t_h,r^t_h} = (f(c_t, s^t_h,a^t_h) -
        r^t_h)^2 - (f_\star(c_t, s^t_h,a^t_h)
        -
        r^t_h)^2
    \]
    where 
    $(c_t, s^t_h, a^t_h) \sim \D(c_t) \cdot q_h(s^t_h, a^t_h | \pi_t(c_t; \cdot), P^{c_t}_\star)$ and $r^t_h =  R^{c_t}_\star(s^t_h,a^t_h) \sim \D_{c_t,s^t_h,a^t_h}$.
\end{definition}


In the proof of the following lemma, we use the following simple observations.
\begin{observation}\label{obs:KD-dist-of-y-c-s-a}
    For all $ i \in [t-1]$ and $ h \in \{0, \ldots H-1\}$ we have that
    $$(c_i, s^i_h, a^i_h) \sim \D(c_i) \cdot q_h(s^i_h, a^i_h| \pi_i(c_i;\cdot), P^{c_i}_\star).$$
    In addition, $\pi_i$ is determined completely by the history $\Hist_{i-1}$.
    Hence,
    by linearity of expectation it holds for any $f \in \F$ that,
    $$
        \mathop{\E}\left[\sum_{h=0}^{H-1} Y_{f,c_i,s^i_h,a^i_h,r^i_h}\Big|\Hist_{i-1}\right]
        =
        \sum_{h=0}^{H-1} \mathop{\E}_{c_i, s^i_h, a^i_h,r^i_h}\left[Y_{f,c_i,s^i_h,a^i_h,r^i_h}\Big|\Hist_{i-1}\right]
        .
    $$
    Similarly,
    $$
        \mathop{\E}\left[\sum_{h=0}^{H-1} (f(c_i, s^i_h, a^i_h)-f_\star(c_i, s^i_h, a^i_h) )^2\Big|\Hist_{i-1}\right]
        =
        \sum_{h=0}^{H-1} \mathop{\E}_{c_i, s^i_h, a^i_h}\left[(f(c_i, s^i_h, a^i_h)-f_\star(c_i, s^i_h, a^i_h) )^2\Big|\Hist_{i-1}\right]
        .
    $$
\end{observation}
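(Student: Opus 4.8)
The plan is to read both assertions directly off the generative process of the interaction protocol; there is no estimation or concentration here, only the identification of the correct conditional law. First I would condition on the event $\Hist_{i-1}=(\sigma^1,\ldots,\sigma^{i-1})$ and argue that the context-dependent policy rule $\pi_i$ is $\Hist_{i-1}$-measurable. In Algorithm RM-KD the policy $\pi_i$ is produced by first computing the least-square minimizer $\hat f_i$ from the trajectories recorded in $\Hist_{i-1}$ and then, for each context, running the planner on the optimistic model $\Mhat_i(\cdot)$; every ingredient of this computation is a function of $\Hist_{i-1}$ alone, so the whole mapping $\pi_i=(\pi_i(c;\cdot))_{c\in\C}$ is determined by the history. (The realized action $a^i_h=\pi_i(c_i;s^i_h)$ still depends on the fresh context $c_i$; it is the \emph{rule} that is fixed by $\Hist_{i-1}$, not its application to the realized context.)

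Next I would establish the distributional claim. Conditioned on $\Hist_{i-1}$, episode $i$ is generated by drawing $c_i\sim\D$ i.i.d.\ and then rolling out the trajectory of the now-fixed policy $\pi_i(c_i;\cdot)$ under the true dynamics $P^{c_i}_\star$. By the very definition of the occupancy measure, given $c_i$ the layer-$h$ pair $(s^i_h,a^i_h)$ has law $q_h(\cdot,\cdot\mid\pi_i(c_i;\cdot),P^{c_i}_\star)$; since the CMDP is layered and loop-free there is no ambiguity about which layer hosts $(s^i_h,a^i_h)$. Factoring the joint law over the context marginal then yields exactly $\D(c_i)\cdot q_h(s^i_h,a^i_h\mid\pi_i(c_i;\cdot),P^{c_i}_\star)$, which is the first assertion.

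The two expectation identities follow by linearity of conditional expectation. Conditioned on $\Hist_{i-1}$, each summand---$Y_{f,c_i,s^i_h,a^i_h,r^i_h}$ in the first case, and $(f(c_i,s^i_h,a^i_h)-f_\star(c_i,s^i_h,a^i_h))^2$ in the second---is a measurable function of the layer-$h$ variables only, so $\E[\sum_h(\cdot)\mid\Hist_{i-1}]=\sum_h\E[(\cdot)\mid\Hist_{i-1}]$, and each term collapses to an expectation over the marginal law of $(c_i,s^i_h,a^i_h,r^i_h)$ (respectively $(c_i,s^i_h,a^i_h)$) identified above. I expect essentially no obstacle: the single point requiring care is the precise sense in which $\pi_i$ is history-measurable, since it is exactly this measurability---together with the i.i.d.\ freshness of $c_i$---that licenses treating the per-round conditional expectation as a fixed-policy occupancy-measure expectation and, downstream, underwrites the martingale-difference structure invoked with Freedman's inequality (\cref{lemma:fridman's-ineq}) in Step~1.
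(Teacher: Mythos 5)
Your proof is correct and takes essentially the same route as the paper, which states this as an observation resting on exactly the three ingredients you identify: the $\Hist_{i-1}$-measurability of the policy rule $\pi_i$, the identification of the conditional law of $(c_i,s^i_h,a^i_h)$ from the interaction protocol (context drawn i.i.d.\ from $\D$, then a rollout of the fixed policy under $P^{c_i}_\star$), and linearity of conditional expectation applied to summands that are functions of the layer-$h$ variables only. One cosmetic caveat: your measurability argument (least-square minimizer plus planner) applies only to rounds $i>|A|$; for the initialization rounds $i\le|A|$ the policy $\pi_i$ is the fixed rule that always plays $a_i$, which is trivially $\Hist_{i-1}$-measurable, a case the paper handles explicitly when it invokes this observation in~\cref{claim:expectation-eq-given-hist}.
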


\begin{observation}\label{obs:KD-martingale-Y}    
    For any $f \in \F$,
    we have that $\{Z_i(f)\}_{i=1}^{t-1} $  is a martingale difference sequence of length $t-1$ where
    $$
        Z_i(f) := \mathop{\E}\left[\sum_{h=0}^{H-1} Y_{f,c_i,s^i_h,a^i_h,r^i_h}\Big|\Hist_{i-1}\right]
        - \sum_{h=0}^{H-1} Y_{f,c_i,s^i_h,a^i_h,r^i_h} 
    $$
    and the filtration is $\{\Hist_i\}_{i=1}^{t-1}$
    ($\Hist_0$ is the empty history.).
    Recall that for all $i \geq 1$ we defined that $\Hist_i = (\sigma^1, \ldots, \sigma^{i})$.
    Clearly, $Z_i(f)$ is determined given $ \Hist_1, \ldots, \Hist_i$ and $\E[Z_i(f) |\Hist_1, \ldots,\Hist_{i-1}]=0$ for all $i \in \{1,2,\ldots, t-1\}$.
    
    In addition, since $ \mathop{\E}\left[\sum_{h=0}^{H-1} Y_{f,c_i,s^i_h,a^i_h,r^i_h}\Big|\Hist_{i-1}\right]$ is a constant given $\Hist_{i-1}$, it holds that
    \[
        \V\left[Z_i(f) \;|\Hist_{i-1}\right] = \V\left[ \sum_{h=0}^{H-1} Y_{f,c_i,s^i_h,a^i_h,r^i_h} \;\Big| \Hist_{i-1}\right].
    \]

\end{observation}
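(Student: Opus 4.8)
The plan is to verify the three assertions directly from the definitions, treating them as consequences of elementary properties of conditional expectation and variance relative to the filtration $\{\Hist_i\}$. Throughout, I would write $W_i(f) := \sum_{h=0}^{H-1} Y_{f,c_i,s^i_h,a^i_h,r^i_h}$, so that $Z_i(f) = \E[W_i(f)\mid \Hist_{i-1}] - W_i(f)$. The first task is \textbf{adaptedness}. Here I would observe that each $Y_{f,c_i,s^i_h,a^i_h,r^i_h}$ is a deterministic function of the tuple $(c_i, s^i_h, a^i_h, r^i_h)$, and these are all coordinates of the trajectory $\sigma^i$; hence $W_i(f)$ is a measurable function of $\sigma^i$, and therefore of $\Hist_i = (\sigma^1,\ldots,\sigma^i)$. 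The subtracted term $\E[W_i(f)\mid\Hist_{i-1}]$ is by construction $\Hist_{i-1}$-measurable, hence also $\Hist_i$-measurable, so $Z_i(f)$ is $\Hist_i$-measurable, i.e.\ determined given $\Hist_1,\ldots,\Hist_i$.

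Next, for the \textbf{martingale difference} property, I would compute $\E[Z_i(f)\mid\Hist_{i-1}]$ using linearity of conditional expectation. The key point, already recorded in~\cref{obs:KD-dist-of-y-c-s-a}, is that the policy $\pi_i$ is determined completely by $\Hist_{i-1}$, so the conditional law of $\sigma^i$ given $\Hist_{i-1}$ is well-defined: the context $c_i \sim \D$ and the trajectory is generated by running the (history-fixed) $\pi_i(c_i;\cdot)$ on $P^{c_i}_\star$. Consequently $\E[W_i(f)\mid\Hist_{i-1}]$ is a genuine $\Hist_{i-1}$-measurable quantity, and since conditioning on $\Hist_1,\ldots,\Hist_{i-1}$ is the same as conditioning on $\Hist_{i-1}$, this term passes through the conditional expectation unchanged, giving $\E[Z_i(f)\mid\Hist_{i-1}] = \E[W_i(f)\mid\Hist_{i-1}] - \E[W_i(f)\mid\Hist_{i-1}] = 0$. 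Together with adaptedness, this establishes that $\{Z_i(f)\}_{i=1}^{t-1}$ is a martingale difference sequence with respect to $\{\Hist_i\}$.

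Finally, for the \textbf{variance identity}, I would invoke translation invariance of the conditional variance. Writing $a_i := \E[W_i(f)\mid\Hist_{i-1}]$, which is constant given $\Hist_{i-1}$, we have $Z_i(f) = a_i - W_i(f)$; since adding a constant and multiplying by $-1$ both leave the variance unchanged, $\V[Z_i(f)\mid\Hist_{i-1}] = \V[-W_i(f)\mid\Hist_{i-1}] = \V[W_i(f)\mid\Hist_{i-1}]$, which is exactly the claimed identity. There is no substantive obstacle in this observation; the only point requiring care is the measurability bookkeeping around the history-dependent policy $\pi_i$, which is precisely what makes the conditional expectation $\E[W_i(f)\mid\Hist_{i-1}]$ and the resulting martingale structure well-posed.
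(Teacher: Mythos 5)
Your proposal is correct and follows exactly the reasoning the paper treats as immediate: the paper's observation justifies adaptedness, the zero conditional mean, and the variance identity by the same measurability of $\pi_i$ with respect to $\Hist_{i-1}$ and the same translation-invariance argument you spell out. You have merely made explicit the standard conditional-expectation bookkeeping the paper leaves implicit, so there is nothing to add.
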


\begin{lemma}[uniform convergence over $\F$]\label{lemma:UC-F}
    For a fixed $t \geq 2$ and a fixed $\delta_t \in (0,1/e^2)$ with probability at least $1-\log_2((t-1)H)\delta_t$, it holds that
    \begin{align*}
        &\sum_{i=1}^{t-1} \sum_{h=0}^{H-1} \mathop{\E}_{c_i,s^i_h,a^i_h}
        \left[ (f(c_i, s^i_h, a^i_h)-f_\star(c_i, s^i_h, a^i_h) )^2 | \Hist_{i-1}
        \right]
        \\
        & =
        \sum_{i=1}^{t-1}
        \E \left[ \sum_{h=0}^{H-1}(f(c_i, s^i_h, a^i_h)-f_\star(c_i, s^i_h, a^i_h) )^2 \Big| \Hist_{i-1}\right]
        \\
        & \leq
        68 H \log(|\F|/\delta_t)
        +
        2 \sum_{i=1}^{t-1} \sum_{h=0}^{H-1} Y_{f,c_i,s^i_h,a^i_h,r^i_h}.
    \end{align*}
    uniformly for all $f \in \F$.
\end{lemma}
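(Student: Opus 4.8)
The plan is to prove the bound for a single fixed $f \in \F$ via Freedman's inequality (\cref{lemma:fridman's-ineq}), exploiting that the relevant martingale has \emph{self-bounding} variance, and then take a union bound over the finite class $\F$. Write $W_i(f) := \E[\sum_{h=0}^{H-1}(f(c_i,s^i_h,a^i_h)-f_\star(c_i,s^i_h,a^i_h))^2 \mid \Hist_{i-1}]$ for the per-episode population error; the equality asserted in the first two lines of the statement is precisely \cref{obs:KD-dist-of-y-c-s-a}, so it suffices to bound $\sum_{i=1}^{t-1} W_i(f)$ by $2\sum_{i=1}^{t-1}\sum_{h=0}^{H-1} Y_{f,c_i,s^i_h,a^i_h,r^i_h} + 68H\log(|\F|/\delta_t)$ uniformly over $f$.

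First I would record the telescoping identity. By \cref{lemma:4.2-agrwl}(1), applied conditionally on $\Hist_{i-1}$, we have $W_i(f) = \E[\sum_h Y_{f,c_i,s^i_h,a^i_h,r^i_h}\mid\Hist_{i-1}]$, so with the martingale differences $Z_i(f)$ of \cref{obs:KD-martingale-Y} we obtain $\sum_{i=1}^{t-1} Z_i(f) = \sum_{i=1}^{t-1} W_i(f) - \sum_{i=1}^{t-1}\sum_{h=0}^{H-1} Y_{f,c_i,s^i_h,a^i_h,r^i_h}$; that is, the martingale sum is exactly the population-minus-empirical gap we must control.

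Next I would apply \cref{lemma:fridman's-ineq} to $\{Z_i(f)\}_{i=1}^{t-1}$. Its increments satisfy $|Z_i(f)| \le 2H$ because each $Y \in [-1,1]$, and \cref{obs:KD-martingale-Y} together with \cref{lemma:4.2-agrwl}(3) supplies the key self-bound $\V[Z_i(f)\mid\Hist_{i-1}] = \V[\sum_h Y \mid \Hist_{i-1}] \le 4H\,W_i(f)$. Abbreviating $V := \sum_i W_i(f)$ and $L := \log(1/\delta')$, Freedman's inequality then yields, with probability at least $1-\log_2(t-1)\delta'$,
\begin{align*}
    V - \sum_{i=1}^{t-1}\sum_{h=0}^{H-1} Y_{f,c_i,s^i_h,a^i_h,r^i_h}
    = \sum_{i=1}^{t-1} Z_i(f)
    \le 4\sqrt{4HVL} + 4HL .
\end{align*}

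The crux, and the step I expect to be the main obstacle, is the self-bounding resolution of this inequality: the variance term $8\sqrt{HVL}$ involves the very same $V$ appearing on the left. I would split it by AM-GM, $8\sqrt{HVL} \le \tfrac12 V + 32HL$, move $\tfrac12 V$ across, and solve to get $V \le 2\sum_{i}\sum_{h} Y_{f,\cdot} + cHL$ for an absolute constant $c$; careful bookkeeping of the increment bound and the AM-GM split is exactly what pins the constant to $68$. Finally, instantiating $\delta' = \delta_t/|\F|$ makes $L = \log(|\F|/\delta_t)$, and a union bound over the finitely many $f \in \F$ turns the per-function failure probability $\log_2(t-1)\delta' \le \log_2((t-1)H)\delta'$ into a total failure probability of $\log_2((t-1)H)\delta_t$, giving the stated uniform bound.
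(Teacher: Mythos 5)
Your proposal is correct in substance and follows essentially the same route as the paper's proof: the telescoping reduction to the martingale sum $\sum_i Z_i(f)$ (via \cref{obs:KD-dist-of-y-c-s-a}, \cref{obs:KD-martingale-Y} and \cref{lemma:4.2-agrwl}), Freedman's inequality (\cref{lemma:fridman's-ineq}), the self-bounding variance estimate $\V[Z_i(f)\mid\Hist_{i-1}]\le 4H\,W_i(f)$ from \cref{lemma:4.2-agrwl}(3), and the union bound with $\delta'=\delta_t/|\F|$ are exactly the paper's steps. The only methodological difference is the last algebraic move: writing $V:=\sum_{i=1}^{t-1}W_i(f)$ and $L:=\log(|\F|/\delta_t)$, you resolve the self-bounding inequality with the AM-GM split $8\sqrt{HVL}\le \tfrac12 V + 32HL$, whereas the paper adds $16HL$ to both sides, completes the square, takes roots, and applies $(a+b)^2\le 2(a^2+b^2)$. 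For a fixed Freedman additive term the two resolutions produce the same final constant, so this difference is cosmetic.

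The one claim that does not hold up is that your bookkeeping ``pins the constant to $68$.'' With your increment bound $|Z_i(f)|\le 2H$ --- which is the correct bound for the centered differences, since $\sum_h Y_{f,\cdot}\in[-H,H]$ --- Freedman's additive term is $2bL = 4HL$, and your AM-GM step then gives $\tfrac12 V \le \sum_{i}\sum_{h} Y_{f,c_i,s^i_h,a^i_h,r^i_h} + 36HL$, i.e.\ a final constant of $72$, not $68$. The paper reaches $68$ only because it invokes \cref{lemma:fridman's-ineq} with $b=H$ (plugging in the bound $|\sum_h Y|\le H$ on the uncentered sum rather than on the difference), which makes the additive term $2HL$ and yields $2(16+18)=68$ after completing the square; your AM-GM split with that same $2HL$ term would likewise give $2\cdot 34 = 68$. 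So either you adopt the paper's increment accounting to match the stated constant, or your argument proves the lemma with $72$ in place of $68$ --- harmless for every downstream use, since only the fact that the constant is absolute matters, but not literally the inequality as stated.
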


\begin{proof}
    Fix a function $f \in \F$, and consider the random variable defined in~\cref{def:Y-f-RV},
    \[
        Y_{f,c_t,s^t_h,a^t_h,r^t_h} = (f_t(c_i, s^i_h, a^i_h)- r^i_h )^2 - (f_\star(c_i, s^i_h, a^i_h) -r^i_h )^2.
    \]
    
    Notice that $|Y_{f,c_t,s^t_h,a^t_h,r^t_h}| \leq 1$ for any function $f \in \F$, round $t\geq 2$, layer $h \in [H-1]$, state $s^t_h \in S^{c_t}_h$, action $a^t_h \in A$ and observed reward $r^t_h$.
    Hence, 
    $$\left|\sum_{h=0}^{H-1} Y_{f,c_t,s^t_h,a^t_h,r^t_h}\right|\leq \sum_{h=0}^{H-1} |Y_{f,c_t,s^t_h,a^t_h,r^t_h}| \leq H.$$
    
    By Freedman's inequality (\cref{lemma:fridman's-ineq}), for $\delta_t <  1/e^2$, with probability at least ${1-\log_2(t-1)\delta_t/|\F|}$ it holds that
    \begingroup
    \allowdisplaybreaks
    \begin{align*}
         &\sum_{i=1}^{t-1} \mathop{\E}\left[ \sum_{h=0}^{H-1} Y_{f,c_i,s^i_h,a^i_h,r^i_h} \Big| \Hist_{i-1}\right]
        -
        \sum_{i=1}^{t-1} \sum_{h=0}^{H-1} Y_{f,c_i,s^i_h,a^i_h,r^i_h}
        \\
        & \leq
        \; 4 \sqrt{
        \sum_{i=1}^{t-1} \V \left[\sum_{h=0}^{H-1}
        Y_{f,c_i,s^i_h,a^i_h,r^i_h}\Big|\Hist_{i-1}
        \right]\log(|\F|/\delta_t)
        }
        +
        2H \log(|\F|/\delta_t)
        .
    \end{align*}
    \endgroup
    By~\cref{lemma:4.2-agrwl} for all $i \in \{1,2,\ldots,t-1\}$ it holds that
    \[
        \V\left[\sum_{h=0}^{H-1} Y_{f,c_i,s^i_h,a^i_h,r^i_h}\Big|\Hist_{i-1}\right]
        \le
        4H\E\left[ \sum_{h=0}^{H-1} Y_{f,c_i,s^i_h,a^i_h,r^i_h}\Big|\Hist_{i-1}\right].
    \]
    Therefore,
    \begingroup
    \allowdisplaybreaks
    \begin{align*}
         &\sum_{i=1}^{t-1} 
         \E \left[
         \sum_{h=0}^{H-1} Y_{f,c_i,s^i_h,a^i_h,r^i_h} \Big|\Hist_{i-1}
         \right]
        \\
        & \leq \; 
        4 \sqrt{
        \sum_{i=1}^{t-1} 
        \V \left[ \sum_{h=0}^{H-1} Y_{f,c_i,s^i_h,a^i_h,r^i_h}\Big|\Hist_{i-1}\right]\log(|\F|/\delta_t)}
        +
        2H \log(|\F|/\delta_t)
        +
        \sum_{i=1}^{t-1} \sum_{h=0}^{H-1} Y_{f,c_i,s^i_h,a^i_h,r^i_h}
        \\
        &\leq \;
        8 \sqrt{
        H \sum_{i=1}^{t-1} 
        \E \left[
        \sum_{h=0}^{H-1}
        Y_{f,c_i,s^i_h,a^i_h,r^i_h} \Big|\Hist_{i-1}\right]\log(|\F|/\delta_t)}
        +
        2H \log(|\F|/\delta_t)
        +
        \sum_{i=1}^{t-1} \sum_{h=0}^{H-1} Y_{f,c_i,s^i_h,a^i_h,r^i_h}
        .
    \end{align*}
    \endgroup
    We have
      \begin{align*}
         &\sum_{i=1}^{t-1} 
         \E \left[
         \sum_{h=0}^{H-1} Y_{f,c_i,s^i_h,a^i_h,r^i_h} \Big|\Hist_{i-1}
         \right]
          -
        8 \sqrt{
        H \sum_{i=1}^{t-1} 
        \E \left[
        \sum_{h=0}^{H-1}
        Y_{f,c_i,s^i_h,a^i_h,r^i_h} \Big|\Hist_{i-1}\right]\log(|\F|/\delta_t)}
        \\
        &\leq
        2H \log(|\F|/\delta_t)
        +
        \sum_{i=1}^{t-1} \sum_{h=0}^{H-1} Y_{f,c_i,s^i_h,a^i_h,r^i_h}
        .
    \end{align*}
    We add to both sides $16H \log(|\F|/\delta_t)$, and 
    this implies for any $f \in \F$ that
    \begin{equation}
        \left( \sqrt{\sum_{i=1}^{t-1} 
        \E \left[
        \sum_{h=0}^{H-1} Y_{f,c_i,s^i_h,a^i_h,r^i_h} \Big|
        \Hist_{i-1} \right]} - 4 \sqrt{H \log(|\F|/\delta_t)} \right)^2
        \le
        18 H \log(|\F|/\delta_t) + \sum_{i=1}^{t-1} \sum_{h=0}^{H-1} Y_{f,c_i,s^i_h,a^i_h,r^i_h}.
    \end{equation}
    Yielding, 
      \begin{equation}
         \sqrt{\sum_{i=1}^{t-1} 
        \E \left[
        \sum_{h=0}^{H-1} Y_{f,c_i,s^i_h,a^i_h,r^i_h} \Big|
        \Hist_{i-1} \right]} \leq 4 \sqrt{H \log(|\F|/\delta_t)} 
        +
        \sqrt{18 H \log(|\F|/\delta_t) + \sum_{i=1}^{t-1} \sum_{h=0}^{H-1} Y_{f,c_i,s^i_h,a^i_h,r^i_h}}.
    \end{equation}
    And then,
        \begin{equation}
         \sum_{i=1}^{t-1} 
        \E \left[
        \sum_{h=0}^{H-1} Y_{f,c_i,s^i_h,a^i_h,r^i_h} \Big|
        \Hist_{i-1} \right] \leq 
        \left(4 \sqrt{H \log(|\F|/\delta_t)} 
        +
        \sqrt{18 H \log(|\F|/\delta_t) + \sum_{i=1}^{t-1} \sum_{h=0}^{H-1} Y_{f,c_i,s^i_h,a^i_h,r^i_h}}\right)^2.
    \end{equation}  
    This inequality further implies (using that $(a+b)^2 \leq 2 (a^2 + b^2)$ for all $a,b \in \R$ ) that for any $f \in \F$ it holds that
    \begin{equation}\label{eq:lemma-6-final}
        \sum_{i=1}^{t-1} 
        \mathop{\E}\left[ \sum_{h=0}^{H-1} Y_{f,c_i,s^i_h,a^i_h,r^i_h}\Big|
        \Hist_{i-1}\right]
        \le
        68H\log(|\F|/\delta_t) + 2\sum_{i=1}^{t-1} \sum_{h=0}^{H-1} Y_{f,c_i,s^i_h,a^i_h,r^i_h}.
    \end{equation}
    
    Lastly, by combining~\cref{eq:lemma-6-final} with~\cref{lemma:4.2-agrwl} we obtain the lemma since,
    \begin{align*}
        &  \sum_{i=1}^{t-1} \mathop{\E}
        \left[  \sum_{h=0}^{H-1} (f(c_i, s^i_h, a^i_h)-f_\star(c_i, s^i_h, a^i_h) )^2 | \Hist_{i-1}
        \right]
        \\
        \tag{By~\cref{obs:KD-dist-of-y-c-s-a}}
        & =
        \sum_{i=1}^{t-1} \sum_{h=0}^{H-1} \mathop{\E}_{c_i,s^i_h,a^i_h}
        \left[ (f(c_i, s^i_h, a^i_h)-f_\star(c_i, s^i_h, a^i_h) )^2 | \Hist_{i-1}
        \right]
        \\
        \tag{\cref{lemma:4.2-agrwl}}
        & =
        \sum_{i=1}^{t-1} \sum_{h=0}^{H-1} 
        \mathop{\E}_{c_i,s^i_h,a^i_h,r^i_h}
        \left[ Y_{f,c_i,s^i_h,a^i_h,r^i_h} \;\Big|\; \Hist_{i-1}
        \right]
        \\
        \tag{By~\cref{obs:KD-dist-of-y-c-s-a}}
        & =
        \sum_{i=1}^{t-1}
        \E \left[
        \sum_{h=0}^{H-1} 
        Y_{f,c_i,s^i_h,s^i_h,r^i_h} \;\Big|\; \Hist_{i-1}
        \right]
        \\
        \tag{By~\cref{eq:lemma-6-final}}
        & \leq
        68H\log(|\F|/\delta_t)
        +
        2 \sum_{i=1}^{t-1} \sum_{h=0}^{H-1} Y_{f,c_i,s^i_h,a^i_h,r^i_h}.
    \end{align*}
\end{proof}

\begin{lemma}[uniform convergence over all sequences of estimators]\label{lemma:UC-lemma-5}
    For any $\delta \in (0,1)$, with probability at least $1-\delta/2$ it holds that
    \begin{align*}
        & \sum_{i=1}^{t-1} \mathop{\E}
        \left[  \sum_{h=0}^{H-1} (f_t(c_i, s^i_h, a^i_h)-f_\star(c_i, s^i_h, a^i_h) )^2 | \Hist_{i-1}
        \right]
        \\
        & =
        \sum_{i=1}^{t-1} \sum_{h=0}^{H-1} \mathop{\E}_{c_i,s^i_h,a^i_h}
        \left[ (f_t(c_i, s^i_h, a^i_h)-f_\star(c_i, s^i_h, a^i_h) )^2 | \Hist_{i-1}
        \right]\\
        & \leq
        68 H \log(4|\F|t^3/\delta)
        +
        2 \sum_{i=1}^{t-1} \sum_{h=0}^{H-1} (f_t(c_i, s^i_h, a^i_h)- r^i_h )^2 - (f_\star(c_i, s^i_h, a^i_h) -r^i_h )^2.
    \end{align*}
    simultaneously for all $t \geq 2$ and any fixed sequence of functions $f_2,f_3,\ldots \in \F$.
\end{lemma}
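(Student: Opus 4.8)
The plan is to bootstrap the fixed-$t$, uniform-over-$\F$ bound of~\cref{lemma:UC-F} into a statement holding simultaneously for all $t \ge 2$, via a union bound over $t$, with the per-round confidence parameters $\delta_t$ chosen so that their contributions sum to at most $\delta/2$ while the additive term takes the stated form $68H\log(4|\F|t^3/\delta)$. The claimed equality between the two expressions on the left-hand side is not probabilistic at all: it is exactly~\cref{obs:KD-dist-of-y-c-s-a}, which lets us exchange the conditional expectation of a sum over the $H$ layers with the sum of the conditional expectations, using that the played policy $\pi_i$, and hence the sampling law of $(c_i,s^i_h,a^i_h)$, is measurable with respect to $\Hist_{i-1}$. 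So the entire content sits in the inequality.

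For the inequality, the first thing I would stress is that~\cref{lemma:UC-F} is already \emph{uniform over all $f \in \F$} at each fixed $t$: on an event of probability at least $1-\log_2((t-1)H)\delta_t$ the bound holds for every $f \in \F$ at once. Consequently, for any sequence $f_2,f_3,\ldots \in \F$ fixed in advance I may simply instantiate the fixed-$t$ bound at $f=f_t$; no further union bound over the sequence is needed, the union over $\F$ already inside~\cref{lemma:UC-F} suffices. I would then set $\delta_t = \delta/(4t^3)$, which makes $\log(|\F|/\delta_t) = \log(4|\F|t^3/\delta)$, so that the additive constant of~\cref{lemma:UC-F} becomes exactly $68H\log(4|\F|t^3/\delta)$, as required.

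It remains to control the total failure probability. By a union bound over $t \ge 2$ it is at most $\sum_{t\ge 2}\log_2((t-1)H)\,\delta_t = \frac{\delta}{4}\sum_{t \ge 2}\frac{\log_2((t-1)H)}{t^3}$. Writing $\log_2((t-1)H) = \log_2(t-1) + \log_2 H$, this factor grows only polylogarithmically in $t$, so the series converges and I would bound it by a constant to reach $\delta/2$; should the polylogarithmic dependence on $H$ in that factor threaten the exact constant, replacing $\delta_t$ by $\delta/(c\,t^3)$ for an absolute (or at worst $\mathrm{polylog}(H)$) constant $c$ only shifts the logarithmic term by $\log c$, which is absorbed into the $\widetilde{O}$ of the downstream regret bound. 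The hypothesis $\delta_t < 1/e^2$ demanded by Freedman's inequality (\cref{lemma:fridman's-ineq}) holds automatically, since $\delta_t \le \delta/32 < 1/e^2$ for every $t \ge 2$ and $\delta \in (0,1)$.

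The part I expect to be most delicate is conceptual rather than computational: making precise that the single high-probability event constructed above — ``the bound of~\cref{lemma:UC-F} holds for every $f\in\F$ and every $t\ge2$'' — is strong enough to be applied later to the \emph{data-dependent} least-squares minimizers $\hat f_t$, and not merely to a deterministic sequence. This is precisely why I would prove the uniform-over-$\F$ version rather than fixing one function per round: on that event one may substitute any history-measurable (hence possibly random) choice $f_t=\hat f_t$, and the inequality persists. Everything else is inherited verbatim, the $68H\log(\cdot)$ constant coming from the combination of~\cref{lemma:4.2-agrwl} with Freedman's inequality already carried out inside~\cref{lemma:UC-F}, and the telescoping $Y_{f,\cdot}$ sum on the right-hand side passing through unchanged.
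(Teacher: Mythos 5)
Your proof is correct and follows essentially the same route as the paper: apply \cref{lemma:UC-F} with $\delta_t = \delta/(4t^3)$, union bound over $t \geq 2$, and rely on the uniformity over $\F$ already inside \cref{lemma:UC-F} so that any fixed (or history-measurable) choice $f_t$ is admissible, with the equality on the left-hand side being the non-probabilistic \cref{obs:KD-dist-of-y-c-s-a}. Your extra care about the $\log_2((t-1)H)$ factor is warranted --- the paper's own summation uses only $\log(t-1)$, matching the bound actually derived inside the proof of \cref{lemma:UC-F} rather than its statement --- and your constant-adjustment remark cleanly resolves that minor discrepancy.
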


\begin{proof}
    For a fixed $\delta \in (0,1)$, take $\delta_t = \delta/4t^3$ and apply union bound to~\cref{lemma:UC-F} with all $t \geq 2$. We have,
    \begin{align*}
        \sum_{t=1}^\infty  \delta_t \log(t-1)
        =
        \sum_{t=1}^\infty  \delta/4 t^3  \log(t-1)
        \le
        \sum_{t=1}^\infty \frac{\delta}{4t^2}
        \le
        \frac{\delta}{2}.
    \end{align*}
    Hence, by~\cref{lemma:UC-F} with probability at least $1-{\delta}/{2}$ it holds that
    \begingroup
    \allowdisplaybreaks
    \begin{align*}
        &\sum_{i=1}^{t-1} \sum_{h=0}^{H-1} \mathop{\E}_{c_i,s^i_h,a^i_h}
        \left[ (f_t(c_i, s^i_h, a^i_h)-f_\star(c_i, s^i_h, a^i_h) )^2 | \Hist_{i-1}
        \right]
        \\
        & =
        \sum_{i=1}^{t-1} \mathop{\E}
        \left[  \sum_{h=0}^{H-1} (f_t(c_i, s^i_h, a^i_h)-f_\star(c_i, s^i_h, a^i_h) )^2 | \Hist_{i-1}
        \right]
        \\
        & \leq
        68H\log(4|\F|t^3/\delta)
        +
        2 \sum_{i=1}^{t-1} \sum_{h=0}^{H-1} Y_{f_t,c_t,s^i_h,a^i_h,r^i_h}
        \\
        & =
        68 H \log(4|\F|t^3 /\delta)
        +
        2 \sum_{i=1}^{t-1} \sum_{h=0}^{H-1} (f_t(c_i, s^i_h, a^i_h)- r^i_h )^2 - (f_\star(c_i, s^i_h, a^i_h) -r^i_h )^2.
    \end{align*}
    \endgroup
    simultaneously for every $t \geq 2$ and any fixed sequence of functions $f_2,f_3,\ldots \in \F$.
\end{proof}

\subsubsection{Step 2: Constructing Confidence Bound Over Policies With Respect to Rewards Approximation.}

\begin{remark}
    Recall $f_\star(c,s,a) = r^c_\star(s,a)$. Hence, for any policy $\pi \in \Pi_{\C}$ and a context $c \in \C$ we have the following,
    \begin{equation}
        \begin{split}
            V^{\pi(c; \cdot)}_{\M(c)} 
            = &
            \sum_{h = 0}^{H-1} \sum_{s_h \in S^c_h} \sum_{a_h \in A}
            q_h(s_h, a_h | \pi(c;\cdot),P^c_\star) \cdot r^c_\star(s_h, a_h)
            \\
            = &
            \sum_{h = 0}^{H-1} \sum_{s_h \in S^c_h} \sum_{a_h \in A}
            q_h(s_h, a_h | \pi(c;\cdot),P^c_\star) \cdot f_\star (c,s_h,a_h).
        \end{split}
    \end{equation}
\end{remark}

\begin{lemma}[confidence bound over policies w.r.t rewards approximation]\label{lemma:CB-policy}
    Consider~\cref{alg:RM-CMDP-KD} 
    that at each initialization round $t \leq |A|$, plays the policy that always choose action $a_t$, and at each round $t \geq |A|+1$ 
    selects $\pi_t$ based on the history $\Hist_{t-1}$.
    
    Then, for any $\delta \in (0,1)$,
    with probability at least $1-\delta/2$ for all $t > |A|$ and any context-dependent policy $\pi \in \Pi_\C$ the following holds.
    \begin{align*}
        &\Big|\E_c\left[V^{\pi(c;\cdot)}_{\M(c)}(s_0)\right] - \E_c\left[V^{\pi(c;\cdot)}_{\M^{(\hat{f}_t,P_\star)}(c)}(s_0)\right]\Big|
        \\
        & \leq
        \sqrt{\E_c \left[ \sum_{h=0}^{H-1} \sum_{s_h \in S^c_h} \frac{q_h(s_h|\pi(c;\cdot) ,P^c_\star)}{ \sum_{i=1}^{t-1} \I[\pi(c;s_h)= \pi_i(c;s_h)]q_h(s_h | \pi_i(c;\cdot),P^c_\star)}\right]}
        \cdot \sqrt{ 68H\log(4|\F|t^3/\delta)},    \end{align*}
    where $\M^{(f,P_\star)}(c) = (S,A,P^c_\star,f(c,\cdot,\cdot),s_0, H)$ for any $f \in \F$,
    and $\M(c) = \M^{(f_\star,P_\star)}(c)$.
\end{lemma}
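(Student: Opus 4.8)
The plan is to reduce the value gap to an occupancy-weighted reward error and then split it via Cauchy--Schwarz so that one factor is exactly the contextual potential $\phi_t(\pi)$ while the other is controlled by the Step 1 uniform convergence bound (\cref{lemma:UC-lemma-5}). First I would exploit that both $\M(c)=\M^{(f_\star,P_\star)}(c)$ and $\M^{(\hat{f}_t,P_\star)}(c)$ share the dynamics $P^c_\star$, so by the occupancy-measure representation of the value function and the determinism of $\pi$,
\[
\E_c[V^{\pi(c;\cdot)}_{\M(c)}(s_0)]-\E_c[V^{\pi(c;\cdot)}_{\M^{(\hat{f}_t,P_\star)}(c)}(s_0)]
=\E_c\Big[\sum_{h=0}^{H-1}\sum_{s_h\in S^c_h} q_h(s_h|\pi(c;\cdot),P^c_\star)\,(f_\star-\hat{f}_t)(c,s_h,\pi(c;s_h))\Big].
\]

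Write $n_t(c,s_h):=\sum_{i=1}^{t-1}\I[\pi(c;s_h)=\pi_i(c;s_h)]\,q_h(s_h|\pi_i(c;\cdot),P^c_\star)$ for the denominator appearing in $\phi_t(\pi)$. I would bound the absolute value of the display above by $\E_c[\sum_{h,s_h} q_h|f_\star-\hat{f}_t|]$ and then factor $q_h=\sqrt{q_h/n_t}\cdot\sqrt{q_h\,n_t}$. Cauchy--Schwarz applied to the measure $\E_c[\sum_{h,s_h}\cdot]$ yields the product
\[
\sqrt{\phi_t(\pi)}\cdot\sqrt{\E_c\Big[\textstyle\sum_{h,s_h} q_h(s_h|\pi(c;\cdot),P^c_\star)\,n_t(c,s_h)\,(f_\star-\hat{f}_t)^2(c,s_h,\pi(c;s_h))\Big]},
\]
the first factor being exactly $\sqrt{\phi_t(\pi)}$. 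There is no division by zero: since $t>|A|$, the initialization rounds play each action once, and minimum reachability forces $q_h(s_h|\pi_i(c;\cdot),P^c_\star)\ge p_{min}>0$, so $n_t(c,s_h)>0$.

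The crux is bounding the second factor by $\sqrt{68H\log(4|\F|t^3/\delta)}$. On the support of the indicator inside $n_t$ one has $\pi(c;s_h)=\pi_i(c;s_h)$, hence $(f_\star-\hat{f}_t)^2(c,s_h,\pi(c;s_h))=(f_\star-\hat{f}_t)^2(c,s_h,\pi_i(c;s_h))$; after using $q_h(s_h|\pi(c;\cdot),P^c_\star)\le1$ and $\I[\cdot]\le1$ and swapping the order of summation, the expectation becomes
\[
\sum_{i=1}^{t-1}\sum_{h=0}^{H-1}\E_c\Big[\textstyle\sum_{s_h\in S^c_h} q_h(s_h|\pi_i(c;\cdot),P^c_\star)\,(\hat{f}_t-f_\star)^2(c,s_h,\pi_i(c;s_h))\Big],
\]
which (again using that $\pi_i$ is deterministic, so $q_h(s_h,a)=q_h(s_h)\I[a=\pi_i(c;s_h)]$) equals $\sum_{i=1}^{t-1}\sum_{h=0}^{H-1}\E_{c_i,s^i_h,a^i_h}[(\hat{f}_t-f_\star)^2\mid\Hist_{i-1}]$, the left-hand side of \cref{lemma:UC-lemma-5}. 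Because $\hat{f}_t$ minimizes the empirical square loss and $f_\star\in\F$ by realizability (\cref{assm:rewards-realizability}), the empirical term $\sum_i\sum_h[(\hat{f}_t-r^i_h)^2-(f_\star-r^i_h)^2]$ is nonpositive, so on the good event of \cref{lemma:UC-lemma-5} (which holds with probability at least $1-\delta/2$ simultaneously for all $t\ge2$ and every fixed sequence of estimators, in particular $\{\hat{f}_t\}$) this quantity is at most $68H\log(4|\F|t^3/\delta)$. Combining the two factors gives the claim.

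I expect the main obstacle to be the bookkeeping in the second factor: the Step 1 bound controls the prediction error only along the trajectories actually generated, i.e.\ weighted by the played occupancy $q_h(\cdot|\pi_i(c;\cdot),P^c_\star)$, whereas the target quantity weights errors by the arbitrary evaluated policy $\pi$'s occupancy. The action-matching indicator together with the crude bounds $q_h\le1$ and $\I[\cdot]\le1$ is precisely what converts the latter weighting into the former, reproducing exactly the left-hand side of \cref{lemma:UC-lemma-5} without losing the factor $H$ or introducing spurious terms. Everything else is the occupancy-measure value identity, Cauchy--Schwarz, and the least-squares optimality of $\hat{f}_t$ under realizability.
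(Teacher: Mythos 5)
Your proposal is correct and follows essentially the same route as the paper's proof: the occupancy-measure representation of the value gap, a Cauchy--Schwarz split into $\sqrt{\phi_t(\pi)}$ times an occupancy-weighted squared-error term, the indicator-support trick to swap $\pi$ for $\pi_i$ and reduce to the history-conditional expectations of \cref{lemma:UC-lemma-5} (the paper isolates this step as \cref{claim:expectation-eq-given-hist}), and finally the least-squares optimality of $\hat{f}_t$ under realizability. The only cosmetic difference is how the factor $q_h$ is distributed across the Cauchy--Schwarz split (you use $q_h\le 1$ in the second factor where the paper uses $q_h^2\le q_h$ in the first), which is mathematically equivalent.
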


\begin{proof}
    For any function $f \in \F$ we consider the random variable defined in~\cref{def:Y-f-RV} 
    \[
        Y_{f,c_t,s^t_h,a^t_h,r^t_h} = (f(c_t, s^t_h,a^t_h) -
        r^t_h
        )^2 - (f_\star(c_t, s^t_h,a^t_h)
        -
        r^t_h
        )^2
    \]
    for any $t \geq 1$, context $c_t \in \C$, layer $h \in [H-1]$, state $s^t_h \in S^{c_t}_h$ and action $a^t_h \in A$.
    We remark that $(c_t, s^t_h, a^t_h) \sim \D(c_t) \cdot q_h(s^t_h, a^t_h | \pi_t(c_t; \cdot), P^{c_t}_\star)$ and $r^t_h =  R^{c_t}_\star(s^t_h,a^t_h)$.
    \\
    We first prove the following auxiliary claim.
    \begin{claim}\label{claim:expectation-eq-given-hist}
        For all $t \geq 2$ it holds that
        \begin{align*}
            &\sum_{i=1}^{t-1} \sum_{h=1}^{H-1}
            \mathop{\E}_{c_i, s^i_h,a^i_h}
            \left[ (\hat{f}_t(c_i,s^i_h,a^i_h) - f_\star(c_i,s^i_h,a^i_h) )^2 |\Hist_{i-1}\right]
            =
            \\
            &\E_c \left[\sum_{i=1}^{t-1} \sum_{h=0}^{H-1} \sum_{s_h \in S^c_h} q_h(s_h | \pi_i(c;\cdot),P^c_\star) (\hat{f}_t(c,s_h, \pi_i(c;s_h)) - f_\star(c,s_h, \pi_i(c;s_h)))^2 \right].
        \end{align*}
    \end{claim}
    
    \begin{proof}
        Recall that for every round $ i\in \{1,2,\ldots,|A|\}$, $\pi_i$ is a deterministic policy that always plays action $a_i$. 
        For all $i > |A|$ we have that $\pi_i$ is determined completely by the history $\Hist_{i-1}$, and is a deterministic context-dependent policy. Hence, for any function $f \in \F$, round $i \in \{1, \ldots, t-1\}$ and layer $h \in [H-1]$ it holds that
        \begin{equation}\label{eq:lemma1-1}
            \begin{split}
                &\mathop{\E}_{c_i, s^i_h,a^i_h}
                \left[ (f(c_i,s^i_h,a^i_h) - f_\star(c_i,s^i_h,a^i_h) )^2 |\Hist_{i-1}\right]
                \\
                = &
                \mathop{\E}_{c_i, s^i_h}\left[ (f(c_i,s^i_h,\pi_i(c_i; s^i_h)) - f_\star(c_i,s^i_h,\pi_i(c_i; s^i_h)) )^2 |\Hist_{i-1}\right]
                \\
                = &
                \E_c \left[ 
                \E_{ s_h} \left[ (f(c,s_h,\pi_i(c; s_h)) - f_\star(c, s_h,\pi_i(c; s_h)) )^2 |\Hist_{i-1},c\right]\right]
                \\
                = &
                \E_c \left[ 
                 \sum_{s_h \in S^c_h} q_h(s_h | \pi_i(c;\cdot),P^c_\star)  (f(c,s_h,\pi_i(c; s_h)) - f_\star(c, s_h,\pi_i(c; s_h)) )^2 
                 \right] 
                ,
        \end{split}
    \end{equation}
    where the first identity holds since $a^i_h = \pi_i(c_i; s^i_h)$ and $\pi_i$ is determined deterministically given $\Hist_{i-1}$. 
    The second identity is since $c_i$ is independent of $\Hist_{i-1}$ but $s^i_h$ is dependent on $c_i$ and $\Hist_{i-1}$ (through $\pi_i$).
    The third identity is an explicit representation of the expectation over $s_h$ given the context $c$ and the history $\Hist_{i-1}$, since $\pi_i$ is determined completely and deterministically by $\Hist_{i-1}$.
   \\
    By summing over $i = 1,2,\ldots, t-1$ and $h \in [H-1]$ we obtain the claim since,
    \begin{align*}
        &\sum_{i=1}^{t-1} \sum_{h=1}^{H-1}
        \mathop{\E}_{c_i, s^i_h,a^i_h}
        \left[ (\hat{f}_t(c_i,s^i_h,a^i_h) - f_\star(c_i,s^i_h,a^i_h) )^2 |\Hist_{i-1}\right]
        =
        \\
        \tag{By~\cref{eq:lemma1-1}}
        = &
        \sum_{i=1}^{t-1} \sum_{h=1}^{H-1}
        \E_c \left[ 
        \sum_{s_h \in S^c_h} q_h(s_h | \pi_i(c;\cdot),P^c_\star)  (\hat{f}_t(c,s_h,\pi_i(c; s_h)) - f_\star(c, s_h,\pi_i(c; s_h)) )^2
        \right]
        \\
        \tag{By linearity of expectation}
        = &
        \E_c \left[ 
        \sum_{i=1}^{t-1} \sum_{h=1}^{H-1}
        \sum_{s_h \in S^c_h} q_h(s_h | \pi_i(c;\cdot),P^c_\star)  (\hat{f}_t(c,s_h,\pi_i(c; s_h)) - f_\star(c, s_h,\pi_i(c; s_h)) )^2 
        \right],
    \end{align*}
    as stated.

    \end{proof}
    
    Returning to the proof of the lemma, by~\cref{lemma:UC-lemma-5}, for any $\delta \in (0,1)$ we have with probability at least $1-\delta/2$ that
    \begingroup
    \allowdisplaybreaks
    \begin{align*}
        &\sum_{i=1}^{t-1} \sum_{h=0}^{H-1} \mathop{\E}_{c_i,s^i_h,a^i_h}
        \left[ ({f}_t(c_i, s^i_h, a^i_h)-f_\star(c_i, s^i_h, a^i_h) )^2 | \Hist_{i-1}
        \right]\\
        & \leq
        68H\log(4|\F|t^3/\delta)
        +
        2 \sum_{i=1}^{t-1} \sum_{h=0}^{H-1} ({f}_t(c_i, s^i_h, a^i_h)- r^i_h )^2 - (f_\star(c_i, s^i_h, a^i_h) -r^i_h )^2,
    \end{align*}
    \endgroup
    simultaneously for all $t \geq 2$ and any fixed sequence of functions ${f}_2,{f}_3,\ldots \in \F$. 
    %
    Since $\{{f}_t\}_{t=|A|+1}^T$ are the least square minimizers at each round $t$, it holds that
    \[
        \sum_{i=1}^{t-1} \sum_{h=0}^{H-1} (\hat{f}_t(c_i, s^i_h, a^i_h)- r^i_h )^2 - (f_\star(c_i, s^i_h, a^i_h) -r^i_h )^2 \leq 0.
    \]
    
    Recall that $\M(c) = \M^{(f_\star,P_\star)}(c)$ for all $c \in \C$.
    
    By all the above, with probability at least $1-\delta/2$ for all $t \geq |A|+1$ and any context-dependent policy $\pi \in \Pi_\C$ it holds that
    \begingroup
    \allowdisplaybreaks
    \begin{align*}
        &  \left| \E_c[V^{\pi(c;\cdot)}_{\M^{(\hat{f}_t,P_\star)}(c)}(s_0) - V^{\pi(c;\cdot)}_{\M^{(f_\star,P_\star)}(c)}(s_0)]\right|
        \\
        \tag{ By definition}
        = &
        \left| \E_c \left[ \sum_{h=0}^{H-1} \sum_{s_h \in S^c_h}\sum_{a_h \in A}  q_h(s_h,a_h | \pi(c;\cdot), P^c_\star)(\hat{f}_t(c,s_h, a_h) - f_\star(c,s_h, a_h)) \right] \right|
        \\
        \tag{$\pi(c;\cdot)$ is a deterministic policy for all $c \in \C$}
        = & 
        \left| \E_c \left[ \sum_{h=0}^{H-1} \sum_{s_h \in S^c_h}  q_h(s_h | \pi(c;\cdot), P^c_\star)(\hat{f}_t(c,s_h, \pi(c;s_h)) - f_\star(c,s_h, \pi(c;s_h))) \right] \right|  
        \\
        \tag{By triangle inequality}
        \leq &
        \E_c \left[ \sum_{h=0}^{H-1} \sum_{s_h \in S^c_h} 
        q_h(s_h | \pi(c;\cdot),P^c_\star)
        \left|f_t(c,s_h, \pi(c;s_h)) - f_\star(c,s_h, \pi(c;s_h))\right| \right]
        \\
        = &
        \E_c \Bigg[  \sum_{h=0}^{H-1} \sum_{s_h \in S^c_h}
        (\sqrt{ q_h(s_h | \pi(c;\cdot), P^c_\star)})^2
        \frac{  \sqrt{\sum_{i=1}^{t-1} \I[\pi(c;s_h)= \pi_i(c;s_h)] q_h(s_h | \pi_i(c;\cdot),P^c_\star)}}
        {\sqrt{\sum_{i=1}^{t-1} \I[\pi(c;s_h)= \pi_i(c;s_h)]q_h(s_h | \pi_i(c;\cdot),P^c_\star)}}
        \\
        \tag{Multiplication in $\frac{  \sqrt{\sum_{i=1}^{t-1} \I[\pi(c;s_h)= \pi_i(c;s_h)] q_h(s_h | \pi_i(c;\cdot),P^c_\star)}}
        {\sqrt{\sum_{i=1}^{t-1} \I[\pi(c;s_h)= \pi_i(c;s_h)]q_h(s_h | \pi_i(c;\cdot),P^c_\star)}}$}
        & \cdot
        \left|\hat{f}_t(c,s_h, \pi(c;s_h)) - f_\star(c,s_h, \pi(c;s_h))\right| \Bigg]
        \\
        \tag{Re-arranging}
        = & 
       \sum_{c \in \C} \sum_{h=0}^{H-1} \sum_{s_h \in S^c_h} \sqrt{\D(c)}
       (\sqrt{ q_h(s_h | \pi(c;\cdot), P^c_\star)})^2
        \frac{ 1}
        {\sqrt{\sum_{i=1}^{t-1} \I[\pi(c;s_h)= \pi_i(c;s_h)]q_h(s_h | \pi_i(c;\cdot),P^c_\star)}}
        \\
       &\cdot \sqrt{\D(c)} \sqrt{\sum_{i=1}^{t-1} \I[\pi(c;s_h)= \pi_i(c;s_h)] q_h(s_h | \pi_i(c;\cdot),P^c_\star)}
        |\hat{f}_t(c,s_h, \pi(c;s_h)) - f_\star(c,s_h, \pi(c;s_h))| 
        \\
        \tag{By Cauchy-Schwartz inequality}
         \leq &
        \sqrt{\E_c \left[ \sum_{h=0}^{H-1} \sum_{s_h \in S^c_h}  \frac{q_h^2(s_h | \pi(c;\cdot) ,P^c_\star)}{\sum_{i=1}^{t-1} \I[\pi(c;s_h)= \pi_i(c;s_h)]q_h(s_h | \pi_i(c;\cdot),P^c_\star)}\right]}
        \\
        &\cdot \sqrt{ \E_c \left[ \sum_{h=0}^{H-1} \sum_{s_h \in S^c_h} \sum_{i=1}^{t-1}\I[\pi(c;s_h)= \pi_i(c;s_h)] q_h(s_h | \pi_i(c;\cdot),P^c_\star)(\hat{f}_t(c,s_h, \pi(c;s_h)) - f_\star(c,s_h, \pi(c;s_h)))^2 \right]}
        \\
        \leq &
        \tag{By $q^2_h(s_h | \pi(c;\cdot), P^c_\star) \leq q_h(s_h | \pi(c;\cdot), P^c_\star)$ and change of summing order}
        \sqrt{\E_c \left[ \sum_{h=0}^{H-1} \sum_{s_h \in S^c_h} \frac{q_h(s_h | \pi(c;\cdot), P^c_\star)}{\sum_{i=1}^{t-1} \I[\pi(c;s_h)= \pi_i(c;s_h)]q_h(s_h | \pi_i(c;\cdot),P^c_\star)}\right]}
        \\
        &\cdot \sqrt{ \E_c \left[\sum_{i=1}^{t-1}\sum_{h=0}^{H-1} \sum_{s_h \in S^c_h}   q_h(s_h | \pi_i(c;\cdot),P^c_\star) \I[\pi(c;s_h)= \pi_i(c;s_h)](\hat{f}_t(c,s_h, \pi(c;s_h)) - f_\star(c,s_h, \pi(c;s_h)))^2 \right]}
        \\
        \tag{The non-zero terms are where $\pi_i(c;s_h) = \pi(c;s_h)$}
        = &
        \sqrt{\E_c \left[ \sum_{h=0}^{H-1} \sum_{s_h \in S^c_h} \frac{q_h(s_h | \pi ,P^c_\star)}{\sum_{i=1}^{t-1} \I[\pi(c;s_h)= \pi_i(c;s_h)]q_h(s_h | \pi_i(c;\cdot),P^c_\star)}\right]}
        \\
        &\cdot \sqrt{ \E_c \left[\sum_{i=1}^{t-1} \sum_{h=0}^{H-1} \sum_{s_h \in S^c_h}  q_h(s_h | \pi_i(c;\cdot),P^c_\star) \I[\pi(c;s_h)= \pi_i(c;s_h)](\hat{f}_t(c,s_h, \pi_i(c;s_h)) - f_\star(c,s_h, \pi_i(c;s_h)))^2 \right]}
        \\
        \tag{Removing the indicators only increase the sum}
        \leq &
        \sqrt{\E_c \left[ \sum_{h=0}^{H-1} \sum_{s_h \in S^c_h} \frac{q_h(s_h | \pi(c;\cdot), P^c_\star)}{\sum_{i=1}^{t-1} \I[\pi(c;s_h)= \pi_i(c;s_h)]q_h(s_h | \pi_i(c;\cdot),P^c_\star)}\right]}
        \\
        &\cdot \sqrt{ \E_c \left[\sum_{i=1}^{t-1} \sum_{h=0}^{H-1} \sum_{s_h \in S^c_h} q_h(s_h | \pi_i(c;\cdot),P^c_\star) (\hat{f}_t(c,s_h, \pi_i(c;s_h)) - f_\star(c,s_h, \pi_i(c;s_h)))^2 \right]}
        \\
        \tag{By~\cref{claim:expectation-eq-given-hist}}
        = &
        \sqrt{\E_c \left[ \sum_{h=0}^{H-1} \sum_{s_h \in S^c_h} \frac{q_h(s_h | \pi(c;\cdot), P^c_\star)}{\sum_{i=1}^{t-1} \I[\pi(c;s_h)= \pi_i(c;s_h)]q_h(s_h | \pi_i(c;\cdot),P^c_\star)}\right]}
        \\
        &\cdot \sqrt{ 
        \sum_{i=1}^{t-1} \sum_{h=1}^{H-1}
        \mathop{\E}_{c_i, s^i_h,a^i_h}
        \left[ (\hat{f}_t(c_i,s^i_h,a^i_h) - f_\star(c_i,s^i_h,a^i_h) )^2 |\Hist_{i-1}
        \right]} 
        \\
        \tag{By~\cref{lemma:UC-lemma-5} combined with the fact that $\hat{f}_t$ is the least-square minimizer}
        \leq &
        \sqrt{\E_c \left[ \sum_{h=0}^{H-1} \sum_{s_h \in S^c_h} \frac{q_h(s_h | \pi(c;\cdot), P^c_\star)}{\sum_{i=1}^{t-1} \I[\pi(c;s_h)= \pi_i(c;s_h)]q_h(s_h | \pi_i(c;\cdot),P^c_\star)}\right]}
        \cdot \sqrt{ 68 H \log(4|\F|t^3 /\delta)}. 
    \end{align*}
    \endgroup
Lastly, we remark that by choice of $\pi_i$ for $i \in \{1,2,\ldots,|A|\}$, and the minimum reachability assumption for any deterministic policy $\pi \in \Pi_\C$, layer $h \in [H-1]$, state $s_h \in S^c_h$ and $t > |A|$ it holds that
\begin{align*}
    \sum_{i=1}^{t-1} \I[\pi(c;s_h)= \pi_i(c;s_h)]q_h(s_h | \pi_i(c;\cdot),P^c_\star)
    \geq
    p_{min} \cdot \sum_{i=1}^{t-1} \I[\pi(c;s_h)= \pi_i(c;s_h)]
    \geq 
     p_{min} > 0,
\end{align*}
hence the above is well defined.
\end{proof}

\subsubsection{Step 3: Relax the Confidence Bound to be Additive.}
\begin{lemma}[the ``square trick'' relaxation]\label{lemma:sq-trick}
    Under the good event of~\cref{lemma:CB-policy},
    for all $t > |A|$ and any context-dependent policy $\pi \in \Pi_\C$ it holds that 
    \begin{align*}
        \left|\E_c[V^{\pi(c;\cdot)}_{\M(c)}(s_0)] - \E_c[V^{\pi(c;\cdot)}_{\M^{(\hat{f}_t,P_\star)}(c)}(s_0)]\right|
        \leq &
        \beta_t \cdot \E_c \left[ \sum_{h=0}^{H-1} \sum_{s_h \in S^c_h} \frac{  q_h(s_h|\pi(c;\cdot) ,P^c_\star)}{\sum_{i=1}^{t-1} \I[\pi(c;s_h)= \pi_i(c;s_h)]q_h(s_h | \pi_i(c;\cdot),P^c_\star)}\right]
        \\
        & +  \beta_t \cdot \frac{H  |S|  |A|}{t}
        ,
    \end{align*}
    where $\beta_t = \sqrt{ \frac{17 t \log(4|\F|t^3 /\delta) }{|S| |A|}}$.
\end{lemma}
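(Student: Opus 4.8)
The plan is to convert the multiplicative confidence bound of Lemma~\ref{lemma:CB-policy} into the claimed additive form by an AM--GM (``square trick'') step, and then to use the explicit definition of $\beta_t$ to absorb the residual. Write $L := \log(4|\F|t^3/\delta)$ for brevity. Under the good event of Lemma~\ref{lemma:CB-policy}, for all $t>|A|$ and every $\pi \in \Pi_\C$ we already have
$$ \left|\E_c[V^{\pi(c;\cdot)}_{\M(c)}(s_0)] - \E_c[V^{\pi(c;\cdot)}_{\M^{(\hat{f}_t,P_\star)}(c)}(s_0)]\right| \le \sqrt{\phi_t(\pi)}\cdot\sqrt{68HL}, $$
where $\phi_t(\pi)$ is the contextual potential defined above. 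So the whole lemma is a deterministic consequence of this single inequality, and no further probabilistic argument is needed.

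The key step is the elementary inequality $\sqrt{uv}\le \tfrac{\eta}{2}u + \tfrac{1}{2\eta}v$, valid for all $u,v\ge 0$ and $\eta>0$ (AM--GM applied to $\eta u$ and $v/\eta$). I would apply it with $u=\phi_t(\pi)$, $v=68HL$, and $\eta=2\beta_t$, which gives
$$ \sqrt{\phi_t(\pi)}\cdot\sqrt{68HL} \le \beta_t\,\phi_t(\pi) + \frac{68HL}{4\beta_t}. $$
The first term already matches the target bound exactly, so it remains only to identify the residual $68HL/(4\beta_t)$ with the stated additive term $\beta_t\cdot H|S||A|/t$.

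The crucial computation is that $\beta_t=\sqrt{17tL/(|S||A|)}$ is reverse-engineered precisely so that these two quantities coincide. Indeed $68HL/(4\beta_t)=17HL/\beta_t$, and substituting $1/\beta_t=\sqrt{|S||A|/(17tL)}$ gives $17HL\cdot\sqrt{|S||A|/(17tL)}=H\sqrt{17L|S||A|/t}$; on the other side, $\beta_t\cdot H|S||A|/t=(H|S||A|/t)\sqrt{17tL/(|S||A|)}=H\sqrt{17L|S||A|/t}$, so the two agree. Equivalently, one simply checks the identity $4\beta_t^2\cdot H|S||A|/t = 4\cdot 17L\cdot H = 68HL$. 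Chaining the two displayed inequalities then yields the claim.

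The main obstacle here is conceptual rather than technical: once one recognizes the AM--GM split, the only thing that must be gotten right is the calibration of $\beta_t$, namely that the complexity factor $68HL$ exactly balances $4\beta_t^2\,H|S||A|/t$. The $\sqrt{t}$ growth in $\beta_t$ is what makes the residual scale like $1/t$ (after the $H|S||A|/t$ factor) so that it sums to something sublinear later; everything after the calibration is routine algebra.
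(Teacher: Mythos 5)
Your proof is correct and follows essentially the same route as the paper: both convert the multiplicative bound of~\cref{lemma:CB-policy} into the additive form via AM--GM (the paper rewrites the product as $2\sqrt{\beta_t\phi_t(\pi)}\cdot\sqrt{\beta_t H|S||A|/t}$ and applies $2ab\le a^2+b^2$, which is exactly your weighted AM--GM with $\eta=2\beta_t$), and both rest on the same calibration identity $4\beta_t^2\, H|S||A|/t = 68H\log(4|\F|t^3/\delta)$. Your presentation is simply a more compact packaging of the paper's longer algebraic chain, and you correctly note that no further probabilistic argument is needed once the good event is assumed.
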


\begin{proof}
    Consider the following derivation,
    for $\beta_t = \sqrt{ \frac{17 t \log(4|\F|t^3/\delta) }{|S||A|}}$.
    \begingroup
    \allowdisplaybreaks
    \begin{align*}
        & \left|\E_c\left[V^{\pi(c;\cdot)}_{\M(c)}\right] - \E_c\left[V^{\pi(c;\cdot)}_{\M^{(\hat{f}_t,P_\star)}(c)}\right]\right| 
        \\
        \tag{By~\cref{lemma:CB-policy}}
        \leq &
        \sqrt{\E_c \left[ \sum_{h=0}^{H-1} \sum_{s_h \in S^c_h} \frac{q_h(s_h|\pi(c;\cdot) ,P^c_\star)}{\sum_{i=1}^{t-1} \I[\pi(c;s_h)= \pi_i(c;s_h)]q_h(s_h | \pi_i(c;\cdot),P^c_\star)}\right]}
        \cdot \sqrt{ 68 H \log(4|\F|t^3/\delta)}
        \\
        = &
        \sqrt{\E_c \left[ \sum_{h=0}^{H-1} \sum_{s_h \in S^c_h} \frac{\beta_t \cdot q_h(s_h|\pi(c;\cdot) ,P^c_\star)}{\sum_{i=1}^{t-1} \I[\pi(c;s_h)= \pi_i(c;s_h)]q_h(s_h | \pi_i(c;\cdot),P^c_\star)}\right]}
        \cdot \sqrt{\frac{1}{\beta_t} 68 H \log(4|\F|t^3/\delta)}
        \\
        \tag{By $\beta_t$ choice}
        = & 
        \sqrt{\E_c \left[ \sum_{h=0}^{H-1} \sum_{s_h \in S^c_h} \frac{\beta_t \cdot q_h(s_h|\pi(c;\cdot) ,P^c_\star) }{\sum_{i=1}^{t-1} \I[\pi(c;s_h)= \pi_i(c;s_h)]q_h(s_h | \pi_i(c;\cdot),P^c_\star)}\right]}
        \cdot \sqrt{ H \sqrt{\frac{ |S| |A|}{t}} \frac{68\log(4|\F|t^3/\delta)}{\sqrt{17\log(4|\F|t^3/\delta)}}}
        \\
        = & 
        \sqrt{\E_c \left[ \sum_{h=0}^{H-1} \sum_{s_h \in S^c_h} \frac{\beta_t \cdot q_h(s_h|\pi(c;\cdot) ,P^c_\star) }{\sum_{i=1}^{t-1} \I[\pi(c;s_h)= \pi_i(c;s_h)]q_h(s_h | \pi_i(c;\cdot),P^c_\star)}\right]}
        \cdot \sqrt{4 H \cdot \sqrt{\frac{ |S||A|}{t}} \frac{ (\sqrt{17\log(4|\F|t^3/\delta)})^2}{\sqrt{17\log(4|\F|t^3/\delta)}}}
        \\
        = & 
        \sqrt{\E_c \left[ \sum_{h=0}^{H-1} \sum_{s_h \in S^c_h} \frac{\beta_t \cdot q_h(s_h|\pi(c;\cdot) ,P^c_\star) }{\sum_{i=1}^{t-1} \I[\pi(c;s_h)= \pi_i(c;s_h)]q_h(s_h | \pi_i(c;\cdot),P^c_\star)}\right]}
        \cdot \sqrt{4 H \cdot \sqrt{\frac{|S| |A|}{t}}  \sqrt{17\log(4|\F|t^3/\delta)}}
        \\
        = & 
        \sqrt{2 \cdot \E_c \left[ \sum_{h=0}^{H-1} \sum_{s_h \in S^c_h} \frac{\beta_t \cdot q_h(s_h|\pi(c;\cdot) ,P^c_\star) }{\sum_{i=1}^{t-1} \I[\pi(c;s_h)= \pi_i(c;s_h)]q_h(s_h | \pi_i(c;\cdot),P^c_\star)}\right]}
        \\
        & \cdot \sqrt{2 H \cdot \sqrt{\frac{|S|\cdot |A|}{t}}\sqrt{\frac{t^2}{ |S|^2\cdot |A|^2}}  \sqrt{\frac{ |S|^2\cdot |A|^2}{t^2}} \sqrt{17\log(4|\F|t^3/\delta)}}
        \\
        = & 
        \sqrt{2 \cdot \E_c \left[ \sum_{h=0}^{H-1} \sum_{s_h \in S^c_h} \frac{\beta_t \cdot q_h(s_h|\pi(c;\cdot) ,P^c_\star) }{\sum_{i=1}^{t-1} \I[\pi(c;s_h)= \pi_i(c;s_h)]q_h(s_h | \pi_i(c;\cdot),P^c_\star)}\right]}
        \cdot \sqrt{2 H \cdot \beta_t \sqrt{\frac{|S|^2\cdot |A|^2}{t^2}}}
        \\
        = & 
        2\cdot \sqrt{ \E_c \left[ \sum_{h=0}^{H-1} \sum_{s_h \in S^c_h} \frac{\beta_t \cdot q_h(s_h|\pi(c;\cdot) ,P^c_\star) }{\sum_{i=1}^{t-1} \I[\pi(c;s_h)= \pi_i(c;s_h)]q_h(s_h | \pi_i(c;\cdot),P^c_\star)}\right]}
        \cdot \sqrt{ \beta_t \cdot \frac{H  |S| |A|}{t}}
        \\
        \tag{Since $2ab\leq a^2+b^2$}
        \leq &
        \E_c \left[ \sum_{h=0}^{H-1} \sum_{s_h \in S^c_h} \frac{\beta_t \cdot q_h(s_h|\pi(c;\cdot) ,P^c_\star)}{\sum_{i=1}^{t-1} \I[\pi(c;s_h)= \pi_i(c;s_h)]q_h(s_h | \pi_i(c;\cdot),P^c_\star)}\right]
        +  \beta_t \cdot \frac{H  |S|  |A|}{t}.
    \end{align*}
    \endgroup
\end{proof}

\subsubsection{Step 4: Bounding the Sum of Contextual Potential Functions.}

Let us define the contextual potential function in round $t$, for $T \geq t>|A|$.
\begin{definition}
    We denote by $\phi_t(\pi)$ the contextual potential of a context-dependent policy $\pi \in \Pi_\C$ at round $|A|<t \leq T$ which is defined as follows.   
    $$
        \phi_t(\pi) : = \E_c \left[ \sum_{h=0}^{H-1} \sum_{s_h \in S^c_h} \frac{q_h(s_h|\pi(c;\cdot) ,P^c_\star)}{ \sum_{i=1}^{t-1} \I[\pi(c;s_h)= \pi_i(c;s_h)]q_h(s_h | \pi_i(c;\cdot),P^c_\star)}\right],
    $$
    where $\{\pi_t \in \Pi_{\C}\}_{t=1}^T$
    is the sequence of context dependent policies selected by~\cref{alg:RM-CMDP-KD}.
\end{definition}

\noindent In the following lemma, we bound the sum of contextual potential functions, over the rounds $t = |A|+1, \ldots, T$.
\begin{lemma}[contextual potential]\label{lemma:Contextual-Potential}
    Let $\{\pi_t \in \Pi_{\C}\}_{t=1}^T$
    be the sequence of context dependent policies selected by~\cref{alg:RM-CMDP-KD}.
    Then, for all $T > |A|$ the following holds.
    \begin{align*}
        \sum_{t=|A| +1}^T\phi_t(\pi_t)
        & =  
        \sum_{t=|A| +1}^T 
        \E_c \left[ 
        \sum_{h =0}^{H-1}
        \sum_{s_h \in S^c_h}
        \frac{q_h(s_h | \pi_t(c;\cdot),P^c_\star)}{\sum_{i=1}^{t-1} \I[\pi_t(c;s_h) = \pi_i(c;s_h)]q_h(s_h | \pi_i(c;\cdot),P^c_\star)} \right]
        \\
        & \leq 
        \frac{|S||A|}{p_{min}}( 1+ \log(T/|A|))
        .
        \end{align*} 
\end{lemma}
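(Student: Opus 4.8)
The plan is to reduce the statement to a purely deterministic, per-context estimate and then handle it by a weighted harmonic-sum (covering) argument. By linearity of expectation, $\sum_{t=|A|+1}^T \phi_t(\pi_t) = \E_c\big[\sum_{h=0}^{H-1}\sum_{s_h\in S^c_h}\sum_{t=|A|+1}^T \frac{q_h(s_h|\pi_t(c;\cdot),P^c_\star)}{\sum_{i=1}^{t-1}\I[\pi_t(c;s_h)=\pi_i(c;s_h)]\,q_h(s_h|\pi_i(c;\cdot),P^c_\star)}\big]$, where for every fixed realization of the selected policies $\{\pi_t\}$ and every fixed context $c$ the bracketed quantity is deterministic. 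Hence it suffices to prove, for each fixed $c$, that the inner triple sum is at most $\frac{|S||A|}{p_{min}}(1+\log(T/|A|))$; since this bound is independent of $c$, taking $\E_c$ preserves it.

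Fix $c$, a layer $h$, and a state $s_h\in S^c_h$, and abbreviate $a_t:=\pi_t(c;s_h)$ and $w_t:=q_h(s_h|\pi_t(c;\cdot),P^c_\star)$, so that the minimum reachability assumption gives $p_{min}\le w_t\le 1$. The summand is $w_t\big/\sum_{i<t}\I[a_i=a_t]\,w_i$. First I would group the rounds $t>|A|$ according to the value of the played action $a:=a_t$; for a fixed $a$, let $t_1<\cdots<t_{k_a}$ enumerate the post-initialization rounds with $a_{t_j}=a$. Because the $|A|$ initialization rounds cycle once through every action, exactly one of them plays $a$ at $s_h$, so at round $t_j$ the denominator sums $w_i$ over precisely $j$ earlier same-action rounds (the one initialization round plus $t_1,\dots,t_{j-1}$), each contributing at least $p_{min}$; thus the denominator is at least $j\,p_{min}>0$. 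Combined with $w_{t_j}\le 1$, the $j$-th term is at most $1/(j\,p_{min})$, and therefore $\sum_{j=1}^{k_a} w_{t_j}\big/\sum_{i<t_j}\I[a_i=a]w_i \le \frac{1}{p_{min}}\sum_{j=1}^{k_a}\frac1j \le \frac{1}{p_{min}}\big(1+\log k_a\big)$.

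Next I would sum over the actions. Since at each round exactly one action is played at $(c,s_h)$, the counts satisfy $\sum_{a\in A} k_a = T-|A|$. By concavity of the logarithm (Jensen's inequality applied to the at most $|A|$ nonzero counts), $\sum_{a:\,k_a\ge1}\log k_a \le |A|\log\!\big((T-|A|)/|A|\big)\le |A|\log(T/|A|)$, so that $\sum_{a}(1+\log k_a)\le |A|(1+\log(T/|A|))$. This shows the sum over $t$ for the fixed triple $(c,h,s_h)$ is at most $\frac{|A|}{p_{min}}(1+\log(T/|A|))$. Summing over the at most $|S|$ pairs $(h,s_h)$ of context $c$ (as $\bigcup_h S^c_h=S$) yields the claimed $\frac{|S||A|}{p_{min}}(1+\log(T/|A|))$, which is $c$-independent and hence survives the outer expectation.

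The main obstacle is the double use of the structural assumptions in the per-action step. One must invoke minimum reachability twice — once to lower bound the denominator by $j\,p_{min}$, which produces the $1/p_{min}$ factor and guarantees the denominator never vanishes — while simultaneously using the probabilistic upper bound $w_t\le 1$ on the numerator to \emph{decouple} the occupancy weights from the counting, so that a clean harmonic sum $\sum_j 1/j$ survives rather than a weight-dependent expression. The second delicate point is the concavity estimate: it is precisely the constraint $\sum_a k_a=T-|A|$ (and not a crude bound $k_a\le T$) that converts $\sum_a\log k_a$ into $|A|\log(T/|A|)$ and thereby gives the $\log(T/|A|)$ factor claimed, rather than the weaker $\log T$.
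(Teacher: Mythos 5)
Your proposal is correct and follows essentially the same route as the paper's proof: fix the context, lower-bound the denominator by $p_{min}$ times the same-action count via minimum reachability, upper-bound the numerator by $1$, reduce to per-action harmonic sums, and apply Jensen's inequality to the concave logarithm to obtain the $\log(T/|A|)$ factor before summing over states and taking the expectation over contexts. The only cosmetic difference is that you enumerate the same-action rounds explicitly (with counts $k_a$ excluding initialization, giving $\sum_a k_a = T-|A|$) where the paper folds this into a single reindexing of the harmonic sum with counts over all $T$ rounds; both yield the identical bound.
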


\begin{proof}
    For any fixed context $c \in \C$, the following holds.
    \begingroup
    \allowdisplaybreaks
    \begin{align*}
        &\sum_{t=|A| +1}^T
        \sum_{h =0}^{H-1}
        \sum_{s_h \in S^c_h}
        \frac{q_h(s_h | \pi_t(c;\cdot),P^c_\star)}{\sum_{i=1}^{t-1} \I[\pi_t(c;s_h) = \pi_i(c;s_h)]q_h(s_h | \pi_i(c;\cdot),P^c_\star)}
        \\
        \tag{By the minimum reachability assumption}
        \leq &
        \sum_{t=|A| +1}^T \sum_{h =0}^{H-1} \sum_{s_h \in S^c_h}  
        \frac{q_h(s_h| \pi_t(c;\cdot),P^c_\star)}{p_{min}\sum_{i=1}^{t-1} \I[\pi_t(c;s_h) = \pi_i(c;s_h)]}
        \\
        \tag{$q_h(s_h| \pi_t(c;\cdot),P^c_\star)\leq 1$ and change of summing order}
        \leq &
        \frac{1}{p_{min}} \sum_{h =0}^{H-1} \sum_{s_h \in S^c_h} 
        \sum_{t=|A| +1}^T
        \frac{1}{\sum_{i=1}^{t-1} \I[\pi_t(c;s_h) = \pi_i(c;s_h)]}
        \\
        \leq &
        \frac{1}{p_{min}} \sum_{h =0}^{H-1} \sum_{s_h \in S^c_h} 
        \sum_{a_h \in A}
        \sum_{i=1}^{\sum_{t=1}^T  \I[\pi_t(c;s_h) = a_h] } \frac{1}{i}
        \\
        \tag{Since $\sum_{i=1}^n \frac{1}{i} \leq 1+\log(n)$}
        \leq &
        \frac{1}{p_{min}}
        \sum_{h =0}^{H-1} \sum_{s_h \in S^c_h} \sum_{a_h \in A} \left(1 + \log\left(\sum_{t=1}^T \I[\pi_t(c;s_h) = a_h]\right) \right)
        \\
        = &
         \frac{|S||A|}{p_{min}} + \sum_{h =0}^{H-1} \sum_{s_h \in S^c_h} |A| \cdot \frac{1}{|A|}\sum_{a_h \in A}  \log\left(\sum_{t=1}^T \I[\pi_t(c;s_h) = a_h]\right)
        \\
         \tag{By Jansen's inequality, since $\log$ is concave}
         \leq &
        \frac{|S||A|}{p_{min}} + \sum_{h =0}^{H-1} \sum_{s_h \in S^c_h} |A| \cdot   \log\left(\frac{1}{|A|}\sum_{a_h \in A}\sum_{t=1}^T \I[\pi_t(c;s_h) = a_h]\right)
        \\
        = &
        \frac{|S||A|}{p_{min}} + \sum_{h =0}^{H-1} \sum_{s_h \in S^c_h} |A| \cdot   \log\left(\frac{1}{|A|}\sum_{t=1}^T \sum_{a_h \in A} \I[\pi_t(c;s_h) = a_h]\right)
        \\
        \tag{For all $t$, $\sum_{a_h \in A} \I[\pi_t(c;s_h) = a_h]=1$ since $\pi_t$ is a deterministic policy}
         = &
        \frac{|S||A|}{p_{min}} + \sum_{h =0}^{H-1} \sum_{s_h \in S^c_h} |A| \cdot   \log\left(\frac{T}{|A|}\right)
        \\
        = &
        \frac{|S||A|}{p_{min}}( 1 +  \log(T/|A|)).
    \end{align*}
    \endgroup
    By taking an expectation over $c$ on both sides of the inequality, we obtain the lemma.
\end{proof}

\subsubsection{Deriving The Regret Bound.}
\begin{lemma}[equivalence of policies]\label{lemma:Policies-equivalence}
    For any mapping $\widetilde{\M}$ from a context $c \in \C$ to a MDP, it holds that
    \[
        \max_{\pi\in \Pi_\C}
        \E_c\left[ V^{\pi(c; \cdot)}_{\widetilde{\M}(c)}(s_0) \right]
        =
        \E_c\left[ \max_{\pi \in S \to A}
        V^{\pi}_{\widetilde{\M}(c)}(s_0) \right].
    \]
\end{lemma}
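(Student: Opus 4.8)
The plan is to prove the two inequalities separately; the equality is a standard interchange of a maximum with an expectation, made possible because the context-dependent policy class $\Pi_\C$ is rich enough to select, independently for each context, any deterministic non-contextual policy.

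For the direction $\le$, I would fix an arbitrary $\pi \in \Pi_\C$. For every context $c \in \C$, the non-contextual policy $\pi(c;\cdot): S \to A$ is one particular element of the set of all deterministic policies $S \to A$, so pointwise in $c$ we have $V^{\pi(c;\cdot)}_{\widetilde{\M}(c)}(s_0) \le \max_{\pi' \in S \to A} V^{\pi'}_{\widetilde{\M}(c)}(s_0)$. Taking expectation over $c$ preserves the inequality, and then taking the maximum over $\pi \in \Pi_\C$ on the left yields $\max_{\pi\in\Pi_\C}\E_c[V^{\pi(c;\cdot)}_{\widetilde{\M}(c)}(s_0)] \le \E_c[\max_{\pi'\in S\to A}V^{\pi'}_{\widetilde{\M}(c)}(s_0)]$.

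For the direction $\ge$, I would construct an explicit context-dependent policy $\bar\pi \in \Pi_\C$ that realizes the right-hand side. For each context $c$, since $\widetilde{\M}(c)$ is a finite loop-free horizon-$H$ MDP, the inner maximum $\max_{\pi'\in S\to A} V^{\pi'}_{\widetilde{\M}(c)}(s_0)$ is a maximum over the finite set of deterministic policies and is therefore attained; I set $\bar\pi(c;\cdot) \in \arg\max_{\pi'\in S\to A} V^{\pi'}_{\widetilde{\M}(c)}(s_0)$. Collecting these choices over all contexts defines a deterministic context-dependent policy $\bar\pi = (\bar\pi(c;\cdot))_{c\in\C}$, which lies in $\Pi_\C$ because $\Pi_\C$ is by definition the class of \emph{all} such mappings. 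Then $\E_c[V^{\bar\pi(c;\cdot)}_{\widetilde{\M}(c)}(s_0)] = \E_c[\max_{\pi'\in S\to A}V^{\pi'}_{\widetilde{\M}(c)}(s_0)]$, and since $\bar\pi \in \Pi_\C$ the left-hand side is at most $\max_{\pi\in\Pi_\C}\E_c[V^{\pi(c;\cdot)}_{\widetilde{\M}(c)}(s_0)]$, giving the reverse inequality. Combining the two directions proves the claim.

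The only point requiring care is the well-definedness of the pointwise-optimal selection $\bar\pi$: one must check that $c \mapsto \bar\pi(c;\cdot)$ is a legitimate member of $\Pi_\C$ and that the expectation $\E_c[\cdot]$ is well defined. Under the paper's standing assumption that $\C$ is finite this is immediate, since there is no measurability issue and the expectation is a finite sum over contexts. For an infinite context space one would additionally invoke a routine measurable-selection argument to pick $\bar\pi(c;\cdot)$ measurably in $c$, which is unproblematic because the inner maximization ranges over a fixed finite policy set. I expect this bookkeeping to be the only mild obstacle; the substantive content is the elementary fact that maximizing an expectation over context-dependent policies decouples, context by context, into maximizing each summand separately.
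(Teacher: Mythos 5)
Your proof is correct, and it is precisely the routine decoupling argument that the paper leaves implicit: the paper's own proof of this lemma consists of the single line ``Holds trivially.'' Your two-inequality argument (pointwise domination for $\le$, and the context-wise $\arg\max$ selection $\bar\pi \in \Pi_\C$ for $\ge$), together with the remark that finiteness of $\C$ disposes of any measurability concerns, fills in exactly what the authors considered obvious, so there is nothing to reconcile between the two.
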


\begin{proof}
    Holds trivially.
\end{proof}

\begin{lemma}[policy difference]\label{lemma:policy-difference-KD}
    Under the good event of~\cref{lemma:CB-policy} 
    , for any $t >|A|$ it holds that
    \begin{align*}
        \E_c\left[V^{\pi^{\star}(c;\cdot)}_{\M(c)}(s_0) \right]
        \leq
        \E_c\left[V^{\pi_t(c;\cdot)}_{\M(c)}\right]
        & +
        2 \beta_t \cdot
        \E_c \left[ \sum_{h=0}^{H-1} \sum_{s_h \in S^c_h} \frac{q_h(s_h|\pi_t(c;\cdot),P^c_\star) }{\sum_{i=1}^{t-1} \I[\pi_t(c;s_h)= \pi_i(c;s_h)]q_h(s_h | \pi_i(c;\cdot),P^c_\star)}\right]
        \\
        & +
        2 \beta_t \frac{H|S||A|}{t}.  
    \end{align*}
    where $\pi^\star \in \Pi_\C$ is an optimal context-dependent policy, and $\pi_t$ is the context-dependent policy that was selected in round $t$.
\end{lemma}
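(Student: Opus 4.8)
The plan is to combine the optimism built into the algorithm with the additive confidence bound of~\cref{lemma:sq-trick} and the value decomposition in~\cref{eq:V-t-def}. First I would record the identity obtained by taking the expectation over contexts in~\cref{eq:V-t-def}: for every $\pi \in \Pi_\C$,
\begin{align*}
    \E_c[V^{\pi(c;\cdot)}_{\Mhat_t(c)}(s_0)]
    =
    \E_c[V^{\pi(c;\cdot)}_{\M^{(\hat{f}_t,P_\star)}(c)}(s_0)]
    +
    \beta_t \phi_t(\pi).
\end{align*}
This simply says that the optimistic value equals the least-squares value plus exactly the potential bonus $\beta_t \phi_t(\pi)$.

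Next I would invoke optimism. Since the algorithm computes $\pi_t(c;\cdot)$ separately for each context so as to maximize $V^{\pi(c;\cdot)}_{\Mhat_t(c)}(s_0)$, \cref{lemma:Policies-equivalence} yields
\begin{align*}
    \E_c[V^{\pi_t(c;\cdot)}_{\Mhat_t(c)}(s_0)]
    =
    \max_{\pi \in \Pi_\C} \E_c[V^{\pi(c;\cdot)}_{\Mhat_t(c)}(s_0)]
    \geq
    \E_c[V^{\pi^\star(c;\cdot)}_{\Mhat_t(c)}(s_0)].
\end{align*}
I would then lower bound the optimistic value of $\pi^\star$ by its true value: substituting $\pi = \pi^\star$ into the identity and using~\cref{lemma:sq-trick} in the direction $\E_c[V^{\pi^\star(c;\cdot)}_{\M^{(\hat{f}_t,P_\star)}(c)}(s_0)] \geq \E_c[V^{\pi^\star(c;\cdot)}_{\M(c)}(s_0)] - \beta_t \phi_t(\pi^\star) - \beta_t H|S||A|/t$. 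The crucial observation is that the $+\beta_t \phi_t(\pi^\star)$ coming from the identity cancels the $-\beta_t \phi_t(\pi^\star)$ coming from the confidence bound, leaving $\E_c[V^{\pi^\star(c;\cdot)}_{\Mhat_t(c)}(s_0)] \geq \E_c[V^{\pi^\star(c;\cdot)}_{\M(c)}(s_0)] - \beta_t H|S||A|/t$.

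Symmetrically, I would upper bound the optimistic value of $\pi_t$ by its true value. Substituting $\pi = \pi_t$ into the identity and applying the other direction of~\cref{lemma:sq-trick} gives $\E_c[V^{\pi_t(c;\cdot)}_{\Mhat_t(c)}(s_0)] \leq \E_c[V^{\pi_t(c;\cdot)}_{\M(c)}(s_0)] + 2\beta_t \phi_t(\pi_t) + \beta_t H|S||A|/t$, where now the bonus term and the confidence term add rather than cancel, producing the factor $2$ on the potential. Chaining the three displayed inequalities gives the claimed bound. The main difficulty is conceptual rather than computational: one must track the sign of the bonus carefully so that it cancels for $\pi^\star$ (yielding a clean additive error $\beta_t H|S||A|/t$) while doubling for $\pi_t$ (yielding the term $2\beta_t \phi_t(\pi_t)$). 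No new probabilistic argument is needed, since all the randomness was already absorbed into the good event of~\cref{lemma:CB-policy} on which we condition throughout.
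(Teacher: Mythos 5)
Your proof is correct and follows essentially the same route as the paper: both arguments rest on the decomposition in~\cref{eq:V-t-def}, the optimism step via~\cref{lemma:Policies-equivalence} and the choice of $\pi_t$, and the two applications of~\cref{lemma:sq-trick} (one direction for $\pi^\star$, where the bonus cancels, and the other for $\pi_t$, where it doubles). The only difference is presentational --- you sandwich the optimistic values $\E_c[V^{\pi^\star(c;\cdot)}_{\Mhat_t(c)}(s_0)]$ and $\E_c[V^{\pi_t(c;\cdot)}_{\Mhat_t(c)}(s_0)]$ separately and then chain, whereas the paper writes one continuous chain of inequalities starting from $\E_c[V^{\pi^{\star}(c;\cdot)}_{\M(c)}(s_0)]$.
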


\begin{proof}
    Under the good event of~\cref{lemma:CB-policy}, using~\cref{lemma:sq-trick}, for every $t > |A|$, consider the following derivation.
    \begingroup
    \allowdisplaybreaks
    \begin{align*}
        &\E_c[V^{\pi^{\star}(c;\cdot)}_{\M(c)}(s_0) ]
        \\
        \tag{By~\cref{lemma:sq-trick} applied for $\pi^\star$}
        &\leq 
        \E_c[V^{\pi^{\star}(c;\cdot) }_{\M^{(\hat{f}_t,P_\star)}(c)}(s_0)]
        +
        \E_c \left[ \sum_{h=0}^{H-1} \sum_{s_h \in S^c_h} \frac{\beta_t \cdot q_h(s_h|\pi^\star(c;\cdot) ,P^c_\star) }{\sum_{i=1}^{t-1} \I[\pi^\star(c;s_h)= \pi_i(c;s_h)]q_h(s_h | \pi_i(c;\cdot),P^c_\star)}\right]
        +  
        \beta_t \frac{H |S| |A|}{t}
        \\
        \leq &
        \max_{\pi \in \Pi_{\C}}
        \left\{ 
            \E_c[V^{\pi(c;\cdot)}_{\M^{(\hat{f}_t,P_\star)}(c)}(s_0) ]
            +
            \E_c \left[ \sum_{h=0}^{H-1} \sum_{s_h \in S^c_h} \frac{\beta_t  \cdot q_h(s_h|\pi(c;\cdot) ,P^c_\star) }{\sum_{i=1}^{t-1} \I[\pi(c;s_h)= \pi_i(c;s_h)]q_h(s_h | \pi_i(c;\cdot),P^c_\star)}\right]
        \right\} 
        +
        \beta_t \frac{H |S| |A|}{t}
        \\
        \tag{By~\cref{eq:V-t-def}}
        = &
        \max_{\pi \in \Pi_{\C}}
        \left\{ 
            \E_c\left[V^{\pi(c;\cdot)}_{\Mhat_t(c)}(s_0)\right]
        \right\} 
        +
        \beta_t \frac{H |S| |A|}{t}
        \\
        \tag{By~\cref{lemma:Policies-equivalence} applied on the mapping $\Mhat_t$ and $\pi_t$ choice}
        = &
        \E_c\left[V^{\pi_t(c;\cdot)}_{\Mhat_t(c)}(s_0)\right]
        +
        \beta_t \frac{H |S| |A|}{t}
        \\
        \tag{By~\cref{eq:V-t-def}}
        = &
        \E_c[V^{\pi_t(c;\cdot)}_{\M^{(\hat{f}_t,P_\star)}(c)}]
        +
        \E_c \left[ \sum_{h=0}^{H-1} \sum_{s_h \in S^c_h} \frac{\beta_t  \cdot q_h(s_h|\pi_t(c;\cdot) ,P^c_\star)}{\sum_{i=1}^{t-1} \I[\pi_t(c;s_h)= \pi_i(c;s_h)]q_h(s_h | \pi_i(c;\cdot),P^c_\star)}\right]
        +
        \beta_t \frac{H |S| |A|}{t}
        \\
        \tag{By~\cref{lemma:sq-trick} applied for $\pi_t$}
        \leq &
        \E_c\left[V^{\pi_t(c;\cdot)}_{\M(c)}\right]
        +
        2 \beta_t \cdot
        \E_c \left[ \sum_{h=0}^{H-1} \sum_{s_h \in S^c_h} \frac{q_h(s_h|\pi_t(c;\cdot),P^c_\star) }{\sum_{i=1}^{t-1} \I[\pi_t(c;s_h)= \pi_i(c;s_h)]q_h(s_h | \pi_i(c;\cdot),P^c_\star)}\right]
        +
        2 \beta_t \frac{H |S| |A|}{t},
    \end{align*}
    \endgroup
    as stated.
\end{proof}

Consider the regret, which defined as
$$
   \Regrv_T(\text{ALG}) 
    :=
    \sum_{t=1}^T V^{\pi^\star(c_t;\cdot)}_{\M(c_t)}
    -
    V^{\pi_t(c_t;\cdot)}_{\M(c_t)}.
$$
The following theorem establish our regret bound, which holds with high probability.
\begin{theorem}[regret bound]\label{thm:regret-bound-kd}
     For every $T \geq 1$, finite functions class $\F$ and $\delta \in (0,1)$ let ${\beta_t = \sqrt{ \frac{17 t \log(4|\F|t^3/\delta)}{|S||A|}}}$ for all $t \in [T]$.
     Then, with probability at least $1-\delta$ it holds that
    $$
        \Regrv_T(RM-KD)
        \leq
        \widetilde{O}\left(
        \frac{ \sqrt{T |S||A| \log\frac{|\F|}{\delta}}}{p_{min} }+
        H  \sqrt{ T |S||A|\log\frac{|\F|}{\delta}}+
        |A| H
        \right)
        .
    $$
\end{theorem}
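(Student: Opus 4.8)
The plan is to combine the policy-difference lemma (\cref{lemma:policy-difference-KD}) with the contextual-potential bound (\cref{lemma:Contextual-Potential}) and a martingale concentration argument. First I would split the regret as $\Regrv_T = \sum_{t \le |A|}(\cdots) + \sum_{t=|A|+1}^T(\cdots)$. Each of the $|A|$ initialization rounds contributes at most $H$ (values lie in $[0,H]$), which gives the $|A|H$ term; it then remains to bound the sum over $t > |A|$.

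The realized per-round regret $V^{\pi^\star(c_t;\cdot)}_{\M(c_t)}(s_0) - V^{\pi_t(c_t;\cdot)}_{\M(c_t)}(s_0)$ is stated for the single sampled context $c_t$, whereas \cref{lemma:policy-difference-KD} bounds its expectation $\E_c[\cdots]$ over $\D$. To bridge this gap I would introduce the martingale differences
\[
    X_t := \E_c\big[V^{\pi^\star(c;\cdot)}_{\M(c)}(s_0) - V^{\pi_t(c;\cdot)}_{\M(c)}(s_0)\big] - \big(V^{\pi^\star(c_t;\cdot)}_{\M(c_t)}(s_0) - V^{\pi_t(c_t;\cdot)}_{\M(c_t)}(s_0)\big).
\]
Since the context-dependent policy $\pi_t \in \Pi_\C$ is fully determined by $\Hist_{t-1}$ (it is computed from $\hat f_t$, which depends only on past trajectories) while the fresh context $c_t \sim \D$ is independent of $\Hist_{t-1}$, we have $\E[X_t \mid \Hist_{t-1}] = 0$ and $|X_t| \le H$. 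Hoeffding--Azuma then gives, with probability at least $1-\delta/2$, $\sum_{t=|A|+1}^T X_t \le H\sqrt{2T\log(2/\delta)} = \widetilde{O}(H\sqrt{T})$, which is dominated by the stated bound.

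Under the good event of \cref{lemma:CB-policy} (probability at least $1-\delta/2$), \cref{lemma:policy-difference-KD} yields, for every $t > |A|$,
\[
    \E_c\big[V^{\pi^\star(c;\cdot)}_{\M(c)}(s_0) - V^{\pi_t(c;\cdot)}_{\M(c)}(s_0)\big] \le 2\beta_t\,\phi_t(\pi_t) + 2\beta_t\,\frac{H|S||A|}{t}.
\]
Summing over $t$, I would use the monotonicity bound $\beta_t \le \beta_T = \widetilde{O}(\sqrt{T/(|S||A|)})$ together with \cref{lemma:Contextual-Potential} to get $\sum_{t>|A|} 2\beta_t \phi_t(\pi_t) \le 2\beta_T \cdot \frac{|S||A|}{p_{min}}(1+\log(T/|A|)) = \widetilde{O}\big(p_{min}^{-1}\sqrt{T|S||A|\log(|\F|/\delta)}\big)$, the first term. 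For the additive remainder, substituting $\beta_t$ gives $\frac{2\beta_t H|S||A|}{t} = \widetilde{O}\big(H\sqrt{|S||A|\log(|\F|/\delta)}\;t^{-1/2}\big)$, and $\sum_{t=1}^T t^{-1/2} \le 2\sqrt{T}$ produces the second term $\widetilde{O}(H\sqrt{T|S||A|\log(|\F|/\delta)})$.

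Finally I would take a union bound over the two good events to obtain overall probability at least $1-\delta$ and collect the three contributions. The hard part will be the bridge between the context-averaged guarantee of \cref{lemma:policy-difference-KD} and the realized regret: one must verify that $\pi_t$, viewed as a full context-dependent policy in $\Pi_\C$, is $\Hist_{t-1}$-measurable so that $c_t$ can be treated as a fresh independent draw, making $\{X_t\}$ a genuine bounded martingale difference sequence. Once that is established, the remaining work is routine bookkeeping of the $\beta_t$ growth in the two summations.
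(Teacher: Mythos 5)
Your proposal is correct and follows essentially the same route as the paper's proof: the same split into initialization rounds plus $t>|A|$, the same Azuma martingale bridge from $\E_c[\cdot]$ to the realized regret (justified, as you note, by $\Hist_{t-1}$-measurability of $\pi_t$ and independence of $c_t$), and the same combination of \cref{lemma:policy-difference-KD}, \cref{lemma:Contextual-Potential}, and $\sum_t t^{-1/2}\le 2\sqrt{T}$. The only slip is the claim $|X_t|\le H$ — since both the realized difference and its conditional expectation lie in $[-H,H]$, the correct bound is $|X_t|\le 2H$, which changes only constants and not the stated $\widetilde{O}$ bound.
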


\begin{proof}
    We prove the theorem under the good event stated in~\cref{lemma:CB-policy}, which holds with probability at least $1-{\delta}/{2}$.
    
    By the 
    policy difference lemma (\cref{lemma:policy-difference-KD}), under the good event, 
    for all $t > |A|$ it holds that
    \begin{equation}\label{eq:thm-1-bound-reg}
        \begin{split}
        \E_c\left[V^{\pi^{\star}(c;\cdot)}_{\M(c)}(s_0) - V^{\pi_t(c;\cdot)}_{\M(c)}(s_0)\right]
        \leq &  
        2 \beta_t \cdot
        \E_c \left[ \sum_{h=0}^{H-1} \sum_{s_h \in S^c_h}
        \frac{ q_h(s_h|\pi_t(c;\cdot),P^c_\star) }{\sum_{i=1}^{t-1} \I[\pi_t(c;s_h)= \pi_i(c;s_h)]q_h(s_h | \pi_i(c;\cdot),P^c_\star)}\right]
        \\
        & +
        2 \beta_t \frac{H |S| |A|}{t}.
        \end{split}
    \end{equation}
    
    
    Recall that $\Hist_t = (\sigma^1, \ldots, \sigma^t)$.
     Consider the Martingale difference sequence $\{Y_t\}_{t=1}^T$ and the filtration $\{\Hist_{t}\}_{t=1}^T$ where 
    $$
        Y_t 
        := 
        V^{\pi^{\star}(c_t,\cdot)}_{\M(c_t)}(s_0)  - V^{\pi_t(c_t,\cdot)}_{\M(c_t)}(s_0)
        -
        \E_{c_t}\left[ V^{\pi^{\star}(c_t,\cdot)}_{\M(c_t)}(s_0)  - V^{\pi_t(c_t,\cdot)}_{\M(c_t)}(s_0) \Big|\Hist_{t-1}\right] 
        .
    $$
    Clearly, for all $t$, $|Y_t|\leq 2H$, $Y_t$ is determined completely by $\Hist_t$ and $\E\left[ Y_t |\Hist_{t-1}\right] = 0$ since
     \begin{align*}
         \E\left[ Y_t |\Hist_{t-1}\right]
         & =
         \E_{c_t}\left[ 
         V^{\pi^{\star}(c_t,\cdot)}_{\M(c_t)}(s_0)  - V^{\pi_t(c_t,\cdot)}_{\M(c_t)}(s_0) 
         \Big|\Hist_{t-1}\right]
         -
        \E_{c_t}\left[ V^{\pi^{\star}(c_t,\cdot)}_{\M(c_t)}(s_0)  - V^{\pi_t(c_t,\cdot)}_{\M(c_t)}(s_0) \Big|\Hist_{t-1}\right] 
         \\
         \tag{Since $\pi_t$ is determined by $\Hist_{t-1}$, and $c_t$ and $\pi^\star$ are independent of the history $\Hist_{t-1}$}
         =& 0.
     \end{align*}
     
    Hence, by Azuma's inequality, with probability at least $1-\delta/2$ it holds that 
    \begin{align*}
        \Regrv_T(RM-KD) 
        =
        \sum_{t=1}^T V^{\pi^\star(c_t;\cdot)}_{\M(c_t)}
        -
        V^{\pi_t(c_t;\cdot)}_{\M(c_t)}
        \leq 
        \sum_{t=1}^T
        \E_{c_t}\left[ V^{\pi^{\star}(c_t;\cdot)}_{\M(c_t)}(s_0)  - V^{\pi_t(c_t;\cdot)}_{\M(c_t)}(s_0) \Big| \Hist_{t-1}\right]
        + 2H\sqrt{2T\log(4/\delta)}.
    \end{align*}
    
    Since $\pi_t$ is determined completely (and deterministically) by $\Hist_{t-1}$, and $c_t$ and $\pi^\star$ are independent of the history $\Hist_{t-1}$, we can omit the conditioning on $\Hist_{t-1}$.
    Thus, we obtain,
    \begingroup
    \allowdisplaybreaks
    \begin{align*}
     &\Regrv_T(RM-KD)
     \\
     = &
     \sum_{t=1}^T V^{\pi^\star(c_t;\cdot)}_{\M(c_t)}
    -
    V^{\pi_t(c_t;\cdot)}_{\M(c_t)}
    \\
    \tag{By Azuma's inequality, holds w.p. at least $ 1-\delta$.}
    \leq &
     \sum_{t=1}^T  \E_{c}\left[V^{\pi^{\star}(c;\cdot)}_{\M(c)}(s_0)  - V^{\pi_t(c;\cdot)}_{\M(c)}(s_0) \right] + 2H\sqrt{2 T\log(4/\delta)}
    \\
    \leq &
    \sum_{t = |A| + 1}^T  \E_{c}[V^{\pi^{\star}(c;\cdot)}_{\M(c)}(s_0)  - V^{\pi_t(c;\cdot)}_{\M(c)}(s_0) ] + 2H\sqrt{2 T\log(4/\delta)} 
    + 
    |A| H
    \\
   \leq &
    \sum_{t = |A| + 1}^T  \left(2 \beta_t \cdot
    \E_c \left[ \sum_{h=0}^{H-1} \sum_{s_h \in S^c_h} \frac{ q_h(s_h|\pi_t(c;\cdot),P^c_\star) }{\sum_{i=1}^{t-1} \I[\pi_t(c;s_h)= \pi_i(c;s_h)]q_h(s_h | \pi_i(c;\cdot),P^c_\star)}\right]
    + 
    2 \beta_t \frac{H |S| |A|}{t}\right) 
    \\
    \tag{By~\cref{eq:thm-1-bound-reg}}
    & +
    2H\sqrt{2 T\log(4/\delta)} 
    + 
    |A| H
    \\
    \leq &
    2\beta_T \cdot
    \sum_{t = |A| + 1}^T\E_c \left[\sum_{h=0}^{H-1} \sum_{s_h \in S^c_h} \frac{ q_h(s_h|\pi_t(c;\cdot),P^c_\star) }{\sum_{i=1}^{t-1} \I[\pi_t(c;s_h)= \pi_i(c;s_h)]q_h(s_h | \pi_i(c;\cdot),P^c_\star)}\right]
    \\
    \tag{Since $\beta_T \geq \beta_t$ for all $ t\leq T$.}
    & + 
    \sum_{t = |A| + 1}^T 2\beta_t \frac{H |S||A|}{t}
    + 
    2H\sqrt{2 T\log(4/\delta)} +
    |A| H
    \\
    \leq &
    2\beta_T \cdot  \frac{|S||A|}{p_{min}}(1+ \log(T/|A|))
    + 
    \sum_{t = |A| + 1}^T 2\beta_t \frac{H |S| |A|}{t}
    \\
    \tag{By~\cref{lemma:Contextual-Potential}}
    & + 
    2H\sqrt{2 T\log(4/\delta)} +
    |A| H
    \\
    \tag{By $\beta_t$ choice}
    \leq &
    2 \frac{|S||A|}{p_{min}} \sqrt{ \frac{17 \log(4|\F|T^3 /\delta) T}{ |S||A|}}(1+\log(T/|A|)) 
    \\
    & +
    2\sqrt{17 \log(4|\F|T^3/\delta)} \sum_{t = |A| + 1}^T  \sqrt{\frac{t}{ |S| |A|}}\frac{H  |S| |A|}{t}
    \\
    & + 2H\sqrt{2 T\log(4/\delta)} +
    |A| H
    \\
    \tag{$\sum_{t=1}^T \frac{1}{\sqrt{t}} \leq 2 \sqrt{T}$}
    \leq &
    2 \frac{(1+ \log(T/|A|)}{p_{min}}\cdot \sqrt{ 17 \log(4|\F|T^3/\delta) \cdot T |S| |A|} 
    %
    +
    4H \sqrt{17 \log(4|\F|T^3/\delta)  T |S| |A|} 
    \\
    & 
    + 
    2H\sqrt{2 T\log(4/\delta)} 
    +
    |A| H
    \\
    &=
    \tilde{O}\left(
    \frac{ \sqrt{T |S||A| \log\frac{|\F|}{\delta}}}{p_{min} }+
    H  \sqrt{ T |S||A|\log\frac{|\F|}{\delta}}+
    |A| H
    \right)
    .
    \end{align*}
    \endgroup
    By union bound, the regret bound above holds with probability at least $1-\delta$.
\end{proof}

\begin{corollary}[regret bound in terms of $\G$]\label{corl:regret-kd-}
    For every $T \geq 1$, finite function class $\G$ ($\F = \G^S$) and $\delta \in (0,1)$, the following holds
    with probability at least $1-\delta$, for the same choice of parameters $\{\beta_t\}_{t \in [T]}$.
    $$
        \Regrv_T(RM-KD)
        \leq
        \widetilde{O}\left(
        \frac{ |S| \sqrt{T |A| \log\frac{|\G|}{\delta}}}{p_{min} }+
        H|S|  \sqrt{ T |A|\log\frac{|\G|}{\delta}}+
        |A| H
        \right)
        .
    $$
\end{corollary}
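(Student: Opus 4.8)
The plan is to obtain this corollary directly from the high-probability regret bound of~\cref{thm:regret-bound-kd} by a purely algebraic substitution; no new probabilistic argument is needed, and the algorithm (and hence the choice of $\{\beta_t\}$) is unchanged. First I would recall the structural identity from the preliminaries: since $\F = \G^S$, every $f \in \F$ is determined by choosing one base function $g_s \in \G$ per state, so $\log|\F| = |S|\log|\G|$. I would then carry this identity through the three summands appearing in~\cref{thm:regret-bound-kd}, namely $p_{min}^{-1}\sqrt{T|S||A|\log(|\F|/\delta)}$, $H\sqrt{T|S||A|\log(|\F|/\delta)}$, and $|A|H$.

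The only subtlety, and the one step worth stating carefully, is the handling of the confidence argument $\log(|\F|/\delta)$, because the factor of $|S|$ attaches to $\log|\G|$ but not to $\log(1/\delta)$. Concretely, I would write
\begin{align*}
    \log(|\F|/\delta)
    = |S|\log|\G| + \log(1/\delta)
    \le |S|\bigl(\log|\G| + \log(1/\delta)\bigr)
    = |S|\log(|\G|/\delta),
\end{align*}
where the inequality uses $|S| \ge 1$ together with $\log(1/\delta) > 0$ for $\delta \in (0,1)$. Substituting this bound and pulling the resulting factor $\sqrt{|S|}$ out of each square root converts $\sqrt{T|S||A|\log(|\F|/\delta)}$ into $|S|\sqrt{T|A|\log(|\G|/\delta)}$, which is exactly the scaling claimed in the corollary for both the $p_{min}^{-1}$ term and the $H$ term; the additive $|A|H$ term is untouched.

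Since the logarithmic factors $\log(4t^3)$ folded into the definition of $\beta_t$, as well as the $1+\log(T/|A|)$ factor from~\cref{lemma:Contextual-Potential}, are all absorbed by the $\widetilde{O}(\cdot)$ notation, the substitution above is all that the derivation requires, and the high-probability guarantee carries over verbatim from~\cref{thm:regret-bound-kd}. I do not anticipate any genuine obstacle here: the entire content is the bookkeeping inequality $\log(|\F|/\delta) \le |S|\log(|\G|/\delta)$, and the main thing to get right is simply not to let the $\log(1/\delta)$ term spoil the clean $|S|\log(|\G|/\delta)$ form.
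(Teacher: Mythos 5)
Your proposal is correct and matches the paper's own (one-line) proof, which simply plugs $\log|\F| = |S|\log|\G|$ into the bound of~\cref{thm:regret-bound-kd}; your extra care with $\log(|\F|/\delta) \le |S|\log(|\G|/\delta)$ is exactly the bookkeeping that step implicitly performs. Nothing further is needed.
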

\begin{proof}
    Plug $\log(|\F|) = |S| \log(|\G|)$ in the bound of~\cref{thm:regret-bound-kd}.
\end{proof}

\section{Unknown and Context-Independent Dynamics}\label{Appendix:UCID}
In this section, we consider the setting of unknown and context-independent dynamics. We operate under the following assumptions.
\\
\noindent\textbf{Unknown and context-independent dynamics.}{
Meaning, for every context $c \in \C$ we have $P^c_\star = P_\star$, i.e., all the contexts has the same dynamics. In addition, $P_\star$ is unknown to the learner. 
Recall we assume the CMDP is layered. 
Since the dynamics is context-independent so is the partition of the states space into layers. 
Hence, we denote by $S_0,S_1, \ldots S_H$ the disjoint layers of the CMDP. As before, $S_0 = \{s_0\}$, $S_H = \{s_H\}$, where $s_0$, $s_H$ are unique start and final states, respectively, and $S = \bigcup_{h \in [H]}S_h$.
We assume that the (context-independent) partition to layers is known to the learner.
}
\\
\noindent\textbf{Known minimum reachability parameter.}{
We assume that there exists $p_{min} \in (0,1]$ such that for each layer $h \in [H]$, state $s_h \in S_h$ and context $c \in \C$ for every context-dependent policy $\pi$ it holds that 
$
    q_h(s_h | \pi(c;\cdot), P_\star) \geq p_{min}
$. 
We remark in this section, we assume that $p_{min}$ is known to the learner.
}
\\
In our algorithms and proofs, we use the notation of the Q-value function.
\\
\noindent\textbf{Q-function.}{
Given a policy $\pi$ and a MDP 
    $
        M 
        = 
        (S,A,P,r,s_0, H)
    $, 
the $h \in [H-1] $ stage  Q-function of a state $s \in S_h$ and an action $a \in A$ is defined as 
    $$
        Q^{\pi}_{M,h} (s,a)
        = 
        \mathbb{E}_{\pi, M} 
        \Big[
        \sum_{k=h}^{H-1} r(s_k, a_k)|s_h = s, a_h=a \Big].
    $$
For completeness we define
$ Q^\pi_{M,H}(s,a) = 0$ for all $ (s,a) \in S\times A$.    
For brevity, when $h = 0$ we denote $Q^{\pi}_{M,0}(s_0,a) := Q^{\pi}_M (s_0,a)$.
Recall Bellman's equations for the Q-function. 
For every layer $h \in [H-1] $, state $s \in S_h$ and an action $a \in A$ it holds that
\[
    Q^{\pi}_{M,h}(s,a) = r(s,a) + \mathop{\E}_{s' \sim P(\cdot|s,a)}\left[V^\pi_{M,h+1}(s')\right].
\]
}

\subsection{Regret Minimization Algorithm}\label{subsec:UCID-ALG}

In Algorithm RM-UCID (\cref{alg:RM-UCID}),
the first $|A|$ rounds are initialization rounds, where in round $i \in \{1, 2, \ldots, |A|\}$ the agent plays the policy $\pi_i$ which always selects action $a_i$, i.e., $\pi_i(c;s) = a_i$ for every context $c \in \C$ and state $s \in S$. She updates the LSR oracle and the counters $N_i(s,a)$, $N_i(s,a,s')$ for all $(s,a,s')\in S \times A \times S$ using the observed trajectory $\sigma^i$.\\
At every round $t =  |A|+1 ,\ldots ,T$ the agent observes a context $c_t$ and computes the optimistic approximated model $\Mhat_t(c)$ and its optimal deterministic policy $\pi_t(c_t;\cdot)$.\\
Similarly to Algorithm RM-KD (i.e.,~\cref{alg:RM-CMDP-KD}),
to compute the optimistic rewards function for the context $c_t$, $\widehat{r}^{c_t}_t$, the agent needs to compute $\pi_k(c_t;\cdot)$ for all $k = |A|+1, \ldots, t-1$. To compute $\pi_k(c_t;\cdot)$ she needs to compute both $\widehat{r}^{c_t}_k$ and the optimistic dynamics $\widehat{P}^{c_t}_k$, for all $k = |A|+1, \ldots, t-1$.
Hence, the agent performs the following steps.\\
For all $k=|A|+1, \ldots , t-1$:
\begin{enumerate}
    \item The agent computes the empirical dynamics at round $k$, which defined as,${\Bar{P}_k(s'|s,a) = \frac{N_k(s,a,s')}{\max\{1, N_k(s,a)\} }}$ for all $(s,a,s') \in S\times A\times S$ .
    \item The agent computes an approximated rewards function $\widehat{r}^{c_t}_k$ for the context $c_t$ using the approximation computed in round $k$, $\hat{f}_k$, the policies $\{\pi_i(c_t;\cdot)\}_{i=1}^{k-1}$ and the minimum reachability parameter $p_{min}$.
    \item The agent computes the optimistic dynamics $\widehat{P}^{c_t}_k$ and a deterministic context-dependent policy $\pi_k(c_t;\cdot)$ by invoking Algorithm \texttt{FOA} (see~\cref{alg:foa}). We now briefly describe Algorithm \texttt{FOA}. Algorithm \texttt{FOA} finds an optimistic dynamics and policy, with respect to the rewards function $\widehat{r}^{c_t}_k$ as follows.
    \begin{itemize}
        \item[$(i)$] The first step is to solve the linear program in~\cref{eq:LP-q}, and by that compute dynamics $\widehat{P}^{c_t}_k$ and a stochastic policy $\pi^{\widehat{q}^{c_t}_k}$ that obtain a maximal value for the rewards function $\widehat{r}^{c_t}_k$, under the constraints that for all $(s,a) \in S \times A$, the optimistic dynamics satisfies that $\left\|\widehat{P}^{c_t}_k (\cdot|s,a) -\Bar{P}_k(\cdot|s,a)\right\|_1 \leq \xi_k(s,a)$, where $\xi_k$ is a confidence bound defined in the algorithm.
        \item[$(ii)$] The second step is to 
        compute a deterministic policy $\pi_k(c_t;\cdot)$ which 
        is optimal for the MDP $\Mhat_t(c_t)$ which defined by the rewards function $\widehat{r}^{c_t}_k$ and the optimistic dynamics $\widehat{P}^{c_t}_k$. This can be done efficiently using a standard planning algorithm.
    \end{itemize}
    \item Lastly, the agent computes $\pi_t(c_t;\cdot)$ in the same manner described above. Then she runs $\pi_t(c_t;\cdot)$ to generate a trajectory $\sigma^t$ and update the LSR oracle and the counters using $\sigma^t$.
\end{enumerate}
For more details, see~\cref{alg:RM-UCID}.
\begin{algorithm}
    \caption{Regret Minimization for Unknown Context Independent Dynamics (RM-UCID)}
    \label{alg:RM-UCID}
    \begin{algorithmic}[1]
        \STATE
        { 
            \textbf{inputs:}
            \begin{itemize}
                \item MDP parameters: 
                $S= \{S_0, S_1, \ldots , S_H\}$
                , $A$, $H$, $s_0$
                \item Confidence parameter $\delta$ and tuning parameters $\{\beta_t\}_{t=1}^T$.
                \item Minimum reachability parameter $p_{min}$.
            \end{itemize}
        } 
        \STATE{initialize counters $N_t(s,a) = 0$, $N_t(s,a, s') = 0$ for all $(s,a,s') \in S \times A \times S,\; t\in [T]$.}
        
        \FOR{ round $i =  1, \ldots, |A|$}
        \STATE{run policy $\pi_i$ which always selects action $a_i$}
        \STATE{update counters according to the observed tuples $(s^i_h, a^i_h,s^i_{h+1})$ for all $i \in [ |A|], h \in [H-1]$}
        \ENDFOR{}
        \FOR{ round $t = |A| + 1, \ldots, T$}
            \STATE{compute
            $
                \hat{f}_t \in 
                \arg\min_{f \in \F}
                \sum_{i=1}^{t-1} \sum_{h=0}^{H-1} ( f(c_i,s^i_h,a^i_h) - r^i_h)^2
            $ using the LSR oracle
            }
            \STATE{observe a fresh context $c_t \sim \D$}
            \FOR{$k= |A|+ 1, \ldots, t$}
                \STATE{compute for all 
                $(s,a,s') \in S\times A\times S: \;\;\Bar{P}_k(s'|s,a) = \frac{N_k(s,a,s')}{\max\{1, N_k(s,a)\} }$}            
                \STATE{compute 
                    $
                        \forall (s,a) \in S \times A:\;
                        \widehat{r}_k^{c_t}(s, a) = \hat{f}_k(c_t,s,a) +
                        \frac{\beta_k}{ p_{min} \sum_{i=1}^{k-1}\I[a = \pi_i(c_t,s)]}
                    $}
                \STATE{find optimistic model $\Mhat_k(c_t) = (S, A, \widehat{P}^{c_t}_k, \widehat{r}^{c_t}_k, s_0, H)$ and a policy for it $\pi_k(c_t;\cdot)$ using algorithm \texttt{FOA}~(\cref{alg:foa})}
            \ENDFOR{}    
            \STATE{play $\pi_t(c_t;\cdot)$ and observe trajectory $\sigma^t = (c_t, s^t_0, a^t_0, r^t_0, s^t_1, \ldots, s^t_{H-1}, a^t_{H-1}, r^t_{H-1},s^t_H) $.}
            \STATE{update the oracle and the counters using $\sigma^t$. The counters update is as follows.
            \begin{align*}
               &N_{t+1}(s,a) =  N_{t}(s,a) + \I[(s,a) \in \sigma^t],\\
               &N_{t+1}(s,a,s') =  N_{t}(s,a,s') + \I[(s,a,s') \in \sigma^t]
                \;\; \forall (s,a,s') \in S \times A \times S
            \end{align*}
            }
            \ENDFOR{}
    \end{algorithmic}
\end{algorithm}


\begin{algorithm}
    \caption{Find Optimistic Approximation (FOA)}
    \label{alg:foa}
    \begin{algorithmic}[1]
        \STATE
        { 
            \textbf{inputs:}
            \begin{itemize}
                \item MDP parameters: 
                $S= \{S_0, S_1, \ldots, S_H\}$, $A$, $s_0$, $H$.
                \item Confidence parameter $\delta$.
                \item Round $k$, counters $N_k(s,a)$ for all $(s,a) \in S \times A$, and the empirical dynamics $\Bar{P}_k$.
                \item Context $c$ and the approximated rewards function $\widehat{r}_k^{c}$. 
            \end{itemize}
        }
        \STATE{compute confidence intervals over the dynamics 
        $
            \xi_k(s,a) = 2\sqrt{\frac{|S| + \lambda}{\max\{1, N_k(s,a)\}}},
        $ 
        where 
        $
            {\lambda = 2\log (4 |S||A| T^2/\delta)}
        $
        .
        }
        \STATE{solve the following linear program
        \begin{equation}\label{eq:LP-q}
        \begin{split}
        &\max_{q \in [0,1]^{|S||A||S|}} \sum_{h=0}^{H-1} \sum_{s \in S_h} \sum_{a \in A} \widehat{r}^c_k(s,a) \cdot \sum_{s' \in S}q(s,a,s')
        \\
        &\qquad \text{subject to: }\\
        &\qquad q(s,a,s') \in [0,1],\;\; \forall h \in [H-1], (s,a,s') \in S \times A \times S\\
        &\qquad \sum_{s \in S_h} \sum_{a \in A} \sum_{s' \in S_{h+1}} q(s,a,s') = 1,\;\; \forall h \in [H-1]
        \\
        &\qquad \sum_{s' \in S_{h+1}}\sum_{a \in A} q(s,a,s')
        =
        \sum_{s' \in S_{h-1}} \sum_{a \in A}  q(s',a,s),\;\; \forall  h \in \{1, \ldots ,H-1\}, s \in S_h
        \\
        &\qquad \sum_{s' \in S_{h+1}}
        \left| 
            q(s,a,s') - \Bar{P}_k(s'|s,a) \cdot \sum_{s'' \in S_{h+1}}q(s,a,s'')
        \right| \leq 
        \xi_k(s,a) \cdot \sum_{s'' \in S_{h+1}}q(s,a,s'')
        \\
        &\qquad\;\; \forall h \in [H-1], (s,a) \in S_h \times A
    \end{split}
    \end{equation}
    }
    \STATE{denote by $\widehat{q}^c_k$ the solution of the LP in~\cref{eq:LP-q} and compute the induced policy and dynamics
    $$
        \pi^{\widehat{q}^c_k}(a|s) = \frac{\sum_{s' \in S_{h+1}} \widehat{q}^c_k(s,a,s')}{\sum_{a' \in A} \sum_{s'\in S_{h+1}} \widehat{q}^c_k(s,a',s')},\;\; \forall h \in [H-1], s \in S_h,a\in A
    $$
    and
    $$
     \widehat{P}^c_k(s'|s,a) = \frac{ \widehat{q}^c_k(s,a,s')}{ \sum_{s''\in S_{h+1}} \widehat{q}^c_k(s,a,s'')},\;\; \forall h \in [H-1], s \in S_h, a\in A
    $$
    let $\Mhat_k(c) = (S,A,\widehat{P}^c_k,\widehat{r}^c_k,s_0,H)$ 
    }
    \STATE{compute a deterministic optimal policy $\pi_k(c,\cdot) \in \arg\max_{\pi \in S \to A}V^{\pi}_{\Mhat_k(c)}(s_0)$ using a planning algorithm}
    
    \STATE{\textbf{return: }
    $\pi_k(c;\cdot)$ and 
    $
    \Mhat_k(c) = (S,A,\widehat{P}^c_k,\widehat{r}^c_k,s_0,H)$ }
        
    \end{algorithmic}
\end{algorithm}

\begin{remark}
    The optimistic approximated dynamics at round $t$, $\widehat{P}_t$, is a context-dependent dynamics although the true dynamics $p_\star$ is not. This happens since the rewards function $\widehat{r}_t$ is context-dependent and for every context the optimistic dynamics is computed with respect to the rewards function of the related context. 
    However, by the constraints of the LP in~\cref{eq:LP-q}, the context-dependent dynamics induces the same partition of the stated to layers for all the context. Moreover, the partition of the states to layers is identical to the partition induced by $P_\star$ (which is given to the algorithm as input).
\end{remark}

\subsection{Regret Analysis}
For every $t >  |A|$ we define the following MDPs for every context $c \in \C$,
\begin{enumerate}
    \item $\M^{(f,P_\star)}(c) = (S,A,P_\star,f(c,\cdot,\cdot),s_0,H)$, for any $f \in \F$ and the true dynamics $P_\star$.
    \item $\M^{(f_\star,P_\star)}(c)$ is the true model, where $f_\star$ is the true context dependent rewards and $P_\star$ is the true dynamics, which we also denote by $\M(c)$.
    \item $\M^{(\widehat{r}_t,P_\star)}(c) = (S,A,P_\star,\widehat{r}^c_t,s_0,H)$ is the model where $\widehat{r}^c_t$ is the approximated rewards in round $t$ defined in~\cref{alg:RM-UCID} and $P_\star$ is the true dynamics.
    \item $\Mhat_t(c)=\M^{(\widehat{r}_t,\widehat{P}_t)}(c) $ is the approximated MDP in time $t > |A|$ defined in~\cref{alg:RM-UCID}. 
    \item $\M^{(f,\widehat{P}_t)}(c) = (S,A,\widehat{P}^c_t,f(c,\cdot,\cdot),s_0,H)$, for any $f \in \F$, where $\widehat{P}^c_t$ is the approximated optimistic dynamics associated with the context $c$ which is defined in~\cref{alg:RM-UCID}.
\end{enumerate}

\subsubsection{Analysis Outline.}
We analyse the regret of Algorithm RM-UCID under the following two good events.
\paragraph{Event $G_1$: confidence bound over policies w.r.t rewards estimation.}\label{par:good-event-rewards-UCID}{
Intuitively, the good event $G_1$ states that the following confidence bound over the expected value difference holds for every $t>|A|$ and context-dependent policy $\pi \in \Pi_\C$.
We show that $G_1$ holds with high probability.
\\
Formally, in~\cref{lemma:CB-policy-UCID} we prove that with probability at least $1 - {\delta}/{4}$, for all $t >|A|$ and every context-dependent policy $\pi \in \Pi_\C$  it holds that
    \begin{align*}
        &\left|\E_c[V^{\pi(c;\cdot)}_{\M(c)}(s_0) ] - \E_c[V^{\pi(c;\cdot)}_{\M^{(\hat{f}_t,P_\star)}(c)}(s_0) ]\right| 
        \\
        \leq &
        \sqrt{\E_c \left[ \sum_{h=0}^{H-1} \sum_{s_h \in S_h} \frac{q_h(s_h|\pi ,P_\star)}{\sum_{i=1}^{t-1} \I[\pi(s_h)= \pi_i(c;s_h)]q_h(s_h | \pi_i(c;\cdot),P_\star)}\right]}
        \cdot
        \sqrt{ 68 H \log(8|\F|t^3/\delta)}
         .
    \end{align*}

\noindent We derive this good event similarly to shown for the known dynamics setting.  
It follows using the results of 
~\cref{lemma:UC-lemma-5-UCID}.
Then, under this good event, in~\cref{lemma:sq-trick-UCID} we derive an additive confidence bound, similarly to shown for the rewards in~\cref{Appendix:KD}.
}

\paragraph{Event $G_2$: Confidence interval over the empirical dynamics.}\label{par:good-event-dynamics-UCID}{
Intuitively, it states that the following confidence bound over the empirical dynamics $\Bar{P}_t$ holds, for every round $t \in \{1,2,\ldots, T\}$ and state-action pair $(s,a) \in S \times A$.
We show that $G_2$ holds with high probability.
Formally,
\begin{lemma}
    With probability at least $1-{\delta}/{4}$, for every $t \in \{1,2,\ldots, T\}$
    and state-action pair $(s,a) \in S \times A$, for $\xi_t(s,a) = 2\sqrt{\frac{|S| + 2\log (4 |S||A| T^2/\delta)}{\max\{1, N_t(s,a)\}}}$ the following holds.
    $$
        \| P_\star(\cdot | s,a) - \Bar{P}_t(\cdot |s,a)\|_1 \leq \xi_t(s,a)
        .
    $$ 
\end{lemma}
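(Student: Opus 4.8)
\emph{Proof proposal.} The plan is to reduce the claim to a standard $\ell_1$-deviation bound for an empirical distribution built from a fixed-size batch of i.i.d.\ samples, and then to make the bound uniform over all rounds $t$ and all pairs $(s,a)$ by a union bound. The one place that genuinely needs care is that the samples feeding $\bar{P}_t(\cdot|s,a)$ are \emph{not} a fixed-size i.i.d.\ batch: the count $N_t(s,a)$ is random, and the policy generating the visits to $(s,a)$ changes adaptively across episodes.

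First I would isolate the key independence property: whenever the agent is at a pair $(s,a)$, the successor state is drawn from $P_\star(\cdot|s,a)$ independently of the entire past, no matter which policy produced that visit. To turn this into fixed-size i.i.d.\ data, for each $(s,a)$ I would introduce a ghost sequence $X^{(s,a)}_1, X^{(s,a)}_2,\ldots$ of i.i.d.\ draws from $P_\star(\cdot|s,a)$, coupled so that the $j$-th visit to $(s,a)$ reveals $X^{(s,a)}_j$. Then for every round $t$ the empirical estimate $\bar{P}_t(\cdot|s,a)$ coincides with the empirical distribution of the first $n = N_t(s,a)$ ghost samples. This decouples the randomness of the counts from the randomness of the successor states, so that concentration can be applied for each fixed $n$.

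Next, for a fixed $(s,a)$ and a fixed sample size $n\ge 1$, I would invoke the standard $\ell_1$-deviation (Bretagnolle--Huber--Carol / Weissman et al.) inequality for an empirical distribution over a domain of size $|S|$:
\begin{align*}
    \Pr\!\Big[\,\|\widehat{P}_n(\cdot|s,a) - P_\star(\cdot|s,a)\|_1 \ge \epsilon\,\Big]
    \le 2^{|S|}\, e^{-n\epsilon^2/2}.
\end{align*}
Taking $\epsilon = \xi_t(s,a)$ with $n$ samples gives $n\xi^2/2 = 2(|S|+\lambda)$, so the right-hand side is at most $e^{|S|(\ln 2 - 2)}\,e^{-2\lambda}\le e^{-2\lambda}$, which lies far below $\delta/(4|S||A|T^2)$ by the choice $\lambda = 2\log(4|S||A|T^2/\delta)$. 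This is exactly why the constant $2$ sits outside the square root in $\xi_t$: the exponent $2(|S|+\lambda)$ must comfortably absorb the $|S|\ln 2$ produced by the $2^{|S|}$ factor.

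Finally I would close with a union bound. Thanks to the ghost coupling it suffices to control the at most $T$ possible counter values $n\in\{1,\ldots,T\}$ together with the $|S||A|$ pairs $(s,a)$, and this makes the statement hold simultaneously for every round $t$ since $N_t(s,a)\le t\le T$; the total failure probability is at most $|S||A|\,T\cdot e^{-2\lambda}\le \delta/4$. The degenerate case $N_t(s,a)=0$ is handled separately and trivially: there $\bar{P}_t(\cdot|s,a)=0$, so $\|P_\star(\cdot|s,a)-\bar{P}_t(\cdot|s,a)\|_1 = 1 \le 2 \le \xi_t(s,a)$. The main obstacle is precisely the issue flagged in the second paragraph---converting adaptively generated, random-length data into a genuinely i.i.d.\ fixed-size batch---which the ghost-sequence coupling resolves cleanly; once that reduction is in place, the remaining steps are a textbook concentration bound followed by the union bound.
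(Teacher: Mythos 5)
Your proposal is correct and takes essentially the same route as the paper: the paper's (very terse) proof simply invokes the Bretagnolle--Huber--Carol $\ell_1$-deviation inequality ``using the tabular argument'' together with a union bound over all $t \in \{1,2,\ldots,T\}$ and $(s,a) \in S \times A$, which is exactly your concentration step plus union bound, and your ghost-sequence coupling is just the rigorous spelling-out of that tabular argument. Your arithmetic with $\xi_t(s,a)$ and $\lambda = 2\log(4|S||A|T^2/\delta)$, as well as the degenerate case $N_t(s,a)=0$, all check out, so nothing further is needed.
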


\begin{proof}
    The lemma holds by Bretagnolle Huber-Carol inequality~(see~\cref{lemma:Bretagnolle-Huber-Carol}) (using the tabular argument) and union bound for all ${t \in \{1,2,\ldots,T\}}$
    and ${(s,a) \in S \times A}$.    
\end{proof}

}
In the following, we first analyse the error caused by the dynamic approximation.
The analysis consists of a two main steps.

\noindent\textbf{Step 1:} In~\cref{lemma:optimality-of-pi-t-and-p-t} we show that under the good event $G_2$, for all $t > |A|$ and every context $c \in \C$, it holds that
    \begin{equation}\label{ineq:optimality}
        V^{\pi_t(c;\cdot)}_{\Mhat_t(c)}(s_0)
        \geq V^{\pi^\star(c;\cdot)}_{\M^{(\widehat{r}_t,P_\star)}(c)},
    \end{equation}
where $\pi^\star$ is the optimal context-dependent policy and $\pi_t$ is the selected context-dependent policy in round $t$.     
To prove the above inequality, we use the result of~\cref{lemma:LP-equivalence}.
    
\noindent\textbf{Step 2:} In~\cref{lemma:UCID-dynamixs-value-error-over-t} we show that under the good event $G_2$, with probability at least $1-\delta/4$ it holds that
    \begin{equation}\label{ineq:UCID-dynamixs-value-error-over-t}
        \sum_{t= |A| + 1}^T \E_c[V^{\pi_t(c;\cdot)}_{\M^{(\hat{f}_t, \widehat{P}_t)}(c)}] - \E_c[V^{\pi_t(c;\cdot)}_{\M^{(\hat{f}_t ,P_\star)}(c)}]
        \leq
        {O}(H^{1.5}|S|\sqrt{|A|T  }\log(|S||A|T^2/\delta))
        .
    \end{equation}

    To prove the lemma, we 
    use the Value Difference Lemma (\cref{lemma:val-diff-simple-P-bar}), and the good event assumption to obtain the above bound.


To analyse the error caused by the rewards approximation 
, we repeat the four steps analysis presented for the known dynamics setting (see~\cref{Appendix:KD}). We use steps 1 and 2 (\cref{lemma:UC-lemma-5-UCID,lemma:CB-policy-UCID}) to derive the good even $G_1$.
In step 3 (see~\cref{lemma:sq-trick-UCID}), we relax the confidence bound to be additive.
In step 4 (see~\cref{lemma:Contextual-Potential-UCID}) we bound the sum of contextual potential functions.

Using the results of steps 3 and 4, in~\cref{lemma:UCID-rewards-value-error-over-t} we obtain for all $T>|A|$ that
\begin{equation}\label{ineq:UCID-rewards-value-error-over-t}
    \begin{split}
        &\sum_{t= |A| + 1}^T \E_c[V^{\pi_t(c;\cdot)}_{\M^{(\hat{f}_t ,P_\star)}(c)}] - \E_c[V^{\pi_t(c;\cdot)}_{\M(c)}]
        \\
        \leq & 
        2 H\sqrt{17 \log(8|\F|T^3/\delta) |S||A| T} 
        + (1+ \log(T/|A|)) \frac{\sqrt{ 17 \log(8|\F|T^3/\delta) T  |S|  |A|}}{p_{min}}
        .
    \end{split}
\end{equation} 
To derive the regret bound, using~\cref{ineq:optimality} we prove the optimism lemma (\cref{lemma:optimism}) which states that under the good events $G_1$ and $G_2$ for all $t>|A|$ it holds that
    \[
        \E_c \left[ V^{\pi^\star(c;\cdot)}_{\M(c)}(s_0)\right]
        -
        \E_c \left[ V^{\pi_t(c;\cdot)}_{\Mhat_t(c)}(s_0) \right]
        \leq
        \beta_t\cdot\frac{H |S||A|}{t}
        .
    \]

Lastly, in~\cref{thm:regret-bound-ucid} 
we combine the result of the optimism lemma (\cref{lemma:optimism}), with~\cref{ineq:UCID-dynamixs-value-error-over-t,ineq:UCID-rewards-value-error-over-t} and the cumulative contextual potential bounds (\cref{lemma:Contextual-Potential-UCID}) to obtain a regret bound
of
    \begin{align*}
        \widetilde{O} \left(
         H^{1.5}|S| \sqrt{T |A|}\log\frac{1}{\delta}+ 
         (H+ 1/p_{min})\cdot \sqrt{|S||A| T \log\frac{|\F|}{\delta}}+
         |A|H
         \right),
    \end{align*}
which holds with high probability. In~\cref{corl:regret-bound-ucid-g} we derive a regrt bound in terms of $|\G|$.
We remark that in the following analysis, we use the explicit expression of the contextual potential $\psi_t$.

\subsubsection{Analysing the Error Caused by the Dynamics Approximation.}\label{subsubsec:dynamics-error-UCID}
In the following, we analyse the error caused by the tabular dynamics approximation.

\paragraph{Analysis of Algorithm FOA.}

The following lemmas shows that Algorithm FOA (\cref{alg:foa}) returns an optimistic dynamics and deterministic optimal policy, with respect to the approximated rewards function.

\begin{lemma}\label{lemma:LP-equivalence}
    Assume the good event $G_2$ holds. 
    Then, for all $k > |A|$ and a context $c \in \C$
    the linear program in~\cref{eq:LP-q}  
    is equivalent to the following constraint maximization problem,
    \begin{equation}\label{eq:olp}
    \begin{split}
        &\max_{\substack{(\pi,P )}
        } 
        V^{\pi}_{\M^{(\widehat{r}_k,P)}(c)}
        (s_0) 
        \\
        &\qquad \text{subject to: }\\
        &\qquad \pi \in S \to \Delta(A)\\
        &\qquad \forall (s,a) \in S \times A: \;\;\;  P(\cdot |s, a )\in \Delta(S)
        \\
        &\qquad \forall (s,a)\in S \times A: \;\;\; 
        \|P(\cdot |s, a ) - \Bar{P}_k(\cdot |s, a )\|_1 \leq  \xi_k(s,a)
    \end{split}
    \end{equation}
    where $\Bar{P}_k$ is the the empirical dynamics, $\widehat{r}_k$ is the approximated reward function at round $k$ and we define $\M^{(\widehat{r}_k,P)}(c) := (S,A,P,\widehat{r}_k^{c}, s_0, H)$. 
\end{lemma}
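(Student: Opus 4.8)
The plan is to prove the equivalence through the classical correspondence between occupancy measures and (policy, dynamics) pairs for layered, loop-free MDPs (see \citet{puterman2014markov,zimin2013online}). Concretely, I would exhibit a value-preserving bijection between the feasible region of the linear program in~\cref{eq:LP-q} and the feasible region of the constrained problem in~\cref{eq:olp}, under which the LP objective coincides with $V^{\pi}_{\M^{(\widehat{r}_k,P)}(c)}(s_0)$. Once such a bijection is in place, the two problems must attain the same optimal value at corresponding optimizers, which is exactly the claim. Throughout, write $w(s,a):=\sum_{s'\in S_{h+1}}q(s,a,s')$ for $(s,a)\in S_h\times A$, which will play the role of the state--action occupancy.

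First I would handle the forward direction. Given a feasible point $q$ of~\cref{eq:LP-q}, the first three constraint families (nonnegativity, per-layer normalization, and flow conservation) are precisely the conditions characterizing $q$ as the occupancy measure $q(s,a,s')=q_h(s,a,s'\mid \pi^q,P^q)$ of the induced pair $(\pi^q,P^q)$ defined by the normalization formulas in Algorithm \texttt{FOA} (\cref{alg:foa}). To verify feasibility in~\cref{eq:olp}, I divide the fourth LP constraint by $w(s,a)$ whenever $w(s,a)>0$: since $q(s,a,s')/w(s,a)=\widehat{P}^c_k(s'\mid s,a)=P^q(s'\mid s,a)$, the constraint becomes exactly $\|P^q(\cdot\mid s,a)-\Bar{P}_k(\cdot\mid s,a)\|_1\le \xi_k(s,a)$, while for $w(s,a)=0$ all terms vanish and it reads $0\le 0$. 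Finally, the LP objective rewrites as $\sum_{h}\sum_{s}\sum_{a} q_h(s,a\mid \pi^q,P^q)\,\widehat{r}^c_k(s,a)$, which equals $V^{\pi^q}_{\M^{(\widehat{r}_k,P^q)}(c)}(s_0)$ by the occupancy-measure representation of the value function stated earlier.

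Conversely, given a feasible $(\pi,P)$ of~\cref{eq:olp}, I would set $q(s,a,s'):=q_h(s,a,s'\mid \pi,P)=q_h(s\mid \pi,P)\,\pi(a\mid s)\,P(s'\mid s,a)$ for $(s,a)\in S_h\times A$; this $q$ automatically satisfies nonnegativity, per-layer normalization and flow conservation. For the L1 constraint, with $w(s,a)=q_h(s,a\mid \pi,P)$ one has $\sum_{s'}|q(s,a,s')-\Bar{P}_k(s'\mid s,a)\,w(s,a)|=w(s,a)\,\|P(\cdot\mid s,a)-\Bar{P}_k(\cdot\mid s,a)\|_1\le w(s,a)\,\xi_k(s,a)$ by feasibility of $(\pi,P)$, so $q$ is LP-feasible, and the objectives again coincide by the same value representation.

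The main delicate point, and the step I expect to require the most care, is the treatment of unvisited state--action pairs with $w(s,a)=0$: there the normalization denominators in Algorithm \texttt{FOA} vanish and $\pi^q,P^q$ are only defined up to an arbitrary feasible choice (e.g. $P^q(\cdot\mid s,a)=\Bar{P}_k(\cdot\mid s,a)$). I would argue that such choices affect neither the objective nor feasibility, since every term attached to them carries zero occupancy weight; hence the correspondence is a genuine bijection up to this harmless freedom. The good event $G_2$ is used to ensure the confidence sets $\{P:\|P(\cdot\mid s,a)-\Bar{P}_k(\cdot\mid s,a)\|_1\le \xi_k(s,a)\}$ are nonempty (they contain the true dynamics $P_\star$, as well as $\Bar{P}_k$ itself), so both feasible regions are nonempty and the two maxima are well defined and equal. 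The remaining verifications --- that the flow constraints are exactly the occupancy-measure consistency conditions, and the algebraic rewriting of the L1 constraint --- are routine and I would not grind through them.
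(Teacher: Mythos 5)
Your proposal is correct and follows essentially the same route as the paper: the paper also establishes the equivalence via the occupancy-measure correspondence (citing \citet{rosenberg2019online} for the characterization of the feasible set as occupancy measures, the induced $(\pi^q,P^q)$ formulas, and the translation of the $L_1$ constraint), then matches objectives through the occupancy-measure representation of the value. The only difference is presentational — you verify the bijection and both constraint directions by hand (including the $w(s,a)=0$ edge case), where the paper delegates these facts to the cited reference.
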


\begin{proof}
    Fix round $k > |A|$ and a context $c \in \C$.
    
    Consider the following extended definition of the occupancy measure presented by
    \citet{rosenberg2019online} for any dynamics $P$ and stochastic Markovian policy $\pi$,
    \[
        q^{P,\pi}(s, a, s'):=
        \Prob_{P,\pi}[s_h=s,a_h=a,s_{h+1}=s'], \;\; \forall h \in [H-1], (s,a,s') \in S_h \times A \times S_{h+1}.
    \]
    Let $\Delta(M)$ denote the set of all extended occupancy measures.\\ 
    We have that $q \in \Delta(M)$ if and only if $q$ satisfies the following requirements.
    \begin{enumerate}
        \item $\sum_{s \in S_h} \sum_{a \in A} \sum_{s' \in S_{h+1}} q(s,a,s') = 1,\;\forall h \in [H-1]$.
        \item $\sum_{s' \in S_{h+1}}\sum_{a \in A} q(s,a,s')
        =
        \sum_{s' \in S_{h-1}} \sum_{a \in A}  q(s',a,s),\;\forall  h \in \{1, \ldots ,H-1\}, s \in S_h$.
    \end{enumerate}
    
    \citet{rosenberg2019online} showed that $q \in \Delta(M)$ if and only if there exist a pair of stochastic Markovian policy and dynamics $(\pi^q,P^q)$ for which 
    $$
        \pi^q(a|s) = \frac{\sum_{s' \in S_{h+1}} q(s,a,s')}{\sum_{a' \in A} \sum_{s'\in S_{h+1}} q(s,a',s')},\;\; \forall h \in [H-1], s \in S_h,a\in A,
    $$
    and
    $$
     P^q(s'|s,a) = \frac{ q(s,a,s')}{ \sum_{s''\in S_{h+1}} q(s,a,s'')},\;\; \forall h \in [H-1], s \in S_h, a\in A.
    $$
    
    Using the extended occupancy measure definition, for $\M^{(\widehat{r}_k,P^q)}(c) = (S,A,P^q, \widehat{r}^c_k, s_0, H)$ and $\pi^q$
    it holds that
    \begin{align*}
        V^{\pi^q}_{\M^{(\widehat{r}_k,P^q)}(c)}(s_0)
        =
        \sum_{h =0}^{H-1}
        \sum_{s \in S_h} \sum_{a \in A} \widehat{r}^c_k(s,a) \sum_{s' \in S_{h+1}} q(s,a,s').
    \end{align*}
    Hence, the objective functions of both maximization problems are equivalent.
    
    In the following we show that the constraints of both maximization problems are equivalent as well.
    \\
    Let $\Bar{P}_k$ be the empirical dynamics at round $k$.
    In the maximization problem in~\cref{eq:olp} we have the constraint 
    $\|P(\cdot|s,a) - \Bar{P}_k(\cdot|s,a)\|_1 \leq \xi_k(s,a)$, for all $(s,a) \in S \times A$.
    \citet{rosenberg2019online} showed we can equivalently apply the following constraint on the extended occupancy measure $q$
    \begin{align*}
        \sum_{s' \in S_{h+1}}
        \left| 
            q(s,a,s') - \Bar{P}_k(s'|s,a) \cdot \sum_{s'' \in S_{h+1}}q(s,a,s'')
        \right| \leq 
        \xi_k(s,a) \cdot \sum_{s'' \in S_{h+1}}q(s,a,s''),\;\; \forall h \in [H-1], (s,a) \in S_h \times A
    \end{align*}
    and obtain that
    $\|P^q(\cdot|s,a) - \Bar{P}_k(\cdot|s,a)\|_1 \leq \xi_k(s,a)$, for all $(s,a) \in S \times A$.
    \\
    Hence, we showed the constrains on $(\pi,P)$ in~\cref{eq:olp} are equivalent to the constrains on $q$ in~\cref{eq:LP-q}, and so are objective functions we maximize.
    Hence, the two optimization problems are equivalent.
    
\end{proof}

\begin{corollary}\label{corl:LP-equivalence}
    Assume the good event $G_2$ holds. 
    For all $k > |A|$ and any context $c \in \C$,
    let $\widehat{q}^c_k$ be an optimal solution for the LP in~\cref{eq:LP-q}.
    Consider the induced policy and dynamics,
    $$
        \pi^{\widehat{q}^c_k}(a|s) = \frac{\sum_{s' \in S_{h+1}} \widehat{q}^c_k(s,a,s')}{\sum_{a' \in A} \sum_{s'\in S_{h+1}} \widehat{q}^c_k(s,a',s')},\;\; \forall h \in [H-1], s \in S_h,a\in A,
    $$
    and
    $$
     \widehat{P}^c_k(s'|s,a) = \frac{ \widehat{q}^c_k(s,a,s')}{ \sum_{s''\in S_{h+1}} \widehat{q}^c_k(s,a,s'')},\;\; \forall h \in [H-1], s \in S_h, a\in A.
    $$
   
    Then, $\left(\pi^{\widehat{q}^c_k}, \widehat{P}^c_k\right)$ is an optimal solution of the maximization problem in~\cref{eq:olp}.
    
    Moreover, $\pi^{\widehat{q}^c_k}$ is an optimal stochastic policy for the MDP $\M^{(\widehat{r}_k,\widehat{P}^c_k) }(c):= (S,A,\widehat{P}^c_k,\widehat{r}_k^{c}, s_0, H)$.
\end{corollary}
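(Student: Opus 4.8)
The plan is to derive the corollary directly from the equivalence established in \cref{lemma:LP-equivalence}, by transporting optimality across the bijection between extended occupancy measures and policy--dynamics pairs. First I would recall that, under $G_2$, \cref{lemma:LP-equivalence} shows that \cref{eq:LP-q} and \cref{eq:olp} agree at corresponding points: for every feasible extended occupancy measure $q$ the induced pair $(\pi^q, P^q)$ is feasible for \cref{eq:olp}, satisfies the constraint $\|P^q(\cdot|s,a)-\Bar{P}_k(\cdot|s,a)\|_1 \le \xi_k(s,a)$, and attains the identical objective value $V^{\pi^q}_{\M^{(\widehat{r}_k,P^q)}(c)}(s_0)$; conversely every feasible pair corresponds to a feasible $q$. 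Consequently the two programs share the same optimal value and the bijection carries optimizers to optimizers.

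For the first assertion, since $\widehat{q}^c_k$ is an optimal solution of \cref{eq:LP-q}, the induced pair $(\pi^{\widehat{q}^c_k}, \widehat{P}^c_k)$ is feasible for \cref{eq:olp} and attains the common optimal value; hence it is an optimal solution of \cref{eq:olp}. This step is essentially bookkeeping on the correspondence of \cref{lemma:LP-equivalence} and should require no new work.

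The second assertion, that $\pi^{\widehat{q}^c_k}$ is an optimal stochastic policy for the fixed-dynamics MDP $\M^{(\widehat{r}_k,\widehat{P}^c_k)}(c)$, is the genuinely new point and is where I would be most careful. The key observation is that the dynamics $\widehat{P}^c_k$ is itself feasible for \cref{eq:olp}, since its $\ell_1$ constraint holds because it is induced by the feasible $\widehat{q}^c_k$, as noted above. Therefore, for any stochastic policy $\pi' \in S \to \Delta(A)$, the pair $(\pi', \widehat{P}^c_k)$ is also feasible for \cref{eq:olp}. Since $(\pi^{\widehat{q}^c_k}, \widehat{P}^c_k)$ is optimal for \cref{eq:olp} by the first assertion, we obtain $V^{\pi'}_{\M^{(\widehat{r}_k,\widehat{P}^c_k)}(c)}(s_0) \le V^{\pi^{\widehat{q}^c_k}}_{\M^{(\widehat{r}_k,\widehat{P}^c_k)}(c)}(s_0)$, which says exactly that $\pi^{\widehat{q}^c_k}$ maximizes the value over all stochastic policies when the dynamics is held fixed at $\widehat{P}^c_k$. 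This is the standard argument of fixing one coordinate of a joint maximizer; the only obstacle is ensuring $\widehat{P}^c_k$ genuinely lies in the feasible set, so that an arbitrary $\pi'$ may be paired with it, which is guaranteed by the feasibility-transfer part of \cref{lemma:LP-equivalence}.
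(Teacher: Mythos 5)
Your proposal is correct and follows essentially the same route as the paper: the first claim is read off from the equivalence in \cref{lemma:LP-equivalence}, and the second follows by noting that $(\pi',\widehat{P}^c_k)$ is feasible for \cref{eq:olp} for every stochastic policy $\pi'$, so joint optimality of $(\pi^{\widehat{q}^c_k},\widehat{P}^c_k)$ forces optimality of $\pi^{\widehat{q}^c_k}$ under the fixed dynamics $\widehat{P}^c_k$. The only cosmetic difference is that the paper phrases this last step as a proof by contradiction, whereas you argue it directly.
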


\begin{proof}
    The first part of the corollary is immediately implied by~\cref{lemma:LP-equivalence}.
    
    For the second part, assume that there exist a round $k$ and a context $c$ such that $\pi^{\widehat{q}^c_k}$ is not an optimal policy of the MDP $\M^{(\widehat{r}_k,\widehat{P}^c_k) }(c)$ .
    Meaning,
    there exists a policy ${\pi':S \to \Delta(A)}$ for which
    $$V^{\pi'}_{\M^{(\widehat{r}_k,\widehat{P}^c_k) }(c)}(s_0) > V^{\pi^{\widehat{q}^c_k}}_{\M^{(\widehat{r}_k,\widehat{P}^c_k) }(c)}(s_0).$$
    Under the good event $G_2$, $\left(\pi', \widehat{P}^c_k\right)$ is a feasible solution to the maximization problem in~\cref{eq:olp}, 
    for the context $c$ at round $k$,
    while $\left(\pi^{\widehat{q}^c_k}, \widehat{P}^c_k\right)$ is an optimal solution.
    Hence, 
    $$V^{\pi'}_{\M^{(\widehat{r}_k,\widehat{P}^c_k) }(c)}(s_0) \leq V^{\pi^{\widehat{q}^c_k}}_{\M^{(\widehat{r}_k,\widehat{P}^c_k) }(c)}(s_0),$$ a contradiction.
\end{proof}

\begin{lemma}[optimality of $(\pi_t, \widehat{P}_t)$ w.r.t $\widehat{r}_t$]\label{lemma:optimality-of-pi-t-and-p-t}
    Assume the good event $G_2$ holds.
    Then, for all $t > |A|$ and every context $c \in \C$, it holds that
    \[
        V^{\pi_t(c;\cdot)}_{\Mhat_t(c)}(s_0)
        \geq V^{\pi^\star(c;\cdot)}_{\M^{(\widehat{r}_t,P_\star)}(c)}
        .
    \]
\end{lemma}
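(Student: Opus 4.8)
The plan is to reduce the claimed inequality to the optimality guarantee for the constrained maximization problem~\cref{eq:olp} established in~\cref{corl:LP-equivalence}, using that under the good event $G_2$ the true dynamics $P_\star$ is itself a feasible dynamics for that problem. I would fix $t > |A|$ and a context $c \in \C$, and recall that by definition $\Mhat_t(c) = \M^{(\widehat{r}_t, \widehat{P}^c_t)}(c)$, and that $\pi_t(c;\cdot)$ is produced by \texttt{FOA} as a deterministic optimal policy of $\Mhat_t(c)$.

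First I would invoke~\cref{corl:LP-equivalence}: under $G_2$, the pair $(\pi^{\widehat{q}^c_t}, \widehat{P}^c_t)$ is an optimal solution of~\cref{eq:olp}, so it attains the maximal value of $V^{\pi}_{\M^{(\widehat{r}_t, P)}(c)}(s_0)$ among all feasible pairs $(\pi, P)$. Moreover, the same corollary gives that $\pi^{\widehat{q}^c_t}$ is an optimal stochastic policy of $\Mhat_t(c)$. Since $\pi_t(c;\cdot)$ is an optimal deterministic policy of the same finite-horizon MDP, and a deterministic optimal policy attains the optimum over all stochastic policies, we get $V^{\pi_t(c;\cdot)}_{\Mhat_t(c)}(s_0) \geq V^{\pi^{\widehat{q}^c_t}}_{\Mhat_t(c)}(s_0) = V^{\pi^{\widehat{q}^c_t}}_{\M^{(\widehat{r}_t, \widehat{P}^c_t)}(c)}(s_0)$.

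It then remains to show the right-hand side dominates $V^{\pi^\star(c;\cdot)}_{\M^{(\widehat{r}_t, P_\star)}(c)}(s_0)$. The crucial observation is that $(\pi^\star(c;\cdot), P_\star)$ is feasible for~\cref{eq:olp}: the deterministic policy $\pi^\star(c;\cdot)$ is a special case of a policy in $S \to \Delta(A)$, $P_\star(\cdot|s,a) \in \Delta(S)$ trivially, and under $G_2$ we have $\|P_\star(\cdot|s,a) - \Bar{P}_t(\cdot|s,a)\|_1 \leq \xi_t(s,a)$ for every $(s,a)$, which is exactly the $\ell_1$ constraint. Because $(\pi^{\widehat{q}^c_t}, \widehat{P}^c_t)$ is optimal, its value is at least that of any feasible pair, so $V^{\pi^{\widehat{q}^c_t}}_{\M^{(\widehat{r}_t, \widehat{P}^c_t)}(c)}(s_0) \geq V^{\pi^\star(c;\cdot)}_{\M^{(\widehat{r}_t, P_\star)}(c)}(s_0)$. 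Chaining the two displayed inequalities yields the lemma.

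The argument is essentially a feasibility-plus-optimality sandwich, so the steps are short; the only points needing care are (i) correctly checking that $(\pi^\star(c;\cdot), P_\star)$ satisfies all three constraints of~\cref{eq:olp}, which is precisely where $G_2$ enters, and (ii) justifying that the optimal deterministic $\pi_t(c;\cdot)$ does not lose value relative to the optimal stochastic policy $\pi^{\widehat{q}^c_t}$ of $\Mhat_t(c)$, which follows from the standard fact that finite MDPs admit deterministic optimal policies. I do not expect a genuine obstacle here beyond invoking~\cref{corl:LP-equivalence} and the definition of $G_2$ correctly.
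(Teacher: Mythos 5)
Your proposal is correct and follows essentially the same argument as the paper's proof: feasibility of $(\pi^\star(c;\cdot), P_\star)$ for the maximization problem in~\cref{eq:olp} under $G_2$, optimality of $(\pi^{\widehat{q}^c_t}, \widehat{P}^c_t)$ via~\cref{corl:LP-equivalence}, and then passing from the optimal stochastic policy $\pi^{\widehat{q}^c_t}$ to the optimal deterministic policy $\pi_t(c;\cdot)$ of $\Mhat_t(c)$. The only cosmetic difference is that the paper states this last step as an equality of values, while you state it as an inequality justified by the standard deterministic-optimality fact; both are valid.
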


\begin{proof}
    Fix a round $t > |A|$ and a context $c \in \C$.
    Under the good event $G_2$ the true dynamics $P_\star$ satisfies
    for all $k \in \{1,2,\ldots,t\}$ that
    \[
        \|P_\star - \Bar{P}_k\|_1 \leq \xi_k(s,a),\;\; \forall (s,a) \in S\times A.
    \]
    Let $\widehat{q}^c_t$ be the solution of LP in~\cref{eq:LP-q} for round $t$ and the context $c$.
    Let $\pi^{\widehat{q}^c_t}$ and $\widehat{P}^c_t$ be the induced policy and dynamics.
    By~\cref{corl:LP-equivalence} we have that $(\pi^{\widehat{q}^c_t},\widehat{P}^c_t)$ is an optimal solution for the maximization problem in~\cref{eq:olp}. Since $(\pi^\star(c;\cdot),P_\star)$ is a feasible solution for~\cref{eq:olp}, by the optimality of $(\pi^{\widehat{q}^c_t},\widehat{P}^c_t)$ and $\Mhat_t(c)$ definition we have that
    \[
        V^{\pi^{\widehat{q}^c_t}}_{\Mhat_t(c)}(s_0)
        =
        V^{\pi^{\widehat{q}^c_t}}_{\M^{(\widehat{r}_t, \widehat{P}^c_t)}(c)}(s_0)
        \geq 
        V^{\pi^\star(c;\cdot)}_{\M^{(\widehat{r}_t,P_\star)}(c)}
        .
    \]
    
    Lastly, since $\pi_t(c;\cdot)$ is an optimal deterministic policy  
    of the MDP $\Mhat_t(c)$
    and $\pi^{\widehat{q}^c_t}$ is an optimal stochastic policy for it, 
    it holds that
    \[
          V^{\pi_t(c;\cdot)}_{\Mhat_t(c)}(s_0) = V^{\pi^{\widehat{q}^c_t}}_{\Mhat_t(c)}(s_0),
    \]
    yielding the lemma as
    \[
         V^{\pi_t(c;\cdot)}_{\Mhat_t(c)}(s_0)
         = 
         V^{\pi^{\widehat{q}^c_t}}_{\Mhat_t(c)}(s_0)
         \geq
         V^{\pi^\star(c;\cdot)}_{\M^{(\widehat{r}_t,P_\star)}(c)}. 
    \]
\end{proof}

\paragraph{Bound on the Cumulative Error of the Approximated Dynamics.}

We now bound the cumulative value difference caused by the dynamics approximation.

\begin{lemma}[cumulative value error due to dynamics approximation]\label{lemma:UCID-dynamixs-value-error-over-t}
    Assume the good event $G_2$ holds.
    Let $\{\pi_t \in \Pi_\C\}_{t=1}^T$ be the sequence of selected deterministic context-dependent policies. Let $\{\hat{f}_t\}_{t=1}^T$ be the sequence of least square minimizers.
    Then, for any $T \geq |S||A|$ with probability at least $1-\delta/4$ it holds that
    \[
        \sum_{t= |A| + 1}^T \E_c[V^{\pi_t(c;\cdot)}_{\M^{(\hat{f}_t, \widehat{P}_t)}(c)}] - \E_c[V^{\pi_t(c;\cdot)}_{\M^{(\hat{f}_t ,P_\star)}(c)}]
        \leq
        {O}(H^{1.5}|S|\sqrt{|A|T  }\log(|S||A|T^2/\delta))
         .
    \]
\end{lemma}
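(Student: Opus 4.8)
The plan is to reduce each per-round value gap to an $\ell_1$ error between the optimistic and true dynamics via the simulation (value difference) lemma, and then to control the resulting occupancy-weighted sum of confidence radii by a counting argument. First I would fix a round $t>|A|$ and a context $c\in\C$ and apply \cref{lemma:val-diff-simple-P-bar} to the pair $\M^{(\hat f_t,\widehat P_t)}(c)$ and $\M^{(\hat f_t,P_\star)}(c)$, which share the reward $\hat f_t(c,\cdot,\cdot)\in[0,1]$ and differ only in their dynamics. The key modeling choice is to expand the difference so that the occupancy is taken with respect to the \emph{true} dynamics $P_\star$, i.e.
\[
 V^{\pi_t(c;\cdot)}_{\M^{(\hat f_t,\widehat P_t)}(c)}(s_0)-V^{\pi_t(c;\cdot)}_{\M^{(\hat f_t,P_\star)}(c)}(s_0)
 =\sum_{h=0}^{H-1}\sum_{s_h\in S_h}q_h(s_h|\pi_t(c;\cdot),P_\star)\sum_{s'}\big(\widehat P^{c}_t(s'|s_h,a_h)-P_\star(s'|s_h,a_h)\big)V^{\pi_t(c;\cdot)}_{\M^{(\hat f_t,\widehat P_t)}(c),h+1}(s'),
\]
with $a_h=\pi_t(c;s_h)$, so that the visitation frequencies appearing here are exactly the ones the counters $N_t$ record. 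Since the rewards lie in $[0,1]$ every value function is bounded by $H$, and applying the $\ell_1$--$\ell_\infty$ inequality to the inner sum bounds it by $H\,\|\widehat P^{c}_t(\cdot|s_h,a_h)-P_\star(\cdot|s_h,a_h)\|_1$.

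Next I would bound the dynamics error pointwise. Under the good event $G_2$ we have $\|\bar P_t(\cdot|s,a)-P_\star(\cdot|s,a)\|_1\le\xi_t(s,a)$, while the feasibility constraint of the linear program in \cref{eq:LP-q} forces the optimistic dynamics to satisfy $\|\widehat P^{c}_t(\cdot|s,a)-\bar P_t(\cdot|s,a)\|_1\le\xi_t(s,a)$; the triangle inequality then gives $\|\widehat P^{c}_t(\cdot|s,a)-P_\star(\cdot|s,a)\|_1\le 2\xi_t(s,a)$. Taking $\E_c$ and summing over $t$ reduces the claim to bounding
\[
 2H\sum_{t=|A|+1}^{T}\E_c\Big[\sum_{h=0}^{H-1}\sum_{s_h\in S_h}q_h(s_h|\pi_t(c;\cdot),P_\star)\,\xi_t\big(s_h,\pi_t(c;s_h)\big)\Big],
\]
where $\xi_t(s,a)=2\sqrt{(|S|+\lambda)/\max\{1,N_t(s,a)\}}$ and $\lambda=2\log(4|S||A|T^2/\delta)$.

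The remaining, and most delicate, step is the counting argument. Writing $\bar q_t(s,a):=\E_c[q_h(s,a|\pi_t(c;\cdot),P_\star)]$ for $s\in S_h$, note that $\bar q_t(s,a)$ is precisely the conditional mean of $\I[(s,a)\in\sigma^t]$ given $\Hist_{t-1}$, so $\{\I[(s,a)\in\sigma^t]-\bar q_t(s,a)\}_t$ is a bounded martingale difference sequence. A Freedman/Azuma concentration (\cref{lemma:fridman's-ineq}) together with a union bound over the $|S||A|$ pairs and all $t$ yields the additional $\delta/4$ failure event and lets me assert that, with high probability, $\sum_{i<t}\bar q_i(s,a)\le 2N_t(s,a)+O(\log(|S||A|T/\delta))$ uniformly. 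Substituting this into the denominator, applying the pigeonhole inequality $\sum_t \bar q_t(s,a)/\sqrt{\max\{1,\sum_{i<t}\bar q_i(s,a)\}}\le O(\sqrt{\sum_t\bar q_t(s,a)})$ (valid since $\bar q_t\le 1$), and finishing with Cauchy--Schwarz over the $|S||A|$ pairs using $\sum_{s,a}\sum_t\bar q_t(s,a)=HT$ (each of the $H$ layers contributes occupancy mass one per round) gives $\sum_{s,a}\sqrt{\sum_t\bar q_t(s,a)}\le\sqrt{|S||A|\,HT}$. Collecting prefactors,
\[
 2H\cdot 2\sqrt{|S|+\lambda}\cdot O\big(\sqrt{|S||A|\,HT}\big)=O\big(H^{1.5}|S|\sqrt{|A|T}\,\log(|S||A|T^2/\delta)\big),
\]
where $\sqrt{|S|+\lambda}\cdot\sqrt{|S|}$ and the concentration log factors are folded into the single logarithm, and the hypothesis $T\ge|S||A|$ is used only so that the additive $O(\log)$ terms from the concentration and pigeonhole steps are dominated by the leading $\sqrt{T}$ contribution. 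The main obstacle I anticipate is exactly this last concentration step: the trajectories are sampled from $P_\star$ while the bonuses and the optimistic model live on the changing dynamics $\widehat P_t$, so one must first orient the simulation lemma toward $P_\star$-occupancy (as above) and then control the drift between cumulative expected occupancy and the empirical counts $N_t$ before the clean counting bound applies.
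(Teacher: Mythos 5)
Your reduction is correct and coincides with the paper's own proof in its first three steps: the paper also applies the value-difference lemma (it cites \cref{lemma:val-diff-efroni}, equivalent here to your use of \cref{lemma:val-diff-simple-P-bar}) oriented so that the occupancy is under $P_\star$ and the value function is that of the optimistic model, bounds the inner sum by $H\|\widehat P^{c}_t(\cdot|s_h,a_h)-P_\star(\cdot|s_h,a_h)\|_1$, and uses the triangle inequality through $\bar P_t$ together with the LP constraint of \cref{eq:LP-q} and the event $G_2$ to get the radius $2\xi_t(s_h,a_h)$. Where you diverge is the final concentration-and-counting step. The paper adds and subtracts the \emph{realized} visit indicators, applies Azuma once to the martingale difference between the expected occupancy-weighted bonus and the realized indicator-weighted bonus (incurring a single additive $O(H\sqrt{TH\log(1/\delta)})$ term), and then does the pigeonhole $\sum_{s,a}\sum_{i=0}^{N_T(s,a)} i^{-1/2}\le |S||A|+2\sum_{s,a}\sqrt{N_T(s,a)}$ followed by Cauchy--Schwarz over realized counts. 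You instead apply Freedman's inequality (\cref{lemma:fridman's-ineq}) per state--action pair to certify $N_t(s,a)\gtrsim \tfrac12\sum_{i<t}\bar q_i(s,a)-O(\log(|S||A|T/\delta))$, and run the pigeonhole and Cauchy--Schwarz in expected-occupancy space. Both routes are valid and give the stated rate; the paper's variant is slightly tighter in bookkeeping, since your route needs a union bound over all $|S||A|T$ pair-round events plus a separate case analysis for the low-count rounds of each pair, whose aggregate contribution is $O(|S||A|\log(\cdot))$ rather than the paper's $O(|S||A|)$, so a lower-order term in your final bound carries an extra $\sqrt{\log}$ factor. This is harmless for the lemma as used downstream (the regret theorem is stated in $\widetilde O$), but if you want to match the lemma's bound literally, the paper's single-Azuma-on-the-bonus-sum trick avoids both the union bound and the low-count case split.
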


\begin{proof}
    Assume the good event $G_2$ holds. Using Value-Difference~\cref{lemma:val-diff-efroni}
    we obtain
    \begingroup
    \allowdisplaybreaks
    \begin{align*}
        &\sum_{t= |A| + 1}^T \E_c[V^{\pi_t(c;\cdot)}_{\M^{(\hat{f}_t, \widehat{P}_t)}(c)}] - \E_c[V^{\pi_t(c;\cdot)}_{\M^{(\hat{f}_t ,P_\star)}(c)}]
        \\
        \tag{Value Difference~\cref{lemma:val-diff-efroni}}
        = &
        \sum_{t=|A| + 1}^T 
        \E_{c} \left[ \E_{\pi_t(c;\cdot),P_\star}\left[
        \sum_{h=0}^{H-1} \sum_{s' \in S} 
        (\widehat{P}^{c}_t(s'|s_h, a_h) - P_\star(s'|s_h,a_h))V^{\pi_t(c;\cdot)}_{\M^{(\hat{f}_t, \widehat{P}_t)}(c), h+1}(s')
        \Bigg| s_0\right]\right]
        \\
        \tag{Since $\hat{f}_t \in[0,1]$}
        \leq &
        H \sum_{t= |A| + 1}^T 
        \E_{c} \left[ \E_{\pi_t(c;\cdot),P_\star}\left[
        \sum_{h=0}^{H-1}
        \|\widehat{P}^{c}_t(\cdot|s_h, a_h) - P_\star(\cdot|s_h,a_h)\|_1
        \Bigg| s_0\right]\right]
        \\
        \tag{By triangle inequality}
        \leq &
        H \sum_{t= |A| + 1}^T 
        \E_{c} \left[ \E_{\pi_t(c;\cdot),P_\star}\left[
        \sum_{h=0}^{H-1}
        \|\widehat{P}^{c}_t(\cdot|s_h, a_h) - \Bar{P}_t(\cdot|s_h,a_h)\|_1
        +
        \|\Bar{P}_t(\cdot|s_h, a_h) - P_\star(\cdot|s_h,a_h)\|_1
        \Bigg| s_0\right]\right]
        \\
        \tag{Since $G_2$ holds and $\|\widehat{P}^{c}_t(\cdot|s, a) - \Bar{P}_t(\cdot|s,a)\|_1 \leq \xi_t(s,a)$ for all $(s,a)$ }
        \leq &
         H \sum_{t= |A| + 1}^T 
        \E_{c} \left[ \E_{\pi_t(c;\cdot),P_\star}\left[
        \sum_{h=0}^{H-1}
        2 \xi_t(s_h,a_h)
        \Bigg| s_0\right]\right]
        \\
        \tag{Representation using occupancy measures}
        = &
        2 H \;  \sum_{t= |A| + 1}^T 
        \E_{c} \left[
        \sum_{h=0}^{H-1} \sum_{s_h \in S_h} \sum_{a_h \in A}
        q_h(s_h, a_h| \pi_t(c;\cdot),P_\star)\cdot \xi_t(s_h,a_h)
        \right]
        \\
        \tag{By $\xi_t$ definition}
        = &
        4 H \sqrt{|S|+2\log (4 |S||A|T^2/\delta)} \;  \sum_{t= |A| + 1}^T 
        \E_{c} \left[
        \sum_{h=0}^{H-1} \sum_{s_h \in S_h} \sum_{a_h \in A}
        \frac{q_h(s_h, a_h| \pi_t(c;\cdot),P_\star)}{\sqrt{\max\{1, N_t(s_h,a_h)\}}}
        \right]
        \\
        \tag{By adding non-negative terms and ``renaming'' the context}
        \leq &
        4 H \sqrt{|S|+2\log (4 |S||A|T^2/\delta)} \;  \sum_{t=
        1}^T 
        \E_{c_t} \left[
        \sum_{h=0}^{H-1} 
        \sum_{s_h \in S_h} \sum_{a_h \in A}
        \frac{q_h(s_h, a_h| \pi_t(c_t;\cdot),P_\star)}{\sqrt{\max\{1, N_t(s_h,a_h)\}}}
        \right]
        \\
        = &
        4 H \sqrt{|S|+2\log (4 |S||A|T^2/\delta)}  \sum_{t=
        1}^T 
        \sum_{h=0}^{H-1}\Bigg( 
        \E_{c_t} \left[\sum_{s_h \in S_h} \sum_{a_h \in A}
        \frac{q_h(s_h, a_h| \pi_t(c_t;\cdot),P_\star)}{\sqrt{\max\{1, N_t(s_h,a_h)\}}}
        \right]
        \\
        \tag{By linearity of $\E$ and adding and subtracting indicators, see explanation bellow}
        & - \sum_{s_h \in S_h} \sum_{a_h \in A}
        \frac{\I[c_t,t,h,s_h,a_h]}{\sqrt{\max\{1, N_t(s_h,a_h)\}}}
        +
        \sum_{s_h \in S_h} \sum_{a_h \in A}
        \frac{\I[c_t,t,h,s_h,a_h]}{\sqrt{\max\{1, N_t(s_h,a_h)\}}} \Bigg)
        \\
        \tag{By Azuma's inequality, holds with probability at least $1-\delta/4$ }
       \underbrace{\leq}_{(i)} &
        4 H \sqrt{|S|+2\log (4 |S||A|T^2/\delta)} \left(\sqrt{2TH \log(8/\delta)} +\; 
        \sum_{t= 
        1}^T 
        \sum_{h=0}^{H-1} \sum_{s_h \in S_h} \sum_{a_h \in A}
        \frac{\I[c_t,t,h,s_h,a_h]}{\sqrt{\max\{1, N_t(s_h,a_h)\}}}
        \right)
        \\
        \tag{Since $S = \bigcup_{h=0}^{H-1}S_h$ and the MDP is layered}
        = &
        4 H \sqrt{|S|+2\log (4 |S||A|T^2/\delta)} \left(  \sqrt{2TH \log(8/\delta)} +\;
        \sum_{s \in S} \sum_{a \in A}
        \sum_{i=0}^{N_T(s,a)} \frac{1}{\sqrt{\max\{1, i\}}} 
        \right)
        \\
        \tag{$\sum_{i=1}^n \frac{1}{\sqrt{n}} \leq 2 \sqrt{n}$.}
        \leq &
        4 H \sqrt{|S|+2\log (4 |S||A|T^2/\delta)} \left(  \sqrt{2TH \log(8/\delta)} +\;
        |S||A| + 2
        \sum_{s \in S} \sum_{a \in A}
        \sqrt{N_T(s,a)} 
        \right)
        \\
        \tag{By Cauchy–Schwarz inequality}
        \leq &
        4 H \sqrt{|S|+2\log (4 |S||A|T^2/\delta)} \left(  \sqrt{2TH \log(8/\delta)} + |S||A| + 2\sqrt{|S||A| H T}\; \right)
        \\
        \tag{ For $T \geq |S||A|$}
        \leq &
        4 H \sqrt{|S|+2\log (4 |S||A|T^2/\delta)} \left(  \sqrt{2TH \log(8/\delta)} + 3\sqrt{|S||A| H T}\; \right)
        \\
        \tag{The MDP is layered hence $H \leq |S|$}
        \leq &
        4 H \sqrt{|S|+2\log (4 |S||A|T^2/\delta)} \left(  \sqrt{2T|S| \log(8/\delta)} + 3\sqrt{|S||A| H T}\; \right)
        \\
        \tag{$\log(8/\delta) \geq 1\;\; \forall \delta \in (0,1)$}
        \leq &
        4 H \sqrt{|S|+2\log (4 |S||A|T^2/\delta)} \left(  \sqrt{2T|S| \log(8/\delta)} + 3\sqrt{|S||A| H T \log(8/\delta)}\; \right)
        \\
        \leq &
        4 H \sqrt{|S|+2\log (4 |S||A|T^2/\delta)} \cdot 5\sqrt{|S||A| H T \log(8/\delta)}
        \\
        \leq &
        20 H \sqrt{|S|+2\log (8 |S||A|T^2/\delta)} \cdot \sqrt{|S||A| H T \log(8|S||A|T^2/\delta)}
        \\
        \tag{$\sqrt{a+b} \le \sqrt{a}+\sqrt{b}$}
        \leq &
        20  (\sqrt{|S|}+\sqrt{2\log (8 |S||A|T^2/\delta)}) \cdot H^{1.5}\sqrt{|S||A| T \log(8|S||A|T^2/\delta)}
        \\
        = &
        20  (H^{1.5}|S|\sqrt{|A| T \log(8|S||A|T^2/\delta)}+ H^{1.5}\sqrt{2|S| |A| T} \log(8|S||A|T^2/\delta)) 
        \\
        \leq &
        {O}(H^{1.5}|S|\sqrt{|A|T  }\log(|S||A|T^2/\delta))
        .
    \end{align*}
    \endgroup
    
    We denote by $\I[c_t,t,h,s_h,a_h]$ an indicator which indicates whether at time-step $h$ of round $t$, when playing $\pi_t(c_t;\cdot)$, the agent visited $s_h$ and played $a_h$ where the context is $c_t$.
    Inequality $(i)$ holds with probability at least $1-\delta/4$ by Azuma's inequality.
\end{proof}

\subsubsection{Deriving the Good Event for the Rewards Approximation.}\label{Subsec:good-event-rewards}
In the following, we derive the good event for the rewards approximation.

\paragraph{Step 1: Establishing uniform convergence bound over \texorpdfstring{$\F$}{Lg}.}

\begin{lemma}[uniform convergence over all sequences of estimators]\label{lemma:UC-lemma-5-UCID}
    For any $\delta \in (0,1)$, with probability at least $1-\delta/4$ it holds that
    \begin{align*}
        &\sum_{i=1}^{t-1} \mathop{\E}
        \left[  \sum_{h=0}^{H-1} (f_t(c_i, s^i_h, a^i_h)-f_\star(c_i, s^i_h, a^i_h) )^2 | \Hist_{i-1}\right]
        \\
        & =
        \sum_{i=1}^{t-1} \sum_{h=0}^{H-1} \mathop{\E}_{c_i,s^i_h,a^i_h}
        \left[ (f_t(c_i, s^i_h, a^i_h)-f_\star(c_i, s^i_h, a^i_h) )^2 | \Hist_{i-1}\right]\\
        & \leq
        68 H\log(8|\F|t^3/\delta)
        +
        2 \sum_{i=1}^{t-1} \sum_{h=0}^{H-1} (f_t(c_i, s^i_h, a^i_h)- r^i_h )^2 - (f_\star(c_i, s^i_h, a^i_h) -r^i_h )^2.
    \end{align*}
    The above holds simultaneously for any  $t \geq 2$ and a fixed sequence of functions $f_2,f_3,\ldots \in \F$.
\end{lemma}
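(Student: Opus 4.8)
The plan is to reproduce the argument of Lemma~\ref{lemma:UC-lemma-5} essentially verbatim, the only change being a halving of the confidence budget (hence the constant $8$ in place of $4$ inside the logarithm), since in the RM-UCID analysis the failure probability must be split between the two good events $G_1$ and $G_2$. No genuinely new probabilistic content is required.

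First I would observe that the per-round uniform convergence bound of Lemma~\ref{lemma:UC-F} transfers to the present setting unchanged. That lemma concerns only the reward random variables $Y_{f,c_i,s^i_h,a^i_h,r^i_h}=(f(c_i,s^i_h,a^i_h)-r^i_h)^2-(f_\star(c_i,s^i_h,a^i_h)-r^i_h)^2$ of Definition~\ref{def:Y-f-RV}, and its proof rests solely on (i) the martingale structure of Observation~\ref{obs:KD-martingale-Y}, (ii) the variance bound of Lemma~\ref{lemma:4.2-agrwl}, and (iii) the sampling fact of Observation~\ref{obs:KD-dist-of-y-c-s-a}, namely that $(c_i,s^i_h,a^i_h)\sim\D(c_i)\cdot q_h(s^i_h,a^i_h\mid\pi_i(c_i;\cdot),P_\star)$ with $\pi_i$ measurable with respect to $\Hist_{i-1}$. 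All three hold here: the reward-sampling process is identical, and although the dynamics is now unknown, each trajectory is still generated by running the $\Hist_{i-1}$-measurable policy $\pi_i$ on the true (context-independent) dynamics $P_\star$, so the conditional-expectation factorization is the same. Thus for any fixed $t\ge2$ and $\delta_t\in(0,1/e^2)$, Lemma~\ref{lemma:UC-F} gives its bound with leading logarithm $\log(|\F|/\delta_t)$.

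Next I would set $\delta_t=\delta/(8t^3)$ and apply a union bound over all $t\ge2$, exactly as in the proof of Lemma~\ref{lemma:UC-lemma-5}. The total failure probability is
\[
\sum_{t=1}^{\infty}\delta_t\log(t-1)
=\sum_{t=1}^{\infty}\frac{\delta}{8t^3}\log(t-1)
\le\sum_{t=1}^{\infty}\frac{\delta}{8t^2}
\le\frac{\delta}{4}.
\]
On the complementary event, which holds with probability at least $1-\delta/4$, Lemma~\ref{lemma:UC-F} applied with $\log(|\F|/\delta_t)=\log(8|\F|t^3/\delta)$ yields, simultaneously for all $t\ge2$ and every fixed sequence $f_2,f_3,\dots\in\F$,
\[
\sum_{i=1}^{t-1}\E\!\left[\sum_{h=0}^{H-1}(f_t-f_\star)^2\,\middle|\,\Hist_{i-1}\right]
\le 68H\log(8|\F|t^3/\delta)+2\sum_{i=1}^{t-1}\sum_{h=0}^{H-1}Y_{f_t,c_i,s^i_h,a^i_h,r^i_h}.
\]
The equality between the two displayed forms of the left-hand side is precisely Observation~\ref{obs:KD-dist-of-y-c-s-a}, and unfolding $Y_{f_t,\cdot}=(f_t-r^i_h)^2-(f_\star-r^i_h)^2$ recovers the claimed right-hand side.

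There is no real obstacle here: the entire probabilistic machinery is inherited from the known-dynamics analysis. The only substantive point to verify is the bookkeeping claim that unknown (but context-independent) dynamics does not disturb the reward-side martingale argument, which it does not, exactly because $\pi_i$ remains $\Hist_{i-1}$-measurable and the environment still evolves under $P_\star$. The constant $8$ rather than $4$ is the sole numerical consequence of reserving $\delta/4$ instead of $\delta/2$ for this event.
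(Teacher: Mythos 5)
Your proposal is correct and follows exactly the paper's own route: the paper proves this lemma by noting it is identical to the known-dynamics Lemma~\ref{lemma:UC-lemma-5}, except that one takes $\delta_t=\delta/(8t^3)$ in the union bound over Lemma~\ref{lemma:UC-F}, giving failure probability at most $\sum_t \delta_t\log(t-1)\le\delta/4$ and the constant $8$ inside the logarithm. Your added verification that the reward-side martingale argument is undisturbed by the unknown (context-independent) dynamics — because $\pi_i$ remains $\Hist_{i-1}$-measurable and trajectories still evolve under $P_\star$ — is precisely the bookkeeping the paper leaves implicit.
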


\begin{proof}
    For a fixed $\delta \in (0,1)$, take $\delta_t = \delta/8t^3$ and apply union bound to~\cref{lemma:UC-F} with all $t \geq 2$. 
    The proof is identical to~\cref{lemma:UC-lemma-5}
\end{proof}

\paragraph{Step 2: Constructing confidence bound over policies with respect to the rewards approximation.}

\begin{lemma}[confidence of policies w.r.t rewards approximation]\label{lemma:CB-policy-UCID}
    Consider~\cref{alg:RM-UCID}
    that at each initialization round $t \leq |A|$, plays the policy that always choose action $a_t$, and at each round $t \geq |A|+1$ 
    selects $\pi_t$ based on the history $\Hist_{t-1}$.
    
    Then, for any $\delta \in (0,1)$ with probability at least $1-\delta/4$  for all $t \geq |A|+1$ and every policy $\pi \in \Pi_\C$, it holds that
    \begin{align*}
        &|\E_c[V^{\pi(c;\cdot)}_{\M^{(f_\star,P_\star)}(c)}(s_0)  ] - \E_c[V^{\pi(c;\cdot)}_{\M^{(\hat{f}_t,P_\star)}(c)}(s_0) ]|
        \\
        \leq &
        \sqrt{\E_c \left[ \sum_{h=0}^{H-1} \sum_{s_h \in S_h} \frac{q_h(s_h|\pi(c;\cdot) ,P_\star)}{ \sum_{i=1}^{t-1} \I[\pi(c;s_h)= \pi_i(c;s_h)]q_h(s_h | \pi_i(c;\cdot),P_\star)}\right]}
        \cdot \sqrt{ 68 H \log(8|\F|t^3/\delta)}.
    \end{align*}
\end{lemma}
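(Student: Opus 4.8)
The plan is to mirror almost verbatim the proof of \cref{lemma:CB-policy} from the known context-dependent dynamics setting, because the two value functions being compared, $V^{\pi(c;\cdot)}_{\M^{(f_\star,P_\star)}(c)}$ and $V^{\pi(c;\cdot)}_{\M^{(\hat{f}_t,P_\star)}(c)}$, are both evaluated under the \emph{same} dynamics $P_\star$; the only discrepancy is the rewards function ($f_\star$ versus the least-square minimizer $\hat{f}_t$). Hence no dynamics estimation enters this particular bound, and the fact that $P_\star$ is unknown is immaterial, since it appears identically in both values and in every occupancy measure. The sole bookkeeping change from \cref{lemma:CB-policy} is that here the layers are context-independent ($S_h$ rather than $S^c_h$), that $P^c_\star$ is replaced by $P_\star$ throughout, and that the failure probability is charged at $\delta/4$ rather than $\delta/2$ (absorbed by taking $\delta_t = \delta/8t^3$ in the union bound).

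First I would invoke the uniform-convergence bound over all sequences of estimators, \cref{lemma:UC-lemma-5-UCID}, which holds with probability at least $1-\delta/4$ simultaneously for all $t\ge 2$. Because $\hat{f}_t$ is the least-square minimizer at round $t$, the empirical data-fit term $\sum_{i=1}^{t-1}\sum_{h=0}^{H-1}\big((\hat{f}_t(c_i,s^i_h,a^i_h)-r^i_h)^2-(f_\star(c_i,s^i_h,a^i_h)-r^i_h)^2\big)$ is non-positive, so the lemma yields
\[
    \sum_{i=1}^{t-1}\sum_{h=0}^{H-1}\mathop{\E}_{c_i,s^i_h,a^i_h}\big[(\hat{f}_t(c_i,s^i_h,a^i_h)-f_\star(c_i,s^i_h,a^i_h))^2\mid \Hist_{i-1}\big]\le 68H\log(8|\F|t^3/\delta).
\]
This is the single quantitative input driving the whole bound.

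Next I would rewrite the left-hand side using the occupancy-measure representation of the value function under the common dynamics $P_\star$. Since $\pi(c;\cdot)$ is deterministic, the value difference equals $\E_c\big[\sum_{h=0}^{H-1}\sum_{s_h\in S_h} q_h(s_h\mid \pi(c;\cdot),P_\star)\,(\hat{f}_t(c,s_h,\pi(c;s_h))-f_\star(c,s_h,\pi(c;s_h)))\big]$, and I would bound its absolute value by the triangle inequality. Then comes the one genuine technical step: multiplying and dividing each term by $\sqrt{\sum_{i=1}^{t-1}\I[\pi(c;s_h)=\pi_i(c;s_h)]\,q_h(s_h\mid \pi_i(c;\cdot),P_\star)}$ and applying Cauchy--Schwarz over the index set $(c,h,s_h)$ against the measure $\D(c)$. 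This factors the bound into $\sqrt{\phi_t(\pi)}$ (after using $q_h^2\le q_h$, exactly as in \cref{lemma:CB-policy}) times a weighted-squared-error factor.

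The main obstacle, as in the known-dynamics proof, is matching that weighted-squared-error factor to the conditional-expectation quantity from \cref{lemma:UC-lemma-5-UCID}: I would reprove the analogue of \cref{claim:expectation-eq-given-hist}, using that each $\pi_i$ is $\Hist_{i-1}$-measurable and deterministic so the expectation over $(c_i,s^i_h,a^i_h)$ factorizes through $q_h(s_h\mid \pi_i(c;\cdot),P_\star)$, and that removing the indicators $\I[\pi(c;s_h)=\pi_i(c;s_h)]$ only enlarges the non-negative sum. Substituting the bound $68H\log(8|\F|t^3/\delta)$ for the second factor then gives the claim. Finally, the minimum reachability assumption guarantees $\sum_{i=1}^{t-1}\I[\pi(c;s_h)=\pi_i(c;s_h)]\,q_h(s_h\mid \pi_i(c;\cdot),P_\star)\ge p_{min}>0$ (using the initialization rounds), so every denominator, and hence $\phi_t(\pi)$, is well-defined.
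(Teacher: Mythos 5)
Your proposal is correct and takes essentially the same approach as the paper: the paper's own proof of this lemma is literally a pointer back to the derivation of \cref{lemma:CB-policy}, specialized to the case $P^c_\star = P_\star$ for every context and using \cref{lemma:UC-lemma-5-UCID} (with its $\delta/4$ failure probability from the $\delta_t = \delta/8t^3$ union bound) in place of \cref{lemma:UC-lemma-5}. The steps you spell out --- occupancy-measure representation of the value difference, the Cauchy--Schwarz factorization, the analogue of \cref{claim:expectation-eq-given-hist}, the least-square-minimizer sign argument, and minimum reachability for well-definedness of the denominators --- are exactly that derivation.
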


\begin{proof}
   The Lemma is obtained using the same derivation presented in~\cref{lemma:CB-policy}, where for every context $c \in \C$ we have $P^c_\star = P_\star$, using~\cref{lemma:UC-lemma-5-UCID} (instead of~\cref{lemma:UC-lemma-5}).
\end{proof}

\subsubsection{Analysing the error due to the rewards Approximation.}\label{subsubsec:rewards-error-UCID}

In the following steps, we analyse the error caused by the rewards approximation, under the good event.

\paragraph{Step 3: Relax the confidence bound to be additive.}

\begin{lemma}[the ``square trick'' relaxation for unknown and context-free dynamics]\label{lemma:sq-trick-UCID}
    Under the good event $G_1$ (which stated in~\cref{lemma:CB-policy-UCID}),
    for $\beta_t = \sqrt{ \frac{17 t \log(8|\F|t^3/\delta) }{ |S||A|}}$ we have for any $t >|A|$ and context-dependent policy $\pi \in \Pi_\C$ that
    \begin{align*}
        \left|\E_c\left[V^{\pi(c;\cdot)}_{\M(c)}(s_0) \right] - \E_c\left[V^{\pi(c;\cdot)}_{\M^{(\hat{f}_t,P_\star)}(c)}(s_0) \right]\right| 
        \leq &
        \E_c \left[ \sum_{h=0}^{H-1} \sum_{s_h \in S_h} \frac{\beta_t \cdot q_h(s_h|\pi(c;\cdot) ,P_\star)}{\sum_{i=1}^{t-1} \I[\pi(c;s_h)= \pi_i(c;s_h)]q_h(s_h | \pi_i(c;\cdot),P_\star)}\right]
        \\
        & +  \beta_t \frac{H|S||A|}{t}
        .
    \end{align*}
    
    The above implies that

    \begin{align*}
        \left|\E_c\left[V^{\pi(c;\cdot)}_{\M(c)}(s_0) \right] - \E_c\left[V^{\pi(c;\cdot)}_{\M^{(\hat{f}_t,P_\star)}(c)}(s_0) \right]\right| 
        \leq &
        \E_c \left[ \sum_{h=0}^{H-1} \sum_{s_h \in S_h} \frac{\beta_t \cdot q_h(s_h|\pi(c;\cdot) ,P_\star)}{p_{min}\sum_{i=1}^{t-1} \I[\pi(c;s_h)= \pi_i(c;s_h)]}\right]
        \\
        & +  \beta_t \frac{H|S||A|}{t}
        .
    \end{align*}
\end{lemma}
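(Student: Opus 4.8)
The plan is to transcribe the proof of \cref{lemma:sq-trick} almost verbatim, since the context-independent setting changes only two cosmetic things: the dynamics satisfies $P^c_\star = P_\star$ for every context $c$, and the confidence level carries $\log(8|\F|t^3/\delta)$ rather than $\log(4|\F|t^3/\delta)$ (the tighter union bound of \cref{lemma:UC-lemma-5-UCID} reserves probability mass for the dynamics event $G_2$). Under the good event $G_1$, \cref{lemma:CB-policy-UCID} supplies the multiplicative confidence bound
\[
    \left|\E_c[V^{\pi(c;\cdot)}_{\M(c)}(s_0)] - \E_c[V^{\pi(c;\cdot)}_{\M^{(\hat{f}_t,P_\star)}(c)}(s_0)]\right|
    \leq
    \sqrt{\Phi_t(\pi)}\cdot\sqrt{68H\log(8|\F|t^3/\delta)},
\]
where $\Phi_t(\pi)$ denotes the contextual potential
\[
    \Phi_t(\pi) := \E_c\!\left[\sum_{h=0}^{H-1}\sum_{s_h\in S_h}\frac{q_h(s_h|\pi(c;\cdot),P_\star)}{\sum_{i=1}^{t-1}\I[\pi(c;s_h)=\pi_i(c;s_h)]\,q_h(s_h|\pi_i(c;\cdot),P_\star)}\right].
\]
The objective is to convert the $\sqrt{\Phi_t(\pi)}$ dependence into a term linear in $\Phi_t(\pi)$, at the price of an additive $\beta_t H|S||A|/t$ slack.

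The engine is the ``square trick''. With $\beta_t = \sqrt{17t\log(8|\F|t^3/\delta)/(|S||A|)}$, I would factor $\beta_t$ in and out of the bound to obtain the identity
\[
    \sqrt{\Phi_t(\pi)}\cdot\sqrt{68H\log(8|\F|t^3/\delta)}
    = 2\sqrt{\beta_t\,\Phi_t(\pi)}\cdot\sqrt{\beta_t\,\tfrac{H|S||A|}{t}}.
\]
This is verified by substituting the definition of $\beta_t$ and checking that the powers of $t$, $|S|$, $|A|$ and of $\log(8|\F|t^3/\delta)$ balance, exactly as in the chain of equalities proving \cref{lemma:sq-trick}; here $\beta_t\sqrt{|S||A|/t}=\sqrt{17\log(8|\F|t^3/\delta)}$, and the factor $68=4\cdot 17$ is precisely what makes the leading constant come out to $2\sqrt{17}=\sqrt{68}$. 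Applying $2ab\leq a^2+b^2$ with $a=\sqrt{\beta_t\Phi_t(\pi)}$ and $b=\sqrt{\beta_t H|S||A|/t}$ then yields
\[
    2\sqrt{\beta_t\Phi_t(\pi)}\cdot\sqrt{\beta_t\tfrac{H|S||A|}{t}}
    \leq \beta_t\,\Phi_t(\pi) + \beta_t\,\tfrac{H|S||A|}{t},
\]
which is exactly the first displayed inequality of the lemma.

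For the second (``implies'') inequality I would invoke minimum reachability: for every context $c$, layer $h$, and state $s_h\in S_h$, each selected policy $\pi_i$ satisfies $q_h(s_h|\pi_i(c;\cdot),P_\star)\geq p_{min}$, so every denominator obeys $\sum_{i=1}^{t-1}\I[\pi(c;s_h)=\pi_i(c;s_h)]\,q_h(s_h|\pi_i(c;\cdot),P_\star)\geq p_{min}\sum_{i=1}^{t-1}\I[\pi(c;s_h)=\pi_i(c;s_h)]$. Replacing the denominator by the smaller quantity $p_{min}\sum_i\I[\cdot]$ only enlarges each summand, giving the stated weaker bound.

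I expect no genuine obstacle: the argument is a direct reduction to \cref{lemma:sq-trick} and the content is purely algebraic. The only points requiring care are bookkeeping the constants — confirming the square-trick identity still balances with the $\log(8|\F|t^3/\delta)$ confidence level — and noting that the denominators are well-defined and nonzero, since for $t>|A|$ the initialization rounds guarantee $\sum_{i=1}^{t-1}\I[\pi(c;s_h)=\pi_i(c;s_h)]\geq 1$ for every deterministic $\pi\in\Pi_\C$.
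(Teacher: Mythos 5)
Your proposal is correct and follows essentially the same route as the paper: the paper's own proof of this lemma literally reuses the derivation of \cref{lemma:sq-trick} with the constant $\log(8|\F|t^3/\delta)$ (your square-trick identity and the $2ab\le a^2+b^2$ step are exactly that derivation), and then obtains the second inequality by lower-bounding each denominator via minimum reachability, $\sum_{i=1}^{t-1}\I[\pi(c;s_h)=\pi_i(c;s_h)]\,q_h(s_h|\pi_i(c;\cdot),P_\star)\geq p_{min}\sum_{i=1}^{t-1}\I[\pi(c;s_h)=\pi_i(c;s_h)]$, just as you do. Your added bookkeeping — checking that $68=4\cdot 17$ balances the constants and that the initialization rounds keep the denominators nonzero for $t>|A|$ — matches remarks the paper makes elsewhere and introduces no gap.
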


\begin{proof}
    Using the same derivation presented in~\cref{lemma:sq-trick},
    where $\beta_t = \sqrt{ \frac{17 t \log(8|\F|t^3/\delta) }{ |S||A|}}$ we obtain the first part of the lemma.
    For the second part, by minimum reachability it holds that
    
    \begin{equation*}\label{ineq:sq-trick+p-min-ucfd}
        \begin{split}
            \left|\E_c[V^{\pi(c;\cdot)}_{\M(c)}(s_0)] - \E_c[V^{\pi(c;\cdot)}_{\M^{(\hat{f}_t,P_\star)}(c)}(s_0)]\right|
            \leq &
            \E_c \left[ \sum_{h=0}^{H-1} \sum_{s_h \in S_h} \frac{\beta_t \cdot q_h(s_h|\pi(c;\cdot),P_\star)}{\sum_{i=1}^{t-1} \I[\pi(c;s_h)= \pi_i(c;s_h)]q_h(s_h | \pi_i(c;\cdot),P_\star)}\right]
            +  \beta_t \frac{H |S||A|}{t} 
            \\
            \leq &
            \E_c \left[ \sum_{h=0}^{H-1} \sum_{s_h \in S_h} \frac{\beta_t \cdot q_h(s_h|\pi(c;\cdot),P_\star)}{{p_{min}}\sum_{i=1}^{t-1} \I[\pi(c;s_h)= \pi_i(c;s_h)]}\right]
            +  \beta_t \frac{H|S||A|}{t}.
        \end{split}
    \end{equation*}

\end{proof}

\paragraph{Step 4: Bounding the sum of contextual potential functions.}

We consider the following two contextual potential functions in round $t$, for $T \geq t>|A|$ and a context-depended policy $\pi \in \Pi_\C$.
\\
We define $\psi_t(\pi)$ as follows.
\begin{definition}
    We denote by $\psi_t(\pi)$ the contextual potential of a context-dependent policy $\pi \in \Pi_\C$ at round $|A|<t \leq T$ using $p_{min}$ as follows.   
    $$
        \psi_t(\pi) : = \E_c \left[ \sum_{h=0}^{H-1} \sum_{s_h \in S_h} \frac{q_h(s_h|\pi(c;\cdot) ,\widehat{P}^c_t)}{ p_{min} \cdot \sum_{i=1}^{t-1} \I[\pi(c;s_h)= \pi_i(c;s_h)]}\right],
    $$
    where $\{\pi_t \in \Pi_{\C}\}_{t=1}^T$ and $\{\widehat{P}_t \}_{t=1}^T$
    are the sequences of context dependent policies and optimistic dynamics selected by~\cref{alg:RM-UCID}.
\end{definition}

In addition, we abuse the notation of $\phi_t(\pi)$ from~\cref{Appendix:KD}.
\begin{definition}
    We denote by $\phi_t(\pi)$ the contextual potential of a context-dependent policy $\pi \in \Pi_\C$ at round $|A|<t \leq T$ which is defined as follows.   
    $$
        \phi_t(\pi) : = \E_c \left[ \sum_{h=0}^{H-1} \sum_{s_h \in S_h} \frac{q_h(s_h|\pi(c;\cdot) ,P_\star)}{ p_{min}\cdot \sum_{i=1}^{t-1} \I[\pi(c;s_h)= \pi_i(c;s_h)]}\right],
    $$
    where $\{\pi_t \in \Pi_{\C}\}_{t=1}^T$
    is the sequence of context dependent policies selected by~\cref{alg:RM-UCID}.
\end{definition}

Recall that in our setting, $P^c_\star = P_\star$ and $S^c_h=S_h$ for every context $c \in \C$ and layer $h \in [H]$.
In the following lemma, we bound the sum of contextual potential functions, over the rounds $t = |A|+1, \ldots, T$.
\begin{lemma}[contextual potential lemma for unknown context-independent dynamics]\label{lemma:Contextual-Potential-UCID}
    Let ${\{\pi_t \in \Pi_{\C}\}_{t=1}^T}$
    be the sequence of context-dependent policies selected by~\cref{alg:RM-UCID}.
    Then, for all $T > |A|$ the followings hold.
    \begin{enumerate}
        \item
        For the sequence $\{\pi_t \in \Pi_{\C}\}_{t=1}^T$ it holds that
        \begin{align*}
            \sum_{t=|A| +1}^T
            \phi_t(\pi_t)
            = & 
            \sum_{t=|A| +1}^T   
            \E_c \left[ 
            \sum_{h =0}^{H-1}
            \sum_{s_h \in S_h}
            \frac{q_h(s_h | \pi_t(c;\cdot),P_\star)}{p_{min} \cdot \sum_{i=1}^{t-1} \I[\pi_t(c;s_h) = \pi_i(c;s_h)]} 
            \right]
            \\
            \leq & 
            \frac{|S||A|}{p_{min}}(1+\log(T/|A|))
            .
        \end{align*}
        \item
        For the sequences $\{\pi_t \in \Pi_{\C}\}_{t=1}^T$ and $\{\widehat{P}_t\}_{t=1}^T$ it holds that
        \begin{align*}
            \sum_{t=|A| +1}^T
            \psi_t(\pi_t)
            = & 
            \sum_{t=|A| +1}^T   
            \E_c \left[ 
            \sum_{h =0}^{H-1}
            \sum_{s_h \in S_h}
            \frac{q_h(s_h | \pi_t(c;\cdot),\widehat{P}^c_t)}{p_{min} \cdot \sum_{i=1}^{t-1} \I[\pi_t(c;s_h) = \pi_i(c;s_h)]} 
            \right]
            \\
            \leq & 
            \frac{|S||A|}{p_{min}}(1+\log(T/|A|))
        \end{align*}
    \end{enumerate}
\end{lemma}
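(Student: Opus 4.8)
The plan is to observe that the two parts of this lemma are proved by the very same argument as \cref{lemma:Contextual-Potential}, because the only role the dynamics plays in $\phi_t(\pi_t)$ and $\psi_t(\pi_t)$ is through the numerator, which is an occupancy measure and hence bounded by $1$. First I would fix a context $c \in \C$ together with a realization of the history, so that the policies $\pi_1,\ldots,\pi_T$ and (for Part~2) the optimistic dynamics $\widehat{P}^c_t$ are all determined. For Part~1 the numerator is $q_h(s_h\mid \pi_t(c;\cdot),P_\star) \le 1$, and for Part~2 it is $q_h(s_h\mid\pi_t(c;\cdot),\widehat{P}^c_t)\le 1$; the latter is a genuine probability because, by the remark following the LP in~\cref{eq:LP-q}, $\widehat{P}^c_t$ induces the same layered partition $S_0,\ldots,S_H$ as $P_\star$, so $q_h(\cdot\mid\pi_t(c;\cdot),\widehat{P}^c_t)$ is a well-defined occupancy measure on $S_h$. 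Bounding the numerator by $1$ collapses both parts into the single deterministic quantity $\frac{1}{p_{min}}\sum_{h}\sum_{s_h\in S_h}\sum_{t=|A|+1}^T \frac{1}{\sum_{i=1}^{t-1}\I[\pi_t(c;s_h)=\pi_i(c;s_h)]}$.

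Next, for a fixed pair $(h,s_h)$ I would regroup the rounds $t>|A|$ according to the action $a_h=\pi_t(c;s_h)$ selected at state $s_h$. Because the first $|A|$ rounds play each action exactly once ($\pi_i(c;s)=a_i$), whenever a round $t$ selects $\pi_t(c;s_h)=a_h$ the denominator $\sum_{i=1}^{t-1}\I[\pi_i(c;s_h)=a_h]$ counts at least the earlier occurrences of $a_h$ including the initialization one; summing the reciprocals over all rounds that play $a_h$ is therefore dominated by the harmonic sum $\sum_{i=1}^{N}\tfrac1i$, where $N=\sum_{t=1}^T\I[\pi_t(c;s_h)=a_h]$. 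Applying $\sum_{i=1}^N \tfrac1i \le 1+\log N$ yields $\sum_{a_h\in A}(1+\log N_{a_h})$ for each $(h,s_h)$.

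Finally I would sum over $h$ and $s_h$, extract the $|S||A|/p_{min}$ contribution coming from the $+1$ terms, and apply Jensen's inequality to the concave $\log$, using $\sum_{a_h\in A} N_{a_h} = \sum_{a_h\in A}\sum_{t=1}^T\I[\pi_t(c;s_h)=a_h]=T$ (each round picks exactly one action per state), to bound $\frac1{|A|}\sum_{a_h}\log N_{a_h} \le \log(T/|A|)$. This gives the deterministic bound $\frac{|S||A|}{p_{min}}(1+\log(T/|A|))$ for every fixed $c$ and every history realization; since the right-hand side is a constant, taking $\E_c$ preserves it, establishing both parts.

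The argument carries essentially no new difficulty beyond \cref{lemma:Contextual-Potential}, and the one point that I expect to be the only genuine obstacle is the well-definedness and the $\le 1$ bound of the occupancy measure $q_h(s_h\mid\pi_t(c;\cdot),\widehat{P}^c_t)$ for the optimistic, history- and context-dependent dynamics appearing in Part~2. Once this is settled via the layer-preserving property of the LP solution, the dynamics drops out of the estimate entirely and the combinatorial counting is identical to the known-dynamics case.
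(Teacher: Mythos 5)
Your proposal is correct and follows essentially the same argument as the paper's proof: fix a context, bound the occupancy-measure numerator by $1$ (which removes the dynamics entirely, whether $P_\star$ or $\widehat{P}^c_t$), regroup rounds by the action chosen at each state, bound the resulting harmonic sums by $1+\log(\cdot)$, apply Jensen's inequality to the concave $\log$ using that each round selects exactly one action per state, and finally take expectation over $c$. Your added remark that $\widehat{P}^c_t$ induces the same layered partition (via the LP constraints in~\cref{eq:LP-q}), so that $q_h(\cdot\mid\pi_t(c;\cdot),\widehat{P}^c_t)$ is a genuine occupancy measure bounded by $1$, is exactly the point the paper relies on implicitly for Part~2.
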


The proof is similar to the proof of~\cref{lemma:Contextual-Potential}.
\begin{proof}
    For any fixed context $c \in \C$, the following holds.
    \begingroup
    \allowdisplaybreaks
    \begin{align*}
        &\sum_{t=|A| +1}^T \sum_{h =0}^{H-1} \sum_{s_h \in S_h}  
        \frac{q_h(s_h| \pi_t(c;\cdot),P_\star)}{p_{min}\sum_{i=1}^{t-1} \I[\pi_t(c;s_h) = \pi_i(c;s_h)]}
        \\
        \leq &
        \frac{1}{p_{min}} \sum_{h =0}^{H-1} \sum_{s_h \in S_h} 
        \sum_{t=|A| +1}^T
        \frac{1}{\sum_{i=1}^{t-1} \I[\pi_t(c;s_h) = \pi_i(c;s_h)]}
        \\
        \leq &
        \frac{1}{p_{min}} \sum_{h =0}^{H-1} \sum_{s_h \in S_h} 
        \sum_{a_h \in A}
        \sum_{i=1}^{\sum_{t=1}^T  \I[\pi_t(c;s_h) = a_h] } \frac{1}{i}
        \\
        \tag{Since $\sum_{i=1}^n \frac{1}{i} \leq 1+\log(n)$}
        \leq &
        \frac{1}{p_{min}}
        \sum_{h =0}^{H-1} \sum_{s_h \in S_h} \sum_{a_h \in A} \left(1 + \log\left(\sum_{t=1}^T \I[\pi_t(c;s_h) = a_h]\right) \right)
        \\
        = &
         \frac{|S||A|}{p_{min}} + \sum_{h =0}^{H-1} \sum_{s_h \in S_h} |A| \cdot \frac{1}{|A|}\sum_{a_h \in A}  \log\left(\sum_{t=1}^T \I[\pi_t(c;s_h) = a_h]\right)
        \\
         \tag{By Jansen's inequality, since $\log$ is concave}
         \leq &
        \frac{|S||A|}{p_{min}} + \sum_{h =0}^{H-1} \sum_{s_h \in S_h} |A| \cdot   \log\left(\frac{1}{|A|}\sum_{a_h \in A}\sum_{t=1}^T \I[\pi_t(c;s_h) = a_h]\right)
        \\
        = &
        \frac{|S||A|}{p_{min}} + \sum_{h =0}^{H-1} \sum_{s_h \in S_h} |A| \cdot   \log\left(\frac{1}{|A|}\sum_{t=1}^T \sum_{a_h \in A} \I[\pi_t(c;s_h) = a_h]\right)
        \\
        \tag{For all $t$, $\sum_{a_h \in A} \I[\pi_t(c;s_h) = a_h]=1$ since $\pi_t$ is a deterministic policy}
         = &
        \frac{|S||A|}{p_{min}} + \sum_{h =0}^{H-1} \sum_{s_h \in S_h} |A| \cdot   \log\left(\frac{T}{|A|}\right)
        \\
        = &
        \frac{|S||A|}{p_{min}}( 1 +  \log(T/|A|)).
    \end{align*}
    \endgroup
    By taking an expectation over $c$ on both sides of the inequality, we obtain the first part of the  lemma.

    For the second part of the lemma we similarly have
    \begingroup
    \allowdisplaybreaks
    \begin{align*}
        &\sum_{t=|A| +1}^T \sum_{h =0}^{H-1} \sum_{s_h \in S_h}  
        \frac{q_h(s_h| \pi_t(c;\cdot),\widehat{P}^c_t)}{p_{min}\sum_{i=1}^{t-1} \I[\pi_t(c;s_h) = \pi_i(c;s_h)]}
        \\
        \leq &
        \frac{1}{p_{min}} \sum_{h =0}^{H-1} \sum_{s_h \in S_h} 
        \sum_{t=|A| +1}^T
        \frac{1}{\sum_{i=1}^{t-1} \I[\pi_t(c;s_h) = \pi_i(c;s_h)]}
        \\
        \leq &
        \frac{1}{p_{min}} \sum_{h =0}^{H-1} \sum_{s_h \in S_h} 
        \sum_{a_h \in A}
        \sum_{i=1}^{\sum_{t=1}^T  \I[\pi_t(c;s_h) = a_h] } \frac{1}{i}
        \\
        \leq &
        \frac{1}{p_{min}}
        \sum_{h =0}^{H-1} \sum_{s_h \in S_h} \sum_{a_h \in A} \left(1 + \log \left(\sum_{t=1}^T \I[\pi_t(c;s_h) = a_h] \right) \right)
        \\
        \leq &
        \frac{|S||A|}{p_{min}}( 1 +  \log(T/|A|)),
    \end{align*}
    \endgroup
    By taking expectation over $c$ on both sides of the above inequality, we obtain the second part of the lemma.
\end{proof}

\begin{lemma}[cumulative value error due to rewards approximation]\label{lemma:UCID-rewards-value-error-over-t}
    Under the good event $G_1$, for all $T>|A|$ it holds that
    \begin{align*}
        \sum_{t= |A| + 1}^T \E_c[V^{\pi_t(c;\cdot)}_{\M^{(\hat{f}_t ,P_\star)}(c)}] -
        \E_c[V^{\pi_t(c;\cdot)}_{\M(c)}]
        \leq &
        2 H\sqrt{17 \log(8|\F|T^3/\delta) |S||A| T} 
        \\
        &+ (1+ \log(T/|A|)) \frac{\sqrt{ 17 \log(8|\F|T^3/\delta) T |S|  |A|}}{p_{min}}
        .
    \end{align*}

\end{lemma}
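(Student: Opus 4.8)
The plan is to assemble this bound directly from the additive confidence bound of Step 3 together with the contextual potential bound of Step 4, both of which hold under $G_1$. First I would recall that the good event $G_1$ (\cref{lemma:CB-policy-UCID}) and hence its additive relaxation (\cref{lemma:sq-trick-UCID}) hold \emph{simultaneously} for all $t > |A|$ and all $\pi \in \Pi_\C$. Since each selected policy $\pi_t \in \Pi_\C$ is determined by $\Hist_{t-1}$, I can instantiate \cref{lemma:sq-trick-UCID} with $\pi = \pi_t$ at round $t$, using the $p_{min}$-form of the bound, to get
\[
    \E_c[V^{\pi_t(c;\cdot)}_{\M^{(\hat{f}_t,P_\star)}(c)}(s_0)] - \E_c[V^{\pi_t(c;\cdot)}_{\M(c)}(s_0)]
    \le
    \beta_t\,\phi_t(\pi_t) + \beta_t\,\frac{H|S||A|}{t},
\]
where $\phi_t$ is the $p_{min}$-based contextual potential of this section. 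Summing over $t=|A|+1,\dots,T$ and splitting into two sums is the whole structure of the argument.

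For the first sum, I would use that $\beta_t = \sqrt{17 t\log(8|\F|t^3/\delta)/(|S||A|)}$ is nondecreasing in $t$, so $\beta_t \le \beta_T$, and then apply the contextual potential bound of \cref{lemma:Contextual-Potential-UCID} (part 1), $\sum_{t=|A|+1}^T \phi_t(\pi_t) \le \frac{|S||A|}{p_{min}}(1+\log(T/|A|))$. This yields
\[
    \sum_{t=|A|+1}^T \beta_t\,\phi_t(\pi_t)
    \le
    \beta_T \cdot \frac{|S||A|}{p_{min}}(1+\log(T/|A|))
    =
    \frac{1+\log(T/|A|)}{p_{min}}\sqrt{17\log(8|\F|T^3/\delta)\,T|S||A|},
\]
matching the second term of the claimed bound. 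For the second sum I would pull out $\beta_t/t = \sqrt{17\log(8|\F|t^3/\delta)/(|S||A|)}\cdot t^{-1/2}$, bound $\log(8|\F|t^3/\delta)\le\log(8|\F|T^3/\delta)$, and use $\sum_{t=1}^T t^{-1/2}\le 2\sqrt{T}$ to obtain
\[
    \sum_{t=|A|+1}^T \beta_t\,\frac{H|S||A|}{t}
    \le
    H|S||A|\cdot 2\sqrt{T}\,\sqrt{\frac{17\log(8|\F|T^3/\delta)}{|S||A|}}
    =
    2H\sqrt{17\log(8|\F|T^3/\delta)\,|S||A|T},
\]
which is the first term of the claim. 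Adding the two sums completes the derivation.

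There is essentially no deep obstacle here: this is a bookkeeping step that stitches together \cref{lemma:sq-trick-UCID} and \cref{lemma:Contextual-Potential-UCID}. The only point requiring a little care is that the signed value difference (not its absolute value) is what appears in the statement, so I should note that \cref{lemma:sq-trick-UCID} bounds the absolute difference and hence dominates the signed quantity as an upper bound. I would also make explicit that applying the uniform bound to the data-dependent sequence $\{\pi_t\}$ is legitimate precisely because $G_1$ is a single high-probability event holding uniformly over all $\pi\in\Pi_\C$ and all $t>|A|$, so no additional union bound over the random policies is needed.
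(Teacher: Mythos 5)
Your proposal is correct and follows essentially the same route as the paper's proof: instantiate \cref{lemma:sq-trick-UCID} (the $p_{min}$-form) at $\pi=\pi_t$, sum over $t$, bound the potential sum via $\beta_t\le\beta_T$ together with \cref{lemma:Contextual-Potential-UCID} (part 1), and bound the additive terms via $\log(8|\F|t^3/\delta)\le\log(8|\F|T^3/\delta)$ and $\sum_{t=1}^T t^{-1/2}\le 2\sqrt{T}$. Your two clarifying remarks (the signed difference being dominated by the absolute bound, and the legitimacy of plugging in the data-dependent $\pi_t$ because $G_1$ is uniform over $\Pi_\C$ and $t$) are implicit in the paper and correct.
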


\begin{proof}
    Assume the good event $G_1$ holds and consider the following derivation.
    \begingroup
    \allowdisplaybreaks
    \begin{align*}
        &
        \sum_{t= |A| + 1}^T \E_c[V^{\pi_t(c;\cdot)}_{\M^{(\hat{f}_t ,P_\star)}(c)}] -\E_c[V^{\pi_t(c;\cdot)}_{\M(c)}]
        \\
        \tag{By~\cref{lemma:sq-trick-UCID}}
        \leq &
        \sum_{t= |A| + 1}^T \frac{\beta_t  H  |S| |A|}{t}+               
        \E_c\left[ \sum_{h = 0}^{H-1} \sum_{s_h \in S_h}
        \frac{\beta_t \cdot q_h(s_h, \pi_t(c;s_h)|\pi_t(c;\cdot), P_\star)}{  {p_{min}}\sum_{i=1}^{t-1}\I[ \pi_t(c;s_h) = \pi_i(c;s_h)] } \right]
        \\
        = &
        \tag{By $\beta_t$ choice}
        H \sqrt{17 \log(8|\F|T^3/\delta) |S||A|} \sum_{t = |A| + 1}^T \frac{1}{\sqrt{t}} 
        \\
        \tag{Since $\beta_T \geq \beta_t$ for all $t \leq T$.}
        & +
        \beta_T  
        \sum_{t = |A| + 1}^T
        \E_c\left[
        \sum_{h = 0}^{H-1} \sum_{s_h \in S_h}
        \frac{ q_h(s_h, \pi_t(c;s_h)|  \pi_t(c;\cdot),P_\star)}{{p_{min}}  \sum_{i=1}^{t-1}\I[ \pi_t(c;s_h) = \pi_i(c;s_h)] } \right]
        \\
        \leq &
        \tag{Since $\sum_{t=1}^T \frac{1}{\sqrt{t}} \leq 2\sqrt{T}$.}
        2H\sqrt{17 \log(8|\F|T^3/\delta) |S||A| T} 
        \\
        \tag{By~\cref{lemma:Contextual-Potential-UCID}, item $1$.}
        & + 
        \beta_T \frac{|S||A|}{p_{min}}( 1+  \log(T/|A|))
        \\
        \tag{By $\beta_T$ choice}
        = &
        2H\sqrt{17 \log(8|\F|T^3/\delta) |S||A| T}
        + \sqrt{ \frac{17 \log(8|\F|T^3/\delta) T}{ |S||A|}} \frac{|S||A| }{p_{min}}(1+  \log(T/|A|))
        \\
        = &
        2H\sqrt{17 \log(8|\F|T^3/\delta) |S||A| T} 
        + (1+ \log(T/|A|)) \frac{\sqrt{ 
        17 \log(8|\F|T^3/\delta) T  |S|  |A|}}{p_{min}},
    \end{align*}
    \endgroup
    as stated.
\end{proof}

\subsubsection{Regret Bound.}
In the following we derive our regret bound. The following identities yield a useful valve decomposition. 
Note that for every context $c \in \C$ and $t \in [T]$, the following holds for every deterministic context-dependent policy $\pi \in \Pi_\C$.
\begin{equation}\label{eq:V-t-def-ucid}
    \begin{split}
        &V^{\pi(c;\cdot)}_{\M(\widehat{r}_t, P_\star)(c)}(s_0)
        \\
        = &
        \sum_{h = 0}^{H-1} \sum_{s_h \in S_h} \sum_{a_h \in A}
        q_h(s_h, a_h | \pi(c;\cdot),P_\star) \cdot \hat{r}_t^c(s_h, a_h)
        \\
        = &
        \sum_{h = 0}^{H-1} \sum_{s_h \in S_h} \sum_{a_h \in A}
        q_h(s_h, a_h | \pi(c;\cdot),P_\star) \cdot 
        \left( \hat{f}_t (c,s_h,a_h) + \frac{\beta_t}{p_{min}\sum_{i=1}^{t-1}\I[a_h = \pi_i(c;s_h)]}\right)
        \\
        = &
        \sum_{h = 0}^{H-1} \sum_{s_h \in S_h} \sum_{a_h \in A}
        q_h(s_h| \pi(c;\cdot),P_\star) \I[a_h = \pi(c;s_h)] \cdot 
        \left( \hat{f}_t (c,s_h,a_h) + \frac{\beta_t}{p_{min}\sum_{i=1}^{t-1}\I[a_h = \pi_i(c;s_h)] }\right)
        \\
        = &
        \sum_{h = 0}^{H-1} \sum_{s_h \in S_h} 
        q_h(s_h| \pi(c;\cdot),P_\star) \cdot 
        \left( \hat{f}_t (c, s_h,  \pi(c;s_h)) + \frac{\beta_t}{  p_{min}\sum_{i=1}^{t-1}\I[ \pi(c;s_h) = \pi_i(c;s_h)]}\right)
        \\
        = &
        V^{\pi(c;\cdot)}_{\M^{(\hat{f}_t,P_\star)}(c)}(s_0) +  \sum_{h=0}^{H-1} \sum_{s_h \in S_h} \frac{ q_h(s_h|\pi(c;\cdot),P_\star) \cdot \beta_t}{  p_{min}\sum_{i=1}^{t-1} \I[\pi(c;s_h)= \pi_i(c;s_h)]}
        ,        
    \end{split}
\end{equation}
The above clearly holds when taking the expectation on both sides of the equation.


 We start our regret derivation by proving an optimism lemma.
\begin{lemma}[optimism]\label{lemma:optimism}
    Let $\pi^\star \in \Pi_\C$ be a deterministic optimal policy of the true CMDP.
    Under the good events $G_1$ and $G_2$, for all $t > |A|$
    it holds that 
    \[
        \E_c \left[ V^{\pi^\star(c;\cdot)}_{\M(c)}(s_0)\right]
        -
        \E_c \left[ V^{\pi_t(c;\cdot)}_{\Mhat_t(c)}(s_0) \right]
        \leq
        \beta_t \frac{H |S||A|}{t}.
    \]
\end{lemma}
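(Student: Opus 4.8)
The plan is to chain together the three ingredients already available for this setting: the optimality of the pair $(\pi_t,\widehat{P}_t)$ with respect to the optimistic reward (\cref{lemma:optimality-of-pi-t-and-p-t}, which uses $G_2$), the explicit value decomposition of the optimistic-reward value into the plain least-squares value plus the exploration bonus (\cref{eq:V-t-def-ucid}), and the additive rewards confidence bound (\cref{lemma:sq-trick-UCID}, which uses $G_1$). The entire argument hinges on a single cancellation, so I would arrange matters so that the bonus term $\beta_t \phi_t(\pi^\star)$ appears once with a plus sign and once with a minus sign.

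First I would invoke \cref{lemma:optimality-of-pi-t-and-p-t} for each context $c$ and then take the expectation over $c$, obtaining $\E_c[V^{\pi_t(c;\cdot)}_{\Mhat_t(c)}(s_0)] \ge \E_c[V^{\pi^\star(c;\cdot)}_{\M^{(\widehat{r}_t,P_\star)}(c)}(s_0)]$; this replaces the selected-policy value on the optimistic model by the optimal policy evaluated against the \emph{true} dynamics $P_\star$ but still with the optimistic reward $\widehat{r}_t$. Next I would apply the value decomposition in \cref{eq:V-t-def-ucid} to $\pi = \pi^\star$ and take expectations over $c$, yielding the identity $\E_c[V^{\pi^\star(c;\cdot)}_{\M^{(\widehat{r}_t,P_\star)}(c)}(s_0)] = \E_c[V^{\pi^\star(c;\cdot)}_{\M^{(\hat{f}_t,P_\star)}(c)}(s_0)] + \beta_t \phi_t(\pi^\star)$, which exposes the bonus as an explicit $+\,\beta_t\phi_t(\pi^\star)$ contribution.

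Then I would use one direction of the additive confidence bound of \cref{lemma:sq-trick-UCID} applied to $\pi = \pi^\star$, namely $\E_c[V^{\pi^\star(c;\cdot)}_{\M(c)}(s_0)] - \E_c[V^{\pi^\star(c;\cdot)}_{\M^{(\hat{f}_t,P_\star)}(c)}(s_0)] \le \beta_t \phi_t(\pi^\star) + \beta_t H|S||A|/t$, rearranged to lower-bound $\E_c[V^{\pi^\star(c;\cdot)}_{\M^{(\hat{f}_t,P_\star)}(c)}(s_0)]$ by $\E_c[V^{\pi^\star(c;\cdot)}_{\M(c)}(s_0)] - \beta_t\phi_t(\pi^\star) - \beta_t H|S||A|/t$. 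Substituting this lower bound into the identity from the previous step, the $+\,\beta_t\phi_t(\pi^\star)$ and $-\,\beta_t\phi_t(\pi^\star)$ terms cancel exactly, leaving $\E_c[V^{\pi^\star(c;\cdot)}_{\M^{(\widehat{r}_t,P_\star)}(c)}(s_0)] \ge \E_c[V^{\pi^\star(c;\cdot)}_{\M(c)}(s_0)] - \beta_t H|S||A|/t$. Combining this with the first inequality and rearranging gives the claimed optimism bound.

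The main obstacle is conceptual rather than computational: recognizing that the exploration bonus built into $\widehat{r}_t$ is calibrated so that its contribution to the optimistic value, $\beta_t\phi_t(\pi^\star)$, matches term-for-term the upper bound on the rewards approximation error furnished by the square-trick lemma, so that the two potentials annihilate. Everything else is bookkeeping; the only places requiring care are to keep the good-event conditioning consistent ($G_2$ is needed for the optimality step and $G_1$ for the confidence bound), to ensure the decomposition identity and the confidence bound use the same UCID-specific definition of $\phi_t$ (the one carrying $p_{min}$ in its denominator), and to note that since this is a one-sided optimism statement only the relevant direction of the absolute-value confidence bound is invoked.
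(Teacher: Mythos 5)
Your proposal is correct and uses exactly the same three ingredients as the paper's proof — the optimality of $(\pi_t,\widehat{P}_t)$ from~\cref{lemma:optimality-of-pi-t-and-p-t} (under $G_2$), the value decomposition~\cref{eq:V-t-def-ucid}, and the additive rewards bound of~\cref{lemma:sq-trick-UCID} (under $G_1$) applied to $\pi^\star$, with the bonus term $\beta_t\phi_t(\pi^\star)$ absorbed/cancelled. The only difference is the order in which the inequalities are chained (the paper starts from the square-trick bound and ends with the optimality lemma, you start from the optimality lemma), which is immaterial.
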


\begin{proof}
    Assume the good events hold and consider the following derivation.
    \begingroup
    \allowdisplaybreaks
    \begin{align*}
        &\E_c \left[ V^{\pi^\star(c;\cdot)}_{\M(c)}(s_0) \right]
        \\
        \tag{\cref{lemma:sq-trick-UCID}}
        \leq &
        \E_c \left[ V^{\pi^\star(c;\cdot)}_{\M^{(\hat{f}_t,P_\star)}(c)}(s_0) \right] 
        + 
        \E_c\left[             
        \sum_{h = 0}^{H-1} \sum_{s_h \in S_h} 
        \frac{\beta_t \cdot q_h(s_h, \pi^\star(c;s_h)| \pi^\star(c;\cdot),P_\star)}{ {p_{min}} \sum_{i=1}^{t-1}\I[ \pi^\star(c;s_h) = \pi_i(c;s_h)] } \right]
        +
        \beta_t \frac{H |S| |A|}{t}
        \\
        \tag{By the value function decomposition, see~\cref{eq:V-t-def-ucid}}
        = &
        \E_c \left[ V^{\pi^\star(c;\cdot)}_{\M^{(\widehat{r}_t,P_\star)}(c)}(s_0) \right] 
        +
        \beta_t \frac{H |S| |A|}{t}
        \\
        \tag{By~\cref{lemma:optimality-of-pi-t-and-p-t}}
        \leq &
        \E_c \left[ V^{\pi_t(c;\cdot)}_{\Mhat_t(c)}(c)(s_0) \right]
        +
        \beta_t \frac{H |S| |A|}{t} 
        ,
    \end{align*}
    \endgroup
    as stated.
\end{proof}

Recall the regret, which defined as
$
   \Regrv_T(\text{ALG}) 
    :=
    \sum_{t=1}^T V^{\pi^\star(c_t;\cdot)}_{\M(c_t)}
    -
    V^{\pi_t(c_t;\cdot)}_{\M(c_t)}
$. The following theorem states the main result of the unknown context-independent dynamics setting.
\begin{theorem}[regret bound]\label{thm:regret-bound-ucid}
   For every $T \geq |S||A|$, finite functions class $\F$ and $\delta \in (0,1)$ with probability at least $1-\delta$ it holds that
    \begin{align*}
        \Regrv_T(RM-UCID)
        \leq
        \tilde{O} \left(
         H^{1.5}|S| \sqrt{T |A|}\log\frac{1}{\delta}+ 
         (H+ 1/p_{min})\cdot \sqrt{|S||A| T \log\frac{|\F|}{\delta}}+
         |A|H
         \right)
         ,
    \end{align*}
    for the choice of $\beta_t = \sqrt{ \frac{17 t \log(8|\F|t^3/\delta) }{ |S||A|}}$ for all $t \in [T]$. 
\end{theorem}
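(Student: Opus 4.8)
The plan is to assemble the regret bound from the three key lemmas already established — the optimism lemma (\cref{lemma:optimism}) and the two cumulative value-error lemmas (\cref{lemma:UCID-dynamixs-value-error-over-t,lemma:UCID-rewards-value-error-over-t}) — via a telescoping decomposition through the intermediate MDPs, and then to pass from the realized regret to its conditional expectation with Azuma's inequality. First I would dispose of the $|A|$ initialization rounds: each per-episode value lies in $[0,H]$, so they contribute at most $|A|H$ to $\Regrv_T$. For the remaining rounds I would introduce the martingale $Y_t = \big(V^{\pi^\star(c_t;\cdot)}_{\M(c_t)}(s_0) - V^{\pi_t(c_t;\cdot)}_{\M(c_t)}(s_0)\big) - \E_c\big[V^{\pi^\star(c;\cdot)}_{\M(c)}(s_0) - V^{\pi_t(c;\cdot)}_{\M(c)}(s_0)\,\big|\,\Hist_{t-1}\big]$; since $\pi_t$ is $\Hist_{t-1}$-measurable while $c_t$ and $\pi^\star$ are independent of $\Hist_{t-1}$, the conditional expectation collapses to the unconditional $\E_c$, and $|Y_t|\le 2H$. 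Azuma's inequality then yields, with probability at least $1-\delta/4$, $\Regrv_T \le \sum_{t=|A|+1}^T \E_c\big[V^{\pi^\star(c;\cdot)}_{\M(c)}(s_0) - V^{\pi_t(c;\cdot)}_{\M(c)}(s_0)\big] + 2H\sqrt{2T\log(4/\delta)} + |A|H$.

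The heart of the argument is a three-way decomposition of the per-round expected regret, inserting the approximate model $\Mhat_t(c) = \M^{(\hat f_t,\widehat P_t)}(c)$ and the true-dynamics/approximate-reward model $\M^{(\hat f_t,P_\star)}(c)$ (I abbreviate the policy and $s_0$ arguments below):
\[
    \E_c[V^{\pi^\star}_{\M} - V^{\pi_t}_{\M}] = \E_c[V^{\pi^\star}_{\M} - V^{\pi_t}_{\Mhat_t}] + \E_c[V^{\pi_t}_{\Mhat_t} - V^{\pi_t}_{\M^{(\hat f_t,P_\star)}}] + \E_c[V^{\pi_t}_{\M^{(\hat f_t,P_\star)}} - V^{\pi_t}_{\M}].
\]
Summing over $t>|A|$, the first group is controlled term-by-term by \cref{lemma:optimism}, giving $\sum_{t>|A|}\beta_t H|S||A|/t$, which with the chosen $\beta_t$ telescopes (using $\sum_t t^{-1/2}\le 2\sqrt T$) to $O\big(H\sqrt{|S||A|T\log(|\F|T/\delta)}\big)$; the second group is exactly the quantity bounded in \cref{lemma:UCID-dynamixs-value-error-over-t}, contributing $O\big(H^{1.5}|S|\sqrt{|A|T}\log(|S||A|T^2/\delta)\big)$; and the third group is exactly the quantity bounded in \cref{lemma:UCID-rewards-value-error-over-t}, contributing the $2H\sqrt{17\log(8|\F|T^3/\delta)|S||A|T}$ term plus the $p_{min}^{-1}$ term.

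Finally I would collect these three contributions together with the Azuma fluctuation and the $|A|H$ initialization cost, absorbing all polylogarithmic factors in $|S|,|A|,T$ into $\widetilde O(\cdot)$ to obtain the stated bound. The probability accounting goes through a union bound over four events, each of failure probability at most $\delta/4$: the rewards good event $G_1$, the dynamics good event $G_2$, the internal Azuma step inside \cref{lemma:UCID-dynamixs-value-error-over-t}, and the final regret Azuma step, so the whole statement holds with probability at least $1-\delta$. I do not expect a genuine obstacle here, since every hard estimate has already been isolated into the preceding lemmas; the only care the proof demands is (i) checking that the two middle telescope terms match the lemma statements verbatim — in particular that $\Mhat_t = \M^{(\hat f_t,\widehat P_t)}$ and that the reward-error lemma is stated for the same $\M^{(\hat f_t,P_\star)}$ — and (ii) keeping the $\delta/4$ budget consistent so the union bound closes cleanly.
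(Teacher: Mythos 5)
Your overall architecture --- the Azuma reduction to conditional expectations, discarding the $|A|$ initialization rounds for a cost of $|A|H$, a telescope through $\Mhat_t$ and $\M^{(\hat f_t,P_\star)}$, the optimism lemma plus the two cumulative-error lemmas, and a four-way $\delta/4$ union bound --- is the same as the paper's. However, there is a genuine gap in your middle term, and it sits exactly at the checkpoint you flagged in your caveat (i): the identity $\Mhat_t(c) = \M^{(\hat f_t,\widehat P_t)}(c)$ that you propose to verify is \emph{false}. By construction $\Mhat_t(c) = \M^{(\widehat r_t,\widehat P_t)}(c)$, where $\widehat r_t^{c}(s,a) = \hat f_t(c,s,a) + \frac{\beta_t}{p_{min}\sum_{i=1}^{t-1}\I[a=\pi_i(c;s)]}$ carries the exploration bonus. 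Consequently, by the value decomposition in~\cref{eq:V-t-def-ucid-reg}, your second group satisfies
\begin{align*}
\E_c\Bigl[V^{\pi_t(c;\cdot)}_{\Mhat_t(c)}(s_0) - V^{\pi_t(c;\cdot)}_{\M^{(\hat f_t,P_\star)}(c)}(s_0)\Bigr]
&= \E_c\Bigl[V^{\pi_t(c;\cdot)}_{\M^{(\hat f_t,\widehat P_t)}(c)}(s_0) - V^{\pi_t(c;\cdot)}_{\M^{(\hat f_t,P_\star)}(c)}(s_0)\Bigr]
\\
&\quad + \beta_t\,\E_c\left[\sum_{h=0}^{H-1}\sum_{s_h\in S_h}\frac{q_h(s_h\mid\pi_t(c;\cdot),\widehat P^c_t)}{p_{min}\sum_{i=1}^{t-1}\I[\pi_t(c;s_h)=\pi_i(c;s_h)]}\right],
\end{align*}
so it is \emph{not} ``exactly'' the quantity of~\cref{lemma:UCID-dynamixs-value-error-over-t}; it exceeds it by the expected cumulative bonus $\beta_t\psi_t(\pi_t)$.

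This extra term cannot be absorbed silently: summed over $t$ it is of order $p_{min}^{-1}\sqrt{T|S||A|\log(|\F|/\delta)}$ (up to logarithmic factors), i.e.\ the same order as one of the leading terms of the theorem. The paper controls it by invoking part 2 of the contextual-potential lemma (\cref{lemma:Contextual-Potential-UCID}) --- the version stated for occupancy measures under $\widehat P^c_t$, which exists precisely for this purpose --- giving $\sum_{t=|A|+1}^T\beta_t\psi_t(\pi_t)\le\beta_T\frac{|S||A|}{p_{min}}(1+\log(T/|A|))$. Your outline never cites this lemma, so as written the middle term of your telescope cannot be closed by the three lemmas you list. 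The repair is local (insert the decomposition above and the potential bound), and the final regret bound is unchanged since the new contribution matches terms already present; but without this step the proof does not go through.
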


\begin{proof}
    Similarly to shown in~\cref{eq:V-t-def-ucid}, for every context $c \in \C$ and $t \in [T]$ we have for any $\pi \in \Pi_\C$ that
    \begin{equation}\label{eq:V-t-def-ucid-reg}
    \begin{split}
        V^{\pi(c;\cdot)}_{\M(\widehat{r}_t, \widehat{P}_t)(c)}(s_0)
        = V^{\pi(c;\cdot)}_{\M^{(\hat{f}_t,\widehat{P}_t)}(c)}(s_0) +  \sum_{h=0}^{H-1} \sum_{s_h \in S^c_h} \frac{ \beta_t \cdot q_h(s_h|\pi(c;\cdot),\widehat{P}^c_t) }{ p_{min} \sum_{i=1}^{t-1} \I[\pi(c;s_h)= \pi_i(c;s_h)]}
        ,      
    \end{split}
\end{equation}
    where $\widehat{r}_t$ and $\widehat{P}_t$ are the optimistic context-dependent rewards and dynamics in round $t$, respectively.
    \\
    We bound the regret under the good events $G_1$ and $G_2$. Since $G_1 \cap G_2$ holds with probability at least $1 - \delta/2$, the theorem will follow by union bound.
    \\
    Consider the Martingale difference sequence $\{Y_t\}_{t=1}^T$ where the filtration is $\{\Hist_{t}\}_{t=1}^T$ for 
    $$
        Y_t 
        := 
        V^{\pi^{\star}(c_t;\cdot)}_{\M(c_t)}(s_0)  - V^{\pi_t(c_t;\cdot)}_{\M(c_t)}(s_0)
        -
        \E_{c_t}\left[ V^{\pi^{\star}(c_t;\cdot)}_{\M(c_t)}(s_0)  - V^{\pi_t(c_t;\cdot)}_{\M(c_t)}(s_0) \Big|\Hist_{t-1}\right] 
        .
    $$
    Clearly, for all $t$, $|Y_t|\leq 2H$, $Y_t$ is determined completely by the history $\Hist_t$ and $\E_{c_t}\left[ Y_t | \Hist_{t-1}\right] = 0$.
    Thus, by Azuma's inequality, with probability at least $1-\delta/4$ it holds that
    \begingroup
    \allowdisplaybreaks
    \begin{align*}
        &\Regrv_T( RM-UCID)
        \\
        = &
        \sum_{t=1}^T V^{\pi^\star(c_t;\cdot)}_{\M(c_t)}(s_0)  - V^{\pi_t(c_t;\cdot)}_{\M(c_t)}(s_0)
        \\
        \tag{By Azuma's inequality, holds w.p. at least $1-\delta/4$.}
        \leq &
        \sum_{t=1}^T \left( \E_{c_t}[V^{\pi^\star(c_t;\cdot)}_{\M(c_t)}(s_0) |\Hist_{t-1}] - \E_{c_t}[V^{\pi_t(c_t;\cdot)}_{\M(c_t)}(s_0) |\Hist_{t-1}] \right)
        +2H\sqrt{2T\log(8/\delta)}
        \\
        \tag{Since $\pi_t$ is determined by $\Hist_{t-1}$ and $c_t$, $\pi^\star$ is independent of the history, we can omit the conditioning in $\Hist_{t-1}$}
        = &
        \sum_{t=1}^T \left( \E_c[V^{\pi^\star(c;\cdot)}_{\M(c)}(s_0) ] - \E_c[V^{\pi_t(c;\cdot)}_{\M(c)}(s_0) ] \right)
        +2H\sqrt{2T\log(8/\delta)}
        \\
        \leq &
        \sum_{t=|A|+1}^T \left( \E_c[V^{\pi^\star(c;\cdot)}_{\M(c)}(s_0) ] - \E_c[V^{\pi_t(c;\cdot)}_{\M(c)}(s_0) ] \right)
        +2H\sqrt{2T\log(8/\delta)} + |A|H
        \\
        \leq &
        \sum_{t=|A|+1}^T \left( \E_c[V^{\pi_t(c;\cdot)}_{\Mhat_t(c)}(s_0) ]  - \E_c[V^{\pi_t(c;\cdot)}_{\M(c)}(s_0) ] \right)
        \\
        \tag{By the optimism lemma,~\cref{lemma:optimism}}
        & +
        \sum_{t = |A|+1}^T \beta_t \frac{H |S| |A|}{t} 
        +2H\sqrt{2T\log(8/\delta)} + |A| H
        \\
        \tag{By adding and subtracting $\E_c[V^{\pi_t(c;\cdot)}_{\M^{(\hat{f}_t ,P_\star)}(c)}(s_0) ]$}
        = &
         \sum_{t=|A|+1}^T \left( \E_c[V^{\pi_t(c;\cdot)}_{\M^{(\widehat{r}_t, \widehat{P}_t)}(c)}(s_0) ]  
         - \E_c[V^{\pi_t(c;\cdot)}_{\M^{(\hat{f}_t ,P_\star)}(c)}(s_0) ] 
         + \E_c[V^{\pi_t(c;\cdot)}_{\M^{(\hat{f}_t ,P_\star)}(c)}(s_0) ] 
         -\E_c[V^{\pi_t(c;\cdot)}_{\M(c)}(s_0) ] \right)
        \\
        & +
        \sum_{t = |A|+1}^T \beta_t \frac{H |S| |A|}{t} 
        +2H\sqrt{2T\log(8/\delta)} + |A| H
        \\
        \tag{By~\cref{eq:V-t-def-ucid-reg}}
        = &
        \sum_{t = |A|+1}^T  \left( \E_c[V^{\pi_t(c;\cdot)}_{\M^{(\hat{f}_t, \widehat{P}_t)}(c)}(s_0) ]  
         - \E_c[V^{\pi_t(c;\cdot)}_{\M^{(\hat{f}_t ,P_\star)}(c)}(s_0) ] 
         + \E_c[V^{\pi_t(c;\cdot)}_{\M^{(\hat{f}_t ,P_\star)}(c)}(s_0) ] 
         -\E_c[V^{\pi_t(c;\cdot)}_{\M(c)}(s_0) ] \right)
        \\
        & +
        \sum_{t = |A|+1}^T \beta_t \cdot \E_c\left[             
         \sum_{h = 0}^{H-1} \sum_{s_h \in S^c_h} 
         \frac{ q_h(s_h| \pi_t(c;\cdot),\widehat{P}^c_t)}{ {p_{min}} \sum_{i=1}^{t-1}\I[ \pi_t(c;s_h) = \pi_i(c;s_h)] } \right]
        \\
        & +
        \sum_{t = |A|+1}^T \beta_t \frac{H |S| |A|}{t} 
        +2H\sqrt{2T\log(8/\delta)} + |A| H
        \\
        = &
        \sum_{t= |A| + 1}^T \E_c[V^{\pi_t(c;\cdot)}_{\M^{(\hat{f}_t, \widehat{P}_t)}(c)}(s_0) ]  
         - \E_c[V^{\pi_t(c;\cdot)}_{\M^{(\hat{f}_t ,P_\star)}(c)}(s_0) ]
         \\
         & +
         \sum_{t= |A| + 1}^T \E_c[V^{\pi_t(c;\cdot)}_{\M^{(\hat{f}_t ,P_\star)}(c)}(s_0) ] 
         -\E_c[V^{\pi_t(c;\cdot)}_{\M(c)}(s_0) ] 
        \\
        \tag{$\beta_T \geq \beta_t, \;\; \forall t \in \{1,2,\ldots,T\}$.}
        & +
        \beta_T \cdot  \E_c\left[             
        \sum_{t = |A|+1}^T \sum_{h = 0}^{H-1} \sum_{s_h \in S^c_h} 
         \frac{ q_h(s_h| \pi_t(c;\cdot),\widehat{P}^c_t)}{ {p_{min}} \sum_{i=1}^{t-1}\I[ \pi_t(c;s_h) = \pi_i(c;s_h)] } \right]
        \\
        & +
        \sum_{t = |A|+1}^T \beta_t \frac{H |S| |A|}{t} 
        +
        2H\sqrt{2T\log(8/\delta)} + |A| H
        \\
        \tag{Holds w.p. at least $1-\delta/4$ for $T \geq |S||A|$, By~\cref{lemma:UCID-dynamixs-value-error-over-t}}
        \leq &
        {O}(H^{1.5}|S|\sqrt{|A|T  }\log(|S||A|T^2/\delta))
        \\
        \tag{By~\cref{lemma:UCID-rewards-value-error-over-t}}
        &+
        2 H\sqrt{17 \log(8|\F|T^3/\delta) |S||A| T} 
        + (1+ \log(T/|A|)) \frac{\sqrt{ 17 \log(8|\F|T^3/\delta) T |S|  |A|}} {p_{min}}
        \\
        \tag{By~\cref{lemma:Contextual-Potential-UCID} part $2$ and $\beta_T$ choice}
        & +
        (1 + \log(T/|A|) )
        \frac{\sqrt{ 17 \log(8|\F|T^3/\delta) T |S| |A|}}{p_{min}}
        \\
        \tag{By $\beta_t$ choice and $\sum_{t=1}^T \frac{1}{\sqrt{t}} \leq 2\sqrt{T}$}
        & +
        2H \sqrt{17 \log(8|\F|T^3/\delta)|S||A| T}
        \\
        & +
        2H\sqrt{2T\log(8/\delta)} + |A| H
        \\
        = &
         {O}(H^{1.5}|S|\sqrt{|A|T  }\log(|S||A|T^2/\delta))
        \\
         & +
         4H\sqrt{17 \log(8|\F|T^3/\delta) |S||A| T} 
        + 
        2(1 + \log(T/|A|)) \frac{\sqrt{ 
        17 \log(8|\F|T^3/\delta) T |S|  |A|}}
        {p_{min}}
        \\
        & + 
        2H\sqrt{2T\log(8/\delta)}  +  |A|  H 
         \\
         = &
         \widetilde{O} \left(
         H^{1.5}|S| \sqrt{T |A|}\log\frac{1}{\delta}+ 
         (H+ 1/p_{min})\cdot \sqrt{|S||A| T \log\frac{|\F|}{\delta}}+
         |A|H
         \right)
         .
    \end{align*}
    \endgroup

    Overall, by union bound, the above holds with probability at least $1-\delta$.
\end{proof}

\begin{corollary}[regret bound in terms of $\G$]\label{corl:regret-bound-ucid-g}
   For every $T \geq |S||A|$, finite functions class $\G$ for which $\F = \G^S$ and $\delta \in (0,1)$ the following holds with probability at least $1-\delta$ for the same choice of parameters $\{\beta_t\}_{t \in [T]}$.
    \begin{align*}
        \Regrv_T(RM-UCID)
        \leq
        \widetilde{O} \left(
         H^{1.5}|S| \sqrt{T |A|}\log\frac{1}{\delta}+ 
         (H+ 1/p_{min})\cdot |S|\sqrt{|A| T \log\frac{|\G|}{\delta}}+
         |A|H
         \right).
    \end{align*}
\end{corollary}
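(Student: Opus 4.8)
The plan is to reduce the claim directly to Theorem~\ref{thm:regret-bound-ucid} by re-expressing its $|\F|$-dependence through the product structure $\F = \G^S$. Recall that each $f \in \F$ is obtained by choosing, independently for every state $s \in S$, a base function $g_s \in \G$, so that $|\F| = |\G|^{|S|}$ and hence $\log(|\F|) = |S|\log(|\G|)$. Because the corollary adopts \emph{the same} tuning sequence $\{\beta_t\}_{t\in[T]}$ as the theorem, the high-probability regret bound of Theorem~\ref{thm:regret-bound-ucid} holds verbatim under the same $1-\delta$ event; only the bookkeeping of the logarithmic factor changes, so no new probabilistic or algorithmic argument is needed.

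First I would isolate the single summand that depends on the function class, namely $(H + 1/p_{min})\sqrt{|S||A|T\log(|\F|/\delta)}$; the remaining two summands $H^{1.5}|S|\sqrt{T|A|}\log(1/\delta)$ and $|A|H$ involve no function class and are carried over unchanged. To convert the isolated term I would invoke the elementary inequality $\log(|\F|/\delta) = |S|\log(|\G|) + \log(1/\delta) \le |S|\log(|\G|/\delta)$, which is valid because $|S| \ge 1$ and $\log(1/\delta) \ge 0$ for $\delta \in (0,1)$. Substituting this bound gives
\[
    (H + 1/p_{min})\sqrt{|S||A|T\log(|\F|/\delta)}
    \le
    (H + 1/p_{min})\,|S|\sqrt{|A|T\log(|\G|/\delta)},
\]
which is exactly the middle term claimed in the corollary.

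Collecting the three surviving summands then yields the stated bound, holding with probability at least $1-\delta$ inherited directly from Theorem~\ref{thm:regret-bound-ucid}. Given the elementary nature of the reduction I do not foresee a real obstacle; the only point worth verifying is the direction of the inequality $\log(|\F|/\delta) \le |S|\log(|\G|/\delta)$, i.e.\ that pulling the extra factor $|S|$ out of the logarithm only loosens the bound rather than breaking it, which holds precisely because $|S| \ge 1$ absorbs the residual $\log(1/\delta)$ term. No constants beyond those already hidden in the $\widetilde{O}$ are affected.
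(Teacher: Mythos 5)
Your proposal is correct and matches the paper's own proof, which is exactly the one-line substitution $\log(|\F|) = |S|\log(|\G|)$ into Theorem~\ref{thm:regret-bound-ucid}; your additional step bounding $|S|\log(|\G|)+\log(1/\delta) \le |S|\log(|\G|/\delta)$ just makes explicit the absorption of the $\log(1/\delta)$ term that the paper leaves implicit in the $\widetilde{O}$. Nothing further is needed.
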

\begin{proof}
    Plug $\log(|\F|) = |S| \log(|\G|)$ in the bound of~\cref{thm:regret-bound-ucid}.
\end{proof}

\subsubsection{Computational Efficiency.}

The following lemma establishes the computational efficiency of Algorithm RM-UCID.

\begin{lemma}
    Under the good events,
    for all $t > |A|$ and any context $c \in \C$ there exist a feasible solution to the LP in~\cref{eq:LP-q} and 
    Algorithm \texttt{FOA}~(\cref{alg:foa}) run in $poly(|S|,|A|,H)$ time. 
\end{lemma}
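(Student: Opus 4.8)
The plan is to treat the two claims separately, establishing feasibility first (which simultaneously guarantees that the LP solver, and hence \texttt{FOA}, terminates with a valid output) and then bounding the running time of each step of~\cref{alg:foa}. For \textbf{feasibility}, I would exhibit an explicit feasible point of the LP in~\cref{eq:LP-q}. Under the good event $G_2$ the true dynamics satisfies $\|P_\star(\cdot|s,a) - \Bar{P}_k(\cdot|s,a)\|_1 \leq \xi_k(s,a)$ for every $(s,a)$. Fix any stochastic Markovian policy $\pi$ (e.g.\ the uniform policy) and let $q := q^{P_\star,\pi}$ be its extended occupancy measure on the true dynamics. By the Rosenberg--Mansour characterization recalled inside the proof of~\cref{lemma:LP-equivalence}, $q$ automatically satisfies the layer-normalization and flow-conservation constraints. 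For the dynamics deviation constraint I would use the factorization $q(s,a,s') = P_\star(s'|s,a)\cdot q(s,a)$, where $q(s,a):=\sum_{s''\in S_{h+1}} q(s,a,s'')$, to write
\[
\sum_{s'\in S_{h+1}}\bigl|q(s,a,s') - \Bar{P}_k(s'|s,a)\,q(s,a)\bigr|
= q(s,a)\,\|P_\star(\cdot|s,a)-\Bar{P}_k(\cdot|s,a)\|_1
\le \xi_k(s,a)\,q(s,a),
\]
which is exactly the required inequality, so $q$ is feasible. Equivalently, and more cleanly, I can invoke~\cref{lemma:LP-equivalence} directly: under $G_2$ the pair $(\pi,P_\star)$ is a feasible point of the constrained maximization~\cref{eq:olp}, hence its induced occupancy measure is feasible for~\cref{eq:LP-q}.

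For the \textbf{running time}, I would walk through the steps of~\cref{alg:foa}. Computing the confidence radii $\xi_k(s,a)$ costs $O(|S||A|)$ arithmetic operations. The LP in~\cref{eq:LP-q} has $O(|S|^2|A|)$ decision variables and $O(|S||A|+H)$ constraints; the only non-linear-looking constraint is the $\ell_1$ deviation bound, which I would linearize in the standard way by introducing auxiliary variables $z(s,a,s')$ bounding $\pm\bigl(q(s,a,s')-\Bar{P}_k(s'|s,a)\,q(s,a)\bigr)$ and replacing the absolute value by $\sum_{s'} z(s,a,s')\le \xi_k(s,a)\,q(s,a)$. This keeps the program of size polynomial in $|S|,|A|,H$, so a polynomial-time LP solver (e.g.\ interior point) returns an optimal $\widehat{q}^c_k$ in $poly(|S|,|A|,H)$ time; feasibility, established above, guarantees such a solution exists. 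Recovering the induced $\pi^{\widehat{q}^c_k}$ and $\widehat{P}^c_k$ is $O(|S|^2|A|)$, and the final call to the planning primitive to extract a deterministic optimal policy $\pi_k(c;\cdot)$ runs in $O(|S|^2|A|H)$ by the \texttt{Planning} routine. Summing the steps gives the claimed $poly(|S|,|A|,H)$ bound.

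The \textbf{main obstacle} is not the arithmetic but making the feasibility argument airtight: the deviation constraint in~\cref{eq:LP-q} is phrased on the occupancy variables $q$ rather than on a transition kernel, so I must verify that the occupancy measure of $(P_\star,\pi)$ genuinely satisfies it after the factorization---this is precisely where $G_2$ enters and where the reduction to $\|P_\star(\cdot|s,a)-\Bar{P}_k(\cdot|s,a)\|_1\le\xi_k(s,a)$ must be invoked exactly. Everything downstream (termination of the solver, polynomial size after linearization, and the planning cost) is routine given this feasibility and the equivalence already proved in~\cref{lemma:LP-equivalence}.
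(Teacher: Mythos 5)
Your proposal is correct and follows essentially the same route as the paper: under $G_2$ the occupancy measure induced by the true dynamics $P_\star$ together with some policy (the paper uses $\pi^\star(c;\cdot)$, you use an arbitrary/uniform one, which is equally valid) is a feasible point of the LP in~\cref{eq:LP-q}, and the running-time claim follows because the LP has polynomially many variables and constraints and the final planning step is polynomial. Your extra details---the explicit factorization $q(s,a,s') = P_\star(s'|s,a)\,q(s,a)$ verifying the deviation constraint, and the standard linearization of the $\ell_1$ terms---only make explicit what the paper leaves implicit, and do not change the argument.
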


\begin{proof}
    Assume the good event $G_2$ holds.
    Then, for all $t>|A|$ and a context $c \in \C$, 
    the true dynamics $P_\star$ and the optimal policy $\pi^\star(c;\cdot)$ induces an extended occupancy measure $q^c_\star$ which is a feasible solution for LP in~\cref{eq:LP-q}. 

    For the running time, 
    LP in~\cref{eq:LP-q} is a linear program with $poly(|S|,|A|,H)$ constrains and variables, hence it can be solved in $poly(|S|,|A|,H)$ time. 
    In addition, finding an optimal deterministic policy given an MDP can also be done in $poly(|S|,|A|,H)$ time.
    Overall,the run time complexity of~\cref{alg:foa} is in $poly(|S|,|A|,H)$.
\end{proof}

\begin{corollary}
    The overall running time of Algorithm RM-UCID (\cref{alg:RM-UCID}) is in $poly(|S|,|A|,H,T)$, assuming an efficient least square regression oracle.
\end{corollary}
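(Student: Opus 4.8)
The plan is to bound the running time by decomposing the algorithm into its nested loops, charging each elementary operation, and invoking the preceding lemma (which shows Algorithm \texttt{FOA} runs in $poly(|S|,|A|,H)$ time and that the LP in~\cref{eq:LP-q} is feasible under the good events) as a black box for the inner-most planning step. First I would account for the outer loop, which runs for $t=|A|+1,\ldots,T$, i.e.\ $O(T)$ iterations. In each outer iteration the only call to the least square regression oracle is the computation of $\hat f_t$ in line~7; by the efficiency assumption on the LSR oracle this costs $poly(\cdot)$ per call, so the oracle is invoked only $O(T)$ times overall (matching the $O(T)$ claim in the discussion). Observing a fresh context and updating the counters after the trajectory cost $O(H)$ each.

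Next I would bound the inner loop, which for a fixed $t$ runs for $k=|A|+1,\ldots,t$, i.e.\ at most $T$ iterations; summed over the outer loop this is $\sum_{t=|A|+1}^T (t-|A|) = O(T^2)$ inner iterations in total. Within one inner iteration I would charge three costs: (i) forming the empirical dynamics $\bar P_k(s'|s,a)$ for all $(s,a,s')$, which is $O(|S|^2|A|)$; (ii) forming $\hat r_k^{c_t}(s,a)$ for all $(s,a)$, where each entry requires the sum $\sum_{i=1}^{k-1}\I[a=\pi_i(c_t;s)]$ over at most $k\le T$ previously computed maps, giving $O(|S||A|\,T)$ (this could be made incremental, but the crude bound suffices); and (iii) one call to \texttt{FOA}, which by the preceding lemma is $poly(|S|,|A|,H)$. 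The maps $\pi_i(c_t;\cdot)$ needed in step (ii) are exactly those produced by the earlier inner iterations of the same outer round for the current context $c_t$ (the initialization maps $\pi_i(c_t;s)=a_i$ for $i\le|A|$ being trivially available), so they are in memory and no recomputation is required.

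The main point to get right — and the reason the bound is $poly(|S|,|A|,H,T)$ rather than scaling with $|\C|$ — is Observation~2: at round $t$ the algorithm never materializes a full context-dependent policy $\pi_k\in\Pi_\C$, but only its restriction $\pi_k(c_t;\cdot):S\to A$ to the single observed context $c_t$. Consequently every operation charged above is independent of $|\C|$, which is the crucial feature to emphasize. Combining the pieces, the per-inner-iteration cost is $poly(|S|,|A|,H,T)$; multiplying by the $O(T^2)$ inner iterations and adding the $O(T)$ LSR-oracle and bookkeeping costs yields a total running time of $poly(|S|,|A|,H,T)$.

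The only subtlety I anticipate is that feasibility of the LP (needed for \texttt{FOA} to return a solution) is guaranteed only under the good event $G_2$ of the analysis. I would handle this by noting that $G_2$ holds with high probability, and that a standard LP solver detects infeasibility in polynomial time regardless, so the polynomial running-time guarantee is unaffected either way. This is the main obstacle in the sense that it is the one place where the purely combinatorial counting argument touches the probabilistic analysis; everything else is a routine summation over the loop structure.
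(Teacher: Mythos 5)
Your proposal is correct and follows essentially the same route as the paper, which simply observes that RM-UCID invokes \texttt{FOA} $O(T^2)$ times and cites the preceding lemma for the $poly(|S|,|A|,H)$ cost per invocation. Your additional bookkeeping (the per-entry cost of $\widehat{r}_k^{c_t}$, the $O(T)$ LSR calls, the $|\C|$-independence via the restricted policy maps, and the handling of LP feasibility outside the good event $G_2$) only fleshes out details the paper leaves implicit.
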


\begin{proof}
    \cref{alg:RM-UCID} invokes \cref{alg:foa} for $O(T^2)$ times, hence the corollary implied by the previous lemma.
\end{proof}

\section{Unknown and Context-Dependent Dynamics}\label{Appendix:UCDD}

In this section we discuss the main contribution of the paper, which is a regret minimization algorithm for the unknown, context-dependent dynamics case and its regret analysis. Bellow we elaborate the assumptions we operate under, and additional notation we use.  
\\
\noindent\textbf{Unknown and context-dependent dynamics.}{
For every context $c \in \C$ we have a potentially different dynamics $P^c_\star$ that is unknown to the learner. Recall we assume layered CMDP. 
For any context $c$, we denote by $S^c_0, S^c_1, \ldots ,S^c_H$ the disjoint layers, and  for every context $ c \in \C$ it holds that $S = \bigcup_{h \in [H]}S^c_h$. 
}

\noindent\textbf{Known minimum reachability parameter.}{
We assume that there exists $p_{min} \in (0,1]$ such that for every context $c \in \C$, layer $h \in [H]$, state $s_h \in S^c_h$ and context-dependent policy $\pi \in \Pi_{\C}$ it holds that
$
    q_h(s_h | \pi(c;\cdot), P^c_{\star}) \geq p_{min}
$. 
We remark that $p_{min}$ is {known} to the learner in this section.
}

\noindent\textbf{Dynamics approximation using least-square regression.}{
Our algorithm gets as input a finite  context-dependent dynamics class $\Fp \subseteq  S \times ( S \times A \times \C) \to [0,1]$, i.e., every function $\widetilde{P} \in \Fp$ satisfies
\[
    \sum_{s' \in S} \widetilde{P}(s'|s,a,c)  = 1, \;\;\; \forall c \in \C,\; \forall (s,a) \in S \times A.
\]
Given a context-dependent dynamics $P$ and a context $c$ we denote the dynamics $P$ induced by $c$ by $P^c$, i.e., $P^c(s'|s,a)=P(s'|s,a,c) $.

\noindent\textbf{The random variable $\B$.}
For every context $c \in \C$, and state-action pair $(s,a) \in S \times A$ we define a  random variable 
$\B(P^c_\star,s,a)\in S$ which returns the next state $s'$ that observed after action $a$ was played in state $s$ and the dynamics is $P^c_\star$.
Observe that $\B(P^c_\star,s,a)$ is a random variable that distributes according to $P^c_\star(\cdot|s,a)$.
By definition it holds that
\[
    \E\left[\I\left[s' = \B(P^c_\star,s,a)\;|c,s,a\right] \right]
    =
    P^c_\star(s'|s,a)
    \;\; \forall s' \in S.
\]


\begin{assumption}[dynamics realizability]
We assume that $\Fp$ is realizable, meaning there exist a function $P_\star \in \Fp$ which is the true context-dependent dynamics.
\end{assumption}

\noindent As in previous sections, we assume an access to a least square regression (LSR) oracle 
which returns 
\[
    \widehat{P} \in \arg\min_{{P} \in \Fp }\sum_{((c,s,a,s'),b) \in D} ({P}^c(s'|s,a) - b)^2
\]
given a data set $D \in ((\C \times S \times A \times S) \times \{0,1\})^n$ that contains samples, each of a context $c$, state $s$, action $a$, next state $s'$ and a bit $b \in \{0,1\}$  which indicate whether $s'$ was observed after action $a$ was played in state $s$, for every state $s'\in S$ where the dynamics is $P^c_\star$.
}

\subsection{Regret Minimization Algorithm}

In Algorithm RM-UCDD (\cref{alg:RM-UCDD}),
we approximate both the rewards and the dynamics using LSR oracle. 
The first $|A|$ rounds are initialization rounds, where in round $i \in \{1, 2, \ldots, |A|\}$ the agent plays the policy $\pi_i$ which always chooses action $a_i$, i.e., $\pi_i(c;s) = a_i$ for every context $c \in \C$ and state $s \in S$.
\\
At round $t\geq|A|+1$, the agent computes the approximated rewards function for the context $c_t$ using the previously-selected policies $\{\pi_k(c_t;\cdot)\}_{k=1}^{t-1}$
in the same iterative computation we present in the previous sections.
\\
To approximate the dynamics, the agent uses the least square minimizer $\widehat{P}_t$.
She defines the approximated MDP for the context $c_t$, denote it $\Mhat_t(c_t) = (S, A, \widehat{P}^{c_t}_t, \widehat{r}^{c_t}_t, s_0, H)$, computes an optimal policy for it $\pi_t(c_t;\cdot)$ and runs it to generate a trajectory and update the oracles.

We remark that we feed the LSR oracle for the dynamics with samples of the form $((c_t, s^t_h,a^t_h, s'), \I[s'=s^t_{h+1}])$ for all $t\in\{1,2,\ldots,T\}$, $h \in [H-1]$ and $s' \in S$, to approximate the distribution $P^c_\star(\cdot|s,a)$ over $S$ for each state-action pair $(s,a) \in S \times A$.
\begin{algorithm}
    \caption{Regret Minimization for CMDP with Unknown Context-Dependent Dynamics (RM-UCDD)}
    \label{alg:RM-UCDD}
    \begin{algorithmic}[1]
        \STATE
        { 
            \textbf{inputs:}
            \begin{itemize}
                \item MDP parameters: 
                $S $
                , $A$, $H$, $s_0$.
                \item Confidence parameter $\delta$ and tuning parameters $\{\beta_t,\gamma_t\}_{t=1}^T$.
                \item Minimum reachability parameter $p_{min}$.
            \end{itemize}
        } 
        \STATE{\textbf{initialization:} for the first $|A|$ rounds, for each action $a_i$ in turn $i$, run the policy $\pi_i$ that
        at any state $s$ plays action $a_i$, regardless of the context.}
        \FOR{episode $t =  |A| + 1, \ldots, T$}
            \STATE{compute
            $
                \widehat{f}_t \in 
                \arg\min_{f \in \F}
                \sum_{i=1}^{t-1} \sum_{h=0}^{H-1} ( f(c_i,s^i_h,a^i_h) - r^i_h)^2
            $ using the LSR oracle
            }
            \STATE{compute
            $
                \widehat{P}_t \in 
                \arg\min_{{P} \in \Fp}
                \sum_{i=1}^{t-1} \sum_{h=0}^{H-1} \sum_{s' \in S}
                ( {P}^{c_i} (s'| s^i_h,a^i_h) - \I[s' = s^i_{h+1}])^2
            $ using the LSR oracle
            }
            \STATE{observe context $c_t \in \C$}
            \FOR{$k=|A|+1, |A|+ 2, \ldots, t$}
                \STATE{compute for all $(s,a)\in S \times A $:\\
                    $$
                        \widehat{r}_k^{c_t}(s, a) = \widehat{f}_k(c_t,s,a) + 
                        \frac{\beta_k + H|S| \gamma_k}{  p_{min}\sum_{i=1}^{k-1}\I[a = \pi_i(c_t;s)]}
                    $$}
                \STATE{define the approximated MDP  $\Mhat_k(c_t) = (S, A, \widehat{P}^{c_t}_k, \widehat{r}^{c_t}_k, s_0, H)$}
                \STATE{compute $\pi_k(c_t,\cdot) \in \arg\max_{\pi \in S \to A}V^{\pi}_{\Mhat_k(c_t)}(s_0)$ using planning algorithm}
           \ENDFOR{}    
            \STATE{play $\pi_t(c_t,\cdot)$ and observe trajectory $\sigma^t = (c_t, s^t_0, a^t_0, r^t_0, s^t_1, \ldots, s^t_{H-1}, a^t_{H-1}, r^t_{H-1},s^t_H) $}
            \STATE{update the LSR oracles using $\sigma^t$}
           \ENDFOR{}
    \end{algorithmic}
\end{algorithm}

\subsection{Regret Analysis}

\subsubsection{Analysis Outline.}

For the analysis, for every $t >  |A|$ we define the following intermediate MDPs for every context $c \in \C$.
\begin{enumerate}
    \item $\M^{(f,P)}(c) = (S,A,P^c,f(c,\cdot,\cdot),s_0,H)$, for any $f \in \F$ and $P\in \Fp$.
    \item $\M^{(f_\star,P_\star)}(c)$ is the true model, where $f_\star$ is the true context dependent rewards and $P_\star$ is the true dynamics, which we also denote by $\M(c)$.
    \item $\Mhat_t(c)=\M^{(\widehat{r}_t,\widehat{P}_t)}(c) $ is the approximated MDP defined in~\cref{alg:RM-UCDD}.
    \item $\M^{(\widehat{r}_t,P)}(c) = (S,A,P^c,\widehat{r}^c_t,s_0,H)$, where $\widehat{r}^c_t$ is the context-dependent optimistic rewards function defined in~\cref{alg:RM-UCDD} and $P \in \Fp$ is a context-dependent dynamics from the dynamics class $\Fp$.
\end{enumerate}

For brevity, in the analysis outline we abuse the following notations for the contextual potential function at round $t$.
$$
    \psi_t(\pi) : = \E_c \left[ \sum_{h=0}^{H-1} \sum_{s_h \in S^c_h} \frac{q_h(s_h|\pi(c;\cdot) ,\widehat{P}^c_t)}{ p_{min}\sum_{i=1}^{t-1} \I[\pi(c;s_h)= \pi_i(c;s_h)]}\right],
$$
where $\widehat{P}_t \in \Fp$ is the least square minimizing context-dependent dynamics in round $t$.
We remark that in the detailed analysis, we use the explicit expressions of the context potential function $\psi_t$.
\\
In the following, we analyse the error caused by the dynamics approximation and the rewards approximation separately.

\noindent\textbf{Analysing the error caused by the rewards approximation.}
Using a similar analysis to that we showed for the known dynamics setting (see~\cref{Appendix:KD}), 
we show in~\cref{lemma:sq-trick-rewards-UCDD} that with probability at least $1-\delta/4$ for all $t\geq |A|+1$ and context-dependent policy $\pi \in \Pi_\C$ the following holds.
$$
    \left|\E_c[V^{\pi(c;\cdot)}_{\M^{(\hat{f}_t ,\widehat{P}_t)}(c)}(s_0)] -\E_c[V^{\pi(c;\cdot)}_{\M^{({f}_\star ,\widehat{P}_t)}(c)}(s_0)]\right|
    \leq 
   \beta_t\cdot \psi_t(\pi) + \beta_t \cdot \frac{H|S||A|}{t}
   ,
$$
where $\hat{f}_t \in \F$ and $\widehat{P}_t \in \Fp$ are the least square minimizing context-dependent rewards and dynamics in round $t$, respectively.

\noindent\textbf{Analysing the error caused by the dynamics approximation.} 
We extend the four steps strategy applied to the rewards approximation, to analyse the dynamics approximation.
Such an extension is possible, due to the following key observation.

\noindent\textbf{Key observation.}
Recall $\B$ is a random variable which generate the next state $s_{h+1}$ for the context $c$ given the true dynamics associated with $c$, $P^c_\star$, the current state $s_h$ and the played action $a_h$.
Recall that by definition,  
$\B(P^c_\star,s_h,a_h) $ distributes $P^c_\star(\cdot|s_h,a_h)$.
\begin{observation}\label{obs:key}
    Since the CMDP is layered
    , given the context $c_t$ state $s_h^t$ and action $a_h^t$, we have that the random variables $\B(P^{c_t}_\star,s_h^t,a_h^t)$ and $( s^t_0, a^t_0, s^t_1, \ldots, s^t_{h-1}, a^t_{h-1}) $ are independent random variables.
\end{observation}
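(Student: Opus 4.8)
\noindent\emph{Proof proposal.} The plan is to recognize Observation~\ref{obs:key} as a restatement of the Markov property of the episode-generating process, with the layered loop-free structure guaranteeing that the step-$h$ transition is a genuinely fresh draw. First I would fix the episode $t$ and condition on the pair $(\Hist_{t-1},c_t)$. Since the selected policy $\pi:=\pi_t(c_t;\cdot)$ is determined by $\Hist_{t-1}$ and $c_t$, conditioning on this pair fixes $\pi$ for the whole episode, so that within episode $t$ the sequence $(s^t_0,a^t_0,s^t_1,a^t_1,\ldots)$ is a time-inhomogeneous Markov chain: each action is drawn $a^t_j\sim\pi(\cdot\mid s^t_j)$ and each state is drawn $s^t_{j+1}\sim P^{c_t}_\star(\cdot\mid s^t_j,a^t_j)$ from the fixed, history-independent kernel $P^{c_t}_\star$.

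The key step would be to factor the joint law of a trajectory prefix. Writing a generic prefix as $(x_0,b_0,\ldots,x_h,b_h,x_{h+1})$, the chain rule gives
\begin{align*}
    &\Prob\!\left[s^t_0=x_0,\ldots,s^t_h=x_h,a^t_h=b_h,s^t_{h+1}=x_{h+1}\,\middle|\,\Hist_{t-1},c_t\right]
    \\
    &\qquad=\I[x_0=s_0]\left(\prod_{j=0}^{h}\pi(b_j\mid x_j)\right)\left(\prod_{j=0}^{h}P^{c_t}_\star(x_{j+1}\mid x_j,b_j)\right),
\end{align*}
in which every factor other than the final transition $P^{c_t}_\star(x_{h+1}\mid x_h,b_h)$ depends only on $(x_0,b_0,\ldots,x_h,b_h)$. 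Dividing by the marginal of the conditioning event $\{s^t_0=x_0,\ldots,s^t_h=x_h,a^t_h=b_h\}$ (using $\sum_{x_{h+1}}P^{c_t}_\star(x_{h+1}\mid x_h,b_h)=1$), the prefix factors cancel and I would conclude
\[
    \Prob\!\left[s^t_{h+1}=x_{h+1}\,\middle|\,s^t_0=x_0,\ldots,s^t_h=x_h,a^t_h=b_h,\Hist_{t-1},c_t\right]=P^{c_t}_\star(x_{h+1}\mid x_h,b_h).
\]
This conditional law depends only on $(c_t,x_h,b_h)$ and not on the prefix $(x_0,b_0,\ldots,x_{h-1},b_{h-1})$, which is precisely the asserted independence; since the realized next state $s^t_{h+1}$ is exactly $\B(P^{c_t}_\star,s^t_h,a^t_h)$, the observation would follow.

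It remains to pinpoint where layeredness enters, and this is where I expect the only genuine subtlety. Because the layers $S^{c_t}_0,\ldots,S^{c_t}_H$ are disjoint and transitions occur only between consecutive layers, the state $s^t_h$ lies in $S^{c_t}_h$ while the prefix states occupy strictly earlier, disjoint layers, so no state-action pair is revisited within an episode and the randomness realizing the step-$h$ transition is untouched by the randomness that generated the prefix. The main obstacle is therefore purely in the bookkeeping of the conditioning: one must keep $\Hist_{t-1}$ in the conditioning to treat $\pi_t(c_t;\cdot)$ as a fixed mapping, while simultaneously arguing that the resulting one-step kernel $P^{c_t}_\star(\cdot\mid s^t_h,a^t_h)$ is history-free, so that the clean within-episode Markov factorization — and hence independence from the prefix — goes through uniformly over all prefixes.
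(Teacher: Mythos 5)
Your proposal is correct, and it matches the paper's reasoning: the paper states this observation without any formal proof, treating it as immediate from the fact that within an episode the trajectory is a Markov chain under the fixed kernel $P^{c_t}_\star$ (once $\Hist_{t-1}$ and $c_t$ determine $\pi_t(c_t;\cdot)$) and that layeredness prevents any state-action pair from being revisited, so the draw $\B(P^{c_t}_\star,s_h^t,a_h^t)$ is fresh relative to the prefix. Your chain-rule factorization, cancellation of the prefix factors, and the remark on disjoint layers supply exactly the formalization the paper leaves implicit, so there is no gap.
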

\noindent Using~\cref{obs:key}, we are able to adapt~\cref{lemma:4.2-agrwl}, 
for the dynamics approximation using least-square regression. For more details, see~\cref{lemma:4.2-agrwl-dynamics}. 
Using~\cref{lemma:4.2-agrwl-dynamics} we obtain our uniform convergence bound for the dynamics approximation, in~\cref{lemma:UC-P}, and apply the four steps strategy presented for the rewards to the dynamics approximation as well.

\noindent\textbf{Step 1:}  
establish uniform convergence bound over any $t \geq 2$ and a fixed sequence of functions $P_2,P_3,\ldots \in \Fp$  which states the following (see~\cref{lemma:UC-lemma-5-dynamics} for more details). 
\\
 Formally, for every $\delta \in (0,1)$, with probability at least $1-\delta/4$ it holds
 that
    \begin{align*}
        \sum_{i=1}^{t-1}   \mathop{\E}
        \Bigg[\sum_{h=0}^{H-1} \sum_{s'\in S} (P^{c_i}_t(s'| s^i_h, a^i_h) &-P^{c_i}_\star(s'| s^i_h, a^i_h) )^2 | \Hist_{i-1}\Bigg]
        \leq 
        72H|S|\log(8|\Fp|t^3/\delta)
        \\
        & +
        2 \sum_{i=1}^{t-1} \sum_{h=0}^{H-1} 
       \sum_{s'\in S}
        \Big(P_t^{c_i}(s'| s^i_h, a^i_h)- \I\Big[s'=
        s^{(i,h)}_\B\Big] \Big)^2
         - \Big( P^{c_i}_\star(s'| s^i_h, a^i_h) - \I\Big[s'=
        s^{(i,h)}_\B
        \Big] \Big)^2,
    \end{align*}
where $s^{(i,h)}_\B \sim \B(P^{c_i}_\star,s^i_h,a^i_h)$.  
We prove the above in~\cref{lemma:UC-lemma-5-dynamics}, using the results of~\cref{lemma:4.2-agrwl-dynamics,lemma:UC-P}.

\noindent\textbf{Step 2:}
constructing a multiplicative confidence bound over the expected value of any context-dependent policy with respect to the approximated and true dynamics, which holds with high probability.
\\
Formally, in~\cref{lemma:CB-policy-dynamics} we show that with probability at least $1-\delta/4$ for all $t\geq|A|+1$ and every context-dependent policy $\pi \in \Pi_\C$ it holds that
\begin{align*}
    \left|\E_c\left[V^{\pi(c;\cdot)}_{\M^{(f_\star,P_\star)}(c)}(s_0)\right] - \E_c\left[V^{\pi(c;\cdot)}_{\M^{(f_\star, \widehat{P}_t)}(c)}(s_0)\right]\right|
    \leq
    \sqrt{H|S| \psi_t(\pi)}
    \cdot \sqrt{72 H^2|S|\log(8|\Fp|t^3/\delta)}.  
\end{align*}

\noindent\textbf{Step 3:}
relax the confidence bound in step 2 to be additive.
In~\cref{lemma:sq-trick-UCDD} we show that
under the good event of step 2,
for all $t\geq|A|+1$ and every context-dependent policy $\pi \in \Pi_\C$, for $\gamma_t = \sqrt{ \frac{18 t \log(8|\Fp|t^3/\delta) }{|S| |A|}}$, it holds that
    $$
        \left|\E_c\left[V^{\pi(c;\cdot)}_{\M^{(f_\star,P_\star)}(c)}(s_0)\right] - \E_c\left[V^{\pi(c;\cdot)}_{\M^{(f_\star, \widehat{P}_t)}(c)}(s_0)\right]\right|
        \leq 
        \gamma_t  H  |S|  \psi_t(\pi) +\gamma_t \frac{H^2 |S|^2  |A|}{t}
        ,
    $$
where $f_\star = r_\star$ is the true expected rewards function.

\noindent\textbf{Step 4:} bound the sum of contextual potentials similarly to shown for the rewards. For more details, see~\cref{lemma:Contextual-Potential-dynamics}.
\\
By combining all the steps, we obtain the optimism lemma (see~\cref{lemma:optimism-UCDD}) which states that under the good events of step 2 for both the dynamics and rewards approximations, for all $t\geq|A|+1$ it holds that
    \begin{align*}
        \E_c[V^{\pi^\star(c;\cdot)}_{\M(c)}(s_0)] -  \E_c[V^{\pi_t(c;\cdot)}_{\Mhat_t(c)}(s_0)]
        \leq 
        \gamma_t \frac{H^2  |S|^2 |A|}{t}
        + 
        \beta_t \frac{H|S| |A|}{t},
    \end{align*}
yielding the following regret bound (see~\cref{thm:regret-bound-ucdd}).

\begin{theorem}[regret bound]
For every $\delta \in (0,1)$, $T>|A|$ and finite function classes $\F$ and $\Fp$, let $\beta_t = \sqrt{ \frac{17 t \log(8|\F|t^3/\delta) }{|S||A|}}$ and  $\gamma_t = \sqrt{ \frac{18 t \log(8|\Fp|t^3/\delta) }{|S| |A|}}$ for all $t\in [T]$.  
Then, with probability at least $1-\delta$ it holds that
\begin{align*}
    \Regrv_T(RM-UCDD)\leq 
    \tilde{O}\left( 
        (H+1/p_{min})\cdot
            \left({H|S|^{3/2}\sqrt{|A|T\log\frac{|\Fp|}{\delta}}}+
            {\sqrt{T|S||A|\log\frac{|\F|}{\delta}}}
        \right)\right).
\end{align*}
\end{theorem}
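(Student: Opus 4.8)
The plan is to follow the same high-level template as the proof of \cref{thm:regret-bound-ucid}, working throughout under the intersection of the two ``Step 2'' good events --- one for the rewards approximation (\cref{lemma:sq-trick-rewards-UCDD}) and one for the dynamics approximation (\cref{lemma:CB-policy-dynamics}) --- each holding with probability at least $1-\delta/4$. First I would pass from the realized regret $\sum_{t=1}^T V^{\pi^\star(c_t;\cdot)}_{\M(c_t)}(s_0) - V^{\pi_t(c_t;\cdot)}_{\M(c_t)}(s_0)$ to its expected-over-contexts version by introducing the martingale difference sequence $Y_t := \big(V^{\pi^\star(c_t;\cdot)}_{\M(c_t)}(s_0) - V^{\pi_t(c_t;\cdot)}_{\M(c_t)}(s_0)\big) - \E_{c_t}[\,\cdot\,|\Hist_{t-1}]$, with $|Y_t|\le 2H$. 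Since $\pi_t$ is $\Hist_{t-1}$-measurable while $c_t$ and $\pi^\star$ are independent of the history, Azuma's inequality contributes an additive $\widetilde{O}(H\sqrt{T})$ term and the conditioning on $\Hist_{t-1}$ can be dropped; the first $|A|$ initialization rounds contribute at most $|A|H$.

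Next I would telescope the per-round expected gap through the intermediate models. Writing $\Mhat_t(c)=\M^{(\widehat{r}_t,\widehat{P}_t)}(c)$ and $\M(c)=\M^{(f_\star,P_\star)}(c)$, I split
\[
\E_c[V^{\pi^\star(c;\cdot)}_{\M(c)}(s_0)-V^{\pi_t(c;\cdot)}_{\M(c)}(s_0)] = \underbrace{\E_c[V^{\pi^\star}_{\M(c)}-V^{\pi_t}_{\Mhat_t(c)}]}_{\text{optimism}} + \underbrace{\E_c[V^{\pi_t}_{\Mhat_t(c)}-V^{\pi_t}_{\M(c)}]}_{\text{estimation}}.
\]
The optimism term is controlled directly by \cref{lemma:optimism-UCDD}, contributing $(H|S|\gamma_t+\beta_t)H|S||A|/t$ per round. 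For the estimation term I insert the hybrid model $\M^{(f_\star,\widehat{P}_t)}(c)$ and bound the two resulting differences separately: the rewards difference $\E_c[V^{\pi_t}_{\M^{(\widehat{f}_t,\widehat{P}_t)}}-V^{\pi_t}_{\M^{(f_\star,\widehat{P}_t)}}]$ by \cref{lemma:sq-trick-rewards-UCDD} as $\beta_t(\psi_t(\pi_t)+H|S||A|/t)$, and the dynamics difference $\E_c[V^{\pi_t}_{\M^{(f_\star,\widehat{P}_t)}}-V^{\pi_t}_{\M^{(f_\star,P_\star)}}]$ by \cref{lemma:sq-trick-UCDD} as $\gamma_t H|S|(\psi_t(\pi_t)+H|S||A|/t)$.

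Then I would sum over $t=|A|+1,\dots,T$. The cumulative $\sum_t\psi_t(\pi_t)$ factors collapse to $\tfrac{|S||A|}{p_{min}}(1+\log(T/|A|))$ by the contextual-potential bound (\cref{lemma:Contextual-Potential-dynamics}), while each $H|S||A|/t$ tail, multiplied by $\beta_t$ or $\gamma_t H|S|$ (both growing like $\sqrt{t}$), telescopes through $\sum_t 1/\sqrt t\le 2\sqrt T$. Substituting $\beta_t=\sqrt{17 t\log(8|\F|t^3/\delta)/(|S||A|)}$ and $\gamma_t=\sqrt{18 t\log(8|\Fp|t^3/\delta)/(|S||A|)}$, the potential part of the dynamics error scales like $\gamma_T H|S|\cdot\tfrac{|S||A|}{p_{min}}\approx \tfrac{1}{p_{min}}H|S|^{3/2}\sqrt{|A|T\log(|\Fp|/\delta)}$ and its tail part like $H\cdot H|S|^{3/2}\sqrt{|A|T\log(|\Fp|/\delta)}$, giving the combined factor $(H+1/p_{min})H|S|^{3/2}\sqrt{|A|T\log(|\Fp|/\delta)}$; the rewards error analogously yields $(H+1/p_{min})\sqrt{T|S||A|\log(|\F|/\delta)}$. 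A final union bound over the Azuma event and the two good events gives the claim with probability $1-\delta$.

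The main obstacle is the dynamics-estimation term, which is precisely where this setting departs from RM-UCID: there the dynamics was tabular and the per-step error was handled by Bretagnolle--Huber--Carol plus the Value Difference Lemma, whereas here the dynamics lives in a general class $\Fp$ and the only handle is the least-square uniform-convergence bound. Making that bound usable rests on the key observation (\cref{obs:key}) that, conditioned on $(c_t,s^t_h,a^t_h)$, the next-state draw $\B(P^{c_t}_\star,s^t_h,a^t_h)$ is independent of the trajectory prefix, which is what lets the per-layer squared errors aggregate into the $\psi_t$ potential. The delicate bookkeeping is tracking the extra $H|S|$ amplification in \cref{lemma:CB-policy-dynamics,lemma:sq-trick-UCDD}: an $\ell_2$ error in each transition distribution propagates across up to $H$ layers and over $|S|$ next-states, and I would need to verify that this amplification enters only multiplicatively and does not interact badly with the $p_{min}^{-1}$ reachability factor carried by $\psi_t$, since it is exactly this factor that turns the otherwise $\sqrt{T|S||A|}$-type term into the $H|S|^{3/2}\sqrt{|A|T}$ dynamics term.
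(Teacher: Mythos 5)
Your proposal is correct and follows essentially the same route as the paper's proof of \cref{thm:regret-bound-ucdd}: the same Azuma/martingale reduction with the $|A|H$ initialization cost, the same optimism step (\cref{lemma:optimism-UCDD}), the same hybrid-model split through $\M^{(f_\star,\widehat{P}_t)}(c)$ handled by \cref{lemma:sq-trick-rewards-UCDD} and \cref{lemma:sq-trick-UCDD}, and the same potential bound (\cref{lemma:Contextual-Potential-dynamics}) with the identical final substitution of $\beta_t,\gamma_t$. The one elision is that your estimation term starts from the optimistic rewards $\widehat{r}_t$ rather than $\hat{f}_t$, so an extra peeling step (the paper's \cref{obs:value-decomp-UCDD}) is needed to convert $V^{\pi_t}_{\M^{(\widehat{r}_t,\widehat{P}_t)}(c)}$ into $V^{\pi_t}_{\M^{(\hat{f}_t,\widehat{P}_t)}(c)}$ plus a bonus term $(\beta_t+H|S|\gamma_t)\psi_t(\pi_t)$ --- but this term has exactly the form you already sum via the potential lemma, so it only affects constants inside the $\widetilde{O}$.
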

\noindent In~\cref{corl:regret-bound-ucdd-g} we derive a regret bound in terms of $|\G|$.

\subsubsection{Confidence Bound for Rewards Approximation.}\label{subsubsec:CB-rewards}
As before, our prove for the confidence bound for the rewards approximation consists of the following four steps.

\paragraph{Step 1: Establishing uniform convergence bound over \texorpdfstring{$\F$}{Lg}.} We use~\cref{lemma:UC-lemma-5-UCID}, presented in~\cref{Appendix:UCID}.

\paragraph{Step 2: Constructing confidence bound over policies with respect to the rewards approximation.}

\begin{lemma}[confidence bound over policies w.r.t rewards approximation]\label{lemma:CB-policy-UCDD}
    Consider~\cref{alg:RM-UCDD} 
    that at each initialization round $t \leq |A|$ plays the policy that always choose action $a_t$, and at each round $t \geq |A|+1$ 
    selects $\pi_t$ based on the history $\Hist_{t-1}$.
    
    Then, for any $\delta \in (0,1)$,
    with probability at least $1-\delta/4$ for all $t > |A|$ and any context-dependent policy $\pi \in \Pi_\C$ the following holds.
    \begin{align*}
        &\Big|\E_c\left[V^{\pi(c;\cdot)}_{\M^{({f}_\star,\widehat{P}_t)}(c)}(s_0)\right] - \E_c\left[V^{\pi(c;\cdot)}_{\M^{(\hat{f}_t,\widehat{P}_t)}(c)}(s_0)\right]\Big|
        \\
        & \leq
        \sqrt{\E_c \left[ \sum_{h=0}^{H-1} \sum_{s_h \in S^c_h} \frac{q_h(s_h|\pi(c;\cdot) ,\widehat{P}^c_t)}{ p_{min}\sum_{i=1}^{t-1} \I[\pi(c;s_h)= \pi_i(c;s_h)]}\right]}
        \cdot \sqrt{ 68H\log(8|\F|t^3/\delta)},    \end{align*}
    where $\M^{(f,\widehat{P}_t)}(c) = (S,A,\widehat{P}^c_t,f(c,\cdot,\cdot),s_0, H)$ for any $f \in \F$, and $\widehat{P}_t$ is the LSR minimizing context-dependent dynamics in round $t$.
\end{lemma}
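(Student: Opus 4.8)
The plan is to follow the proof of \cref{lemma:CB-policy} almost verbatim, replacing the known dynamics $P^c_\star$ by the least-square minimizing dynamics $\widehat{P}^c_t$ wherever it governs the occupancy measure of the evaluated policy $\pi$. The starting observation is that the two models $\M^{(\hat{f}_t,\widehat{P}_t)}(c)$ and $\M^{(f_\star,\widehat{P}_t)}(c)$ share the \emph{same} dynamics $\widehat{P}^c_t$, so their occupancy measures coincide and the value gap collapses to a single reward-weighted sum. Concretely, for every deterministic $\pi\in\Pi_\C$,
\[
\E_c[V^{\pi(c;\cdot)}_{\M^{(\hat{f}_t,\widehat{P}_t)}(c)}(s_0)]-\E_c[V^{\pi(c;\cdot)}_{\M^{(f_\star,\widehat{P}_t)}(c)}(s_0)]
=\E_c\Big[\textstyle\sum_{h=0}^{H-1}\sum_{s_h\in S^c_h} q_h(s_h|\pi(c;\cdot),\widehat{P}^c_t)\,(\hat{f}_t-f_\star)(c,s_h,\pi(c;s_h))\Big],
\]
after which I pass to $|\hat{f}_t-f_\star|$ by the triangle inequality.

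Next I would insert the factor $\sqrt{p_{min}\sum_{i=1}^{t-1}\I[\pi(c;s_h)=\pi_i(c;s_h)]}$ into numerator and denominator of each summand and apply Cauchy-Schwarz over the joint measure on $c$ and on $(h,s_h)$. This produces two factors: a \emph{potential factor} $\E_c[\sum_{h,s_h} q_h(s_h|\pi(c;\cdot),\widehat{P}^c_t)^2 / (p_{min}\sum_i\I[\pi(c;s_h)=\pi_i(c;s_h)])]$, which by $q_h^2\le q_h$ is at most $\psi_t(\pi)$, exactly the expression under the square root in the statement; and an \emph{error factor} $\E_c[\sum_{h,s_h} p_{min}\sum_i\I[\pi(c;s_h)=\pi_i(c;s_h)]\,(\hat{f}_t-f_\star)^2(c,s_h,\pi(c;s_h))]$.

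The crux is to bound the error factor by the rewards uniform-convergence guarantee of \cref{lemma:UC-lemma-5-UCID}, which is expressed through the \emph{true} dynamics $P^c_\star$ and the \emph{played} policies $\pi_i$. Here I would use minimum reachability: for every $s_h\in S^c_h$ and every $\pi_i$ one has $q_h(s_h|\pi_i(c;\cdot),P^c_\star)\ge p_{min}$, so $p_{min}\,\I[\pi(c;s_h)=\pi_i(c;s_h)]\le \I[\pi(c;s_h)=\pi_i(c;s_h)]\,q_h(s_h|\pi_i(c;\cdot),P^c_\star)$. Inside the indicator I then replace $\pi(c;s_h)$ by $\pi_i(c;s_h)$ in the argument of $\hat{f}_t-f_\star$, drop the indicator to only enlarge the sum, and invoke the auxiliary identity \cref{claim:expectation-eq-given-hist} (which already holds for context-dependent $P^c_\star$) to rewrite the error factor as $\sum_{i=1}^{t-1}\sum_{h}\E_{c_i,s^i_h,a^i_h}[(\hat{f}_t-f_\star)^2|\Hist_{i-1}]$. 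Since $\hat{f}_t$ is the least-square minimizer, its empirical square-loss advantage over $f_\star$ is nonpositive, so \cref{lemma:UC-lemma-5-UCID} bounds this quantity by $68H\log(8|\F|t^3/\delta)$. Multiplying the two square-root factors gives the claim, with uniformity over all $t>|A|$ and all $\pi$ inherited from the good event of \cref{lemma:UC-lemma-5-UCID}.

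The main obstacle is precisely this reconciliation of the two different dynamics: the potential naturally carries the \emph{approximated} occupancy $q_h(\cdot|\pi,\widehat{P}^c_t)$ (since that is what drives the evaluated value), whereas the only convergence control available is phrased through the \emph{true} occupancy $q_h(\cdot|\pi_i,P^c_\star)$ of the data-generating trajectories. Minimum reachability is what places these two on the same footing, by replacing the per-visit weight with its uniform lower bound $p_{min}$; this is also the reason the denominator of $\psi_t$ is $p_{min}\sum_i\I[\pi(c;s_h)=\pi_i(c;s_h)]$ rather than the sharper $\sum_i\I[\pi(c;s_h)=\pi_i(c;s_h)]\,q_h(s_h|\pi_i(c;\cdot),P^c_\star)$ used in \cref{lemma:CB-policy}. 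A minor point to verify, exactly as in the known-dynamics case, is that this denominator is strictly positive for all $t>|A|$, which follows from the $|A|$ initialization rounds that play each constant action once.
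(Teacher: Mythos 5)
Your proposal is correct and follows essentially the same route as the paper's proof: the shared-dynamics value decomposition, triangle inequality, Cauchy--Schwarz with an inserted visitation-count factor, the auxiliary identity (\cref{claim:expectation-eq-given-hist}), and \cref{lemma:UC-lemma-5-UCID} combined with the least-square minimizer property. The only (immaterial) difference is where minimum reachability enters: the paper inserts the true-occupancy weights $\sum_{i=1}^{t-1}\I[\pi(c;s_h)=\pi_i(c;s_h)]\,q_h(s_h\,|\,\pi_i(c;\cdot),P^c_\star)$ in the Cauchy--Schwarz step and applies $q_h \geq p_{min}$ to the potential factor at the very end, whereas you insert $p_{min}\sum_{i=1}^{t-1}\I[\pi(c;s_h)=\pi_i(c;s_h)]$ directly and apply the same inequality to the error factor before invoking the auxiliary claim --- both orderings yield the identical final bound.
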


The proof of the lemma requires minors modifications to the proof of~\cref{lemma:CB-policy}.
\begin{proof}
    For any function $f \in \F$ we consider the random variable defined in~\cref{def:Y-f-RV} 
    \[
        Y_{f,c_t,s^t_h,a^t_h,r^t_h} = (f(c_t, s^t_h,a^t_h) -
        r^t_h
        )^2 - (f_\star(c_t, s^t_h,a^t_h)
        -
        r^t_h
        )^2
    \]
    for any $t \geq 1$, context $c_t \in \C$, layer $h \in [H-1]$, state $s^t_h \in S^{c_t}_h$ and action $a^t_h \in A$.
    We remark that $(c_t, s^t_h, a^t_h) \sim \D(c_t) \cdot q_h(s^t_h, a^t_h | \pi_t(c_t; \cdot), P^{c_t}_\star)$ and $r^t_h =  R^{c_t}_\star(s^t_h,a^t_h)$.
    \\
    By the Auxiliary~\cref{claim:expectation-eq-given-hist}, for all $t \geq 2$ it holds that
    We first prove the following auxiliary claim.
        \begin{align*}
            &\sum_{i=1}^{t-1} \sum_{h=1}^{H-1}
            \mathop{\E}_{c_i, s^i_h,a^i_h}
            \left[ (\hat{f}_t(c_i,s^i_h,a^i_h) - f_\star(c_i,s^i_h,a^i_h) )^2 |\Hist_{i-1}\right]
            =
            \\
            &\E_c \left[\sum_{i=1}^{t-1} \sum_{h=0}^{H-1} \sum_{s_h \in S^c_h} q_h(s_h | \pi_i(c;\cdot),P^c_\star) (\hat{f}_t(c,s_h, \pi_i(c;s_h)) - f_\star(c,s_h, \pi_i(c;s_h)))^2 \right].
        \end{align*}
    By~\cref{lemma:UC-lemma-5-UCID}, for any $\delta \in (0,1)$ we have with probability at least $1-\delta/4$ that
    \begingroup
    \allowdisplaybreaks
    \begin{align*}
        &\sum_{i=1}^{t-1} \sum_{h=0}^{H-1} \mathop{\E}_{c_i,s^i_h,a^i_h}
        \left[ ({f}_t(c_i, s^i_h, a^i_h)-f_\star(c_i, s^i_h, a^i_h) )^2 | \Hist_{i-1}
        \right]\\
        & \leq
        68H\log(8|\F|t^3/\delta)
        +
        2 \sum_{i=1}^{t-1} \sum_{h=0}^{H-1} ({f}_t(c_i, s^i_h, a^i_h)- r^i_h )^2 - (f_\star(c_i, s^i_h, a^i_h) -r^i_h )^2,
    \end{align*}
    \endgroup
    simultaneously for all $t \geq 2$ and any fixed sequence of functions ${f}_2,{f}_3,\ldots \in \F$. 
    %
    Since $\{{f}_t\}_{t=|A|+1}^T$ are the least square minimizers at each round $t$, it holds that
    \[
        \sum_{i=1}^{t-1} \sum_{h=0}^{H-1} (\hat{f}_t(c_i, s^i_h, a^i_h)- r^i_h )^2 - (f_\star(c_i, s^i_h, a^i_h) -r^i_h )^2 \leq 0.
    \]
    Recall that $\M(c) = \M^{(f_\star,P_\star)}(c)$ for all $c \in \C$.
    By all the above, with probability at least $1-\delta/4$ for all $t \geq |A|+1$ and any context-dependent policy $\pi \in \Pi_\C$ the following holds.
    \begingroup
    \allowdisplaybreaks
    \begin{align*}
        & \left| \E_c\left[V^{\pi(c;\cdot)}_{\M^{(\hat{f}_t,\widehat{P}_t)}(c)}(s_0) \right] - \E_c\left[V^{\pi(c;\cdot)}_{\M^{(f_\star,\widehat{P}_t)}(c)}(s_0)\right]\right|
        \\
        = &
        \left| \E_c\left[V^{\pi(c;\cdot)}_{\M^{(\hat{f}_t,\widehat{P}_t)}(c)}(s_0) - V^{\pi(c;\cdot)}_{\M^{(f_\star,\widehat{P}_t)}(c)}(s_0)\right]\right|
        \\
        \tag{By the value representation using occupancy measures}
        = &
        \left| \E_c \left[ \sum_{h=0}^{H-1} \sum_{s_h \in S^c_h}\sum_{a_h \in A}  q_h(s_h,a_h | \pi(c;\cdot), \widehat{P}^c_t)(\hat{f}_t(c,s_h, a_h) - f_\star(c,s_h, a_h)) \right] \right|
        \\
        \tag{$\pi(c;\cdot)$ is a deterministic policy for all $c \in \C$}
        = & 
        \left| \E_c \left[ \sum_{h=0}^{H-1} \sum_{s_h \in S^c_h}  q_h(s_h | \pi(c;\cdot), \widehat{P}^c_t)(\hat{f}_t(c,s_h, \pi(c;s_h)) - f_\star(c,s_h, \pi(c;s_h))) \right] \right|  
        \\
        \tag{By triangle inequality}
        \leq &
        \E_c \left[ \sum_{h=0}^{H-1} \sum_{s_h \in S^c_h} 
        q_h(s_h | \pi(c;\cdot), \widehat{P}^c_t)
        \left|f_t(c,s_h, \pi(c;s_h)) - f_\star(c,s_h, \pi(c;s_h))\right| \right]
        \\
        = &
        \E_c \Bigg[  \sum_{h=0}^{H-1} \sum_{s_h \in S^c_h}
        (\sqrt{ q_h(s_h | \pi(c;\cdot), \widehat{P}^c_t)})^2
        \frac{  \sqrt{\sum_{i=1}^{t-1} \I[\pi(c;s_h)= \pi_i(c;s_h)] q_h(s_h | \pi_i(c;\cdot),P^c_\star)}}
        {\sqrt{\sum_{i=1}^{t-1} \I[\pi(c;s_h)= \pi_i(c;s_h)]q_h(s_h | \pi_i(c;\cdot),P^c_\star)}}
        \\
        \tag{Multiplication in $\frac{  \sqrt{\sum_{i=1}^{t-1} \I[\pi(c;s_h)= \pi_i(c;s_h)] q_h(s_h | \pi_i(c;\cdot),P^c_\star)}}
        {\sqrt{\sum_{i=1}^{t-1} \I[\pi(c;s_h)= \pi_i(c;s_h)]q_h(s_h | \pi_i(c;\cdot),P^c_\star)}}$}
        & \cdot
        \left|\hat{f}_t(c,s_h, \pi(c;s_h)) - f_\star(c,s_h, \pi(c;s_h))\right| \Bigg]
        \\
        \tag{Re-arranging}
        = & 
       \sum_{c \in \C} \sum_{h=0}^{H-1} \sum_{s_h \in S^c_h} \sqrt{\D(c)}
       (\sqrt{ q_h(s_h | \pi(c;\cdot), \widehat{P}^c_t)})^2
        \frac{ 1}
        {\sqrt{\sum_{i=1}^{t-1} \I[\pi(c;s_h)= \pi_i(c;s_h)]q_h(s_h | \pi_i(c;\cdot),P^c_\star)}}
        \\
       &\cdot \sqrt{\D(c)} \sqrt{\sum_{i=1}^{t-1} \I[\pi(c;s_h)= \pi_i(c;s_h)] q_h(s_h | \pi_i(c;\cdot),P^c_\star)}
        |\hat{f}_t(c,s_h, \pi(c;s_h)) - f_\star(c,s_h, \pi(c;s_h))| 
        \\
        \tag{By Cauchy-Schwartz inequality}
         \leq &
        \sqrt{\E_c \left[ \sum_{h=0}^{H-1} \sum_{s_h \in S^c_h}  \frac{q_h^2(s_h | \pi(c;\cdot) ,\widehat{P}^c_t)}{\sum_{i=1}^{t-1} \I[\pi(c;s_h)= \pi_i(c;s_h)]q_h(s_h | \pi_i(c;\cdot),P^c_\star)}\right]}
        \\
        &\cdot \sqrt{ \E_c \left[ \sum_{h=0}^{H-1} \sum_{s_h \in S^c_h} \sum_{i=1}^{t-1}\I[\pi(c;s_h)= \pi_i(c;s_h)] q_h(s_h | \pi_i(c;\cdot),P^c_\star)(\hat{f}_t(c,s_h, \pi(c;s_h)) - f_\star(c,s_h, \pi(c;s_h)))^2 \right]}
        \\
        \leq &
        \tag{Since $q^2_h(s_h | \pi(c;\cdot), \widehat{P}^c_t) \leq q_h(s_h | \pi(c;\cdot), \widehat{P}^c_t)$}
        \sqrt{\E_c \left[ \sum_{h=0}^{H-1} \sum_{s_h \in S^c_h} \frac{q_h(s_h | \pi(c;\cdot), \widehat{P}^c_t)}{\sum_{i=1}^{t-1} \I[\pi(c;s_h)= \pi_i(c;s_h)]q_h(s_h | \pi_i(c;\cdot),P^c_\star)}\right]}
        \\
        &\cdot \sqrt{ \E_c \left[\sum_{h=0}^{H-1} \sum_{s_h \in S^c_h} \sum_{i=1}^{t-1} \I[\pi(c;s_h)= \pi_i(c;s_h)] q_h(s_h | \pi_i(c;\cdot),P^c_\star) (\hat{f}_t(c,s_h, \pi(c;s_h)) - f_\star(c,s_h, \pi(c;s_h)))^2 \right]}
        \\
        \tag{The non-zero terms are where $\pi_i(c;s_h) = \pi(c;s_h)$ and change of summing order}
        = &
        \sqrt{\E_c \left[ \sum_{h=0}^{H-1} \sum_{s_h \in S^c_h} \frac{q_h(s_h | \pi(c;\cdot), \widehat{P}^c_t)}{\sum_{i=1}^{t-1} \I[\pi(c;s_h)= \pi_i(c;s_h)]q_h(s_h | \pi_i(c;\cdot),P^c_\star)}\right]}
        \\
        &\cdot \sqrt{ \E_c \left[\sum_{i=1}^{t-1} \sum_{h=0}^{H-1} \sum_{s_h \in S^c_h}  
        q_h(s_h | \pi_i(c;\cdot),P^c_\star) (\hat{f}_t(c,s_h, \pi_i(c;s_h)) - f_\star(c,s_h, \pi_i(c;s_h)))^2 \right]}
        \\
        \tag{By~\cref{claim:expectation-eq-given-hist}}
        = &
        \sqrt{\E_c \left[ \sum_{h=0}^{H-1} \sum_{s_h \in S^c_h} \frac{q_h(s_h | \pi(c;\cdot), \widehat{P}^c_t)}{\sum_{i=1}^{t-1} \I[\pi(c;s_h)= \pi_i(c;s_h)]q_h(s_h | \pi_i(c;\cdot),P^c_\star)}\right]}
        \\
        &\cdot \sqrt{ 
        \sum_{i=1}^{t-1} \sum_{h=1}^{H-1}
        \mathop{\E}_{c_i, s^i_h,a^i_h}
        \left[ (\hat{f}_t(c_i,s^i_h,a^i_h) - f_\star(c_i,s^i_h,a^i_h) )^2 |\Hist_{i-1}
        \right]} 
        \\
        \tag{By~\cref{lemma:UC-lemma-5-UCID} combined with the fact that $\hat{f}_t$ is the least-square minimizer}
        \leq &
        \sqrt{\E_c \left[ \sum_{h=0}^{H-1} \sum_{s_h \in S^c_h} \frac{q_h(s_h | \pi(c;\cdot), \widehat{P}^c_t)}{\sum_{i=1}^{t-1} \I[\pi(c;s_h)= \pi_i(c;s_h)]q_h(s_h | \pi_i(c;\cdot),P^c_\star)}\right]}
        \cdot \sqrt{ 68 H \log(8|\F|t^3 /\delta)}
        \\
        \tag{By the minimum reachanility of $P^c_\star$}
        \leq &
        \sqrt{\E_c \left[ \sum_{h=0}^{H-1} \sum_{s_h \in S^c_h} \frac{q_h(s_h | \pi(c;\cdot), \widehat{P}^c_t)}{p_{min}\sum_{i=1}^{t-1} \I[\pi(c;s_h)= \pi_i(c;s_h)]}\right]}
        \cdot \sqrt{ 68 H \log(8|\F|t^3 /\delta)}
        . 
    \end{align*}
    \endgroup
Lastly, we remark that by the choice of $\pi_i$ for $i \in \{1,2,\ldots,|A|\}$, and the minimum reachability assumption for any deterministic policy $\pi \in \Pi_\C$, layer $h \in [H-1]$, state $s_h \in S^c_h$ and $t > |A|$ it holds that
\begin{align*}
    \sum_{i=1}^{t-1} \I[\pi(c;s_h)= \pi_i(c;s_h)]q_h(s_h | \pi_i(c;\cdot),P^c_\star)
    \geq
    p_{min} \cdot \sum_{i=1}^{t-1} \I[\pi(c;s_h)= \pi_i(c;s_h)]
    \geq 
     p_{min} > 0,
\end{align*}
hence the above is well defined.
\end{proof}

\paragraph{Step 3: Relax the Confidence Bound to be Additive.}
\begin{lemma}[the ``square trick'' relaxation]\label{lemma:sq-trick-rewards-UCDD}
    Under the good event of~\cref{lemma:CB-policy-UCDD},
    for all $t > |A|$ and any context-dependent policy $\pi \in \Pi_\C$ it holds that 
    \begin{align*}
        \Big|\E_c\left[V^{\pi(c;\cdot)}_{\M^{({f}_\star,\widehat{P}_t)}(c)}(s_0)\right] - \E_c\left[V^{\pi(c;\cdot)}_{\M^{(\hat{f}_t,\widehat{P}_t)}(c)}(s_0)\right]\Big|
        \leq &
        \beta_t \cdot \E_c \left[ \sum_{h=0}^{H-1} \sum_{s_h \in S^c_h} \frac{  q_h(s_h|\pi(c;\cdot) ,\widehat{P}^c_t)}{p_{min}\sum_{i=1}^{t-1} \I[\pi(c;s_h)= \pi_i(c;s_h)]}\right]
        \\
        & +  \beta_t \cdot \frac{H  |S|  |A|}{t}
        ,
    \end{align*}
    where $\beta_t = \sqrt{ \frac{17 t \log(8|\F|t^3 /\delta) }{|S| |A|}}$.
\end{lemma}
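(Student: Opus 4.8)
The plan is to derive this ``square trick'' relaxation directly from the multiplicative confidence bound of \cref{lemma:CB-policy-UCDD}, following exactly the same algebraic manipulation used in \cref{lemma:sq-trick} for the known-dynamics case. The key insight is that the multiplicative bound has the form $\sqrt{\Phi}\cdot\sqrt{C}$ where $\Phi$ is (essentially) the contextual potential and $C = 68H\log(8|\F|t^3/\delta)$ is a constant depending on the function class. The goal is to convert $\sqrt{\Phi}\cdot\sqrt{C}$ into an additive form $\beta_t\Phi + \beta_t\cdot H|S||A|/t$ by choosing $\beta_t$ appropriately and invoking the AM-GM inequality $2ab \le a^2+b^2$.

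\textbf{First} I would start from the conclusion of \cref{lemma:CB-policy-UCDD}, which under its good event gives, for all $t>|A|$ and any $\pi\in\Pi_\C$,
\begin{align*}
    \Big|\E_c[V^{\pi(c;\cdot)}_{\M^{(f_\star,\widehat{P}_t)}(c)}(s_0)] - \E_c[V^{\pi(c;\cdot)}_{\M^{(\hat{f}_t,\widehat{P}_t)}(c)}(s_0)]\Big|
    \le
    \sqrt{\psi_t(\pi)}\cdot\sqrt{68H\log(8|\F|t^3/\delta)},
\end{align*}
where I use that $\psi_t(\pi) = \E_c[\sum_h\sum_{s_h}\frac{q_h(s_h|\pi(c;\cdot),\widehat{P}^c_t)}{p_{min}\sum_i\I[\pi(c;s_h)=\pi_i(c;s_h)]}]$ is exactly the quantity appearing inside the square root (after applying the minimum-reachability lower bound $q_h(s_h|\pi_i(c;\cdot),P^c_\star)\ge p_{min}$ that is already baked into the statement of \cref{lemma:CB-policy-UCDD}). \textbf{Then} I would multiply and divide by $\sqrt{\beta_t}$ inside the two square-root factors, writing $\sqrt{\psi_t(\pi)C} = \sqrt{\beta_t\psi_t(\pi)}\cdot\sqrt{C/\beta_t}$, and plug in the explicit value $\beta_t = \sqrt{17t\log(8|\F|t^3/\delta)/(|S||A|)}$ so that the second factor simplifies to $\sqrt{\beta_t\cdot H|S||A|/t}$ up to a constant of $2$, exactly as in the chain of equalities in the proof of \cref{lemma:sq-trick}. \textbf{Finally} I would apply $2ab \le a^2+b^2$ with $a = \sqrt{\beta_t\psi_t(\pi)}$ and $b = \sqrt{\beta_t H|S||A|/t}$ to obtain $\beta_t\psi_t(\pi) + \beta_t H|S||A|/t$, which is the claimed bound.

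\textbf{The main obstacle} is purely bookkeeping rather than conceptual: I must verify that the arithmetic identity relating $C = 68H\log(8|\F|t^3/\delta)$ and $\beta_t$ works out with the correct constant factors, specifically checking that $\sqrt{C/\beta_t} = 2\sqrt{\beta_t H|S||A|/t}$ holds (which amounts to confirming $4\beta_t^2 H|S||A|/t = C$, i.e., $4\cdot\frac{17t\log}{|S||A|}\cdot\frac{H|S||A|}{t} = 68H\log$, which indeed gives $68H\log$). Since this is the identical constant used in \cref{lemma:sq-trick}, I expect the computation to go through verbatim. The only genuine difference from the known-dynamics case is that the occupancy measure in the numerator of $\psi_t$ is taken with respect to the \emph{approximated} dynamics $\widehat{P}^c_t$ rather than $P^c_\star$, but this substitution is transparent because the additive relaxation step treats $\psi_t(\pi)$ as a single opaque nonnegative quantity and never reopens its definition. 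Hence I would simply cite the derivation of \cref{lemma:sq-trick} and note that it applies mutatis mutandis with $\psi_t$ in place of $\phi_t$.
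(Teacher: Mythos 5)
Your proposal is correct and follows exactly the paper's own proof: the paper likewise starts from the multiplicative bound of \cref{lemma:CB-policy-UCDD} (where the $p_{min}$ lower bound is indeed already absorbed into the statement), rescales by $\beta_t$ using the identity $4\beta_t^2 H|S||A|/t = 68H\log(8|\F|t^3/\delta)$, and finishes with $2ab\le a^2+b^2$, treating $\psi_t(\pi)$ as an opaque nonnegative quantity just as in \cref{lemma:sq-trick}. Nothing is missing.
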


The proof is almost identical to the proof of~\cref{lemma:sq-trick}.

\begin{proof}
    Consider the following derivation,
    for $\beta_t = \sqrt{ \frac{17 t \log(8|\F|t^3/\delta) }{|S||A|}}$.
    \begingroup
    \allowdisplaybreaks
    \begin{align*}
        & \Big|\E_c\left[V^{\pi(c;\cdot)}_{\M^{({f}_\star,\widehat{P}_t)}(c)}(s_0)\right] - \E_c\left[V^{\pi(c;\cdot)}_{\M^{(\hat{f}_t,\widehat{P}_t)}(c)}(s_0)\right]\Big|
        \\
        \tag{By~\cref{lemma:CB-policy-UCDD}}
        \leq &
        \sqrt{\E_c \left[ \sum_{h=0}^{H-1} \sum_{s_h \in S^c_h} \frac{q_h(s_h|\pi(c;\cdot) ,\widehat{P}^c_t)}{p_{min}\sum_{i=1}^{t-1} \I[\pi(c;s_h)= \pi_i(c;s_h)]}\right]}
        \cdot \sqrt{ 68 H \log(8|\F|t^3/\delta)}
        \\
        = &
        \sqrt{\E_c \left[ \sum_{h=0}^{H-1} \sum_{s_h \in S^c_h} \frac{\beta_t \cdot q_h(s_h|\pi(c;\cdot) ,\widehat{P}^c_t)}{p_{min}\sum_{i=1}^{t-1} \I[\pi(c;s_h)= \pi_i(c;s_h)]}\right]}
        \cdot \sqrt{\frac{1}{\beta_t} 68 H \log(8|\F|t^3/\delta)}
        \\
        \tag{By $\beta_t$ choice}
        = & 
        \sqrt{\E_c \left[ \sum_{h=0}^{H-1} \sum_{s_h \in S^c_h} \frac{\beta_t \cdot q_h(s_h|\pi(c;\cdot) ,\widehat{P}^c_t) }{p_{min}\sum_{i=1}^{t-1} \I[\pi(c;s_h)= \pi_i(c;s_h)]}\right]}
        \cdot \sqrt{ H \sqrt{\frac{ |S| |A|}{t}} \frac{68\log(8|\F|t^3/\delta)}{\sqrt{17\log(8|\F|t^3/\delta)}}}
        \\
        = & 
        \sqrt{\E_c \left[ \sum_{h=0}^{H-1} \sum_{s_h \in S^c_h} \frac{\beta_t \cdot q_h(s_h|\pi(c;\cdot) ,\widehat{P}^c_t) }{p_{min}\sum_{i=1}^{t-1} \I[\pi(c;s_h)= \pi_i(c;s_h)]}\right]}
        \cdot \sqrt{4 H \cdot \sqrt{\frac{ |S||A|}{t}} \frac{ (\sqrt{17\log(8|\F|t^3/\delta)})^2}{\sqrt{17\log(8|\F|t^3/\delta)}}}
        \\
        = & 
        \sqrt{\E_c \left[ \sum_{h=0}^{H-1} \sum_{s_h \in S^c_h} \frac{\beta_t \cdot q_h(s_h|\pi(c;\cdot) ,\widehat{P}^c_t) }{p_{min}\sum_{i=1}^{t-1} \I[\pi(c;s_h)= \pi_i(c;s_h)]}\right]}
        \cdot \sqrt{4 H \cdot \sqrt{\frac{|S| |A|}{t}}  \sqrt{17\log(8|\F|t^3/\delta)}}
        \\
        = & 
        \sqrt{2 \cdot \E_c \left[ \sum_{h=0}^{H-1} \sum_{s_h \in S^c_h} \frac{\beta_t \cdot q_h(s_h|\pi(c;\cdot) ,\widehat{P}^c_t) }{p_{min}\sum_{i=1}^{t-1} \I[\pi(c;s_h)= \pi_i(c;s_h)]}\right]}
        \\
        & \cdot \sqrt{2 H \cdot \sqrt{\frac{|S|\cdot |A|}{t}}\sqrt{\frac{t^2}{ |S|^2\cdot |A|^2}}  \sqrt{\frac{ |S|^2\cdot |A|^2}{t^2}} \sqrt{17\log(8|\F|t^3/\delta)}}
        \\
        = & 
        \sqrt{2 \cdot \E_c \left[ \sum_{h=0}^{H-1} \sum_{s_h \in S^c_h} \frac{\beta_t \cdot q_h(s_h|\pi(c;\cdot) ,\widehat{P}^c_t) }{p_{min}\sum_{i=1}^{t-1} \I[\pi(c;s_h)= \pi_i(c;s_h)]}\right]}
        \cdot \sqrt{2 H \cdot \beta_t \sqrt{\frac{|S|^2\cdot |A|^2}{t^2}}}
        \\
        = & 
        2\cdot \sqrt{ \E_c \left[ \sum_{h=0}^{H-1} \sum_{s_h \in S^c_h} \frac{\beta_t \cdot q_h(s_h|\pi(c;\cdot) , \widehat{P}^c_t) }{p_{min}\sum_{i=1}^{t-1} \I[\pi(c;s_h)= \pi_i(c;s_h)]}\right]}
        \cdot \sqrt{ \beta_t \cdot \frac{H  |S| |A|}{t}}
        \\
        \tag{Since $2ab\leq a^2+b^2$}
        \leq &
        \E_c \left[ \sum_{h=0}^{H-1} \sum_{s_h \in S^c_h} \frac{\beta_t \cdot q_h(s_h|\pi(c;\cdot) ,\widehat{P}^c_t)}{p_{min}\sum_{i=1}^{t-1} \I[\pi(c;s_h)= \pi_i(c;s_h)]}\right]
        +  \beta_t \cdot \frac{H  |S|  |A|}{t}.
    \end{align*}
    \endgroup
\end{proof}

\paragraph{Step 4: Bounding the contextual potential for both dynamics and rewards.} We show in~\cref{lemma:Contextual-Potential-dynamics} upper bounds on the cumulative contextual potential function $\psi_t$ that is being used for both the dynamics an rewards approximation analysis.

\subsubsection{Confidence Bound for the Dynamics Approximation.}\label{subsubsec:CB-dynamics}
We apply the four steps strategy for the dynamics approximation as well.

\paragraph{Step 1: Establishing uniform convergence bound over \texorpdfstring{$\Fp$}{Lg}.}
Let $\B$ be a random variable which generate the next state $s_{h+1}$ given the true dynamics associated with $c$, $P^c_\star$, the state $s_h$ and the action $a_h$.
 $\B$ is distributed according to  $P^c_\star(\cdot|s_h,a_h)$.
By definition, $\B$ satisfies the following for all $s' \in S$.
    \[
        \E\left[\I[s' = \B(P^c_\star,s_h,a_h)] \Big|c,s_h,a_h \right]
        =
        P^c_\star(s'|s_h,a_h).
    \]

\begin{observation}
    Given the context $c_t$ state $s_h^t$ and action $a_h^t$, we have that the random variables
    $\B(P^{c_t}_\star,s_h^t,a_h^t)$ and $(s^t_0, a^t_0, s^t_1, \ldots, s^t_{h-1}, a^t_{h-1}) $ 
    are independent random variables.
\end{observation}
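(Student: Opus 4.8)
The plan is to exhibit the episode-generation process as a composition of mutually independent randomness sources, one per step, and then argue that the source driving the transition out of $(s^t_h, a^t_h)$ is independent of everything that produces the prefix. First I would make the sampling model explicit: at each step $k$ the action $a^t_k$ is drawn from $\pi_t(c_t; s^t_k)$ using a fresh variable $U^{\mathrm{act}}_k$, and the successor state is drawn from $P^{c_t}_\star(\cdot \mid s^t_k, a^t_k)$ using a fresh variable $U^{\mathrm{dyn}}_k$, where the context $c_t$ and all the $U$'s are mutually independent. By construction $\B(P^{c_t}_\star, s^t_h, a^t_h)$ is a deterministic function of $(c_t, s^t_h, a^t_h, U^{\mathrm{dyn}}_h)$ whose conditional law given $(c_t, s^t_h, a^t_h)$ is exactly $P^{c_t}_\star(\cdot \mid s^t_h, a^t_h)$, in agreement with the definition of $\B$.

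Next I would observe that the prefix $(s^t_0, a^t_0, \ldots, s^t_{h-1}, a^t_{h-1})$, together with $s^t_h$ itself, is a deterministic function of $c_t$, the fixed start state $s_0$, and the earlier sources $\{U^{\mathrm{act}}_k, U^{\mathrm{dyn}}_k\}_{k<h}$; it never touches $U^{\mathrm{dyn}}_h$. Here the layered, loop-free structure is exactly what is needed: since $s^t_h \in S^{c_t}_h$ belongs to a unique layer, conditioning on the value of $s^t_h$ unambiguously fixes the step index $h$, so the decomposition of the trajectory into per-step transitions is well defined and the prefix is cleanly separated from the step-$h$ transition.

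The conclusion then follows from a standard fact: if $X = g(c_t, s^t_h, a^t_h, U^{\mathrm{dyn}}_h)$ and $Y = \phi\big(c_t, \{U^{\mathrm{act}}_k, U^{\mathrm{dyn}}_k\}_{k<h}\big)$ with $U^{\mathrm{dyn}}_h$ independent of $\big(c_t, \{U^{\mathrm{act}}_k, U^{\mathrm{dyn}}_k\}_{k<h}\big)$, then conditionally on $(c_t, s^t_h, a^t_h)$ the variables $X$ and $Y$ are independent, because the conditional law of $X$ depends on the conditioning only through $(c_t, s^t_h, a^t_h)$ and is unchanged by further conditioning on $Y$. Concretely I would verify
\[
    \Pr\!\left[\B(P^{c_t}_\star, s^t_h, a^t_h) = s' \,\middle|\, c_t, s^t_h, a^t_h, s^t_0, a^t_0, \ldots, s^t_{h-1}, a^t_{h-1}\right]
    = P^{c_t}_\star(s' \mid s^t_h, a^t_h)
\]
for every $s' \in S$, with the right-hand side not depending on the prefix, which is precisely the asserted conditional independence.

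The main obstacle I anticipate is not any deep computation but the bookkeeping of the conditioning: since $s^t_h$ is itself a function of the earlier sources, conditioning on $s^t_h$ correlates $\{U^{\mathrm{act}}_k, U^{\mathrm{dyn}}_k\}_{k<h}$, and one must argue that this correlation is irrelevant to $\B$ precisely because its randomness $U^{\mathrm{dyn}}_h$ is sampled afresh and independently of everything in the prefix. Making the independent-source representation explicit, rather than appealing informally to ``the Markov property,'' is what cleanly sidesteps this pitfall.
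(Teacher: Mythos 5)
Your proposal is correct, and it is worth noting that the paper itself offers no proof of this observation: it is asserted as immediate from the Markov property of the dynamics together with the layered structure, so your argument fills in what the paper leaves implicit rather than paralleling an existing derivation. Your independent-source representation is a sound way to formalize it: writing the step-$h$ transition as a deterministic function $g(c_t,s^t_h,a^t_h,U^{\mathrm{dyn}}_h)$ with $U^{\mathrm{dyn}}_h$ fresh and independent of the seeds generating the prefix, and then checking that the conditional law of $\B(P^{c_t}_\star,s^t_h,a^t_h)$ given the prefix and $(c_t,s^t_h,a^t_h)$ equals $P^{c_t}_\star(\cdot \mid s^t_h,a^t_h)$, is exactly the conditional-independence property the paper needs downstream, namely that
$\E\left[\I\left[s'=\B(P^{c_t}_\star,s^t_h,a^t_h)\right] \mid c_t,s^t_h,a^t_h\right]=P^{c_t}_\star(s'\mid s^t_h,a^t_h)$
holds unaffected by the within-episode history, which is what licenses applying the martingale/Freedman machinery to the dynamics samples in \cref{lemma:4.2-agrwl-dynamics,lemma:UC-P}. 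You also correctly identify the role of the layered assumption (the state determines the step index, so conditioning on $s^t_h$ is unambiguous) and the one genuine subtlety here, that conditioning on $s^t_h$ and $a^t_h$ correlates the earlier seeds; your argument handles this properly, since $U^{\mathrm{dyn}}_h$ is independent of the sigma-algebra generated by the conditioning variables together with the prefix, so the conditional law of the transition depends on that sigma-algebra only through $(c_t,s^t_h,a^t_h)$.
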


\begin{conclusion}
    Our samples for the dynamics approximation satisfies the requirements of~\cref{lemma:4.2-agrwl-dynamics} (see bellow). 
\end{conclusion}

The following is an adaption of Lemma $4.2$ of \citet{agarwal2012contextual} for the dynamics approximation.

\begin{lemma}\label{lemma:4.2-agrwl-dynamics}
    Fix a function $\widetilde{P} \in \Fp$. Suppose we sample context $c$ from the distribution $\D$ and than sample state $s_h$ and action $a_h$ using some policy $\pi$. 
    Let $s'\in S$ denote a possible next state and define the random variable
    \[
        Y_{c,s_h,a_h,\B} = \sum_{s'} 
        (\widetilde{P}^c(s'|s_h,a_h) - \I[s'=s_\B])^2 - (P^c_\star(s'|s_h,a_h) - \I[s'=s_\B])^2,
    \]
    where $s_\B \sim \B(P^c_\star,s_h,a_h)$. 
    Then, the followings hold.
    \begin{enumerate}
        \item $\mathop{\E}_{c,s_h,a_h,\B}[Y_{c,s_h,a_h,\B}] = \mathop{\E}_{c,s_h,a_h}\left[\sum_{s' \in S}\left( \widetilde{P}^c(s'|s_h,a_h) - P^c_\star(s'|s_h,a_h)\right)^2 \right]$.
        \item $\mathop{\V}_{c,s_h,a_h,\B}[Y_{c,s_h,a_h,\B}] \leq 4|S|\cdot \mathop{\E}_{c,s_h,a_h,\B}[Y_{c,s_h,a_h,\B}]$.
        \item $\mathop{\V}\left[\sum_{h=0}^{H-1} Y_{c,s_h,a_h,\B}\right] \leq 4H|S|\cdot \mathop{\E}\left[\sum_{h=0}^{H-1} Y_{c,s_h,a_h,\B}\right]$.
    \end{enumerate}
\end{lemma}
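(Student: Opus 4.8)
The plan is to follow the proof of \cref{lemma:4.2-agrwl} essentially line by line, treating the sum over next states $s'$ as the new source of an extra $|S|$ factor and treating $\I[s'=s_\B]$ as the analog of the observed reward $r$. First I would rewrite each summand of $Y_{c,s_h,a_h,\B}$ via $a^2-b^2=(a-b)(a+b)$, obtaining $Y_{c,s_h,a_h,\B}=\sum_{s'\in S}(\widetilde{P}^c(s'|s_h,a_h)-P^c_\star(s'|s_h,a_h))(\widetilde{P}^c(s'|s_h,a_h)+P^c_\star(s'|s_h,a_h)-2\I[s'=s_\B])$. This is the exact counterpart of the rearrangement in \cref{eq:Y-eq}.

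For the first part I would condition on $(c,s_h,a_h)$ and take expectation over $\B$ only. By the defining property of $\B$ recorded in the excerpt, $\E[\I[s'=s_\B]\,|\,c,s_h,a_h]=P^c_\star(s'|s_h,a_h)$, so the conditional expectation of the factor $\widetilde{P}^c+P^c_\star-2\I[\cdot]$ is $\widetilde{P}^c-P^c_\star$, and each summand collapses to $(\widetilde{P}^c(s'|s_h,a_h)-P^c_\star(s'|s_h,a_h))^2$. Summing over $s'$ and taking the outer expectation over $(c,s_h,a_h)$ yields part~1.

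For the second part I would bound $\V\le\E[Y^2]$ and control $Y^2$ pointwise. Unlike the scalar reward case, $Y$ is now a sum of $|S|$ terms, so I would apply the norms (Cauchy--Schwarz) inequality to get $Y^2\le|S|\sum_{s'}(\widetilde{P}^c-P^c_\star)^2(\widetilde{P}^c+P^c_\star-2\I[s'=s_\B])^2$. Since $\widetilde{P}^c,P^c_\star\in[0,1]$ and $\I[\cdot]\in\{0,1\}$, each second factor is at most $4$, giving $Y^2\le4|S|\sum_{s'}(\widetilde{P}^c-P^c_\star)^2$; taking expectations and invoking part~1 gives the $4|S|$ bound. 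Part~3 is the same manoeuvre one level up: $(\sum_h Y)^2\le H\sum_h Y^2$ holds pointwise, then the $4|S|$ bound applies per layer and part~1 is applied layerwise. The layerwise use of part~1 is exactly where \cref{obs:key} enters — it guarantees that $s^t_{h+1}=\B(P^{c_t}_\star,s^t_h,a^t_h)$ is conditionally distributed as $P^{c_t}_\star(\cdot|s^t_h,a^t_h)$ given $(c_t,s^t_h,a^t_h)$ and independent of the earlier trajectory, so the unbiasedness identity is valid at every step.

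The only genuinely new ingredient relative to \cref{lemma:4.2-agrwl} is the $|S|$ factor produced by Cauchy--Schwarz over the $|S|$ coordinates of the distribution; the rest is transcription. I expect the main (mild) obstacle to be the careful bookkeeping of conditioning in part~3, ensuring that the independence furnished by \cref{obs:key} is precisely what licenses replacing the trajectory's realized next state by a fresh draw of $\B$ at each layer.
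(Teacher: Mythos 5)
Your proposal is correct and matches the paper's own proof essentially step for step: the same difference-of-squares rearrangement, the same conditioning on $(c,s_h,a_h)$ with the unbiasedness of $\I[s'=s_\B]$ for part~1, the same norms (Cauchy--Schwarz) inequality producing the $|S|$ factor with the pointwise bound of $4$ on the second factor for part~2, and the same $\left(\sum_h Y\right)^2 \le H\sum_h Y^2$ step for part~3. The only cosmetic difference is that you explicitly invoke \cref{obs:key} inside the lemma's proof, whereas the paper needs only the defining distributional property of $\B$ here and reserves the independence observation for the later martingale arguments.
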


\begin{proof}
    Let us rearrange $Y_{c,s_h,a_h,\B}$ as 
    \begin{equation}\label{eq:Y-dynamics-eq}
        Y_{c,s_h,a_h,\B} = \sum_{s'\in S}(\widetilde{P}^c(s'|s_h,a_h) - P^c_\star(s'|s_h,a_h))(\widetilde{P}^c(s'|s_h,a_h) + P^c_\star(s'|s_h,a_h) - 2\I[s'=
        s_\B
        ]).
    \end{equation}
    Consider the following derivation.
    \begingroup
    \allowdisplaybreaks
    \begin{align*}
        &\mathop{\E}_{c,s_h,a_h,\B}[Y_{c,s_h,a_h,\B}]
        \\
        = &
        \mathop{\E}_{c,s_h,a_h,\B}
        \left[ \sum_{s'}\left(\widetilde{P}^c(s'|s_h,a_h) - P^c_\star(s'|s_h,a_h)\right)\left(\widetilde{P}^c(s'|s_h,a_h) + P^c_\star(s'|s_h,a_h)
        - 2\I[s'=
        s_\B
        ]\right)\right]
        \\
        = &
        \mathop{\E}_{c,s_h,a_h}\left[ \mathop{\E}_{\B}
        \left[ \sum_{s'}(\widetilde{P}^c(s'|s_h,a_h) - P^c_\star(s'|s_h,a_h))(\widetilde{P}^c(s'|s_h,a_h) + P^c_\star(s'|s_h,a_h
        )
        - 2\I[s'= s_\B
        ]) \Big| c, s_h,a_h\right]\right] 
        \\
        = &
        \mathop{\E}_{c,s_h,a_h}
        \left[ \sum_{s'}\left(\widetilde{P}^c(s'|s_h,a_h) - P^c_\star(s'|s_h,a_h)\right) \left(\widetilde{P}^c(s'|s_h,a_h) + P^c_\star(s'|s_h,a_h)
        - 2\E_{\B}\left[ \I[s'=s_\B
        ] |c,s_h,a_h\right] \right)\right]
        \\
        = &
        \mathop{\E}_{c,s_h,a_h}\left[ \sum_{s'}\left(\widetilde{P}^c(s'|s_h,a_h) - P^c_\star(s'|s_h,a_h)\right)^2 \right],    
    \end{align*}
    \endgroup
    where the second identity is since $s_\B \sim \B(P^c_\star,s_h,a_h)$ and the randomness of $\B$ is determined completely by $c$, $s_h$ and $a_h$. 
    The third inequality is since the only ransom variable that depends on $\B$ is $s_\B$. 
    %
    The forth identity uses $\B$ definition which implies that for all $s' \in S$ it holds that  ${P^c_\star(s'|s,a)=\mathop{\E}_{\B}[\I[s'=\B(P^c_\star,s,a)]|c,s,a]}$,
    proving the first part of the lemma.
    
    For the second part, note that $\widetilde{P}^c(s'|s_h,a_h)$, $P^c_\star(s'|s_h,a_h)$ and $\I[s'=s_\B]$ are all between $0$ and $1$. 
    Hence from~\cref{eq:Y-dynamics-eq}, using norms inequality ($\|\cdot\|^2_1 \leq n\|\cdot\|^2_2$) we obtain
    \begingroup
    \allowdisplaybreaks
    \begin{align*}
        Y^2_{c,s_h,a_h,\B}
        = &
        \left(\sum_{s'}(\widetilde{P}^c(s'|s_h,a_h) - P^c_\star(s'|s_h,a_h))(\widetilde{P}^c(s'|s_h,a_h) + P^c_\star(s'|s_h,a_h) - 2\I[s'=
        s_\B
        ])\right)^2
        \\
        \leq &
        |S|\sum_{s'}(\widetilde{P}^c(s'|s_h,a_h) - P^c_\star(s'|s_h,a_h))^2(\widetilde{P}^c(s'|s_h,a_h) + P^c_\star(s'|s_h,a_h) - 2\I[s'=
        s_\B
        ])^2
        \\
        \leq &
        4|S|\sum_{s'}(\widetilde{P}^c(s'|s_h,a_h) - P^c_\star(s'|s_h,a_h))^2,
    \end{align*}
    \endgroup
    yielding the second part of the lemma since
    \begingroup
    \allowdisplaybreaks
    \begin{align*}
        \mathop{\V}_{c,s_h,a_h,\B}[Y_{c,s_h,a_h,\B}]
        \leq &
        \mathop{\E}_{c,s_h,a_h,\B}[Y^2_{c,s_h,a_h,\B}]
        \\
        \leq &
        4|S|\mathop{\E}_{c,s_h,a_h}\left[ \sum_{s'}(\widetilde{P}^c(s'|s_h,a_h) - P^c_\star(s'|s_h,a_h))^2 \right]
        \\
        = &
        4|S|\mathop{\E}_{c,s_h,a_h,\B}[Y_{c,s_h,a_h,\B}].
    \end{align*}
    \endgroup
    
    For the third part, by norms inequality and part $2$ of the lemma we have
    \begin{align*}
        \left( \sum_{h=0}^{H-1} Y_{c,s_h,a_h,\B} \right)^2
        \leq 
        H \sum_{h=0}^{H-1} Y^2_{c,s_h,a_h,\B}
        \leq
        4H|S|\sum_{s'}(\widetilde{P}^c(s'|s_h,a_h) - P^c_\star(s'|s_h,a_h))^2,
    \end{align*}
    yielding the third part of the lemma as
    \begingroup
    \allowdisplaybreaks
    \begin{align*}
        \mathop{\V}\left[\sum_{h=0}^{H-1} Y_{c,s_h,a_h,\B}\right]
        \leq &
        \mathop{\E}\left[\left(\sum_{h=0}^{H-1} Y_{c,s_h,a_h,\B}\right)^2\right]
        \\
        \leq &
        4H|S|\mathop{\E}\left[ \sum_{h=0}^{H-1} \sum_{s'}(\widetilde{P}^c(s'|s_h,a_h) - P^c_\star(s'|s_h,a_h))^2 \right]
        \\
        = &
        4H|S|\mathop{\E}\left[\sum_{h=0}^{H-1}  Y_{c,s_h,a_h,\B}\right],
    \end{align*}
    \endgroup
    as stated.
\end{proof}

\begin{definition}\label{def:Y-f-RV-dynamics}
    For every round $t \geq 1$, layer $h \in [H-1]$ and a function $\widetilde{P} \in \Fp$ we define the random variable
    \[
       Y_{\widetilde{P},c_t,s^t_h,a^t_h,s^t_{h+1}} =
       \sum_{s'\in S}
        \left(\widetilde{P}^{c_t}(s'| s^t_h, a^t_h)- \I\left[s'=s^t_{s+1}\right] \right)^2 - \left(P^{c_t}_\star(s'| s^t_h, a^t_h) - \I\left[s'=s^t_{h+1}\right] \right)^2
    \]
    where $(c_t, s^t_h, a^t_h) \sim \D(c_t) \cdot q_h(s^t_h, a^t_h| \pi_t(c_t;\cdot), P^{c_t}_\star)$ 
    and
    $s^t_{h+1} \sim \B(P^{c_t}_\star,s^t_h,a^t_h)$.
\end{definition}

\begin{observation}\label{obs:UCDD-dist-of-y-c-s-a}
    For all $ i \in [t-1]$ and $ h \in [ H-1]$ we have that
    $$(c_i, s^i_h, a^i_h) \sim \D(c_i) \cdot q_h(s^i_h, a^i_h| \pi_i(c_i;\cdot), P^{c_i}_\star).$$
    Recall that for all $i\leq |A|$ $\pi_i$ is selected deterministically and for all $i>|A|$ $\pi_i$ is determined completely by the history $\Hist_{i-1}$. Hence, by linearity of expectation, for all $\widetilde{P} \in \Fp$ it holds that
    $$
        \mathop{\E}\left[\sum_{h=0}^{H-1} Y_{\widetilde{P},c_i,s^i_h,a^i_h,s^i_{h+1}}\Big|\Hist_{i-1}\right]
        =
        \sum_{h=0}^{H-1} \mathop{\E}_{c_i, s^i_h, a^i_h,s^i_{h+1}}\left[Y_{\widetilde{P},c_i,s^i_h,a^i_h,s^i_{h+1}}\Big|\Hist_{i-1}\right].
    $$
    Similarly,
    $$
        \mathop{\E}\left[\sum_{h=0}^{H-1} \sum_{s'\in S}(\widetilde{P}^{c_i} (s'| s^i_h, a^i_h)-P^{c_i}_\star(s'| s^i_h, a^i_h) )^2\Big|\Hist_{i-1}\right]
        =
        \sum_{h=0}^{H-1} \mathop{\E}_{c_i, s^i_h, a^i_h}\left[\sum_{s'\in S}(\widetilde{P}^{c_i} (s'| s^i_h, a^i_h)-P^{c_i}_\star(s'| s^i_h, a^i_h) )^2\Big|\Hist_{i-1}\right].
    $$
\end{observation}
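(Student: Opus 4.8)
The plan is to treat the two assertions separately: first the per-sample distributional identity $(c_i, s^i_h, a^i_h) \sim \D(c_i)\cdot q_h(s^i_h, a^i_h \mid \pi_i(c_i;\cdot), P^{c_i}_\star)$, and then the two expectation-exchange identities, which I expect to follow from it by linearity. First I would fix a round $i \in [t-1]$ and condition on the history $\Hist_{i-1}$. The crucial point is that the selected policy $\pi_i$ is $\Hist_{i-1}$-measurable: for $i \le |A|$ it is the fixed policy $\pi_i(c;s)=a_i$ by construction, and for $i > |A|$ it is computed deterministically from the least-square minimizers $\hat{f}_i, \widehat{P}_i$ and the previously selected policies, all of which are functions of $\Hist_{i-1}$ alone. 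Hence, given $\Hist_{i-1}$, the mapping $c \mapsto \pi_i(c;\cdot)$ is a fixed deterministic object.

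Next I would use that the context $c_i$ is drawn fresh i.i.d.\ from $\D$, independently of $\Hist_{i-1}$, and that the episode trajectory is generated by rolling out $\pi_i(c_i;\cdot)$ in the true dynamics $P^{c_i}_\star$. By the definition of the occupancy measure, the probability of visiting $(s^i_h, a^i_h)$ at layer $h$ of that rollout is exactly $q_h(s^i_h, a^i_h \mid \pi_i(c_i;\cdot), P^{c_i}_\star)$; combining this with the context law $\D(c_i)$ yields the claimed joint distribution of $(c_i, s^i_h, a^i_h)$. For the two expectation identities, once $\pi_i$ is frozen by the conditioning, each summand indexed by $h$ is a bounded random variable whose only remaining randomness is $c_i$, the layer-$h$ pair $(s^i_h, a^i_h)$ drawn according to the occupancy measure above, and the next state $s^i_{h+1} \sim \B(P^{c_i}_\star, s^i_h, a^i_h)$. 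Since the sum over $h \in \{0,\ldots,H-1\}$ is finite, linearity of conditional expectation lets me interchange $\E[\,\cdot \mid \Hist_{i-1}]$ with $\sum_{h=0}^{H-1}$, and the subscripted expectations $\E_{c_i, s^i_h, a^i_h, s^i_{h+1}}$ (resp.\ $\E_{c_i, s^i_h, a^i_h}$) merely record which variables each layer-$h$ term integrates over given the fixed $\pi_i$. The same argument applies verbatim to the squared-difference summand, where the next-state variable is absent.

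I do not expect a genuine obstacle here; the only point demanding care is the measurability bookkeeping, namely verifying that $\pi_i$ (and hence the entire layer-$h$ sampling law) is determined by $\Hist_{i-1}$ so that conditioning on the history really does freeze the policy. The layered, loop-free structure — via the earlier key observation that $\B(P^{c_i}_\star, s^i_h, a^i_h)$ is independent of the prefix $(s^i_0, a^i_0, \ldots, a^i_{h-1})$ given $(c_i, s^i_h, a^i_h)$ — is what guarantees that each layer-$h$ term is well-defined as a single-layer expectation and that no cross-layer dependence obstructs the interchange.
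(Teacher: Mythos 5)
Your proposal is correct and follows essentially the same route as the paper, which states this observation without a separate proof precisely because its content is the measurability of $\pi_i$ with respect to $\Hist_{i-1}$ (fixed initialization policies for $i\leq|A|$, deterministic functions of the history afterwards) combined with linearity of conditional expectation over the finite sum in $h$. Your closing remark about the layered structure and the independence of $\B(P^{c_i}_\star,s^i_h,a^i_h)$ from the trajectory prefix is harmless but not needed here — linearity of expectation requires no independence, and that independence property is only invoked later, in the adaptation of the variance bounds (the analogue of Lemma 4.2) for the dynamics.
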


\begin{observation}\label{obs:UCDD-martingale-Y}
    Since for all $i\leq |A|$ $\pi_i$ is selected deterministically and for all $i>|A|$ $\pi_i$ is determined completely by the history $\Hist_{i-1}$,
    for any function $\widetilde{P} \in \Fp$,
    it holds that $\{Z_i(\widetilde{P})\}_{i=1}^{t-1} $ is a martingale difference sequence of length $t-1$, where the filtration is $\{\Hist_i\}_{i=1}^{t-1}$ and
    $$
        Z_i(\widetilde{P}) := \mathop{\E}\left[\sum_{h=0}^{H-1} Y_{\widetilde{P},c_i,s^i_h,a^i_h,s^i_{h+1}}\Big|\Hist_{i-1}\right]
        - \sum_{h=0}^{H-1} Y_{\widetilde{P},c_i,s^i_h,a^i_h,s^i_{h+1}}. 
    $$
    (Recall that 
    for all 
    $i \geq 1$ we defined that $\Hist_i = (\sigma^1, \ldots, \sigma^{i})$ and $\Hist_0$ is the empty history.)
    
    In addition, since $ \mathop{\E}\left[\sum_{h=0}^{H-1} Y_{\widetilde{P},c_i,s^i_h,a^i_h,s^i_{h+1}}\Big|\Hist_{i-1}\right]$ is determined given $\Hist_{i-1}$, it holds that
    \[
        \V\left[Z_i(\widetilde{P}) \;|\Hist_{i-1}\right] = \V\left[ \sum_{h=0}^{H-1} Y_{\widetilde{P},c_i,s^i_h,a^i_h,s^i_{h+1}} \;\Big| \Hist_{i-1}\right].
    \]
\end{observation}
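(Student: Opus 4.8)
The plan is to verify the two defining properties of a martingale difference sequence for $\{Z_i(\widetilde{P})\}_{i=1}^{t-1}$ — that each $Z_i(\widetilde{P})$ is determined by $\Hist_i$ and has zero conditional mean given $\Hist_{i-1}$ — and then to read off the conditional-variance identity as an immediate consequence of the subtracted conditional-expectation term being constant given $\Hist_{i-1}$. The whole argument is the dynamics analogue of Observation~\ref{obs:KD-martingale-Y}, so I would keep it parallel to that one and reuse the same conditioning bookkeeping.

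First I would establish adaptedness. The trajectory $\sigma^i = (c_i, s^i_0, a^i_0, r^i_0, \ldots, s^i_{H-1}, a^i_{H-1}, r^i_{H-1}, s^i_H)$ is a component of $\Hist_i$, and by Definition~\ref{def:Y-f-RV-dynamics} each $Y_{\widetilde{P},c_i,s^i_h,a^i_h,s^i_{h+1}}$ is a deterministic function of $(c_i, s^i_h, a^i_h, s^i_{h+1})$; hence $\sum_{h=0}^{H-1} Y_{\widetilde{P},c_i,s^i_h,a^i_h,s^i_{h+1}}$ is determined given $\Hist_i$. The remaining term $\mathop{\E}[\sum_{h=0}^{H-1} Y_{\widetilde{P},c_i,s^i_h,a^i_h,s^i_{h+1}} \mid \Hist_{i-1}]$ is, by construction, a function of $\Hist_{i-1}$ alone; the crucial point here is that $\pi_i$ is fixed by the history — for $i \le |A|$ it is the constant policy playing $a_i$, and for $i > |A|$ it is computed deterministically from $\Hist_{i-1}$ — so the whole conditional expectation is $\Hist_{i-1}$-measurable. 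Consequently $Z_i(\widetilde{P})$ is determined given $\Hist_i$.

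Next I would check the zero-mean property. Writing $m_i := \mathop{\E}[\sum_{h=0}^{H-1} Y_{\widetilde{P},c_i,s^i_h,a^i_h,s^i_{h+1}} \mid \Hist_{i-1}]$, which is $\Hist_{i-1}$-measurable, linearity of conditional expectation gives $\mathop{\E}[Z_i(\widetilde{P}) \mid \Hist_{i-1}] = m_i - m_i = 0$, so combined with adaptedness this shows $\{Z_i(\widetilde{P})\}_{i=1}^{t-1}$ is a martingale difference sequence with respect to $\{\Hist_i\}_{i=1}^{t-1}$. For the variance identity, since $m_i$ is constant given $\Hist_{i-1}$ and $Z_i(\widetilde{P}) = m_i - \sum_{h=0}^{H-1} Y_{\widetilde{P},c_i,s^i_h,a^i_h,s^i_{h+1}}$, the conditional variance is unchanged by the additive constant $m_i$ and by the sign flip, yielding $\V[Z_i(\widetilde{P}) \mid \Hist_{i-1}] = \V[\sum_{h=0}^{H-1} Y_{\widetilde{P},c_i,s^i_h,a^i_h,s^i_{h+1}} \mid \Hist_{i-1}]$.

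There is essentially no hard step here; the only thing to be careful about is the conditioning structure — namely that $\pi_i$, and hence $m_i$, is genuinely $\Hist_{i-1}$-measurable, while the fresh context $c_i$ and the transition sample $s^i_{h+1} \sim \B(P^{c_i}_\star, s^i_h, a^i_h)$ enter only through $\sigma^i$, a component of $\Hist_i$. The distributional assumptions underlying Definition~\ref{def:Y-f-RV-dynamics}, in particular the layered independence recorded in Observation~\ref{obs:key}, guarantee that these samples are drawn exactly as claimed, which is what makes $m_i$ well-defined and the martingale structure valid.
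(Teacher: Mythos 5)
Your proof is correct and follows essentially the same reasoning the paper uses for this observation (and its rewards analogue, Observation~\ref{obs:KD-martingale-Y}): adaptedness plus zero conditional mean follow from $\pi_i$ being $\Hist_{i-1}$-measurable while $(c_i, s^i_h, a^i_h, s^i_{h+1})$ enter only through $\sigma^i \subseteq \Hist_i$, and the variance identity follows because subtracting the $\Hist_{i-1}$-measurable conditional mean and flipping the sign leave the conditional variance unchanged. No gaps; your write-up simply spells out the measurability bookkeeping that the paper leaves implicit.
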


\begin{lemma}[uniform convergence over $\Fp$]\label{lemma:UC-P}
    For a fixed $t \geq 2$ and a fixed $\delta_t \in (0,1/e^2)$ with probability at least $1-\log_2(t-1)\delta_t$ it holds that
    \begin{align*}
        &\sum_{i=1}^{t-1} \sum_{h=0}^{H-1} \mathop{\E}_{c_i,s^i_h,a^i_h}
        \left[ \sum_{s'\in S}(\widetilde{P}^{c_i} (s'| s^i_h, a^i_h)-P^{c_i}_\star(s'| s^i_h, a^i_h) )^2 \Big| \Hist_{i-1}\right]
        \\
        & =
        \sum_{i=1}^{t-1} \mathop{\E}
        \left[\sum_{h=0}^{H-1} \sum_{s'\in S} (\widetilde{P}^{c_i}(s'| s^i_h, a^i_h)-P^{c_i}_\star(s'| s^i_h, a^i_h) )^2 \Big| \Hist_{i-1}, \right] 
        \\
        & \leq
        72H|S|\log(|\Fp|/\delta_t)
        +
        2 \sum_{i=1}^{t-1} \sum_{h=0}^{H-1} Y_{\widetilde{P},c_i,s^i_h,a^i_h,s^i_{h+1}}.
    \end{align*}
    uniformly over all  $\widetilde{P} \in \Fp$.
\end{lemma}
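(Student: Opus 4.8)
The plan is to transcribe the proof of the rewards uniform-convergence bound (Lemma~\ref{lemma:UC-F}) almost verbatim, with the scalar rewards variable replaced by the dynamics variable $Y_{\widetilde{P},c_i,s^i_h,a^i_h,s^i_{h+1}}$ of Definition~\ref{def:Y-f-RV-dynamics}, and the variance lemma (Lemma~\ref{lemma:4.2-agrwl}) replaced by its dynamics analog (Lemma~\ref{lemma:4.2-agrwl-dynamics}). The one genuinely new structural input is the layering-based independence of Observation~\ref{obs:key}: given $(c_i,s^i_h,a^i_h)$, the draw $\B(P^{c_i}_\star,s^i_h,a^i_h)$ is independent of the trajectory prefix, so the LSR samples satisfy the hypotheses of Lemma~\ref{lemma:4.2-agrwl-dynamics} and $\{Z_i(\widetilde{P})\}$ is a martingale difference sequence for the filtration $\{\Hist_i\}$ (Observation~\ref{obs:UCDD-martingale-Y}).

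First I would fix $\widetilde{P}\in\Fp$ and bound the increments: each of the $|S|$ summands defining $Y_{\widetilde{P},c_i,s^i_h,a^i_h,s^i_{h+1}}$ lies in $[-1,1]$, so $\bigl|\sum_{h=0}^{H-1} Y_{\widetilde{P},c_i,s^i_h,a^i_h,s^i_{h+1}}\bigr|\le H|S|$. I would then apply Freedman's inequality (Lemma~\ref{lemma:fridman's-ineq}) to $\{Z_i(\widetilde{P})\}_{i=1}^{t-1}$ with range parameter $b=H|S|$ and confidence $\delta_t/|\Fp|$, producing the logarithmic factor $\log(|\Fp|/\delta_t)$ and, after a union bound over the finite class $\Fp$, the overall probability $1-\log_2(t-1)\delta_t$ claimed in the lemma.

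The core estimate then proceeds exactly as for rewards. Using Observation~\ref{obs:UCDD-martingale-Y} to identify $\V[Z_i(\widetilde{P})\mid\Hist_{i-1}]$ with $\V[\sum_h Y\mid\Hist_{i-1}]$, and invoking part~3 of Lemma~\ref{lemma:4.2-agrwl-dynamics}, namely $\V[\sum_h Y\mid\Hist_{i-1}]\le 4H|S|\,\E[\sum_h Y\mid\Hist_{i-1}]$, the Freedman bound becomes a self-bounding inequality in $W:=\sum_{i=1}^{t-1}\E[\sum_h Y\mid\Hist_{i-1}]$. Completing the square as in Lemma~\ref{lemma:UC-F} (add $16H|S|\log(|\Fp|/\delta_t)$, take square roots, and use $(a+b)^2\le 2(a^2+b^2)$) delivers the additive constant $72H|S|\log(|\Fp|/\delta_t)$ stated in the lemma together with the residual term $2\sum_{i}\sum_{h} Y_{\widetilde{P},c_i,s^i_h,a^i_h,s^i_{h+1}}$. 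Finally, part~1 of Lemma~\ref{lemma:4.2-agrwl-dynamics} together with Observation~\ref{obs:UCDD-dist-of-y-c-s-a} rewrites $W$ as the expected squared $\ell_2$ dynamics error appearing on the left-hand side.

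I expect the main obstacle to be bookkeeping rather than conceptual: the factor $|S|$ enters through both the variance constant $4H|S|$ and the Freedman range $b=H|S|$, and it must be carried carefully through the quadratic rearrangement to land on the stated $72H|S|$. The single point requiring genuine care is that the conditional unbiasedness $\E_{\B}[\I[s'=s_\B]\mid c,s_h,a_h]=P^c_\star(s'|s_h,a_h)$ used in part~1 of Lemma~\ref{lemma:4.2-agrwl-dynamics} is legitimate given $\Hist_{i-1}$ precisely because of the independence in Observation~\ref{obs:key}; once this is in hand, the remainder is a mechanical transcription of the rewards argument.
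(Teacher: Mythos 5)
Your proposal is correct and follows essentially the same route as the paper's proof: fix $\widetilde{P}\in\Fp$, apply Freedman's inequality to the martingale differences $Z_i(\widetilde{P})$ with per-function confidence $\delta_t/|\Fp|$ and a union bound over $\Fp$, use part~3 of Lemma~\ref{lemma:4.2-agrwl-dynamics} to turn the conditional variance into $4H|S|$ times the conditional mean, complete the square, and finish with part~1 of that lemma plus Observation~\ref{obs:UCDD-dist-of-y-c-s-a}. The only divergence is your increment bound $\bigl|\sum_{h}Y_{\widetilde{P},c_i,s^i_h,a^i_h,s^i_{h+1}}\bigr|\le H|S|$, which is looser than the paper's $2H$ (the paper exploits that $\sum_{s'}(\widetilde{P}(s'|s,a)-\I[s'=s_\B])^2\le 2$ since the entries sum to one); this slack is harmless because the completion of the square already injects $|S|$-scaled terms, and the resulting constant still lands at or below the stated $72H|S|\log(|\Fp|/\delta_t)$.
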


The following proof is similar to the proof of~\cref{lemma:UC-F}.
\begin{proof}
    Fix a function $\widetilde{P} \in \Fp$.
    Consider the following random variables defined in~\cref{def:Y-f-RV-dynamics}
    \[
       Y_{\widetilde{P},c_i,s^i_h,a^i_h,s^i_{h+1}} =
       \sum_{s'\in S}
        \Big(\widetilde{P}^{c_i}(s'| s^i_h, a^i_h)- \I\left[s'=s^i_{h+1}\right] \Big)^2 - \Big(P^{c_i}_\star(s'| s^i_h, a^i_h) - \I\left[s'=s^i_{h+1}\right] \Big)^2
    \]
    where $s^i_{h+1} \sim \B(P^{c_i}_\star,s^i_h,a^i_h)$, for all $i \in \{1,2,\ldots,t-1\}$ and $h \in [H-1]$. 
    Note that $|Y_{\widetilde{P},c_t,s^i_h,a^i_h,s^i_{h+1}}| \leq 2$ for any $\widetilde{P},c_i,s^i_h,a^i_h,s^i_{h+1}$.\\ 
    Hence, ${\left| \sum_{h=0}^{H-1} Y_{\widetilde{P},c_i,s^i_h,a^i_h,s^i_{h+1}} \right| \leq 2 H}$ for all $i \in \{1,2,\ldots,t-1\}$.
    \\
    By Freedman's inequality (\cref{lemma:fridman's-ineq}) and~\cref{obs:UCDD-martingale-Y}, for $ \delta_t <  1/e^2$, with probability at least $1-\log_2(t-1)\delta_t/|\Fp|$ it holds that
    \begin{align*}
         &\sum_{i=1}^{t-1} 
         \mathop{\E}\left[
         \sum_{h=0}^{H-1} Y_{\widetilde{P},c_i,s^i_h,a^i_h,s^i_{h+1}} \Big|
         \Hist_{i-1}\right]
         -
        \sum_{i=1}^{t-1} \sum_{h=0}^{H-1} Y_{\widetilde{P},c_i,s^i_h,a^i_h,s^i_{h+1}}
        \\
        & \leq
        \; 4\sqrt{
        \sum_{i=1}^{t-1} 
        \V\left[\sum_{h=0}^{H-1}Y_{\widetilde{P},c_i,s^i_h,a^i_h,s^i_{h+1}}\Big|\Hist_{i-1} \right]\log(|\Fp|/\delta_t)}
        +
        2 \cdot 2H  \log(|\Fp|/\delta_t).
    \end{align*}
    By~\cref{lemma:4.2-agrwl-dynamics} part 4, for all $i \in \{1,2,\ldots,t-1\}$ it holds that
    \[
        \V\left[\sum_{h=0}^{H-1}Y_{\widetilde{P},c_i,s^i_h,a^i_h,s^i_{h+1}} \Big|\Hist_{i-1}\right]
        \le
        4H|S|\E\left[\sum_{h=0}^{H-1}Y_{\widetilde{P},c_i,s^i_h,a^i_h,s^i_{h+1}}\Big|\Hist_{i-1}\right].
    \]
    Therefore, by combine the two inequalities we obtain
    \begin{align*}
         &\sum_{i=1}^{t-1}  
         \mathop{\E}\left[\sum_{h=0}^{H-1} Y_{\widetilde{P},c_i,s^i_h,a^i_h,s^i_{h+1}}\Big|
         \Hist_{i-1}\right]
        \\
        & \leq \; 
        4 \sqrt{
        \sum_{i=1}^{t-1} 
        \V\left[\sum_{h=0}^{H-1} Y_{\widetilde{P},c_i,s^i_h,a^i_h,s^i_{h+1}}\Big|\Hist_{i-1}\right]\log(|\Fp|/\delta_t)
        }
        +
        2\cdot 2H \log(|\Fp|/\delta_t)
        +
        \sum_{i=1}^{t-1} \sum_{h=0}^{H-1} Y_{\widetilde{P},c_i,s^i_h,a^i_h,s^i_{h+1}}
        \\
        &\leq \;
        8 \sqrt{
        H|S|\sum_{i=1}^{t-1} 
        \E\left[\sum_{h=0}^{H-1} Y_{\widetilde{P},c_i,s^i_h,a^i_h,s^i_{h+1}} \Big|\Hist_{i-1}\right]\log(|\Fp|/\delta_t)
        }
        +
        4 H|S| \log(|\Fp|/\delta_t)
        +
        \sum_{i=1}^{t-1} \sum_{h=0}^{H-1} Y_{\widetilde{P},c_i,s^i_h,a^i_h,s^i_{h+1}}.
    \end{align*}
    This implies that for all $\widetilde{P} \in \Fp$,
    \begin{align*}
         &\sum_{i=1}^{t-1}  
         \mathop{\E}\left[\sum_{h=0}^{H-1} Y_{\widetilde{P},c_i,s^i_h,a^i_h,s^i_{h+1}}\Big|
         \Hist_{i-1}\right]
         -
        8 \sqrt{
        H|S|\sum_{i=1}^{t-1} 
        \E\left[\sum_{h=0}^{H-1} Y_{\widetilde{P},c_i,s^i_h,a^i_h,s^i_{h+1}} \Big|\Hist_{i-1}\right]\log(|\Fp|/\delta_t)
        }
        \\
        &\leq \;
        4 H|S| \log(|\Fp|/\delta_t)
        +
        \sum_{i=1}^{t-1} \sum_{h=0}^{H-1} Y_{\widetilde{P},c_i,s^i_h,a^i_h,s^i_{h+1}}.
    \end{align*}
    By adding $16H|S|\log(|\Fp|/\delta_t)$ to both sides of the inequality we obtain
    \begin{equation*}
        \left( \sqrt{\sum_{i=1}^{t-1}  \mathop{\E}\left[\sum_{h=0}^{H-1} Y_{\widetilde{P},c_i,s^i_h,a^i_h,s^i_{h+1}} \Big|
         \Hist_{i-1}\right]} - 4 \sqrt{H|S|\log(|\Fp|/\delta_t)} \right)^2
         \le
         20H|S| \log(|\Fp|/\delta_t) 
         + \sum_{i=1}^{t-1}
         \sum_{h=0}^{H-1} Y_{\widetilde{P},c_i,s^i_h,a^i_h,s^i_{h+1}}.
    \end{equation*}
    Hence,
    \begin{equation*}
        \sqrt{\sum_{i=1}^{t-1}  \mathop{\E}\left[\sum_{h=0}^{H-1} Y_{\widetilde{P},c_i,s^i_h,a^i_h,s^i_{h+1}} \Big|
         \Hist_{i-1}\right]}
         \le
         4 \sqrt{H|S|\log(|\Fp|/\delta_t)}
         +
         \sqrt{
         20H|S| \log(|\Fp|/\delta_t) 
         + \sum_{i=1}^{t-1}
         \sum_{h=0}^{H-1} Y_{\widetilde{P},c_i,s^i_h,a^i_h,s^i_{h+1}}},
    \end{equation*}
    yielding,
    \begin{equation*}
        \sum_{i=1}^{t-1}  \mathop{\E}\left[\sum_{h=0}^{H-1} Y_{\widetilde{P},c_i,s^i_h,a^i_h,s^i_{h+1}} \Big|
         \Hist_{i-1}\right]
         \le
         \left(
         4 \sqrt{H|S|\log(|\Fp|/\delta_t)}
         +
         \sqrt{
         20H|S| \log(|\Fp|/\delta_t) 
         + \sum_{i=1}^{t-1}
         \sum_{h=0}^{H-1} Y_{\widetilde{P},c_i,s^i_h,a^i_h,s^i_{h+1}}}\right)^2.
    \end{equation*}
    The latter inequality further implies (using $(a+b)^2 \le 2a^2 + 2b^2$) that for all $\widetilde{P} \in \Fp$ it holds that
    \begin{equation}\label{eq:lemma-6-final-dynamics}
        \sum_{i=1}^{t-1}  \mathop{\E}\left[ \sum_{h=0}^{H-1} Y_{\widetilde{P},c_i,s^i_h,a^i_h,s^i_{h+1}} \Big|\Hist_{i-1}\right]
        \le
        72H|S|\log(|\Fp|/\delta_t) 
        + 2\sum_{i=1}^{t-1} \sum_{h=0}^{H-1} Y_{\widetilde{P},c_i,s^i_h,a^i_h,s^i_{h+1}}.
    \end{equation}
    
    Lastly, by combining~\cref{eq:lemma-6-final-dynamics} with~\cref{lemma:4.2-agrwl-dynamics} we obtain the lemma since
    \begingroup
    \allowdisplaybreaks
    \begin{align*}
        &\sum_{i=1}^{t-1}  
        \mathop{\E}
        \left[ \sum_{h=0}^{H-1} \sum_{s'\in S}(\widetilde{P}^{c_i} (s'| s^i_h, a^i_h)-P^{c_i}_\star(s'| s^i_h, a^i_h) )^2  \;\Big|\; \Hist_{i-1}\right]
        \\
        \tag{By~\cref{obs:UCDD-dist-of-y-c-s-a}}
        &=
        \sum_{i=1}^{t-1} \sum_{h=0}^{H-1} \mathop{\E}_{c_i,s^i_h,a^i_h}
        \left[ \sum_{s'\in S}(\widetilde{P}^{c_i} (s'| s^i_h, a^i_h)-P^{c_i}_\star(s'| s^i_h, a^i_h) )^2 \;\Big|\; \Hist_{i-1}\right]
        \\
        \tag{By~\cref{lemma:4.2-agrwl-dynamics} part 1}
        &=
        \sum_{i=1}^{t-1} \sum_{h=0}^{H-1} \mathop{\E}_{c_i,s^i_h,a^i_h,s^i_{h+1}}
        \left[ Y_{\widetilde{P},c_i,s^i_h,a^i_h,s^i_{h+1}} \;\Big|\; \Hist_{i-1}\right]
        \\
        \tag{By~\cref{obs:UCDD-dist-of-y-c-s-a}}
        & = 
        \sum_{i=1}^{t-1}  
        \mathop{\E}
        \left[ \sum_{h=0}^{H-1} Y_{\widetilde{P},c_i,s^i_h,s^i_h,s^i_{h+1}} \;\Big|\; \Hist_{i-1}\right]
        \\
        \tag{By~\cref{eq:lemma-6-final-dynamics} }
        & \leq
       72H|S|\log(|\Fp|/\delta_t)
        +
        2 \sum_{i=1}^{t-1} \sum_{h=0}^{H-1} Y_{\widetilde{P},c_i,s^i_h,a^i_h,s^i_{h+1}}.
    \end{align*}
    \endgroup
\end{proof}

\begin{lemma}[uniform convergence over any sequence of estimators for the dynamics]\label{lemma:UC-lemma-5-dynamics}
    For any $\delta \in (0,1)$, with probability at least $1-\delta/4$ it holds that
    \begin{align*}
        &\sum_{i=1}^{t-1}   \mathop{\E}
        \left[\sum_{h=0}^{H-1} \sum_{s'\in S} (P^{c_i}_t(s'| s^i_h, a^i_h)-P^{c_i}_\star(s'| s^i_h, a^i_h) )^2 | \Hist_{i-1}\right]
        \\
        = &
        \sum_{i=1}^{t-1} \sum_{h=0}^{H-1}  \mathop{\E}_{c_i,s^i_h,a^i_h}
        \left[\sum_{s'\in S} (P^{c_i}_t(s'| s^i_h, a^i_h)-P^{c_i}_\star(s'| s^i_h, a^i_h) )^2 | \Hist_{i-1}\right]
        \\
        \leq &
        72H|S|\log(8|\Fp|t^3/\delta)\\
        & +
        2 \sum_{i=1}^{t-1} \sum_{h=0}^{H-1} 
       \sum_{s'\in S}
        \Big(P_t^{c_i}(s'| s^i_h, a^i_h)- \I\Big[s'=
        s^i_{h+1} \Big] \Big)^2
         - \Big( P^{c_i}_\star(s'| s^i_h, a^i_h) - \I\Big[s'=
        s^i_{h+1} 
        \Big] \Big)^2,
    \end{align*}
    where $s^i_{h+1} \sim \B(P^{c_i}_\star,s^i_h,a^i_h)$. 
    The above holds for any $t \geq 2$ and a fixed sequence of context-dependent dynamics $P_2,P_3,\ldots \in \Fp$ .
\end{lemma}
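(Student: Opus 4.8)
The plan is to mirror the argument already used for the rewards estimators in \cref{lemma:UC-lemma-5} (and its restatement \cref{lemma:UC-lemma-5-UCID}), since the genuinely hard probabilistic work has been isolated into the single-round statement \cref{lemma:UC-P}. That lemma already supplies, for each fixed $t \geq 2$ and each fixed confidence $\delta_t \in (0,1/e^2)$, a bound that holds with probability at least $1-\log_2(t-1)\delta_t$ simultaneously over all $\widetilde{P} \in \Fp$. So the remaining task is purely a union-bound-over-$t$ calibration, together with rewriting the left-hand side in the two equivalent expectation forms.

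First I would fix $\delta \in (0,1)$ and set $\delta_t = \delta/(8t^3)$, which is admissible since it lies in $(0,1/e^2)$ for all $t \geq 2$. For each such $t$, \cref{lemma:UC-P} gives the inequality with the term $72H|S|\log(|\Fp|/\delta_t) = 72H|S|\log(8|\Fp|t^3/\delta)$, matching exactly the constant in the target statement (the extra factor $|S|$ relative to the rewards case comes from part~3 of \cref{lemma:4.2-agrwl-dynamics}, whose variance bound carries a $4H|S|$ rather than $4H$). Next I would take a union bound over all $t \geq 2$, controlling the total failure probability by
\[
    \sum_{t \geq 2} \log_2(t-1)\,\delta_t
    \leq
    \frac{\delta}{8} \sum_{t \geq 2} \frac{\log_2(t-1)}{t^3}
    \leq
    \frac{\delta}{8} \sum_{t \geq 1} \frac{1}{t^2}
    \leq
    \frac{\delta}{4},
\]
using $\log_2(t-1) \leq t$ for $t \geq 2$ and $\sum_{t \geq 1} t^{-2} \leq 2$. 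This is precisely where the choice of $\delta_t = \delta/(8t^3)$ (as opposed to $\delta/(4t^3)$ in the rewards case) buys the sharper $\delta/4$ failure probability demanded here.

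On the resulting good event, the inequality of \cref{lemma:UC-P} holds simultaneously for every $t \geq 2$ and uniformly over $\Fp$; in particular it holds when evaluated at the specific element $\widehat{P}_t := P_t$ of the given fixed sequence, which yields the stated right-hand side with $s^i_{h+1} \sim \B(P^{c_i}_\star,s^i_h,a^i_h)$. Finally, to obtain the two equal forms of the left-hand side, I would invoke \cref{obs:UCDD-dist-of-y-c-s-a}: since each $\pi_i$ is $\Hist_{i-1}$-measurable and the context is drawn fresh, linearity of expectation interchanges the sum over $h$ with the conditional expectation, and part~1 of \cref{lemma:4.2-agrwl-dynamics} equates $\E[Y_{\widetilde{P},c_i,s^i_h,a^i_h,s^i_{h+1}} \mid \Hist_{i-1}]$ with the expected squared $\ell_2$ dynamics error. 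I expect the only delicate point to be the bookkeeping of constants and confidence levels rather than any new idea: the independence of $\B(P^{c_i}_\star,s^i_h,a^i_h)$ from the earlier trajectory prefix (\cref{obs:key}), which is what legitimizes the martingale/Freedman step, has already been absorbed into \cref{lemma:UC-P}, so no fresh concentration argument is needed here.
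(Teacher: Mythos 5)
Your proposal is correct and follows essentially the same route as the paper's own proof: fix $\delta_t = \delta/(8t^3)$, apply a union bound over $t \geq 2$ to \cref{lemma:UC-P} (bounding the total failure probability by $\sum_{t\geq 2}\log_2(t-1)\,\delta_t \leq \delta/4$), and then instantiate the resulting uniform-over-$\Fp$ bound at the fixed sequence element $P_t$, with the two forms of the left-hand side equated via \cref{obs:UCDD-dist-of-y-c-s-a} and part~1 of \cref{lemma:4.2-agrwl-dynamics}. No gaps; the constant $72H|S|\log(8|\Fp|t^3/\delta)$ and the confidence level $1-\delta/4$ come out exactly as in the paper.
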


\begin{proof}
    For a fixed $\delta \in (0,1)$, take $\delta_t = \delta/8t^3$ and apply union bound to~\cref{lemma:UC-P} with all $t \geq 2$. We have,
    \begin{align*}
        \sum_{t=1}^\infty  
        \delta_t \log(t-1)
        =
        \sum_{t=1}^\infty  \delta/8t^3  \log(t-1)
        \le
        \sum_{t=1}^\infty \frac{\delta}{8t^2}
        \le
        \frac{\delta}{4},
    \end{align*}
    Hence, by~\cref{lemma:UC-P} we have with probability at least $1-{\delta}/{4}$ that
    \begin{align*}
        &\sum_{i=1}^{t-1} \mathop{\E}
        \left[\sum_{h=0}^{H-1}  \sum_{s'\in S} (P^{c_i}_t(s'| s^i_h, a^i_h)-P^{c_i}_\star(s'| s^i_h, a^i_h) )^2 \Big| \Hist_{i-1}\right]
        \\
        & =
        \sum_{i=1}^{t-1} \sum_{h=0}^{H-1}  \mathop{\E}_{c_i,s^i_h,a^i_h}
        \left[\sum_{s'\in S} (P^{c_i}_t(s'| s^i_h, a^i_h)-P^{c_i}_\star(s'| s^i_h, a^i_h) )^2 \Big| \Hist_{i-1}\right]
        \\
        & \leq
        72H|S|\log(8|\Fp|t^3/\delta)
        +
        2 \sum_{i=1}^{t-1} \sum_{h=0}^{H-1} Y_{P_t,c_t,s^i_h,a^i_h,s^i_{h+1} }
        \\
        & =
        72H|S|\log(8|\Fp|t^3/\delta)
        +
        2 \sum_{i=1}^{t-1} \sum_{h=0}^{H-1} 
       \sum_{s'\in S}
        \Big(P_t^{c_i}(s'| s^i_h, a^i_h)- \I\Big[s'=
        s^i_{h+1} \Big] \Big)^2
         - \Big( P^{c_i}_\star(s'| s^i_h, a^i_h) - \I\Big[s'=
        s^i_{h+1} 
        \Big] \Big)^2,
    \end{align*}
    where $s^i_{h+1} \sim \B(P^{c_i}_\star,s^i_h,a^i_h)$.
    The above holds for any $t \geq 2$ and a fixed sequence ${P_2,P_3,\ldots \in \Fp}$.
\end{proof}

\paragraph{Step 2: Constructing confidence bound over policies with respect to the dynamics approximation.}

\begin{lemma}[confidence of policies over dynamics approximation]
\label{lemma:CB-policy-dynamics}
    Consider~\cref{alg:RM-UCDD}
    that at each initialization round $t \leq |A|$ plays the policy that always choose action $a_t$, and at each round $t \geq |A|+1$ 
    selects $\pi_t$ based on the history $\Hist_{t-1}$.
    
    Then, for any $\delta \in (0,1)$ with probability at least $1-\delta/4$ for all $t \geq |A|+1$ and every policy $\pi \in \Pi_\C$ the following holds, where $f_\star$ is the true expected rewards function.
        \begin{align*}
            (1)\;\;\; \E_c\left[V^{\pi(c;\cdot)}_{\M^{(f_\star,P_\star)}(c)}(s_0)\right] & - \E_c\left[V^{\pi(c;\cdot)}_{\M^{(f_\star, \widehat{P}_t)}(c)}(s_0)\right]
            \\
            \leq &
            \sqrt{\E_c \left[ \sum_{h=0}^{H-1} \sum_{s_h \in S^c_h} \frac{ H |S| \cdot q_h(s_h | \pi(c;\cdot),\widehat{P}^c_t)}{p_{min}\sum_{i=1}^{t-1} \I[\pi(c;s_h)= \pi_i(c;s_h)]}\right]}
            \cdot \sqrt{ 72  H^2  |S| \log(8|\Fp|t^3/\delta)}. 
        \end{align*}
        \begin{align*}
            (2)\;\;\;
            \E_c\left[V^{\pi(c;\cdot)}_{\M^{(f_\star, \widehat{P}_t)}(c)}(s_0)\right] & - \E_c\left[V^{\pi(c;\cdot)}_{\M^{(f_\star,P_\star)}(c)}(s_0)\right]
            \\
            \leq &
            \sqrt{\E_c \left[ \sum_{h=0}^{H-1} \sum_{s_h \in S^c_h} \frac{ H |S| \cdot q_h(s_h | \pi(c;\cdot),\widehat{P}^c_t)}{p_{min}\sum_{i=1}^{t-1} \I[\pi(c;s_h)= \pi_i(c;s_h)]}\right]}
            \cdot \sqrt{ 72 H^2  |S| \log(8|\Fp|t^3/\delta)}.
        \end{align*}
\end{lemma}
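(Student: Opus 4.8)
The plan is to mirror the confidence-bound argument for the rewards approximation (\cref{lemma:CB-policy-UCDD}), replacing the pointwise reward gap $|\hat f_t-f_\star|$ by the one-step dynamics gap $\|\widehat P^c_t(\cdot|s,a)-P^c_\star(\cdot|s,a)\|$, and invoking the dynamics uniform-convergence bound (\cref{lemma:UC-lemma-5-dynamics}) at the end instead of the rewards one. Since parts $(1)$ and $(2)$ share the same right-hand side and differ only by an overall sign, I would prove the single two-sided estimate on $|\E_c[V^{\pi(c;\cdot)}_{\M^{(f_\star,P_\star)}(c)}(s_0)]-\E_c[V^{\pi(c;\cdot)}_{\M^{(f_\star,\widehat P_t)}(c)}(s_0)]|$, from which both claimed inequalities follow at once.

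First I would invoke the Value Difference Lemma (\cref{lemma:val-diff-efroni}), rolling out the trajectory under the \emph{approximate} dynamics $\widehat P^c_t$ (this is the crucial choice, so that the occupancy weights become $q_h(\cdot\mid\pi(c;\cdot),\widehat P^c_t)$, matching $\psi_t$), to write the gap for a fixed $c$ as $\E_{\pi(c;\cdot),\widehat P^c_t}[\sum_{h}\sum_{s'}(\widehat P^c_t-P^c_\star)(s'|s_h,a_h)\,V^{\pi(c;\cdot)}_{\M^{(f_\star,P_\star)}(c),h+1}(s')]$. Bounding $|V_{h+1}|\le H$ and using $\|x\|_1\le\sqrt{|S|}\,\|x\|_2$ turns this into $H\sqrt{|S|}$ times an occupancy-weighted sum of $\|\widehat P^c_t(\cdot|s_h,\pi(c;s_h))-P^c_\star(\cdot|s_h,\pi(c;s_h))\|_2$. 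Then, exactly as in \cref{lemma:CB-policy-UCDD}, I would multiply and divide each term by $\sqrt{\sum_{i=1}^{t-1}\I[\pi(c;s_h)=\pi_i(c;s_h)]q_h(s_h\mid\pi_i(c;\cdot),P^c_\star)}$ and apply Cauchy--Schwarz over $(c,h,s_h)$. The first resulting factor, after $q_h^2\le q_h$ and the minimum-reachability bound in the denominator, equals $\sqrt{\psi_t(\pi)}$; the second factor is the occupancy-weighted square error $\sqrt{\E_c[\sum_i\sum_h\sum_{s_h}q_h(s_h\mid\pi_i,P^c_\star)\|\widehat P^c_t-P^c_\star\|_2^2]}$.

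To control that second factor I would first establish the dynamics analogue of \cref{claim:expectation-eq-given-hist}, rewriting the occupancy-weighted square error as $\sum_{i=1}^{t-1}\sum_h\E_{c_i,s^i_h,a^i_h}[\|\widehat P^{c_i}_t-P^{c_i}_\star\|_2^2\mid\Hist_{i-1}]$ (valid because each $\pi_i$ is $\Hist_{i-1}$-measurable and deterministic), and then apply \cref{lemma:UC-lemma-5-dynamics} together with the fact that $\widehat P_t$ is the least-square minimizer, so the empirical excess-loss term is nonpositive, giving $72H|S|\log(8|\Fp|t^3/\delta)$. Collecting the $H\sqrt{|S|}\cdot\sqrt{\psi_t(\pi)}\cdot\sqrt{72H|S|\log(8|\Fp|t^3/\delta)}$ factors and regrouping reproduces the stated $\sqrt{H|S|\,\psi_t(\pi)}\cdot\sqrt{72H^2|S|\log(8|\Fp|t^3/\delta)}$.

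The main obstacle I anticipate is not any single inequality but making the dynamics machinery line up with the squared-$\ell_2$ uniform-convergence bound: the Value Difference Lemma naturally produces an $\ell_1$ dynamics error, and the clean passage to the $\ell_2$ error (and hence to \cref{lemma:UC-lemma-5-dynamics}) hinges on the layered-independence fact (\cref{obs:key}) that $\B(P^{c}_\star,s_h,a_h)$ is independent of the past trajectory, which is exactly what legitimizes the conditional-expectation factorization in the dynamics analogue of \cref{claim:expectation-eq-given-hist}. Care is also needed to keep the rollout dynamics equal to $\widehat P^c_t$ throughout (rather than $P^c_\star$), since only that choice yields occupancy weights matching $\psi_t$ while keeping the denominator sampled under $P^c_\star$, so that the uniform-convergence estimator applies verbatim.
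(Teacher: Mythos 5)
Your proposal is correct and follows essentially the same route as the paper's proof: the value-difference decomposition rolled out under $\widehat{P}^c_t$ (so the occupancy weights match $\psi_t$), the multiply-and-divide Cauchy--Schwarz step against visitation counts under $P^c_\star$, the dynamics analogue of \cref{claim:expectation-eq-given-hist}, and then \cref{lemma:UC-lemma-5-dynamics} combined with the least-squares minimizer property. The only cosmetic differences are that you fold parts $(1)$ and $(2)$ into a single two-sided bound (the paper invokes \cref{lemma:val-diff-efroni} and \cref{lemma:val-diff-simple-P} separately, but its two derivations coincide after the triangle inequality) and that you extract the $\sqrt{|S|}$ factor via $\|\cdot\|_1\le\sqrt{|S|}\,\|\cdot\|_2$ rather than via Cauchy--Schwarz over the joint index set including $s'$, which yields the identical constants.
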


The following proof is similar to shown for the rewards in~\cref{lemma:CB-policy-UCDD}.
\begin{proof}
    For any $\widetilde{P} \in \Fp$ consider the random variables defined in~\cref{def:Y-f-RV-dynamics}
    \[
       Y_{\widetilde{P},c_i,s^i_h,a^i_h,s^i_{h+1}} =
        \sum_{s'\in S}
        (\widetilde{P}^{c_i}(s'| s^i_h, a^i_h)- \I[s'=
        s^i_{h+1}] )^2
        - (P^{c_i}_\star(s'| s^i_h, a^i_h) - \I[s'=
        s^i_{h+1}] )^2,
    \]
   where $s^i_{h+1} \sim \B(P^{c_i}_\star,s^i_h,a^i_h)$,
    and $(c_t, s^t_h, a^t_h) \sim \D(c_t) \cdot q_h(s^t_h, a^t_h | \pi_t(c_t; \cdot), P^{c_t})$,
    for all ${t \in \{1,2,\ldots,T\}}$ and $h \in [H-1]$.
    
    We first show the following auxiliary claim.
    \begin{claim}\label{claim:expectation-eq-given-hist-UCDD-dynamics}
        For all $t \geq 2$ it holds that
        \begin{align*}
             &\sum_{i=1}^{t-1} \sum_{h=0}^{H-1}\mathop{\E}_{c_i, s^i_h,a^i_h}
            \left[\sum_{s' \in S} (\widehat{P}^{c_i}_t(s'|s^i_h,a^i_h) - P^{c_i}_\star(s'|s^i_h,a^i_h) )^2 |\Hist_{i-1}\right]
            =
            \\
            &\E_c
            \left[
            \sum_{i=1}^{t-1} \sum_{h=0}^{H-1}\sum_{s_h \in S^c_h} q_h(s_h | \pi_i(c;\cdot),P^c_\star)\sum_{s' \in S}(\widehat{P}^{c_i}_t(s'|s^i_h,\pi_i(c_i, s^i_h)) - P^{c_i}_\star(s'|s^i_h,\pi_i(c_i, s^i_h)) )^2 
            \right]
        \end{align*}
    \end{claim}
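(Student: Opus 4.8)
The plan is to follow the proof of the rewards analogue, \cref{claim:expectation-eq-given-hist}, essentially verbatim, after observing that the per-triple quantity
\[
    g_t(c,s,a) := \sum_{s' \in S} \bigl(\widehat{P}^c_t(s'|s,a) - P^c_\star(s'|s,a)\bigr)^2
\]
is a \emph{deterministic} function of $(c,s,a)$ alone: it does not depend on which next state was actually sampled along the trajectory. Hence $g_t$ plays exactly the role that $(f-f_\star)^2$ played for the rewards, and the identity to be proved is precisely
\[
    \sum_{i=1}^{t-1} \sum_{h=0}^{H-1} \mathop{\E}_{c_i,s^i_h,a^i_h}\bigl[ g_t(c_i,s^i_h,a^i_h) \mid \Hist_{i-1}\bigr]
    = \E_c\Bigl[ \sum_{i=1}^{t-1}\sum_{h=0}^{H-1} \sum_{s_h \in S^c_h} q_h(s_h\mid \pi_i(c;\cdot),P^c_\star)\, g_t(c,s_h,\pi_i(c;s_h)) \Bigr].
\]

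First I would fix a round $i \in \{1,\dots,t-1\}$ and a layer $h$, and condition on $\Hist_{i-1}$. For $i \le |A|$ the policy $\pi_i$ is the fixed policy playing $a_i$, while for $i > |A|$ it is $\Hist_{i-1}$-measurable and deterministic; in either case $a^i_h = \pi_i(c_i;s^i_h)$, so I may substitute $\pi_i(c_i;s^i_h)$ for $a^i_h$ inside the conditional expectation. Next I would split the expectation over $(c_i,s^i_h)$ — whose conditional law given $\Hist_{i-1}$ is recorded in \cref{obs:UCDD-dist-of-y-c-s-a} — into an outer expectation over the context $c$ (drawn i.i.d.\ from $\D$, independent of $\Hist_{i-1}$) and an inner expectation over the visited state $s_h$, which given $c$ and $\Hist_{i-1}$ is distributed according to the occupancy measure $q_h(\cdot \mid \pi_i(c;\cdot),P^c_\star)$, using that the trajectory is generated by the \emph{true} dynamics $P^c_\star$ and that $\pi_i$ is determined by $\Hist_{i-1}$. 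Writing this inner expectation out explicitly gives $\sum_{s_h \in S^c_h} q_h(s_h \mid \pi_i(c;\cdot),P^c_\star)\, g_t(c,s_h,\pi_i(c;s_h))$. Finally, summing over $i$ and $h$ and invoking linearity of expectation to pull the finite double sum outside the outer expectation over $c$ yields the claim.

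I expect the only delicate point to be the conditional-independence bookkeeping in the second step: the context $c_i$ is independent of the past history $\Hist_{i-1}$, whereas the state $s^i_h$ depends on both $c_i$ and $\Hist_{i-1}$ through the $\Hist_{i-1}$-measurable policy $\pi_i$. This is exactly the structure already handled in \cref{claim:expectation-eq-given-hist}, and since $g_t$ depends only on $(c,s,a)$ — and in particular is unaffected by the realized next states that drive the LSR objective — no new argument involving the random variable $\B$ or \cref{obs:key} is needed here; that independence is required only later, for the uniform-convergence bound of Step~1. I would also flag the mild abuse of notation on the right-hand side of the statement, where the summand is written with $s^i_h$ and $\pi_i(c_i,s^i_h)$ but is to be read with the dummy state $s_h \in S^c_h$ and $\pi_i(c;s_h)$, matching the occupancy-measure expansion.
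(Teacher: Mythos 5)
Your proposal is correct and follows essentially the same route as the paper's own proof: the paper likewise fixes $(i,h)$, substitutes $a^i_h = \pi_i(c_i;s^i_h)$ using that $\pi_i$ is deterministic and $\Hist_{i-1}$-measurable, splits the conditional expectation into an outer expectation over the context and an inner occupancy-measure expansion over $s_h$ with respect to $q_h(\cdot\mid\pi_i(c;\cdot),P^c_\star)$, and concludes by summing over $i,h$ and applying linearity of expectation. Your key observation — that $\sum_{s'}\bigl(\widehat{P}^c_t(s'|s,a)-P^c_\star(s'|s,a)\bigr)^2$ is a deterministic function of $(c,s,a)$, so the argument of \cref{claim:expectation-eq-given-hist} transfers with no appeal to $\B$ or \cref{obs:key} — is exactly what makes the paper's proof a verbatim adaptation, and your flag about the notational abuse on the right-hand side of the claim matches what the paper's own derivation actually produces.
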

    
    \begin{proof}
        Recall that for all $ i\in \{1,2,\ldots,|A|\}$,  $\pi_i$ is a deterministically selected policy uses for initialization. For all $i > |A|$, $\pi_i$ is determined completely by the history $\Hist_{i-1}$
        and is a deterministic policy. 
        Hence, for any function $\widetilde{P} \in \Fp$, round $i \in \{1, \ldots, t-1\}$ and layer $h \in [H-1]$ the following holds.
        \begin{equation}\label{eq:lemma1-1-UCDD-dynamics}
            \begin{split}
                &\mathop{\E}_{c_i, s^i_h,a^i_h}
                \left[ \sum_{s' \in S}(\widetilde{P}^{c_i}(s'|s^i_h,a^i_h) - P^{c_i}_\star(s'|s^i_h,a^i_h) )^2 \Big|\Hist_{i-1}\right]
                \\
                = &
                \mathop{\E}_{c_i, s^i_h}\left[ \sum_{s' \in S}(\widetilde{P}^{c_i}(s'|s^i_h,\pi_i(c_i; s^i_h)) - P_\star^{c_i}(s'|s^i_h,\pi_i(c_i; s^i_h)) )^2 \Big|\Hist_{i-1}\right]
                \\
                = &
                \E_c \left[ 
                \mathop{\E}_{ s_h} \left[ \sum_{s' \in S}(\widetilde{P}^c(s'|s_h,\pi_i(c; s_h)) - P_\star^c(s'|s_h,\pi_i(c; s_h)) )^2 \Big|\Hist_{i-1},c\right]\right]
                \\
                = &
                \E_c \left[ 
                 \sum_{s_h \in S^c_h} q_h(s_h | \pi_i(c;\cdot),P^c_\star)  \sum_{s' \in S}(\widetilde{P}^c(s'|s_h,\pi_i(c; s_h)) - P_\star^c(s'|s_h,\pi_i(c; s_h)) )^2 
                 \right] 
                ,
        \end{split}
    \end{equation}
    where the first identity is because $a^i_h = \pi_i(c_i; s^i_h)$ and $\pi_i$ is determined completely given $\Hist_{i-1}$. 
    The second identity is because $c_i$ is independent of $\Hist_{i-1}$, but $s^i_h$ is dependent on both $c_i$ and $\Hist_{i-1}$. (The dependency of $s^i_h$ in $\Hist_{i-1}$ is through the policy $\pi_i$).
    %
    The third identity is an explicit representation of the expectation over $s_h$ given the context $c$ and the history $\Hist_{i-1}$ since $\pi_i$ is determined by $\Hist_{i-1}$.
   \\
    By summing over $i = 1,2,\ldots, t-1$ and $h \in [H-1]$ we obtain the claim since,
    \begin{align*}
        &\sum_{i=1}^{t-1} \sum_{h=1}^{H-1}
        \mathop{\E}_{c_i, s^i_h,a^i_h}
        \left[ \sum_{s' \in S}(\widetilde{P}^{c_i}(s'|s^i_h,a^i_h) - P^{c_i}_\star(s'|s^i_h,a^i_h) )^2 |\Hist_{i-1}\right]
        =
        \\
        \tag{By~\cref{eq:lemma1-1-UCDD-dynamics}}
        = &
        \sum_{i=1}^{t-1} \sum_{h=1}^{H-1}
        \E_c \left[ 
        \sum_{s_h \in S^c_h} q_h(s_h | \pi_i(c;\cdot),P^c_\star)  \sum_{s' \in S}(\widetilde{P}^c(s'|s_h,\pi_i(c; s_h)) - P_\star^c(s'|s_h,\pi_i(c; s_h)) )^2 
        \right]
        \\
        \tag{By linearity of expectation}
        = &
        \E_c \left[  
        \sum_{i=1}^{t-1} \sum_{h=1}^{H-1}
        \sum_{s_h \in S^c_h} q_h(s_h | \pi_i(c;\cdot),P^c_\star)  \sum_{s' \in S}(\widetilde{P}^c(s'|s_h,\pi_i(c; s_h)) - P_\star^c(s'|s_h,\pi_i(c; s_h)) )^2 
        \right],
    \end{align*}
    as stated.
    \end{proof}

    We now return to the proof of the lemma.
    By~\cref{lemma:UC-lemma-5-dynamics}, for any $\delta \in (0,1)$ with probability at least $1-\delta/4$ it holds that
    
    \begin{align*}
        &\sum_{i=1}^{t-1}   \mathop{\E}
        \left[\sum_{h=0}^{H-1} \sum_{s'\in S} (P^{c_i}_t(s'| s^i_h, a^i_h)-P^{c_i}_\star(s'| s^i_h, a^i_h) )^2 | \Hist_{i-1}\right]
        \\
        = &
        \sum_{i=1}^{t-1} \sum_{h=0}^{H-1}  \mathop{\E}_{c_i,s^i_h,a^i_h}
        \left[\sum_{s'\in S} (P^{c_i}_t(s'| s^i_h, a^i_h)-P^{c_i}_\star(s'| s^i_h, a^i_h) )^2 | \Hist_{i-1}\right]
        \\
        \leq &
        72H|S|\log(8|\Fp|t^3/\delta)
        +
        2 \sum_{i=1}^{t-1} \sum_{h=0}^{H-1} 
       \sum_{s'\in S}
        \Big(P_t^{c_i}(s'| s^i_h, a^i_h)- \I\Big[s'=
        s^i_{h+1} \Big] \Big)^2
         - \Big( P^{c_i}_\star(s'| s^i_h, a^i_h) - \I\Big[s'=
        s^i_{h+1}
        \Big] \Big)^2.
    \end{align*}
    %
    %
    The above hold simultaneously for all $t \geq |A|+1$.
    \\
    Since $\widehat{P}_t$ is the least square minimizer, it holds that
    \[
        \sum_{i=1}^{t-1} \sum_{h=0}^{H-1}  \sum_{s'\in S}
        (\widehat{P}^{c_i}_t(s'| s^i_h, a^i_h)- \I[s'=
        s^i_{h+1}
        ] )^2 - (P^{c_i}_\star(s'| s^i_h, a^i_h) - \I[s'=
        s^i_{h+1}] )^2\leq 0.
    \]

    For item $(1)$, using the above and the value difference lemma, (see~\cref{lemma:val-diff-efroni}), for every context-dependent policy $\pi \in \Pi_\C$ and round $t >  |A|$ the following holds.
    \begingroup
    \allowdisplaybreaks
    \begin{align*}
        & \E_c\left[V^{\pi(c;\cdot)}_{\M^{(f_\star,P_\star)}(c)}(s_0)\right] - \E_c\left[V^{\pi(c;\cdot)}_{\M^{(f_\star, \widehat{P}_t)}(c)}(s_0)\right]
        \\
        = &
        \E_c\left[V^{\pi(c;\cdot)}_{\M^{(f_\star,P_\star)}(c)}(s_0)-V^{\pi(c;\cdot)}_{\M^{(f_\star, \widehat{P}_t)}(c)}(s_0)\right]
        \\
        \tag{\cref{lemma:val-diff-efroni}}
        = & 
        \E_c \left[ 
        \E_{\pi(c;\cdot),\widehat{P}^c_t }
        \left[
        \sum_{h=0}^{H-1} 
        \sum_{s' \in S}
        (P^c_\star(s'|s_h,a_h) - \widehat{P}^c_t(s'|s_h,a_h))V^{\pi(c;\cdot)}_{\M^{(f_\star, P_\star)},h+1}(s')\Bigg| s_0 \right] \right] 
        \\
        \tag{By triangle inequality}
        \leq &
        \E_c \left[ 
        \E_{\pi(c;\cdot),\widehat{P}^c_t }
        \left[
        \sum_{h=0}^{H-1} 
        \sum_{s' \in S}
        |P^c_\star(s'|s_h,a_h) - \widehat{P}^c_t(s'|s_h,a_h)|V^{\pi(c;\cdot)}_{\M^{(f_\star, P_\star)},h+1}(s')  \Bigg| s_0 \right]\right]
        \\
        \tag{Since $f_\star \in [0,1]$}
        \leq &
        \E_c \left[ 
        \E_{\pi(c;\cdot),\widehat{P}^c_t }
        \left[
        \sum_{h=0}^{H-1} 
        \sum_{s' \in S}
        |P^c_\star(s'|s_h,a_h) - \widehat{P}^c_t(s'|s_h,a_h)|\cdot H  \Bigg| s_0 \right]\right]
        \\
        \leq &
        \tag{Explicit representation of the expectation using occupancy measure}
        H \E_c \left[ 
        \sum_{h=0}^{H-1} 
        \sum_{s_h \in S^c_h}\sum_{a_h \in A}
        q_h(s_h,a_h| \pi(c;\cdot), \widehat{P}^c_t ) 
        \cdot
        \sum_{s' \in S}|P^c_\star(s'|s_h,a_h) - \widehat{P}^c_t(s'|s_h ,a_h)|   \right] 
        \\
        \tag{Since $\pi$ is a deterministic context-dependent policy}
        = &
        H \E_c \left[ 
        \sum_{h=0}^{H-1} 
        \sum_{s_h \in S^c_h}\sum_{a_h \in A}
        q_h(s_h| \pi(c;\cdot), \widehat{P}^c_t ) 
        \cdot \I[a_h=\pi(c;s_h)] 
        \cdot
        \sum_{s' \in S}|P^c_\star(s'|s_h,a_h) - \widehat{P}^c_t(s'|s_h ,a_h)|   \right] 
        \\
        \tag{The non-zero terms are where $a_h = \pi(c;s_h)$.}
        = &
         H \E_c \left[ 
        \sum_{h=0}^{H-1} 
        \sum_{s_h \in S^c_h}
        q_h(s_h| \pi(c;\cdot), \widehat{P}^c_t )
        \sum_{s' \in S}|P^c_\star(s'|s_h,\pi(c;s_h)) - \widehat{P}^c_t(s'|s_h ,\pi(c;s_h))|  \right] 
        \\
        \tag{By multiplication in $\frac{  \sqrt{\sum_{i=1}^{t-1} \I[\pi(c;s_h)= \pi_i(c;s_h)] q_h(s_h | \pi_i(c;\cdot),P^c_\star)}}
        {\sqrt{\sum_{i=1}^{t-1} \I[\pi(c;s_h)= \pi_i(c;s_h)]q_h(s_h | \pi_i(c;\cdot),P^c_\star)}}$}
        = &
        \E_c \Bigg[  \sum_{h=0}^{H-1} \sum_{s_h \in S^c_h}
        (\sqrt{H})^2 q_h(s_h | \pi(c;\cdot),\widehat{P}^c_t)
        \\
        &
        \cdot
        \frac{  \sqrt{\sum_{i=1}^{t-1} 
        \I[\pi(c;s_j)= \pi_i(c;s_h)] q_h(s_h | \pi_i(c;\cdot),P^c_\star)}}
        {\sqrt{\sum_{i=1}^{t-1} \I[\pi(c;s_h)= \pi_i(c;s_h)]q_h(s_h | \pi_i(c;\cdot),P^c_\star)}}\cdot
        \sum_{s' \in S}|P^c_\star(s'|s_h, \pi(c;s_h)) - \widehat{P}^c_t(s'|s_h, \pi(c;s_h))| \Bigg]
        \\
        \tag{Re-arranging}
        =& 
       \sum_{c \in \C} \sum_{h=0}^{H-1} \sum_{s_h \in S^c_h}  \sum_{s'\in S} \sqrt{\D(c)}
       \sqrt{H}
        \frac{ q_h(s_h | \pi(c;\cdot),\widehat{P}^c_t)}
        {\sqrt{\sum_{i=1}^{t-1} \I[\pi(c;s_h)= \pi_i(c;s_h)]q_h(s_h | \pi_i(c;\cdot),P^c_\star)}}
        \\
       &\cdot 
       \sqrt{\D(c)} \sqrt{H} \sqrt{\sum_{i=1}^{t-1} \I[\pi(c;s_h)= \pi_i(c;s_h)] q_h(s_h | \pi_i(c;\cdot),P^c_\star)}
        |P^c_\star(s'|s,\pi(c;s)) - \widehat{P}^c_t(s'|s,\pi(c;s))| 
        \\
        \tag{By Cauchy-Schwartz}
         \leq &
        \sqrt{\E_c \left[ \sum_{h=0}^{H-1} \sum_{s_h \in S^c_h} \sum_{s'\in S}  \frac{H \cdot q^2_h(s_h | \pi , \widehat{P}^c_t)}{\sum_{i=1}^{t-1} \I[\pi(c;s_h)= \pi_i(c;s_h)]q_h(s_h | \pi_i(c;\cdot),P^c_\star)}\right]}
        \\
        &\cdot \sqrt{ H \cdot \E_c \left[ \sum_{h=0}^{H-1} 
        \sum_{s_h \in S^c_h}
        \sum_{s'\in S}
        \sum_{i=1}^{t-1}
        \I[\pi(c;s_h)= \pi_i(c;s_h)] q_h(s_h | \pi_i(c;\cdot),P^c_\star)
        \cdot(P^c_\star(s'|s, \pi(c;s)) - \widehat{P}^c_t(s'|s, \pi(c;s)))^2 \right]}
        \\
        \tag{By $q^2_h(s_h | \pi(c;\cdot), \widehat{P}^c_t) \leq q_h(s_h | \pi(c;\cdot), \widehat{P}^c_t)$ and change of summing order}
        \leq &
        \sqrt{\E_c \left[ \sum_{h=0}^{H-1} \sum_{s_h \in S^c_h}  \frac{ H\cdot |S| \cdot q_h(s_h | \pi(c;\cdot) ,\widehat{P}^c_t)}{\sum_{i=1}^{t-1} \I[\pi(c;s_h)= \pi_i(c;s_h)]q_h(s_h | \pi_i(c;\cdot),P^c_\star)}\right]}
        \\
        &\cdot \sqrt{  H \E_c \left[\sum_{i=1}^{t-1}\sum_{h=0}^{H-1} \sum_{s_h \in S^c_h}  q_h(s_h | \pi_i(c;\cdot),P^c_\star)   \I[\pi(c;s_h)= \pi_i(c;s_h)]  \sum_{s'\in S} (\widehat{P}_t^c(s'|s_h, \pi(c;s_h)) - P^c_\star(s'|s_h, \pi(c;s_h)))^2 \right]}
        \\
        \tag{All terms are non-negative, and the non-zero terms are where $\pi_i(c;s_h) = \pi(c;s_h)$}
        = &
        \sqrt{\E_c \left[ \sum_{h=0}^{H-1} \sum_{s_h \in S^c_h}   \frac{ H \cdot |S| \cdot q_h(s_h | \pi(c;\cdot) ,\widehat{P}^c_t)}{\sum_{i=1}^{t-1} \I[\pi(c;s_h)= \pi_i(c;s_h)]q_h(s_h | \pi_i(c;\cdot),P^c_\star)}\right]}
        \\
        &\cdot \sqrt{ H \cdot  
        \E_c \left[
        \sum_{i=1}^{t-1} 
        \sum_{h=0}^{H-1} \sum_{s_h \in S^c_h}   
        q_h(s_h | \pi_i(c;\cdot),P^c_\star)
        \sum_{s'\in S}(\widehat{P}_t^c(s'|s_h, \pi_i(s_h)) - P^c_\star(s'|s_h, \pi_i(s_h)))^2 \right]}
        \\
        \tag{By~\cref{claim:expectation-eq-given-hist-UCDD-dynamics}}
        = &
        \sqrt{\E_c \left[ \sum_{h=0}^{H-1} \sum_{s_h \in S^c_h} 
        \frac{ H \cdot |S| \cdot q_h(s_h | \pi(c;\cdot) ,\widehat{P}^c_t)}{\sum_{i=1}^{t-1} \I[\pi(c;s_h)= \pi_i(c;s_h)]q_h(s_h | \pi_i(c;\cdot),P^c_\star)}\right]} \cdot 
        \\
        &\cdot 
        \sqrt{H
        \sum_{i=1}^{t-1}  \sum_{h=1}^{H-1} 
        \mathop{\E}_{c_i, s^i_h,a^i_h}
        \left[
        \sum_{s'\in S}
        (\widehat{P}^c_t(s'|s^i_h,a^i_h)-P^c_\star(s'|s^i_h,a^i_h))^2 \;\Bigg|\Hist_{i-1} \right] }
        \\
        \tag{By~\cref{lemma:UC-lemma-5-dynamics} combined with the fact that $\widehat{P}_t$ is the least-square minimizer}
        \leq &
        \sqrt{\E_c \left[ \sum_{h=0}^{H-1} \sum_{s_h \in S^c_h} \frac{ H \cdot |S| \cdot q_h(s_h | \pi(c;\cdot),\widehat{P}^c_t)}{\sum_{i=1}^{t-1} \I[\pi(c;s_h)= \pi_i(c;s_h)]q_h(s_h | \pi_i(c;\cdot),P^c_\star)}\right]}
        \cdot \sqrt{ H \cdot 72 H |S| \log(8|\Fp|t^3/\delta)}
        \\
        \tag{By Minimum reachability of $P^c_\star$}
        \leq &
        \sqrt{\E_c \left[ \sum_{h=0}^{H-1} \sum_{s_h \in S^c_h} \frac{ H  |S| \cdot q_h(s_h | \pi(c;\cdot),\widehat{P}^c_t)}{p_{min}\sum_{i=1}^{t-1} \I[\pi(c;s_h)= \pi_i(c;s_h)]}\right]}
        \cdot \sqrt{  72 H^2 |S| \log(8|\Fp|t^3/\delta)}
        . 
    \end{align*}
    \endgroup
    
The above proves $(1)$. 

For $(2)$, by~\cref{lemma:val-diff-simple-P} it holds that,
\begin{align*}
   &\E_c[V^{\pi(c;\cdot)}_{\M^{(f_\star,\widehat{P}_t)}(c)}(s_0)] - \E_c[V^{\pi(c;\cdot)}_{\M^{(f_\star, P_\star)}(c)}(s_0)]
    \\
    \tag{\cref{lemma:val-diff-simple-P}}
    = & 
    \E_c \left[ 
    \E_{\pi(c;\cdot),\widehat{P}^c_t }
    \left[
    \sum_{h=0}^{H-1} 
    \sum_{s' \in S}
    (\widehat{P}^c_t(s'|s_h,a_h) -  P^c_\star(s'|s_h,a_h))V^{\pi(c;\cdot)}_{\M^{(f_\star,P_\star)},h+1}(s') \right] \Bigg| s_0 \right]
    \\
    \leq &
    \E_c \left[ 
    \E_{\pi(c;\cdot),\widehat{P}^c_t }
    \left[
    \sum_{h=0}^{H-1} 
    \sum_{s' \in S}
    \left|P^c_\star(s'|s_h,a_h) -   \widehat{P}^c_t(s'|s_h,a_h)\right|V^{\pi(c;\cdot)}_{\M^{(f_\star,P_\star)},h+1}(s') \right] \Bigg| s_0 \right].
\end{align*}
Now, using an identical derivation to that showed above (from the third inequality on), we obtain $(2)$.

Lastly, we remark that by the choice of $\pi_i$ for all $i \in \{1,2,\ldots,|A|\}$, and the minimum reachability assumption for any deterministic context-dependent policy $\pi \in \Pi_\C$, context $c \in \C$, layer $h \in [H-1]$ and state $s_h \in S^c_h$ it holds that
\begin{align*}
    \sum_{i=1}^{t-1} \I[\pi(c;s_h)= \pi_i(c;s_h)]q_h(s_h | \pi_i(c;\cdot),P^c_\star)
    \geq
    p_{min} \cdot \sum_{i=1}^{t-1} \I[\pi(c;s_h)= \pi_i(c;s_h)]
    \geq 
     p_{min} > 0,
\end{align*}
hence the above is well defined.
\end{proof}

\paragraph{Step 3: Relax the confidence bound to be additive.}

\begin{lemma}[the ``square trick'' relaxation for dynamics approximation]\label{lemma:sq-trick-UCDD}
    Under the good event of~\cref{lemma:CB-policy-dynamics} for all $t>|A|$ and any context-dependent policy $\pi \in \Pi_\C$ the followings hold for ${\gamma_t = \sqrt{ \frac{18 t \log(8|\Fp|t^3/\delta) }{|S| |A|}}}$.
    \begin{align*}
        (1) \quad
        \E_c\left[V^{\pi(c;\cdot)}_{\M^{(f_\star,P_\star)}(c)}(s_0)\right]  - \E_c\left[V^{\pi(c;\cdot)}_{\M^{(f_\star, \widehat{P}_t)}(c)}(s_0)\right]
        \leq &
        \E_c \left[ \sum_{h=0}^{H-1} \sum_{s_h \in S^c_h} \frac{H  |S|  \gamma_t \cdot  q_h(s_h | \pi(c;\cdot),\widehat{P}^c_t)}{p_{min}\sum_{i=1}^{t-1} \I[\pi(c;s_h)= \pi_i(c;s_h)]}\right]
        \\
        & + 
        \gamma_t \frac{H^2  |S|^2 |A|}{t}.
    \end{align*}
    \begin{align*}
        (2) \quad
        \E_c\left[V^{\pi(c;\cdot)}_{\M^{(f_\star, \widehat{P}_t)}(c)}(s_0)\right]
        -\E_c\left[V^{\pi(c;\cdot)}_{\M^{(f_\star,P_\star)}(c)}(s_0)\right]  
        \leq &
        \E_c \left[ \sum_{h=0}^{H-1} \sum_{s_h \in S^c_h} \frac{H  |S|\gamma_t \cdot  q_h(s_h | \pi(c;\cdot),\widehat{P}^c_t)}{p_{min}\sum_{i=1}^{t-1} \I[\pi(c;s_h)= \pi_i(c;s_h)]}\right]
        \\
        & + 
        \gamma_t \frac{H^2 |S|^2 |A|}{t}.
    \end{align*}
\end{lemma}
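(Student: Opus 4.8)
The plan is to mirror, almost verbatim, the ``square trick'' relaxation carried out for the rewards approximation in~\cref{lemma:sq-trick}, now starting from the multiplicative confidence bound of~\cref{lemma:CB-policy-dynamics}. That lemma already supplies both inequalities $(1)$ and $(2)$ with the same right-hand side, namely
$$
\sqrt{\E_c\left[\sum_{h=0}^{H-1}\sum_{s_h\in S^c_h}\frac{H|S|\,q_h(s_h|\pi(c;\cdot),\widehat{P}^c_t)}{p_{min}\sum_{i=1}^{t-1}\I[\pi(c;s_h)=\pi_i(c;s_h)]}\right]}\cdot\sqrt{72H^2|S|\log(8|\Fp|t^3/\delta)},
$$
so it suffices to relax this single expression and then apply the relaxation to each of the two directions in turn. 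Writing the first factor as $\sqrt{H|S|\,\psi_t(\pi)}$ and abbreviating $D:=72H^2|S|\log(8|\Fp|t^3/\delta)$, the goal is to convert the geometric-mean bound $\sqrt{H|S|\,\psi_t(\pi)}\cdot\sqrt{D}$ into the additive form $H|S|\gamma_t\,\psi_t(\pi)+\gamma_t H^2|S|^2|A|/t$.

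First I would split off a factor of $\gamma_t$, rewriting the product as $\sqrt{\gamma_t H|S|\,\psi_t(\pi)}\cdot\sqrt{D/\gamma_t}$, which is legitimate since the total argument under the roots is unchanged. The key computation is then to simplify $D/\gamma_t$ for the prescribed value $\gamma_t=\sqrt{18t\log(8|\Fp|t^3/\delta)/(|S||A|)}$. Using the arithmetic identity $72/\sqrt{18}=4\sqrt{18}$ (the exact analogue of $68/\sqrt{17}=4\sqrt{17}$ exploited in the rewards case), or equivalently checking directly that $72Lt=4|S||A|\gamma_t^2$ with $L=\log(8|\Fp|t^3/\delta)$, one verifies the exact equality
$$
\frac{D}{\gamma_t}=4\,\gamma_t\,\frac{H^2|S|^2|A|}{t},
$$
so that $\sqrt{D/\gamma_t}=2\sqrt{\gamma_t H^2|S|^2|A|/t}$. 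This turns the bound into $2\sqrt{\gamma_t H|S|\,\psi_t(\pi)}\cdot\sqrt{\gamma_t H^2|S|^2|A|/t}$.

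Finally I would apply the elementary inequality $2ab\le a^2+b^2$ with $a=\sqrt{\gamma_t H|S|\,\psi_t(\pi)}$ and $b=\sqrt{\gamma_t H^2|S|^2|A|/t}$, which yields exactly $H|S|\gamma_t\,\psi_t(\pi)+\gamma_t H^2|S|^2|A|/t$, establishing both $(1)$ and $(2)$ since the two directions of~\cref{lemma:CB-policy-dynamics} carry the identical right-hand side. I do not expect a genuine obstacle: the argument is a mechanical calculation once~\cref{lemma:CB-policy-dynamics} is available. The only points demanding care are the bookkeeping of the extra $H|S|$ factor that distinguishes the dynamics confidence bound from the rewards one---this is precisely what promotes the additive term from $\gamma_t H|S||A|/t$ (rewards) up to $\gamma_t H^2|S|^2|A|/t$ (dynamics)---and verifying that the constants line up so that $\sqrt{D/\gamma_t}$ produces exactly the factor of $2$ needed to invoke the $2ab\le a^2+b^2$ step cleanly.
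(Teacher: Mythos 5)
Your proposal is correct and follows essentially the same route as the paper's proof: start from the two symmetric confidence bounds of~\cref{lemma:CB-policy-dynamics}, insert a factor of $\gamma_t$ inside one square root and $1/\gamma_t$ inside the other, use the choice of $\gamma_t$ to verify that the second factor equals $2\sqrt{\gamma_t H^2|S|^2|A|/t}$ (your identity $D/\gamma_t = 4\gamma_t H^2|S|^2|A|/t$ is exactly right, since $72/18=4$), and finish with $2ab\le a^2+b^2$. Your presentation is merely a more compact packaging of the paper's longer chain of algebraic rewrites, and the treatment of item $(2)$ by symmetry of the right-hand sides matches the paper as well.
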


\begin{proof}
    For item $(1)$, consider the following derivation,
    where $\gamma_t = \sqrt{ \frac{18 t \log(8|\Fp|t^3/\delta) }{|S| |A|}}$.
    \begingroup
    \allowdisplaybreaks
    \begin{align*}
        &\E_c\left[V^{\pi(c;\cdot)}_{\M^{(f_\star,P_\star)}(c)}(s_0)\right]  - \E_c\left[V^{\pi(c;\cdot)}_{\M^{(f_\star, \widehat{P}_t)}(c)}(s_0)\right]
        \\
        \tag{By item $(1)$ in~\cref{lemma:CB-policy-dynamics}}
        \leq &
        \sqrt{\E_c \left[ \sum_{h=0}^{H-1} \sum_{s_h \in S^c_h} \frac{ H  |S| \cdot q_h(s_h | \pi(c;\cdot),\widehat{P}^c_t)}{p_{min}\sum_{i=1}^{t-1} \I[\pi(c;s_h)= \pi_i(c;s_h)]}\right]}
        \cdot \sqrt{ 72 H^2 |S| \log(8|\Fp|t^3/\delta)}
        \\
        = &
        \sqrt{\E_c \left[ \sum_{h=0}^{H-1} \sum_{s_h \in S^c_h} \frac{\gamma_t \cdot H  |S| \cdot q_h(s_h | \pi(c;\cdot),\widehat{P}^c_t)}{p_{min}\sum_{i=1}^{t-1} \I[\pi(c;s_h)= \pi_i(c;s_h)]}\right]}
        \cdot 
        \sqrt{\frac{1}{\gamma_t} 72 H^2 |S| \log(8|\Fp|t^3/\delta)}
        \\
        = & 
        \sqrt{\E_c \left[ \sum_{h=0}^{H-1} \sum_{s_h \in S^c_h} \frac{\gamma_t\cdot H  |S| \cdot q_h(s_h | \pi(c;\cdot),\widehat{P}^c_t)}{p_{min}\sum_{i=1}^{t-1} \I[\pi(c;s_h)= \pi_i(c;s_h)]}\right]}
        \cdot
        \sqrt{ \sqrt{\frac{|S||A|}{t}} \frac{4  H^2 |S|  18 \log(8|\Fp|t^3/\delta)}{\sqrt{18  \log(8|\Fp|t^3/\delta)}}}
        \\
        = & 
        2 \sqrt{\E_c \left[ \sum_{h=0}^{H-1} \sum_{s_h \in S^c_h} \frac{\gamma_t  \cdot H |S| \cdot q_h(s_h | \pi(c;\cdot),\widehat{P}^c_t)}{p_{min}\sum_{i=1}^{t-1} \I[\pi(c;s_h)= \pi_i(c;s_h)]}\right]}
        \cdot 
        \sqrt{ H^2 |S| \sqrt{ \frac{|S| |A|}{t}}  \sqrt{18\log(8|\Fp|t^3/\delta)}}
        \\
        = & 
        2\sqrt{ \E_c \left[ \sum_{h=0}^{H-1} \sum_{s_h \in S^c_h} \frac{\gamma_t \cdot H  |S| \cdot q_h(s_h | \pi(c;\cdot),\widehat{P}^c_t)}{p_{min}\sum_{i=1}^{t-1} \I[\pi(c;s_h)= \pi_i(c;s_h)]}\right]}
        \\
        & \cdot \sqrt{ H^2  |S| \sqrt{\frac{|S||A|}{t}}  \sqrt{\frac{t^2}{ |S|^2 |A|^2}} \sqrt{18\log(8|\Fp|t^3/\delta)} \sqrt{\frac{|S|^2|A|^2}{t^2}} }
        \\
        = & 
        2 \sqrt{ \E_c \left[ \sum_{h=0}^{H-1} \sum_{s_h \in S^c_h} \frac{\gamma_t \cdot H  |S| \cdot q_h(s_h | \pi(c;\cdot),\widehat{P}^c_t)}{p_{min}\sum_{i=1}^{t-1} \I[\pi(c;s_h)= \pi_i(c;s_h)]}\right]}
        \cdot 
        \sqrt{ H^2  |S|  \gamma_t \sqrt{\frac{|S|^2|A|^2}{t^2}}}
        \\
        = & 
        2\sqrt{ \E_c \left[ \sum_{h=0}^{H-1} \sum_{s_h \in S^c_h} \frac{\gamma_t \cdot H  |S| \cdot q_h(s_h | \pi(c;\cdot),\widehat{P}^c_t)}{p_{min}\sum_{i=1}^{t-1} \I[\pi(c;s_h)= \pi_i(c;s_h)]}\right]}
        \cdot 
        \sqrt{  \gamma_t \frac{H^2  |S|^2 |A|}{t}}
        \\
        \tag{Since $2ab \leq a^2 + b^2$ for all $a,b \geq 0$}
        \leq &
         \E_c \left[ \sum_{h=0}^{H-1} \sum_{s_h \in S^c_h} \frac{\gamma_t \cdot H  |S| \cdot q_h(s_h | \pi(c;\cdot),\widehat{P}^c_t)}{p_{min}\sum_{i=1}^{t-1} \I[\pi(c;s_h)= \pi_i(c;s_h)]}\right]
        +  \gamma_t \frac{H^2  |S|^2 |A|}{t}.
    \end{align*}
    \endgroup
    The above proves $(1)$.
    We obtain $(2)$ using an identical derivation, where in the first inequality we use item $(2)$ of~\cref{lemma:CB-policy-dynamics}, instead of item $(1)$.
\end{proof}

\paragraph{Step 4: Bounding the contextual potential for both dynamics and rewards.}

As in previous sections, we consider the contextual potential functions in round $t$, for ${T \geq t > |A|}$ and a context-depended policy $\pi \in \Pi_\C$. 
We abuse the notation $\psi_t$ as follows.
\begin{definition}
We denote the contextual potential functions in round $t$, for ${T \geq t > |A|}$ and a context-depended policy $\pi \in \Pi_\C$ as 
$$
    \psi_t(\pi) : = \E_c \left[ \sum_{h=0}^{H-1} \sum_{s_h \in S^c_h} \frac{q_h(s_h|\pi(c;\cdot) ,\widehat{P}^c_t)}{ p_{min}\sum_{i=1}^{t-1} \I[\pi(c;s_h)= \pi_i(c;s_h)]}\right],
$$
where $\{\pi_t \in \Pi_{\C}\}_{t=1}^T$ is the sequence of policies selected by~\cref{alg:RM-UCDD} and $\{\widehat{P}_t\}_{t=1}^T$ is the sequence of least square minimizing dynamics. 
\end{definition}

In the following lemma, we bound the sum of contextual potential functions, over the rounds $t = |A|+1, \ldots, T$.
\begin{lemma}[contextual potential lemma for dynamics approximation]\label{lemma:Contextual-Potential-dynamics}
    Let ${\{\pi_t \in \Pi_{\C}\}_{t=1}^T}$
    be the sequence of context-dependent policies selected by~\cref{alg:RM-UCDD}, and let $\{\widehat{P}_t\}_{t=1}^T$ be the sequence of least square minimizing dynamics. 
    Then, for all $T > |A|$ the following holds for the sequence of selected policies $\{\pi_t \in \Pi_{\C}\}_{t=1}^T$.
        \begin{align*}
            \sum_{t=|A| +1}^T
            \psi_t(\pi_t)
            = & 
            \sum_{t=|A| +1}^T
            \E_c \left[ \sum_{h=0}^{H-1} \sum_{s_h \in S^c_h} \frac{q_h(s_h|\pi(c;\cdot) ,\widehat{P}^c_t)}{ p_{min}\sum_{i=1}^{t-1} \I[\pi(c;s_h)= \pi_i(c;s_h)]}\right]
            \\
            \leq & 
            \frac{|S||A|}{p_{min}}(1+\log(T/|A|))
            .
        \end{align*}
\end{lemma}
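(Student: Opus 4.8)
The plan is to follow the same template used in the proof of \cref{lemma:Contextual-Potential} and in the second part of \cref{lemma:Contextual-Potential-UCID}, since the potential $\psi_t$ differs from those only in which dynamics appears in the numerator. First I would fix an arbitrary context $c \in \C$ and work with the inner quantity $\sum_{t=|A|+1}^T \sum_{h=0}^{H-1}\sum_{s_h \in S^c_h} \frac{q_h(s_h|\pi_t(c;\cdot),\widehat{P}^c_t)}{p_{min}\sum_{i=1}^{t-1}\I[\pi_t(c;s_h)=\pi_i(c;s_h)]}$, deferring the expectation over $c$ to the very end. The crucial simplification is that $q_h(s_h|\pi_t(c;\cdot),\widehat{P}^c_t)\le 1$ is a probability, so I can upper bound the numerator by $1$ and pull the factor $1/p_{min}$ out of all the sums. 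Note that—unlike in \cref{lemma:Contextual-Potential}—no appeal to minimum reachability is needed here, because the $p_{min}$ is already built into the denominator of $\psi_t$ and the approximate dynamics $\widehat{P}^c_t$ never enters the denominator; this is precisely why the $\psi_t$ case is no harder than the $\phi_t$ case.

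Next I would exchange the order of summation over $t$ and over $(h,s_h)$, reducing the problem to bounding, for each fixed layer $h$ and state $s_h\in S^c_h$, the sum $\sum_{t=|A|+1}^T \frac{1}{\sum_{i=1}^{t-1}\I[\pi_t(c;s_h)=\pi_i(c;s_h)]}$. The standard counting argument groups the rounds $t$ according to the action $a_h=\pi_t(c;s_h)$ that the selected policy plays at $s_h$: as $t$ ranges over the rounds assigning action $a_h$ to $s_h$, the denominator $\sum_{i<t}\I[\pi_i(c;s_h)=a_h]$ takes the successive values $1,2,\dots$, so the contribution of each action $a_h$ is at most $\sum_{i=1}^{N(c,s_h,a_h)}1/i$, where $N(c,s_h,a_h)=\sum_{t=1}^T\I[\pi_t(c;s_h)=a_h]$. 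Applying the harmonic bound $\sum_{i=1}^n 1/i\le 1+\log n$ converts this into $\sum_{a_h\in A}\bigl(1+\log N(c,s_h,a_h)\bigr)$.

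Then I would control the $\sum_{a_h}\log N(c,s_h,a_h)$ term via Jensen's inequality (concavity of $\log$) to pass from the average of logs to the log of the average, obtaining $|A|\log\bigl(\tfrac{1}{|A|}\sum_{a_h}N(c,s_h,a_h)\bigr)$; since $\pi_t$ is deterministic, $\sum_{a_h\in A}\I[\pi_t(c;s_h)=a_h]=1$ for every $t$, so $\sum_{a_h}N(c,s_h,a_h)=T$ and the log term collapses to $|A|\log(T/|A|)$. Summing the leading $|A|$ term together with this log term over all $h$ and $s_h\in S^c_h$ yields $\frac{|S||A|}{p_{min}}(1+\log(T/|A|))$ for the fixed context, and taking $\E_c[\cdot]$ finishes the proof since the bound is context-independent. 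I do not anticipate a genuine obstacle; the only point worth checking is well-definedness of the summands, namely that the denominator is nonzero once $t>|A|$—this holds because the initialization rounds $i\le|A|$ guarantee that every action has been played at least once from every state, which is the same safeguard invoked in the earlier potential lemmas.
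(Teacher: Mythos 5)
Your proposal is correct and follows essentially the same route as the paper's proof: bound the occupancy $q_h(\cdot\,|\,\pi_t(c;\cdot),\widehat{P}^c_t)$ by $1$, swap the order of summation, group rounds by the action played at each state to get a harmonic sum per action, apply Jensen's inequality to the logs, use determinism of $\pi_t$ to collapse $\sum_{a_h} N(c,s_h,a_h)=T$, and take $\E_c$ at the end. Your added observations — that minimum reachability is not needed here because $p_{min}$ already sits in the denominator of $\psi_t$, and that the initialization rounds guarantee the denominator is nonzero — are both accurate and consistent with the paper.
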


\begin{proof}
    For any fixed context $c \in \C$, the following holds.
    \begingroup
    \allowdisplaybreaks
    \begin{align*}
        &\sum_{t=|A| +1}^T \sum_{h =0}^{H-1} \sum_{s_h \in S^c_h}  
        \frac{q_h(s_h| \pi_t(c;\cdot),\widehat{P}^c_t)}{p_{min}\sum_{i=1}^{t-1} \I[\pi_t(c;s_h) = \pi_i(c;s_h)]}
        \\
        \leq &
        \frac{1}{p_{min}} \sum_{h =0}^{H-1} \sum_{s_h \in S^c_h} 
        \sum_{t=|A| +1}^T
        \frac{1}{\sum_{i=1}^{t-1} \I[\pi_t(c;s_h) = \pi_i(c;s_h)]}
        \\
        \leq &
        \frac{1}{p_{min}} \sum_{h =0}^{H-1} \sum_{s_h \in S^c_h} 
        \sum_{a_h \in A}
        \sum_{i=1}^{\sum_{t=1}^T  \I[\pi_t(c;s_h) = a_h] } \frac{1}{i}
        \\
        \tag{Since $\sum_{i=1}^n \frac{1}{i} \leq 1+\log(n)$}
        \leq &
        \frac{1}{p_{min}}
        \sum_{h =0}^{H-1} \sum_{s_h \in S^c_h} \sum_{a_h \in A} \left(1 + \log\left(\sum_{t=1}^T \I[\pi_t(c;s_h) = a_h]\right) \right)
        \\
        = &
         \frac{|S||A|}{p_{min}} + \sum_{h =0}^{H-1} \sum_{s_h \in S^c_h} |A| \cdot \frac{1}{|A|}\sum_{a_h \in A}  \log\left(\sum_{t=1}^T \I[\pi_t(c;s_h) = a_h]\right)
        \\
         \tag{By Jansen's inequality, since $\log$ is concave}
         \leq &
        \frac{|S||A|}{p_{min}} + \sum_{h =0}^{H-1} \sum_{s_h \in S^c_h} |A| \cdot   \log\left(\frac{1}{|A|}\sum_{a_h \in A}\sum_{t=1}^T \I[\pi_t(c;s_h) = a_h]\right)
        \\
        = &
        \frac{|S||A|}{p_{min}} + \sum_{h =0}^{H-1} \sum_{s_h \in S^c_h} |A| \cdot   \log\left(\frac{1}{|A|}\sum_{t=1}^T \sum_{a_h \in A} \I[\pi_t(c;s_h) = a_h]\right)
        \\
        \tag{For all $t$, $\sum_{a_h \in A} \I[\pi_t(c;s_h) = a_h]=1$ since $\pi_t$ is a deterministic policy}
         = &
        \frac{|S||A|}{p_{min}} + \sum_{h =0}^{H-1} \sum_{s_h \in S^c_h} |A| \cdot   \log\left(\frac{T}{|A|}\right)
        \\
        = &
        \frac{|S||A|}{p_{min}}( 1 +  \log(T/|A|)).
    \end{align*}
    \endgroup
    By taking an expectation over $c$ on both sides of the inequality, we obtain the lemma.
\end{proof}

\subsubsection{Regret Bound}

\begin{lemma}[optimism]\label{lemma:optimism-UCDD}
Under the good events of~\cref{lemma:CB-policy-UCDD,lemma:CB-policy-dynamics} for any $t \geq |A|+1$ it holds that
    \begin{align*}
        \E_c\left[V^{\pi^\star(c;\cdot)}_{\M(c)}(s_0)\right] -  \E_c\left[V^{\pi_t(c;\cdot)}_{\Mhat_t(c)}(s_0)\right]
        \leq 
        \gamma_t \frac{H^2 |S|^2 |A|}{t}
        +
        \beta_t \frac{H|S||A|}{t},
    \end{align*}
    where $\beta_t = \sqrt{ \frac{17 t \log(8|\F|t^3/\delta) }{|S||A|}}$ and  $\gamma_t = \sqrt{ \frac{18 t \log(8|\Fp|t^3/\delta) }{|S| |A|}}$.
\end{lemma}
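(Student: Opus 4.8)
The plan is to sandwich the optimal value $\E_c[V^{\pi^\star(c;\cdot)}_{\M(c)}(s_0)]$ above by $\E_c[V^{\pi_t(c;\cdot)}_{\Mhat_t(c)}(s_0)]$ plus the two $1/t$ error terms, by moving from the true model to the optimistic model in two stages (first the dynamics, then the rewards) applied to the \emph{fixed} policy $\pi^\star$, and then invoking that $\pi_t$ is a per-context optimal policy of $\Mhat_t$.

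First I would record the value decomposition of the optimistic model, in analogy with~\cref{eq:V-t-def-ucid} but with the least-square dynamics $\widehat{P}^c_t$ in place of $P_\star$. Expanding $V^{\pi(c;\cdot)}_{\Mhat_t(c)}(s_0)$ through the occupancy measures $q_h(\cdot\mid\pi(c;\cdot),\widehat{P}^c_t)$, using that $\pi$ is deterministic, and substituting the optimistic reward $\widehat{r}^c_t(s,a)=\widehat{f}_t(c,s,a)+\frac{\beta_t+H|S|\gamma_t}{p_{min}\sum_{i=1}^{t-1}\I[a=\pi_i(c;s)]}$, then taking expectation over $c$, gives
\[
    \E_c\!\left[V^{\pi(c;\cdot)}_{\Mhat_t(c)}(s_0)\right]
    =
    \E_c\!\left[V^{\pi(c;\cdot)}_{\M^{(\widehat{f}_t,\widehat{P}_t)}(c)}(s_0)\right]
    +\left(\beta_t+H|S|\gamma_t\right)\psi_t(\pi).
\]
The key structural observation is that the single bonus in $\widehat{r}^c_t$ was chosen so that its contribution equals $(\beta_t+H|S|\gamma_t)\psi_t(\pi)$, i.e.\ it covers the rewards width $\beta_t$ and the dynamics width $H|S|\gamma_t$ at once, against the \emph{same} potential $\psi_t$ whose occupancy is taken under $\widehat{P}_t$.

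Next I would chain the two additive confidence bounds for the fixed policy $\pi^\star$. Starting from $\E_c[V^{\pi^\star(c;\cdot)}_{\M(c)}(s_0)]=\E_c[V^{\pi^\star(c;\cdot)}_{\M^{(f_\star,P_\star)}(c)}(s_0)]$, item $(1)$ of~\cref{lemma:sq-trick-UCDD} (valid under the good event of~\cref{lemma:CB-policy-dynamics}) replaces $P_\star$ by $\widehat{P}_t$ at cost $H|S|\gamma_t\,\psi_t(\pi^\star)+\gamma_t\frac{H^2|S|^2|A|}{t}$, and then~\cref{lemma:sq-trick-rewards-UCDD} (valid under the good event of~\cref{lemma:CB-policy-UCDD}) replaces $f_\star$ by $\widehat{f}_t$ at cost $\beta_t\,\psi_t(\pi^\star)+\beta_t\frac{H|S||A|}{t}$. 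Summing, the two potential terms combine into exactly $(\beta_t+H|S|\gamma_t)\psi_t(\pi^\star)$, which the decomposition of the first step absorbs back into $\E_c[V^{\pi^\star(c;\cdot)}_{\Mhat_t(c)}(s_0)]$, leaving
\[
    \E_c\!\left[V^{\pi^\star(c;\cdot)}_{\M(c)}(s_0)\right]
    \le
    \E_c\!\left[V^{\pi^\star(c;\cdot)}_{\Mhat_t(c)}(s_0)\right]
    +\gamma_t\frac{H^2|S|^2|A|}{t}
    +\beta_t\frac{H|S||A|}{t}.
\]

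Finally, since $\pi_t(c;\cdot)$ is computed as a per-context optimal policy of $\Mhat_t(c)$, \cref{lemma:Policies-equivalence} gives $\E_c[V^{\pi^\star(c;\cdot)}_{\Mhat_t(c)}(s_0)]\le\max_{\pi\in\Pi_\C}\E_c[V^{\pi(c;\cdot)}_{\Mhat_t(c)}(s_0)]=\E_c[V^{\pi_t(c;\cdot)}_{\Mhat_t(c)}(s_0)]$; substituting and rearranging yields the claim. I expect the main obstacle to be the first step: checking that the lone optimistic bonus decomposes into $(\beta_t+H|S|\gamma_t)\psi_t$, and that the \emph{same} $\psi_t$ (with occupancy under $\widehat{P}_t$, not $P_\star$) appears on the right of both square-trick lemmas, so that the bonuses cancel exactly rather than only up to lower-order slack.
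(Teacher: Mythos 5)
Your proposal is correct and follows essentially the same route as the paper's own proof: chain \cref{lemma:sq-trick-UCDD} (item $(1)$, dynamics) and \cref{lemma:sq-trick-rewards-UCDD} (rewards) for the fixed policy $\pi^\star$, absorb the combined potential term $(\beta_t+H|S|\gamma_t)\psi_t(\pi^\star)$ back into $\E_c[V^{\pi^\star(c;\cdot)}_{\Mhat_t(c)}(s_0)]$ via the value decomposition (\cref{obs:value-decomp-UCDD}), and conclude by per-context optimality of $\pi_t(c;\cdot)$ for $\Mhat_t(c)$. Your flagged check---that both square-trick lemmas yield the same potential with occupancy under $\widehat{P}^c_t$, so the bonus cancels exactly rather than up to slack---is indeed the point on which the paper's argument hinges, and it holds as stated.
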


\begin{observation}\label{obs:value-decomp-UCDD}
    Note that for every fixed context $c \in \C$, dynamics $P \in \Fp$ and context-dependent policy $\pi \in \Pi_\C$ the following holds for all $t \in [T]$.
    \begingroup
    \allowdisplaybreaks
    \begin{align*}
        V^{\pi(c;\cdot)}_{\M^{(\widehat{r}_t, P)}(c)}(s_0)
        = &
        \sum_{h = 0}^{H-1} \sum_{s_h \in S^c_h} \sum_{a_h \in A}
        q_h(s_h, a_h |\pi(c;\cdot),P^c) \cdot \widehat{r}_t^c(s_h, a_h)
        \\
        = &
        \sum_{h = 0}^{H-1} \sum_{s_h \in S^c_h} \sum_{a_h \in A}
        q_h(s_h, a_h | \pi(c;\cdot),P^c) \cdot 
        \left( \hat{f}_t (c,s_h,a_h) + \frac{\beta_t + H|S|\gamma_t}{p_{min}\sum_{i=1}^{t-1}\I[a_h = \pi_i(c;s_h)] }\right)
        \\
        \tag{Since $\pi$ is a deterministic policy}
        = &
        \sum_{h = 0}^{H-1} \sum_{s_h \in S^c_h} \sum_{a_h \in A}
        q_h(s_h|  \pi(c;\cdot),P^c) \I[a_h = \pi(c;s_h)] \cdot 
        \left( \hat{f}_t (c,s_h,a_h) + \frac{\beta_t+ H|S|\gamma_t}{p_{min}\sum_{i=1}^{t-1}\I[a_h = \pi_i(c;s_h)] }\right)
        \\
        = &
        \sum_{h = 0}^{H-1} \sum_{s_h \in S^c_h} 
        q_h(s_h| \pi(c;\cdot),P^c) \cdot 
        \left( \hat{f}_t (c, s_h,  \pi(c;s_h)) + \frac{\beta_t + H|S|\gamma_t}{p_{min}\sum_{i=1}^{t-1}\I[ \pi(c;s_h) = \pi_i(c;s_h)] }\right)
        \\
        = &
        V^{\pi(c;\cdot)}_{\M^{(\hat{f}_t,P)}(c)}(s_0) +  \sum_{h=0}^{H-1} \sum_{s_h \in S^c_h} \frac{ q_h(s_h|\pi(c;\cdot),P^c) \cdot (\beta_t + H|S|\gamma_t)}{  p_{min}\sum_{i=1}^{t-1} \I[\pi(c;s_h)= \pi_i(c;s_h)]}
        .
    \end{align*}
    \endgroup
    Clearly, the implied identity holds when taking the expectation on both sides of the equation.
\end{observation}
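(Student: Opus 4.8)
The plan is to prove the value-decomposition identity in~\cref{obs:value-decomp-UCDD} by a direct algebraic manipulation, unrolling the value function of $\M^{(\widehat{r}_t, P)}(c)$ via its occupancy-measure representation and then substituting the explicit form of the optimistic reward $\widehat{r}_t^c$. The whole identity is an equality chain, so the work is purely symbolic; no probabilistic or optimization argument is needed. I would treat the reward-bonus term and the base-reward term $\hat{f}_t$ separately once the sum has been collapsed onto the policy's chosen actions.

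\textbf{Step 1: value via occupancy measures.} First I would invoke the value-function representation using occupancy measures (stated in~\cref{Appendix:KD}), which for the MDP $\M^{(\widehat{r}_t, P)}(c) = (S,A,P^c,\widehat{r}_t^c,s_0,H)$ gives the starting line
$
V^{\pi(c;\cdot)}_{\M^{(\widehat{r}_t, P)}(c)}(s_0)
=
\sum_{h=0}^{H-1}\sum_{s_h \in S^c_h}\sum_{a_h \in A}
q_h(s_h,a_h\mid \pi(c;\cdot),P^c)\,\widehat{r}_t^c(s_h,a_h).
$
\textbf{Step 2: substitute the optimistic reward.} Next I would plug in the definition of $\widehat{r}_t^c(s_h,a_h) = \hat{f}_t(c,s_h,a_h) + \frac{\beta_t + H|S|\gamma_t}{p_{min}\sum_{i=1}^{t-1}\I[a_h = \pi_i(c;s_h)]}$ from~\cref{alg:RM-UCDD}, exactly as written in the second line of the observation.

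\textbf{Step 3: collapse to the chosen actions.} The key simplification is the determinism of $\pi$: writing $q_h(s_h,a_h\mid \pi(c;\cdot),P^c) = q_h(s_h\mid \pi(c;\cdot),P^c)\,\I[a_h = \pi(c;s_h)]$, the inner sum over $a_h$ retains only the single term $a_h = \pi(c;s_h)$. This is the step labelled ``Since $\pi$ is a deterministic policy'' in the statement, and it turns the triple sum into a double sum over $(h,s_h)$ with the action fixed to $\pi(c;s_h)$. I would mirror the identical collapse already carried out in~\cref{eq:V-t-def} (known dynamics) and~\cref{eq:V-t-def-ucid} (context-independent dynamics), since the algebra is structurally the same; the only differences here are that the denominator carries the $p_{min}$ factor and the numerator of the bonus is $\beta_t + H|S|\gamma_t$ rather than $\beta_t$ alone, and that the dynamics $P^c$ is an arbitrary member of $\Fp$ rather than the true or optimistic dynamics. \textbf{Step 4: split the two terms.} Finally, splitting the summand into its $\hat{f}_t$-part and its bonus-part, the $\hat{f}_t$-part reassembles, via the same occupancy-measure representation run backwards, into $V^{\pi(c;\cdot)}_{\M^{(\hat{f}_t,P)}(c)}(s_0)$, while the bonus-part is left as the displayed residual sum $\sum_{h=0}^{H-1}\sum_{s_h\in S^c_h} \frac{q_h(s_h\mid\pi(c;\cdot),P^c)(\beta_t + H|S|\gamma_t)}{p_{min}\sum_{i=1}^{t-1}\I[\pi(c;s_h)=\pi_i(c;s_h)]}$, completing the chain. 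The concluding sentence about taking expectations over $c$ then follows trivially by linearity of expectation applied to the whole equality.

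I expect no genuine obstacle here, as the statement is a deterministic algebraic identity that recapitulates~\cref{eq:V-t-def,eq:V-t-def-ucid} in the context-dependent setting. The only point demanding a little care is the indicator rewriting in Step 3: one must verify that $\I[a_h = \pi(c;s_h)]$ correctly kills the cross terms and that the surviving bonus denominator indeed becomes $\sum_{i=1}^{t-1}\I[\pi(c;s_h)=\pi_i(c;s_h)]$ with no leftover dependence on the dynamics $P^c$ (which it does, since the bonus term never involved $P^c$ in its denominator, only in the occupancy weight $q_h$ multiplying it). Because the bonus is independent of the dynamics except through the leading $q_h$ factor, the derivation goes through verbatim for every $P \in \Fp$, which is exactly the generality the observation asserts.
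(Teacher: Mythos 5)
Your proposal is correct and matches the paper's own derivation essentially line for line: the paper proves \cref{obs:value-decomp-UCDD} by exactly this chain — occupancy-measure representation of the value, substitution of the definition of $\widehat{r}_t^c$ from \cref{alg:RM-UCDD}, collapse of the action sum via $q_h(s_h,a_h\mid\pi(c;\cdot),P^c)=q_h(s_h\mid\pi(c;\cdot),P^c)\,\I[a_h=\pi(c;s_h)]$ using determinism of $\pi$, and a final split into $V^{\pi(c;\cdot)}_{\M^{(\hat{f}_t,P)}(c)}(s_0)$ plus the bonus residual, with the expectation statement following by linearity. Your extra caveat about the bonus denominator being independent of $P^c$ (so the identity holds for arbitrary $P\in\Fp$) is accurate and is precisely what the paper relies on implicitly.
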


\begin{proof}
Assume the good events hold and consider the following derivation.
    \begingroup
    \allowdisplaybreaks
    \begin{align*}
        \E_c[V^{\pi^\star(c;\cdot)}_{\M(c)}(s_0)]
        = &
        \E_c[V^{\pi^\star(c;\cdot)}_{\M^{(f_\star,P_\star)}(c)}(s_0)]
        \\
        \tag{By~\cref{lemma:sq-trick-UCDD}, item $(1)$}
        \leq &
        \E_c[V^{\pi^\star(c;\cdot)}_{\M^{(f_\star,\widehat{P}_t)}(c)}(s_0)] 
        +  
        \E_c\left[             
        \sum_{h = 0}^{H-1} \sum_{s_h \in S^c_h} 
        \frac{H|S|\gamma_t \cdot q_h(s_h| \pi^\star(c;\cdot), \widehat{P}^c_t)}{  p_{min}\sum_{i=1}^{t-1}\I[ \pi^\star(c;s_h) = \pi_i(c;s_h)] } \right]
        + 
        \gamma_t \frac{H^2|S|^2|A|}{t}
        \\
        \tag{By~\cref{lemma:sq-trick-rewards-UCDD}}
        \leq &
        \E_c[V^{\pi^\star(c;\cdot)}_{\M^{(\hat{f}_t,\widehat{P}_t)}(c)}(s_0)] 
        + 
        \E_c \left[ \sum_{h=0}^{H-1} \sum_{s_h \in S^c_h} \frac{H  |S| \gamma_t \cdot q_h(s_h | \pi^\star(c;\cdot),\widehat{P}^c_\star)}{p_{min}\sum_{i=1}^{t-1} \I[\pi^\star(c;s_h)= \pi_i(c;s_h)]}\right]
        + 
        \gamma_t \frac{H^2 |S|^2  |A|}{t}
        \\
        & +
        \E_c \left[ \sum_{h=0}^{H-1} \sum_{s_h \in S^c_h} \frac{\beta_t  \cdot q_h(s_h | \pi^\star(c;\cdot),\widehat{P}^c_\star)}{p_{min}\sum_{i=1}^{t-1} \I[\pi^\star(c;s_h)= \pi_i(c;s_h)]}\right]
        + 
        \beta_t \frac{H|S||A|}{t}
        \\
        \tag{By~\cref{obs:value-decomp-UCDD}}
        \\
        = &
        \E_c[V^{\pi^\star(c;\cdot)}_{\M^{(\widehat{r}_t,\widehat{P}_t)}(c)}(s_0)] 
        + 
        \gamma_t \frac{H^2  |S|^2 |A|}{t}
        + 
        \beta_t \frac{H|S| |A|}{t}
        \\
        \tag{By $\Mhat_t$ definition}
        = &
        \E_c[V^{\pi^\star(c;\cdot)}_{\Mhat_t(c)}(s_0)] 
        + 
        \gamma_t \frac{H^2  |S|^2 |A|}{t}
        + 
        \beta_t \frac{H|S| |A|}{t}
        \\
        \tag{$\pi_t(c; \cdot)$ is the optimal policy of $\Mhat_t(c)$, for all $c \in \C$}
        \leq &
        \E_c[V^{\pi_t(c;\cdot)}_{\Mhat_t(c)}(s_0)] 
        + 
        \gamma_t \frac{H^2  |S|^2 |A|}{t}
        + 
        \beta_t \frac{H|S||A|}{t},
    \end{align*}
    \endgroup
    as the lemma states.
\end{proof}

Recall the regret, which defined as
$
   \Regrv_T(\text{ALG}) 
    :=
    \sum_{t=1}^T V^{\pi^\star(c_t;\cdot)}_{\M(c_t)}
    -
    V^{\pi_t(c_t;\cdot)}_{\M(c_t)}
$. The following theorem establish our main result.
\begin{theorem}[regret bound]\label{thm:regret-bound-ucdd}
    For any $T \geq 1$, finite functions classes $\F$ and $\Fp$ and $\delta \in (0,1)$, with probability at least $1-\delta$ it holds that
    \begin{align*}
        \Regrv_T(RM-UCDD)
        \leq 
        \tilde{O}\left( 
        (H+ 1/p_{min}) \cdot \left({H|S|^{3/2}\sqrt{|A|T\log\frac{|\Fp|}{\delta}}}+{\sqrt{T|S||A|\log\frac{|\F|}{\delta}}}
        \right)
        +|A|H
        \right),
    \end{align*} 
    for the choice in $\beta_t = \sqrt{ \frac{17 t \log(8|\F|t^3/\delta) }{|S||A|}}$ and  $\gamma_t = \sqrt{ \frac{18 t \log(8|\Fp|t^3/\delta) }{|S| |A|}}$ for all $t \in [T]$.
\end{theorem}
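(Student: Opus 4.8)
The plan is to assemble the optimism lemma (\cref{lemma:optimism-UCDD}) with the two additive ``square trick'' relaxations and the contextual potential bound, after reducing the realized regret to a sum of per-round conditional expectations via Azuma's inequality. First I would set up the martingale: define $Y_t := \big(V^{\pi^\star(c_t;\cdot)}_{\M(c_t)}(s_0) - V^{\pi_t(c_t;\cdot)}_{\M(c_t)}(s_0)\big) - \E_{c_t}\big[V^{\pi^\star(c_t;\cdot)}_{\M(c_t)}(s_0) - V^{\pi_t(c_t;\cdot)}_{\M(c_t)}(s_0) \,\big|\, \Hist_{t-1}\big]$. Since $\pi_t$ is $\Hist_{t-1}$-measurable while $c_t$ and $\pi^\star$ are independent of $\Hist_{t-1}$, this is a martingale difference sequence with $|Y_t|\le 2H$, so Azuma gives, with probability $1-\delta/4$, that $\Regrv_T(\text{RM-UCDD})$ is at most $\sum_{t=|A|+1}^T \E_c\big[V^{\pi^\star(c;\cdot)}_{\M(c)}(s_0) - V^{\pi_t(c;\cdot)}_{\M(c)}(s_0)\big] + 2H\sqrt{2T\log(8/\delta)} + |A|H$, the last term absorbing the initialization rounds.

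Next I would work under the good events of \cref{lemma:CB-policy-UCDD} (rewards) and \cref{lemma:CB-policy-dynamics} (dynamics), which jointly hold with probability at least $1-\delta/2$. For each $t>|A|$ I split the expected gap as $\E_c[V^{\pi^\star(c;\cdot)}_{\M(c)}(s_0) - V^{\pi_t(c;\cdot)}_{\Mhat_t(c)}(s_0)] + \E_c[V^{\pi_t(c;\cdot)}_{\Mhat_t(c)}(s_0) - V^{\pi_t(c;\cdot)}_{\M(c)}(s_0)]$. The first (optimism) term is bounded directly by \cref{lemma:optimism-UCDD} by $\gamma_t \frac{H^2|S|^2|A|}{t} + \beta_t \frac{H|S||A|}{t}$. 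For the played-policy gap I interpolate through the chain $\Mhat_t(c) = \M^{(\widehat{r}_t,\widehat{P}_t)}(c) \to \M^{(\hat{f}_t,\widehat{P}_t)}(c) \to \M^{(f_\star,\widehat{P}_t)}(c) \to \M(c)$: the first step is exactly the bonus identity of \cref{obs:value-decomp-UCDD}, contributing $(\beta_t + H|S|\gamma_t)\psi_t(\pi_t)$ in expectation; the second step is bounded by \cref{lemma:sq-trick-rewards-UCDD} by $\beta_t\psi_t(\pi_t) + \beta_t\frac{H|S||A|}{t}$; the third by the dynamics relaxation \cref{lemma:sq-trick-UCDD}(2) by $H|S|\gamma_t\psi_t(\pi_t) + \gamma_t\frac{H^2|S|^2|A|}{t}$. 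Collecting these, the total coefficient of $\psi_t(\pi_t)$ is $O(\beta_t + H|S|\gamma_t)$, and the residual additive terms are $O(\beta_t\frac{H|S||A|}{t} + \gamma_t\frac{H^2|S|^2|A|}{t})$.

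Then I sum over $t=|A|+1,\ldots,T$. Using that $\beta_t,\gamma_t$ are increasing I pull $\beta_T,\gamma_T$ in front of $\sum_t \psi_t(\pi_t)$ and apply \cref{lemma:Contextual-Potential-dynamics} to bound that sum by $\frac{|S||A|}{p_{min}}(1+\log(T/|A|))$; with the choices of $\beta_T,\gamma_T$ this yields the $\frac{1}{p_{min}}\big[\sqrt{T|S||A|\log(|\F|/\delta)} + H|S|^{3/2}\sqrt{T|A|\log(|\Fp|/\delta)}\big]$ contribution. The remaining $\frac{1}{t}$-terms sum via $\sum_{t} 1/\sqrt{t} \le 2\sqrt{T}$ into the $H$-scaled contribution $H\big[\sqrt{T|S||A|\log(|\F|/\delta)} + H|S|^{3/2}\sqrt{T|A|\log(|\Fp|/\delta)}\big]$. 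A final union bound over the Azuma event and the two confidence events gives the stated bound with probability $1-\delta$.

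The main obstacle will be the bookkeeping of the interpolation, so that each of the reward and dynamics errors is charged against the matching piece of the optimistic bonus (which was precisely engineered with the $\beta_t + H|S|\gamma_t$ numerator), and so that the $\psi_t(\pi_t)$ coefficient $O(\beta_t + H|S|\gamma_t)$ multiplied by the potential factor $\frac{|S||A|}{p_{min}}$ produces exactly the advertised dependencies—the $|S|^{3/2}$ arising as $H|S|\gamma_T \cdot \frac{|S||A|}{p_{min}} \sim H|S|^{3/2}\sqrt{T|A|}$ and the $\sqrt{|S||A|}$ from the reward term. The genuinely hard conceptual work for the dynamics, namely the independence observation (\cref{obs:key}) that enables the adapted Agarwal-style variance lemma and hence the dynamics confidence bound of \cref{lemma:CB-policy-dynamics}, is already encapsulated in the cited lemmas, so at the level of the theorem the remaining effort is the assembly and the $t$-summation.
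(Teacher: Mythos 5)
Your proposal is correct and follows essentially the same route as the paper's proof: the same Azuma reduction to per-round conditional expectations, the same use of the optimism lemma plus the interpolation chain $\Mhat_t(c)\to\M^{(\hat{f}_t,\widehat{P}_t)}(c)\to\M^{(f_\star,\widehat{P}_t)}(c)\to\M(c)$ via \cref{obs:value-decomp-UCDD}, \cref{lemma:sq-trick-rewards-UCDD}, and \cref{lemma:sq-trick-UCDD}(2), and the same summation using \cref{lemma:Contextual-Potential-dynamics} and $\sum_t 1/\sqrt{t}\le 2\sqrt{T}$. The minor differences in how you split the confidence parameter $\delta$ across the events do not affect the argument.
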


\begin{proof}
    We derive a regret bound under the good events of~\cref{lemma:CB-policy-UCDD,lemma:CB-policy-dynamics}. Both events hold with probability at least $1-{\delta}/{2}$.
     
    Consider the martingale difference sequence $\{Y_t\}_{t=1}^T$ and the filtration $\{\Hist_{t}\}_{t=1}^T$ where 
    $$
        Y_t 
        := 
        V^{\pi^{\star}(c_t,\cdot)}_{\M(c_t)}(s_0)  - V^{\pi_t(c_t,\cdot)}_{\M(c_t)}(s_0)
        -
        \E_{c_t}\left[ V^{\pi^{\star}(c_t,\cdot)}_{\M(c_t)}(s_0)  - V^{\pi_t(c_t,\cdot)}_{\M(c_t)}(s_0) \Big|\Hist_{t-1}\right] 
        .
    $$
    Clearly, for all $t$, $|Y_t|\leq 2H$, $Y_t$ is determined completely by the histories $\Hist_1, \ldots ,\Hist_t$ and $\E_{c_t}\left[ Y_t |\Hist_{t-1}\right] = 0$.
    Hence, by Azuma's inequality,
    with probability at least $1-\delta/2$ it holds that
    \begingroup
    \allowdisplaybreaks
    \begin{align*}
        &\Regrv_T( RM-UCDD)
        \\
        = &
        \sum_{t = 1}^T V^{\pi^\star(c_t;\cdot)}_{\M(c_t)}(s_0) - V^{\pi_t(c_t;\cdot)}_{\M(c_t)}(s_0) 
        \\
        \tag{By Azuma's inequality, holds w.p. at least $1-\delta/2$.}
        \leq &
        \sum_{t =  1}^T \E_{c_t}[V^{\pi^\star(c_t;\cdot)}_{\M(c_t)}(s_0)|\Hist_{t-1}] - \E_{c_t}[V^{\pi_t(c_t;\cdot)}_{\M(c_t)}(s_0)|\Hist_{t-1}]
        + 
        2H\sqrt{2T\log(4/\delta)} 
        \\
        \tag{Since $\pi_t$ is determined by $\Hist_{t-1}$, and $c_t$ and $\pi^\star$ are independent of the history, we can omit the conditioning on $\Hist_{t-1}$}
        = &
        \sum_{t = 1}^T \E_c[V^{\pi^\star(c;\cdot)}_{\M(c)}(s_0)] - \E_c[V^{\pi_t(c;\cdot)}_{\M(c)}(s_0)]
        + 
        2H\sqrt{2T\log(4/\delta)} 
        \\
        \leq &
        \sum_{t = |A| + 1}^T \E_c[V^{\pi^\star(c;\cdot)}_{\M(c)}(s_0)] - \E_c[V^{\pi_t(c;\cdot)}_{\M(c)}(s_0)]
        + 
        2H\sqrt{2T\log(4/\delta)} +|A|H
        \\
        \tag{By the optimism lemma (\cref{lemma:optimism-UCDD})}
        \leq &
        \sum_{t = |A| + 1}^T 
        \E_c[V^{\pi_t(c;\cdot)}_{\Mhat_t(c)}(s_0)] - \E_c[V^{\pi_t(c;\cdot)}_{\M(c)}(s_0)]
        \\
        & +        
        \sum_{t = |A| + 1}^T\gamma_t \frac{H^2  |S|^2  |A|}{t}
        \\
        & +
        \sum_{t = |A| + 1}^T\beta_t \frac{H |S| |A|}{t} 
        \\
        & + 
        2H\sqrt{2T\log(4/\delta)} +|A|H
        \\
        \tag{By adding and subtracting $\E_c[V^{\pi_t(c;\cdot)}_{\M^{(f_\star,\widehat{P}_t)}(c)}(s_0)]$}
        = &
        \sum_{t =|A| + 1}^T
        \E_c[V^{\pi_t(c;\cdot)}_{\Mhat_t(c)}(s_0)] 
        - 
        \E_c[V^{\pi_t(c;\cdot)}_{\M^{(f_\star,\widehat{P}_t)}(c)}(s_0)]
        +
        \sum_{t =|A| + 1}^T
        \E_c[V^{\pi_t(c;\cdot)}_{\M^{(f_\star,\widehat{P}_t)}(c)}(s_0)]
        - 
        \E_c[V^{\pi_t(c;\cdot)}_{\M(c)}(s_0)]
        \\
        & +
        \sum_{t =|A| + 1}^T \gamma_t \frac{H^2  |S|^2  |A|}{t}
        \\
        & +
        \sum_{t =|A| + 1}^T \beta_t \frac{H |S| |A|}{t}
        \\
        & + 
        2H\sqrt{2T\log(4/\delta)} +|A|H
        \\
        \tag{By~\cref{obs:value-decomp-UCDD} applied for $\pi_t$}
        = &
        \sum_{t =|A| + 1}^T 
        \E_c[V^{\pi_t(c;\cdot)}_{\M^{(\hat{f}_t, \widehat{P}_t)}(c)}(s_0)] 
        - 
        \E_c[V^{\pi_t(c;\cdot)}_{\M^{(f_\star,\widehat{P}_t)}(c)}(s_0)]
        \\
        & +
        \sum_{t =|A| + 1}^T
        (\beta_t + H|S|\gamma_t) \E_c \left[ \sum_{h=0}^{H-1} \sum_{s_h \in S^c_h} \frac{q_h(s_h|\pi_t(c;\cdot) ,\widehat{P}^c_t)}{ p_{min}\sum_{i=1}^{t-1} \I[\pi_t(c;s_h)= \pi_i(c;s_h)]}\right]
        \\
        & +
        \sum_{t =|A| + 1}^T
        \E_c[V^{\pi_t(c;\cdot)}_{\M^{(f_\star,\widehat{P}_t)}(c)}(s_0)]
        - 
        \E_c[V^{\pi_t(c;\cdot)}_{\M(c)}(s_0)]
        \\
        & +
        \sum_{t =|A| + 1}^T \gamma_t \frac{H^2 |S|^2  |A|}{t}
        \\
        & +
        \sum_{t =|A| + 1}^T \beta_t\frac{ H  |S| |A|}{t}
        \\
        & + 
        2H\sqrt{2T\log(4/\delta)} +|A|H
        \\
        \tag{By~\cref{lemma:sq-trick-rewards-UCDD}}
        \leq &
        2\sum_{t =|A| + 1}^T \beta_t
        \E_c \left[ \sum_{h=0}^{H-1} \sum_{s_h \in S^c_h} \frac{q_h(s_h|\pi_t(c;\cdot) ,\widehat{P}^c_t)}{ p_{min}\sum_{i=1}^{t-1} \I[\pi_t(c;s_h)= \pi_i(c;s_h)]}\right]
        + 
        2\sum_{t =|A| + 1}^T \beta_t\frac{ H  |S| |A|}{t}
        \\
        \tag{By~\cref{lemma:sq-trick-UCDD}, item $(2)$}
        & +
        2\sum_{t =|A| + 1}^T
        H|S|\gamma_t \E_c \left[ \sum_{h=0}^{H-1} \sum_{s_h \in S^c_h} \frac{q_h(s_h|\pi_t(c;\cdot) ,\widehat{P}^c_t)}{ p_{min}\sum_{i=1}^{t-1} \I[\pi_t(c;s_h)= \pi_i(c;s_h)]}\right]
         +
         2\sum_{t =|A| + 1}^T \gamma_t \frac{H^2 |S|^2  |A|}{t}
        \\
        & + 
        2H\sqrt{2T\log(4/\delta)} +|A|H
        \\
        \tag{Since for all $t \in [T]$, $\beta_t \leq \beta_T$ and $\gamma_t \leq \gamma_T$}
        \leq &
        2(\beta_T + H |S|\gamma_T )  \sum_{t =|A| + 1}^T
        \E_c \left[\sum_{h=0}^{H-1} \sum_{s_h \in S^c_h} \frac{q_h(s_h|\pi_t(c;\cdot) ,\widehat{P}^c_t)}{ p_{min}\sum_{i=1}^{t-1} \I[\pi_t(c;s_h)= \pi_i(c;s_h)]}\right]
        \\
        & + 
        2\sum_{t =|A| + 1}^T \beta_t\frac{ H  |S| |A|}{t}
        \\
        & +
         2\sum_{t =|A| + 1}^T \gamma_t \frac{H^2 |S|^2  |A|}{t}
        \\
        & + 
        2H\sqrt{2T\log(4/\delta)} +|A|H
        \\
        \tag{By~\cref{lemma:Contextual-Potential-dynamics}}
        \leq &
        2(\beta_T + H |S|\gamma_T ) 
        \frac{|S||A|}{p_{min}}(1+ \log(T/|A|))
        \\
        \tag{Since  $\beta_t = \sqrt{ \frac{17 t \log(8|\F|t^3/\delta) }{|S||A|}}$}
        & + 
        2\sum_{t =|A| + 1}^T \sqrt{ \frac{17  \log(8|\F|t^3/\delta) t}{|S||A|}} \frac{ H  |S| |A|}{t}
        \\
        \tag{Since $\gamma_t = \sqrt{ \frac{18 t \log(8|\Fp|t^3/\delta) }{|S| |A|}}$}
        & +
         2\sum_{t =|A| + 1}^T  \sqrt{ \frac{18 t \log(8|\Fp|t^3/\delta) }{|S| |A|}} \frac{H^2 |S|^2  |A|}{t}
        \\
        & + 
        2H\sqrt{2T\log(4/\delta)} +|A|H
        \\
        \tag{By $\beta_T$ and $\gamma_T$ choice}
        \leq &
        2\left(\sqrt{ \frac{17 T \log(8|\F|T^3/\delta) }{|S||A|}}+ H |S|\sqrt{ \frac{18 T \log(8|\Fp|t^3/\delta) }{|S| |A|}}\right) 
        \frac{|S||A|}{p_{min}}(1+ \log(T/|A|))
        \\
        & + 
        2H \sqrt{ 17 \log(8|\F|T^3/\delta)|S||A| }\sum_{t = 1}^T  \frac{1}{\sqrt{t}}
        \\
        & +
         2H^2 |S|^{3/2}  \sqrt{18  \log(8|\Fp|T^3/\delta)|A| } \sum_{t = 1}^T  \frac{1}{\sqrt{t}}
        \\
        & + 
        2H\sqrt{2T\log(4/\delta)} +|A|H
        \\
        \leq &
        \frac{2}{p_{min}}\sqrt{17 T |S||A| \log(8|\F|T^3/\delta) }(1+ \log(T/|A|))
        \\
        & +
        \frac{2 H |S|^{3/2}}{p_{min}}
        \sqrt{ 18 T |A| \log(8|\Fp|t^3/\delta) }
       (1+ \log(T/|A|))
        \\
        \tag{$\sum_{t=1}^T \frac{1}{\sqrt{t}}\leq 2\sqrt{T}$}
        & + 
        4H \sqrt{ 17 \log(8|\F|T^3/\delta)T|S||A| }
        \\
        \tag{$\sum_{t=1}^T \frac{1}{\sqrt{t}}\leq 2\sqrt{T}$}
        & +
         4H^2 |S|^{3/2}  \sqrt{18  \log(8|\Fp|T^3/\delta)T|A| } 
        \\
        & + 
        2H\sqrt{2T\log(4/\delta)} +|A|H
        \\
        = &
        \widetilde{O}\Bigg( 
        (H+1/p_{min}) \cdot 
        \left(H|S|^{3/2}\sqrt{T|A|\log\frac{|\Fp|}{\delta}} + \sqrt{T|S||A|\log\frac{|\F|}{\delta}} \right) + |A|H
        \Bigg)
        .
    \end{align*}
    \endgroup
 Since the good events hold w.p. at least $1-\delta/2$, by union bound combined with Azuma's inequality we obtain the theorem.   
\end{proof}

\begin{corollary}[regret bound in terms of $\G$]\label{corl:regret-bound-ucdd-g}
   For every $T \geq 1$, finite functions class $\G$ ($\F = \G^S$) and $\Fp$ and $\delta \in (0,1)$ the following holds with probability at least $1-\delta$ for the same choice of parameters $\{\beta_t, \gamma_t\}_{t \in [T]}$.
    \begin{align*}
    \Regrv_T(RM-UCDD)\leq 
    \tilde{O}\left( 
        (H+1/p_{min})\cdot
            \left({H|S|^{3/2}\sqrt{T|A|\log\frac{|\Fp|}{\delta}}}+
            {|S|\sqrt{T|A|\log\frac{|\G|}{\delta}}} + |A|H
        \right)\right).
    \end{align*}
\end{corollary}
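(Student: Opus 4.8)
The plan is to obtain this corollary directly from \cref{thm:regret-bound-ucdd}, whose bound is already stated in terms of the full reward class $\F$. Recall that the paper carries two parametrizations of the reward approximation: the per-state base class $\G \subseteq (\C \times A \to [0,1])$ and the product class $\F = \G^S$, linked by the identity $\log(|\F|) = |S|\log(|\G|)$ established in the preliminaries. Since the dynamics class $\Fp$ is untouched by this reparametrization, the whole $\log(|\Fp|/\delta)$ term is carried over verbatim, and only the reward-dependent summand needs to be rewritten. I would also note that the parameter choices $\{\beta_t,\gamma_t\}$ of \cref{thm:regret-bound-ucdd} are expressed through $|\F|$ and $|\Fp|$ and are left unchanged, so the high-probability event and its $1-\delta$ guarantee transfer without modification.

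Concretely, I would start from
\[
\Regrv_T(\text{RM-UCDD}) \le \tilde{O}\Big((H+1/p_{min})\big(H|S|^{3/2}\sqrt{|A|T\log(|\Fp|/\delta)} + \sqrt{T|S||A|\log(|\F|/\delta)}\big)+|A|H\Big),
\]
keep the first inner summand as is, and substitute $\log(|\F|/\delta) = |S|\log(|\G|) + \log(1/\delta)$ into the second. The one computation worth spelling out is the estimate
\begin{align*}
\sqrt{T|S||A|\,(|S|\log|\G| + \log(1/\delta))}
&\le |S|\sqrt{T|A|\log|\G|} + \sqrt{T|S||A|\log(1/\delta)}\\
&\le |S|\sqrt{T|A|\log(|\G|/\delta)},
\end{align*}
where the first step uses $\sqrt{a+b}\le\sqrt{a}+\sqrt{b}$ and the second uses $\sqrt{|S|}\le|S|$ (valid since $|S|\ge 1$) together with $\log(|\G|/\delta)\ge\log(1/\delta)$. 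This converts the reward term $\sqrt{T|S||A|\log(|\F|/\delta)}$ into $|S|\sqrt{T|A|\log(|\G|/\delta)}$, which is exactly the reward term in the corollary.

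There is essentially no analytic obstacle here; the content is pure bookkeeping, and the result follows once the substitution is carried out. The only point requiring slight care — and hence the ``hard part,'' such as it is — is tracking where the factor $|S|$ produced by $\log|\F| = |S|\log|\G|$ goes: it must be seen to upgrade the $\sqrt{|S|}$ sitting \emph{inside} the square root to a leading $|S|$ \emph{outside} it. This is precisely why the corollary's reward term reads $|S|\sqrt{T|A|\log(|\G|/\delta)}$ rather than $\sqrt{T|S||A|\log(|\G|/\delta)}$, while the dynamics term $H|S|^{3/2}\sqrt{|A|T\log(|\Fp|/\delta)}$ and the additive $|A|H$ term remain intact, yielding the stated bound with probability at least $1-\delta$.
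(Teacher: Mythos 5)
Your proposal is correct and follows exactly the paper's own route: the paper's proof of this corollary is the one-line instruction ``plug $\log(|\F|) = |S|\log(|\G|)$ into the bound of Theorem~\ref{thm:regret-bound-ucdd},'' and your argument is precisely that substitution with the $\sqrt{a+b}\le\sqrt{a}+\sqrt{b}$ and $\sqrt{|S|}\le|S|$ bookkeeping spelled out (the missing constant factor of $2$ in your final displayed inequality is absorbed by the $\tilde{O}$). Nothing further is needed.
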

\begin{proof}
    Plug $\log(|\F|) = |S| \log(|\G|)$ in the bound of~\cref{thm:regret-bound-ucdd} and obtain the corollary.
    
\end{proof}

\newpage
\section{Lower Bound}\label{Appendix:LB}

We present a lower bound for layered CMDP using the lower bound for CMAB presented by~\citet{agarwal2012contextual}, in which $K=|A|$, $\G \subseteq \C \times A \to [0,1]$ and $N \in \N$.

\begin{theorem}[Theorem $5.1$,~\citet{agarwal2012contextual}]\label{thm:lower-bound-CMAB}
For every $N$ and $K$ such that $\ln N / \ln K \leq T$, and every algorithm $\mathfrak{A}$, there exist a functions class $\G 
$ of 
cardinality at most $N$ and a distribution $D(c,r)$  for which
the realizability assumption holds, but the expected regret of $\mathfrak{A}$ is $\Omega(\sqrt{KT \ln  N/ \ln K})$.
\end{theorem}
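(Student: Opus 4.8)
The plan is to embed $m := \lfloor \ln N / \ln K \rfloor$ statistically independent $K$-armed bandit problems inside a single realizable CMAB instance, prove an $\Omega(\sqrt{K\tau})$ minimax bound for one sub-bandit run for $\tau$ rounds, and then sum the per-context contributions. Since $K^m \le N$, the realizability budget is exactly large enough to plant one best arm per context. Concretely, I would take $\C = \{c_1,\dots,c_m\}$ with the context distribution uniform and the action set of size $K=|A|$; for an assignment $a=(a_1,\dots,a_m)\in[K]^m$ define $g_a(c_i,k)=\tfrac12+\Delta\,\I[k=a_i]$ and let rewards be Bernoulli with these means, for a gap $\Delta\in(0,\tfrac12]$ fixed later. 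Setting $\G=\{g_a : a\in[K]^m\}$ gives $|\G|=K^m\le N$, and the instance with planted assignment $a$ is realized by $g_a\in\G$, so the realizability assumption holds. The hypothesis $\ln N/\ln K\le T$ ensures $m\le T$, so each context is visited a positive expected number of times.

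Next I would decompose the regret across contexts. Because the contexts are drawn i.i.d., the reward channels across contexts are independent, and the planted arms $a_1,\dots,a_m$ are chosen independently, the instance factorizes: conditioned on the realized context sequence, the rounds in which $c_i$ appears form a standalone $K$-armed Bernoulli bandit with horizon $T_i:=\#\{t:c_t=c_i\}$ and planted best arm $a_i$. Writing $N_i^\star$ for the number of pulls of the planted arm inside context $c_i$, this yields $\mathbb{E}[\Regrv_T]=\sum_{i=1}^m \mathbb{E}\!\left[\Delta\,(T_i-N_i^\star)\right]$. For each fixed $i$ I place a uniform prior over $a_i\in[K]$ and run the standard change-of-measure argument (Le Cam / Pinsker) comparing the planted instance against the null instance in which every arm of context $c_i$ has mean $\tfrac12$. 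Using $\mathrm{KL}(\mathrm{Ber}(\tfrac12)\,\|\,\mathrm{Ber}(\tfrac12+\Delta))=O(\Delta^2)$ together with $\sum_j\mathbb{E}_0[N_{i,j}]=T_i$, averaging the pull-count bound over the $K$ choices of planted arm gives $\tfrac1K\sum_j\mathbb{E}_j[N_{i,j}]\le \tfrac{T_i}{K}+O\!\big(T_i\Delta\sqrt{T_i/K}\big)$, so the Bayes-average per-context regret is at least $\Delta T_i(1-\tfrac1K)-O(\Delta^2 T_i\sqrt{T_i/K})$.

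To combine, I would tune $\Delta=c\sqrt{Km/T}\asymp\sqrt{K/\mathbb{E}[T_i]}$ for a small constant $c$; the leading term is then $\Theta(\sqrt{KT_i})$ and the correction $O(c^2\sqrt{KT_i})$, so a single sub-bandit contributes $\Omega(\sqrt{KT_i})$. Since $T_i\sim\mathrm{Binomial}(T,1/m)$ concentrates around $T/m$, restricting to the constant-probability event $T_i\ge T/(2m)$ gives $\mathbb{E}[\Delta(T_i-N_i^\star)]=\Omega(\sqrt{K\,T/m})$ per context, whence $\mathbb{E}[\Regrv_T]=\Omega\!\big(m\sqrt{KT/m}\big)=\Omega(\sqrt{KTm})=\Omega(\sqrt{KT\ln N/\ln K})$. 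The nontrivial regime is $\sqrt{KTm}\le T$, i.e. $T\gtrsim Km$, which is exactly where the tuned $\Delta\le\tfrac12$ stays admissible; outside it the claimed bound exceeds the trivial ceiling $T$ and holds vacuously.

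\textbf{Main obstacle.} The technical heart is the single-context information bound and the correct decoupling of the $m$ sub-bandits. Three points need care: (i) arguing that the learner extracts no cross-context information, so the per-context regrets genuinely add — this follows because both the reward channels and the planted arms are independent across contexts; (ii) carrying out the change of measure with the \emph{random}, algorithm-independent horizon $T_i$ rather than a deterministic one, which I handle by conditioning on the concentration event $T_i=\Theta(T/m)$ and choosing $\Delta$ from $\mathbb{E}[T_i]=T/m$; and (iii) making the averaged pull-count inequality tight enough that the $\Omega(\sqrt{KT_i})$ term survives the $O(\Delta^2 T_i\sqrt{T_i/K})$ correction, which forces the small-constant choice of $c$ in $\Delta$.
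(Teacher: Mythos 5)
You have reconstructed a result the paper does not actually prove: Theorem 5.1 here is quoted verbatim from \citet{agarwal2012contextual}, and the paper only \emph{uses} it (in \cref{Appendix:LB}) as a black box, so the comparison is with the original proof rather than with anything in this paper. Against that benchmark, your construction is essentially the original one: plant $m=\lfloor \ln N/\ln K\rfloor$ statistically independent $K$-armed bandits indexed by uniformly drawn contexts, take $\G=\{g_a: a\in[K]^m\}$ so that $|\G|=K^m\le N$ and realizability holds by construction, decompose the Bayes regret across contexts, and sum per-context $\Omega(\sqrt{K T_i})$ bounds with $T_i\approx T/m$ to get $\Omega(\sqrt{KT\ln N/\ln K})$. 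The one genuine difference is that the original argument invokes the multi-armed bandit lower bound of Auer et al.\ per context as a black box, whereas you re-derive it via a uniform prior and the averaged change-of-measure/pull-count inequality; this buys a self-contained proof at the cost of re-proving a standard lemma, and your treatment of the two delicate points — the contexts (hence the horizons $T_i$) being exogenous and algorithm-independent, so conditioning on the concentration event $T_i\ge T/(2m)$ is legitimate, and the decoupling of the $m$ sub-bandits under the product prior (which lower-bounds the worst case over $a$) — is sound. The arithmetic also checks out: with $\Delta=c\sqrt{Km/T}$ the correction $O(\Delta^2T_i\sqrt{T_i/K})=O(c^2\sqrt{KT_i})$ is dominated by the leading term $c(1-1/K)\sqrt{KT_i}$ for small $c$.

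One small inaccuracy to fix: your remark that for $T\lesssim Km$ the claimed bound ``holds vacuously'' is not right — a lower bound exceeding the maximal achievable regret would be \emph{false}, not vacuous. The admissibility constraint $\Delta\le 1/2$ genuinely forces $T\gtrsim K\ln N/\ln K$ for the stated rate, and the correct reading of the theorem (inherited from the original) is $\Omega\bigl(\min\{T,\sqrt{KT\ln N/\ln K}\}\bigr)$, or equivalently an implicit assumption $T\gtrsim K\ln N/\ln K$ beyond the stated $\ln N/\ln K\le T$; this fine print carries over unchanged to the CMDP application in \cref{Appendix:LB}, where each sub-bandit is invoked with horizon $T/(2M)$. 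This does not affect the validity of your construction in the intended regime.
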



We present a lower bound for \textbf{layered CMDP}, where the dynamics is known and context-independent. The rewards are context-dependent. Clearly, it implies a lower bound for the unknown dynamics setting. 
%
We remark that if the horizon length is $H=1$, since we assume that there is a unique start state and the CMDP is layered, the lower bound for CMAB stated in~\cref{thm:lower-bound-CMAB} holds. In the following theorem we show a lower bound for horizon $H \geq 2$.

\usetikzlibrary{positioning,angles,quotes}
\begin{figure}
    \centering
    \caption{Lower bound illustration}\label{fig:lower-bound}
  \begin{tikzpicture}[auto,node distance=8mm,>=latex,font=\small]
    \tikzstyle{round}=[thick,draw=black,circle]
    \node[round] (s0) {$s_0$};
    \node[above right=6mm and -10mm of s0] (c) {$c$};
    \node[round,above right=0mm and 15mm of s0] (s1) {$s^1_1$};
    \node[round,below right=0mm and 15mm of s0] (s2) {$s^1_2$};
    \node[below right=12mm and 15mm of s0] (dots1) {$\ldots$};
    \node[round,below right=20mm and 15mm of s0] (ss1) {$s^1_{M}$};
    
    \draw[->] (c) -- (s0);
    \draw[->] (s0) to node [above left] {$\frac{1}{M}$} (s1);
    \draw[->] (s0) to node [above left] {$\frac{1}{M}$} (s2);
    \draw[->] (s0) to node [above left] {$\frac{1}{M}$} (ss1);
    
    \node[round,above right=4mm and 30mm of s0] (s12) {$s^2_1$};
    \node[round,below right=0mm and 30mm of s0] (s22) {$s^2_2$};
    \node[below right=12mm and 30mm of s0] (dots2) {$\ldots$};
    \node[round,below right=20mm and 30mm of s0] (ss2) {$s^2_{M}$};
    
    \draw[->] (s1) to node {$1$} (s12);
    \draw[->] (s2) to node {$1$} (s22);
    \draw[->] (ss1) to node {$1$} (ss2);
    
    \node[above right=6mm and 40mm of s0] (s12d) {$\ldots$};
    \node[below right=2mm and 40mm of s0] (s22d) {$\ldots$};
    \node[below right=12mm and 40mm of s0] (dots2d) {$\ldots$};
    \node[below right=22mm and 42mm of s0] (ss2d) {$\ldots$};
    
    \draw[->] (s12) -- (s12d);
    \draw[->] (s22) -- (s22d);
    \draw[->] (ss2) -- (ss2d);
    
    \node[round,above right=4mm and 50mm of s0] (s1H) {$s^{H-1}_1$};
    \node[round,below right=-1mm and 50mm of s0] (s2H) {$s^{H-1}_2$};
    \node[below right=12mm and 50mm of s0] (dotsH) {$\ldots$};
    \node[round,below right=20mm and 51mm of s0] (ssH) {$s^{H-1}_{M}$};
    
    \draw[->] (s12d) -- (s1H);
    \draw[->] (s22d) -- (s2H);
    \draw[->] (ss2d) -- (ssH);
    
    \node[round,above right=-5mm and 70mm of s0] (sH) {$s_{H}$};
    
    \draw[->] (s1H) to node {$1$} (sH);
    \draw[->] (s2H) to node {$1$} (sH);
    \draw[->] (ssH) to node {$1$} (sH);
    
\end{tikzpicture}
\end{figure}
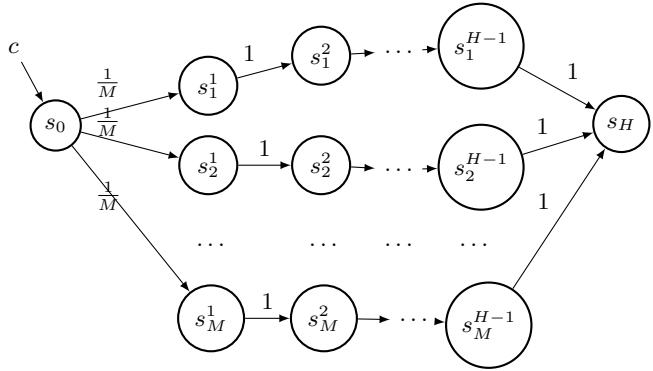

\begin{theorem}[lower bound for CMDP]\label{thm:LB-CMDP}
    Let $\delta \in (0,1)$, horizon $H \geq 2$ and ${M, N \in \N}$. 
    Let ${T \geq {8 M \log \frac{|S|}{\delta}+ 2M\ln N / \ln |A|}}$ and consider a CMDP $(\C,S,A, \M)$ for which ${|S| = M \cdot (H-1) + 2}$.

    Then, for any algorithm $\mathfrak{A}$, there exist a  base function class $\G \subseteq (\C \times A \to [0,1])$ of cardinality at most $N$ and a distribution $D(c,s,a,r)$ for which the realizability assumption holds for $\F = \G^S$, and with probability at least $1-\delta$, the expected regret of $\mathfrak{A}$ is 
    $\Omega \left(
    \sqrt{T  H |S| |A| \ln(N)/\ln(|A|)} \right)$.
\end{theorem}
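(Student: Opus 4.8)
The plan is to realize the layered CMDP drawn in~\cref{fig:lower-bound} and to argue that it decouples into $M(H-1)+1$ mutually independent contextual bandit instances, one per reward-bearing state, each of which is fed information on only a $1/M$ fraction of the $T$ episodes. The dynamics are known and context-independent: from $s_0$ we branch to $s^1_1,\dots,s^1_M$ with probability $1/M$ each, regardless of the action, and inside each chain $j$ the transition $s^h_j\to s^{h+1}_j$ is deterministic, terminating at $s_H$ (whose reward is $0$). Thus for every policy and context the occupancy of each layer state is $q_h(s^h_j\mid\pi(c;\cdot),P)=1/M$ for all $h\in\{1,\dots,H-1\}$, so $p_{min}=1/M$. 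At each of these $M(H-1)$ states (and at $s_0$) I place an \emph{independent} copy of the hard bandit instance of~\cref{thm:lower-bound-CMAB}, all drawn from a single base class $\G$ of cardinality at most $N$ and calibrated for $T'=T/(2M)$ rounds; taking $\F=\G^S$ makes the realizability assumption hold coordinate-wise, hence for $\F$.

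First I would record the value decomposition
\[
    V^{\pi(c;\cdot)}_{\M(c)}(s_0)
    = r^c(s_0,\pi(c;s_0)) + \sum_{h=1}^{H-1}\sum_{j=1}^{M}\frac{1}{M}\,r^c\big(s^h_j,\pi(c;s^h_j)\big),
\]
which follows directly from the occupancies above. This writes $\E[\Regrv_T]$ as the $\tfrac1M$-weighted sum over the reward-bearing states $s$ of the per-state regret $R_s:=\E\big[\sum_{t=1}^T r^{c_t}(s,\pi^\star(c_t;s))-r^{c_t}(s,\pi_t(c_t;s))\big]$. The key observation is that the event that episode $t$ visits $s$ is a fresh $\mathrm{Bernoulli}(1/M)$ draw, independent of $c_t$ and of the history determining $\pi_t$, hence independent of the round-$t$ gap; therefore $\tfrac1M R_s=\E\big[\sum_{t:\,s\text{ visited}} r^{c_t}(s,\pi^\star)-r^{c_t}(s,\pi_t)\big]$. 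In words, the occupancy-weighted per-state regret equals precisely the bandit regret accumulated over the episodes that visit $s$, which are exactly the episodes delivering feedback about $s$. Because distinct states carry independent instances and $\F=\G^S$ decouples across states, the learner's knowledge of $s$ is confined to these visits, so the per-state problem is a genuine CMAB run for a (random) number $n_s$ of feedback rounds.

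Next I would control the visit counts: each $n_s\sim\mathrm{Bin}(T,1/M)$ has mean $T/M$, so a Chernoff bound gives $\Prob(n_s<T/(2M))\le e^{-T/(8M)}\le \delta/|S|$ by $T\ge 8M\log(|S|/\delta)$, and a union bound over the $|S|$ states shows that with probability at least $1-\delta$ every reward-bearing state is visited at least $T/(2M)$ times. On this event I apply~\cref{thm:lower-bound-CMAB} to each per-state instance with $T'=T/(2M)$: its hypothesis $\ln N/\ln|A|\le T'$ is exactly guaranteed by the second term $2M\ln N/\ln|A|$ in the hypothesis on $T$, and truncating to the first $T'$ visited rounds is legitimate since the optimal arm is fixed and extra rounds only add non-negative gaps. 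This gives $\tfrac1M R_s=\Omega\big(\sqrt{|A|\,(T/M)\,\ln N/\ln|A|}\big)$ for each of the $M(H-1)$ layer states.

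Finally, summing over the layer states,
\[
    \E[\Regrv_T]\ \ge\ M(H-1)\cdot\Omega\Big(\sqrt{\tfrac{|A|\,(T/M)\,\ln N}{\ln|A|}}\Big)
    = \Omega\Big(\sqrt{(H-1)(|S|-2)\,\tfrac{|A|\,T\,\ln N}{\ln|A|}}\Big)
    = \Omega\Big(\sqrt{\tfrac{T\,H\,|S|\,|A|\,\ln N}{\ln|A|}}\Big),
\]
where I used $M(H-1)=|S|-2$ and $(H-1)(|S|-2)=\Theta(H|S|)$. The main obstacle is the middle step: rigorously converting the occupancy-weighted per-state regret into a genuine CMAB regret over only the $\sim T/M$ feedback rounds, and certifying that no cross-state information can help the learner, i.e.\ the partial-feedback/independence reduction, so that~\cref{thm:lower-bound-CMAB} applies state-by-state and the bounds add.
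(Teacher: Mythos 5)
Your proposal is correct and follows essentially the same route as the paper's proof: the same layered construction that decouples into one CMAB instance per reward-bearing state, the same Chernoff-plus-union-bound argument guaranteeing every state is visited at least $T/(2M)$ times (using $T \geq 8M\log(|S|/\delta)$), and the same state-by-state application of the CMAB lower bound with horizon $T/(2M)$ followed by summation over the $\Theta(|S|)$ states. The extra care you take with the occupancy-based value decomposition and the independence of visit events from the history simply makes explicit what the paper asserts in its claim that solving this CMDP is equivalent to solving $|S|-1$ independent CMAB instances.
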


\begin{proof}
Let $\delta \in (0,1)$, horizon $H \geq 2$, $N, M \in \N$, and $T$ that satisfy the requirements stated in the theorem.

Consider the following layered CMDP, $(\C,S,A,\M)$, where $\C \subseteq \R^d$, $\M$ maps a context $c \in \C$ to the MDP ${\M(c) = (S,A,P^c_\star,r^c_\star, s_0, H)}$ that is defined as follows.\\
$S = \{S_0, S_1, \ldots, S_{H}\}$ is a layered states space contains $M \cdot (H-1) +2$ states for which $S_0 = \{s_0\}$ and $S_H = \{s_H\}$, meaning, $s_0$ and $s_H$ are unique start and final states, respectively. 
In addition, for all $i \in \{1, 2, \ldots, H-1 \}$ we have $S_i = \{s^i_1, s^i_2, \ldots ,s^i_{M}\}$, meaning, each layer $i \in \{1,\dots,H-1\}$ contains $M \geq 1$ states and the layers are disjoint.

Let $A = \{a_1, \ldots, a_K\}$ be a set of $K$ actions.
We define a context-independent dynamics $P^c_\star = P$ for every context $c \in \mathcal{C}$, as follows.
\begingroup
\allowdisplaybreaks
\begin{align*}
    &P(s|s_0, a) = \frac{1}{M} \cdot \I[s \in S_1], \;\; \forall a \in A.\\
    &P(s|s^i_j, a) = \I[s = s^{i+1}_j] , \;\; \forall a \in A,\; i \in \{1,\ldots, H-2\},\; j\in  \{1,\ldots,M\}.\\
    &P(s|s^{H-1}_j, a) = \I[s = s_H],\; \forall a \in A,\; j \in \{1,\ldots, M\}.
\end{align*}
\endgroup
For illustration, see~\cref{fig:lower-bound}.

We assume that the dynamics $P$ is known to the learner.
$r^c_\star$ is an unknown context-dependent expected rewards function.

\begin{claim}\label{remark:LB-constuction}
    Minimizing regret in the above CMDP is equivalent to minimizing regret in $|S|-1$ CMAB instances.
\end{claim}

\begin{proof}
    The start state $s_0$ is unique and identical for all of the contexts, and, any choice of an action will move the agent to a uniformly at random chosen state in layer $1$.
    \\
    In layers $h=1, 2, \ldots H-1$ the dynamics is deterministic. In any state $s\in S_h$, for any choice of an action $a \in A$ the agent will move to the exactly same state, regardless of the context.
    \\
    That is, to maximize the cumulative reward, the agent needs to play in each state the arm with the maximal immediate expected reward. 
    In addition,
    the final state $s_H$ has zero rewards.
    Hence, since the dynamics is known to the learner, minimizing regret in this CMDP is equivalent to minimizing regret in $|S|-1$ CMAB instances. 
\end{proof}

By all the above and since the horizon is $H$, as explain in~\cref{remark:LB-constuction},
 solving the above CMDP instance
 is equivalent to solving $|S|-1 = M \cdot (H-1) + 1$ CMAB instances, 
one 
 for each state $s \in S\setminus \{s_H\}$.

By~\cref{thm:lower-bound-CMAB} for any algorithm
there exists a realizable functions class
$\mathcal{G} \subseteq (\C \times A \to [0,1])$ for which $|\mathcal{G}| \leq N$ and the expected regret of the algorithm on the appropriate CMAB problem
is $\Omega(\sqrt{T |A| \ln  N/ \ln |A|})$.
Hence, for any algorithm 
$\mathfrak{A}$, 
consider the function class  
$
    \F = \G^S \subseteq (\C \times S 
    \times A \to [0,1])
$, 
where $\mathcal{G}$ is the ``hard'' function class for the CMAB problem. 
\begin{remark}
    While the function class $\G$ is the same for all states, the true rewards function of an individual state $s$ ,might be different. Meaning, for two states $s \neq s'$, the true reward functions are $g^\star_s, g^\star_{s'} \in \G$ (respectively) are not necessarily identical. Hence, the CMAB instances defined by $s$ and $s'$ are different, and uncorrelated.
\end{remark}

Consider a $T$ rounds run of any algorithm $\mathfrak{A}$.
For every state $s \in S$ and round $t \in \{1,2,\ldots, T\}$, let $X^t_s$ be a Bernoulli random variable which indicates whether state $s$ was visited in round $t$.
(In other words, it indicates whether $s$ appears in the $t$'th trajectory generated by the algorithm).
By our construction, for all $s \in S \setminus \{s_0, s_H\}$ and $t \in \{1,2,\ldots, T\}$ we have that $\mathbb{P}[X^t_s = 1] = \frac{1}{M}$ and thus $\E[X^t_s] = \frac{1}{M}$. 
For all $s \in S$ let $X_s = \sum_{t=1}^T X^t_s$. It holds that $\E[X_s] = \frac{T}{M}$ for all  $s \in S \setminus \{s_0, s_H\}$. 
For the start and final states $s_0$ and $s_H$ we have that $X_{s_0} = X_{s_H} =T$ with probability $1$.

Hence, by multiplicative Chernoff bound, for all $s \in S \setminus \{s_0,s_H\}$ it holds that 
\begin{align*}
    \Pr\left[X_s \leq \frac{1}{2}\cdot \frac{T}{M}\right]
    \leq \exp{\left(\frac{-T/M \cdot 0.25}{2}\right)}
    =
    \exp{\left(\frac{-T}{8M}\right)}.
\end{align*}
Thus, by union bound
\begin{align*}
    \Pr\left[\exists s \in S :\; X_s \leq \frac{T}{2 M}\right]
    \leq 
    \sum_{s \in S} \Pr\left[X_s \leq  \frac{T}{2M}\right]
    \leq
    |S|\exp{\left(\frac{-T}{8M}\right)},
\end{align*}
which implies that
\begin{equation*}
    \Pr \left[\forall s \in S:\; X_s \geq  \frac{T}{2M}\right]
    \geq 1- |S|\exp{\left(\frac{-T}{8M}\right)}.
\end{equation*}
Hence, for $T \geq  8M \cdot \log \frac{|S|}{\delta}$ it holds that
\begin{equation}\label{eq:prob-visits}
    \Pr\left[\forall s \in S:\; X_s \geq  \frac{T}{2M}\right]
    \geq 1- \delta.
\end{equation}
Recall we demanded ${T \geq {8 M \log \frac{|S|}{\delta}+ 2M\ln N / \ln |A|}}$. 
Let $\Reg(X,s)$ denote the expected regret of the CMAB instance defined by state $s$ given that it was visited $X$ times.

By all the above, we have for any algorithm $\mathfrak{A}$ with probability at least $1-\delta$ that 

\begingroup
\allowdisplaybreaks
\begin{align*}
    \Reg_T(\mathfrak{A})
    & =
    \sum_{s \in S \setminus \{s_H\}}\E_{X_s}\left[\;\Reg(X_s, s)\;\right]
    \\
    \tag{Holds w.p. at least $1-\delta$, by~\cref{eq:prob-visits}, since the expected regret is non-decreasing}
    & \geq
    \sum_{s \in  S \setminus \{s_H\}}\Reg\left(\frac{T}{2M}, s\right)
    \\
    \tag{By~\cref{thm:lower-bound-CMAB}, since $\frac{T}{2M} \geq \ln N / \ln |A|$}
    & \geq
    \Omega \left(
    \sum_{s \in  S \setminus \{s_H\}} \sqrt{\frac{T}{M} \cdot |A| \ln(N)/\ln(|A|)} \right)
    \\
    & =
    \Omega \left(
    |S| \sqrt{\frac{T}{M} \cdot |A| \ln(N)/\ln(|A|)} \right)
    \\
    \tag{Since $M = \frac{|S|-2}{H-1}$}
    & =
    \Omega \left(
    |S| \sqrt{\frac{T H}{|S|} \cdot |A| \ln(N)/\ln(|A|)} \right)
    \\
    &=
    \Omega \left(
    \sqrt{T  H  |S|  |A| \ln(N)/\ln(|A|)} \right)
    ,
\end{align*}
\endgroup
as stated.
\end{proof}


    
    

\section{Auxiliary Lemmas}


\begin{lemma}[value-difference, Corollary $1$ of~\citet{efroni2020optimistic}]\label{lemma:val-diff-efroni}
    Let $M$, $M'$ be any $H$-finite horizon MDP. Then, for any two policies $\pi$, $\pi'$ the following holds
    \begin{align*}
        V^{\pi,M}_1(s) -  V^{\pi',M'}_1(s) =&
        \\
        =&
        \sum_{h=1}^{H-1} \E \left[ \langle  Q^{\pi,M}_h(s_h, \cdot) , \pi_h(\cdot|s_h) - \pi'_h(\cdot|s_h) \rangle |s_1 = s, \pi',M' \right]
        \\
        & +
        \sum_{h=1}^{H-1} \E \left[ 
        c_h(s_h,a_h) -  c'_h(s_h,a_h)
        + (p_h(\cdot|s_h,a_h)-p'_h(\cdot|s_h,a_h)) V^{\pi,M}_{h+1}
        |s_h = s, \pi',M' \right] .       
    \end{align*}
\end{lemma}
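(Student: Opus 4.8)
The plan is to prove the identity by backward telescoping on the stage index $h$, converting the value difference into a one-step recursion through the Bellman equations. I would write $V^{\pi,M}_h(s) = \langle \pi_h(\cdot|s), Q^{\pi,M}_h(s,\cdot)\rangle$ and $Q^{\pi,M}_h(s,a) = c_h(s,a) + \langle p_h(\cdot|s,a), V^{\pi,M}_{h+1}\rangle$, and analogously for the reference pair $(\pi',M')$, with the terminal convention $V^{\pi,M}_H \equiv V^{\pi',M'}_H \equiv 0$. The whole argument is a careful add-and-subtract followed by unrolling.

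First I would establish the single-stage identity. Fix a state $s$ and add and subtract $\langle \pi'_h(\cdot|s), Q^{\pi,M}_h(s,\cdot)\rangle$:
\begin{align*}
    V^{\pi,M}_h(s) - V^{\pi',M'}_h(s)
    &= \langle Q^{\pi,M}_h(s,\cdot), \pi_h(\cdot|s) - \pi'_h(\cdot|s)\rangle \\
    &\quad + \langle \pi'_h(\cdot|s), Q^{\pi,M}_h(s,\cdot) - Q^{\pi',M'}_h(s,\cdot)\rangle.
\end{align*}
For the second term I would expand the $Q$-difference with the Bellman equations and once more add and subtract $\langle p'_h(\cdot|s,a), V^{\pi,M}_{h+1}\rangle$, obtaining
\begin{align*}
    Q^{\pi,M}_h(s,a) - Q^{\pi',M'}_h(s,a)
    &= \big(c_h(s,a) - c'_h(s,a)\big) + \langle p_h(\cdot|s,a) - p'_h(\cdot|s,a), V^{\pi,M}_{h+1}\rangle \\
    &\quad + \langle p'_h(\cdot|s,a), V^{\pi,M}_{h+1} - V^{\pi',M'}_{h+1}\rangle.
\end{align*}

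Combining the two displays yields a recursion $V^{\pi,M}_h(s) - V^{\pi',M'}_h(s) = D_h(s) + \E_{a\sim\pi'_h(\cdot|s),\, s'\sim p'_h(\cdot|s,a)}[V^{\pi,M}_{h+1}(s') - V^{\pi',M'}_{h+1}(s')]$, where $D_h(s)$ collects the policy-difference term $\langle Q^{\pi,M}_h(s,\cdot), \pi_h(\cdot|s)-\pi'_h(\cdot|s)\rangle$ together with $\langle \pi'_h(\cdot|s),\, (c_h-c'_h) + \langle p_h-p'_h, V^{\pi,M}_{h+1}\rangle\rangle$. Unrolling this from $h=1$ and using $V_H\equiv 0$ telescopes the trailing expectation, leaving $\sum_{h=1}^{H-1}\E[D_h(s_h)\mid s_1=s,\pi',M']$, the expectation being over the trajectory generated by running $\pi'$ on $M'$. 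Splitting $D_h$ into its two pieces, and observing that $\langle\pi'_h(\cdot|s_h),\cdot\rangle$ is exactly the average over $a_h\sim\pi'_h(\cdot|s_h)$, produces precisely the two sums in the statement. The main subtlety is bookkeeping rather than any deep obstacle: one must keep every expectation consistently under the reference pair $(\pi',M')$ (the next-state law $p'$ and the action law $\pi'$) while the value and $Q$-functions being measured are those of $(\pi,M)$, and one must handle the stage indexing and the $h=H$ boundary term so that the recursion telescopes cleanly. I would present this as an induction on $H$ (equivalently a direct telescoping), checking at each step that every $\langle\cdot,\cdot\rangle$ is contracted over the correct index set — actions for the policy and $Q$-terms, next states for the transition term.
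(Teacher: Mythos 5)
Your proof is correct. The single-stage decomposition (add-and-subtract $\langle \pi'_h(\cdot|s), Q^{\pi,M}_h(s,\cdot)\rangle$, then add-and-subtract $\langle p'_h(\cdot|s,a), V^{\pi,M}_{h+1}\rangle$ inside the $Q$-difference), the resulting recursion under the reference law $(\pi',M')$, and the telescoping with the terminal convention $V_H \equiv 0$ together yield exactly the claimed identity, with the expectations correctly taken along trajectories of $(\pi',M')$ while the $Q$- and $V$-functions inside are those of $(\pi,M)$.

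One point of comparison: the paper itself does not prove this lemma --- it is imported verbatim as Corollary 1 of \citet{efroni2020optimistic} --- so there is no in-paper proof of this exact statement to measure against. However, the paper does prove, ``for completeness,'' two specializations of it (\cref{lemma:val-diff-simple-P,lemma:val-diff-simple-P-bar}), and those proofs use precisely your technique: backward induction on $h$ with Bellman equations and an add-and-subtract step. Your argument is strictly more general than the paper's, since those specializations take $\pi = \pi'$ deterministic (so the policy-difference term $\langle Q^{\pi,M}_h(s,\cdot), \pi_h(\cdot|s) - \pi'_h(\cdot|s)\rangle$ vanishes), whereas you retain it; in effect you have reconstructed the cited result of Efroni et al.\ rather than only the reduced forms the paper verifies.
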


Bellow we present two additional \textbf{well-known} versions of the above value-difference lemma.
We present them mainly for our convenience in using them, and provide proofs only for completeness.

\begin{lemma}[value difference where the dynamics is $P$]\label{lemma:val-diff-simple-P}
    Let $\pi$ be a deterministic policy. Let $M = (S,A,P,r,s_0,H)$ and $\overline{M} = (S,A,\overline{P},\overline{r},s_0, H)$ be two $H$-finite horizon MDPs. Then, for any $s \in S$ and $h \in [H-1]$ it holds that
    \begin{align*}
        &V^\pi_{M,h}(s) -  V^\pi_{\overline{M},h}(s) =\\
        &\E_{\pi, P}
        \left[ \sum_{h'=h}^{H-1} 
        \left(r(s_{h'}, a_{h'}) - \overline{r}(s_{h'}, a_{h'})    
        + \sum_{s' \in S}(P(s'|s_{h'}, a_{h'}) - \overline{P}(s'|s_{h'}, a_{h'}))V^\pi_{\overline{M},{h+1}(s')}\right) \Big| s_h = s \right].
    \end{align*}

\end{lemma}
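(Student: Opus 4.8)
The plan is to prove this value-difference identity by a standard telescoping/recursive expansion of the value function, unrolling the dynamics $P$ one step at a time while holding the comparison MDP $\overline{M}$ fixed. This is the ``simulation lemma'' in the special case of a deterministic policy $\pi$, and it is really just a bookkeeping exercise once the right recursion is set up.

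\textbf{Approach.} First I would write down Bellman's equations for both MDPs at layer $h$: since $\pi$ is deterministic with $a_h = \pi(s)$, we have $V^\pi_{M,h}(s) = r(s,\pi(s)) + \sum_{s'} P(s'|s,\pi(s)) V^\pi_{M,h+1}(s')$ and identically $V^\pi_{\overline{M},h}(s) = \overline{r}(s,\pi(s)) + \sum_{s'} \overline{P}(s'|s,\pi(s)) V^\pi_{\overline{M},h+1}(s')$. Subtracting the second from the first and then adding and subtracting the cross term $\sum_{s'} P(s'|s,\pi(s)) V^\pi_{\overline{M},h+1}(s')$ yields
\begin{align*}
    V^\pi_{M,h}(s) - V^\pi_{\overline{M},h}(s)
    = & \; \big(r(s,\pi(s)) - \overline{r}(s,\pi(s))\big)
    + \sum_{s'} \big(P(s'|s,\pi(s)) - \overline{P}(s'|s,\pi(s))\big) V^\pi_{\overline{M},h+1}(s') \\
    & \; + \sum_{s'} P(s'|s,\pi(s)) \big(V^\pi_{M,h+1}(s') - V^\pi_{\overline{M},h+1}(s')\big).
\end{align*}
The first two terms are exactly the ``instantaneous'' contribution that should appear in the sum at index $h'=h$, and the last term is the same quantity we are computing, advanced one layer and pushed forward through the true dynamics $P$.

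\textbf{Key step.} The recursion above says the value difference at $(s,h)$ equals a local term plus the expectation, under one step of $P$ following $\pi$, of the value difference at the next layer. I would then either iterate this identity down to layer $H$ (using the base case $V^\pi_{M,H} = V^\pi_{\overline{M},H} = 0$ on the terminal state, so the recursion terminates), or more cleanly prove the claim by backward induction on $h$ from $H-1$ down to the target layer. In the inductive step one substitutes the induction hypothesis for $V^\pi_{M,h+1}(s') - V^\pi_{\overline{M},h+1}(s')$, recognizes $\sum_{s'} P(s'|s,\pi(s)) \mathbb{E}_{\pi,P}[\,\cdot\,|s_{h+1}=s']$ as a single expectation $\mathbb{E}_{\pi,P}[\,\cdot\,|s_h=s]$ over trajectories started at $s$ (by the Markov property and the tower rule), and merges the local $h'=h$ term into the summation to recover $\sum_{h'=h}^{H-1}$.

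\textbf{Main obstacle.} There is no deep difficulty here; the only thing requiring care is the bookkeeping that collapses the nested one-step expectations into the single trajectory expectation $\mathbb{E}_{\pi,P}[\cdots \mid s_h=s]$ appearing in the statement, and keeping the indices of the inner sum $\sum_{h'=h}^{H-1}$ aligned correctly so that the local term lands at $h'=h$ and the inherited sum starts at $h'=h+1$. I would note that because $\pi$ is deterministic, $\langle Q_h, \pi_h - \pi'_h\rangle$-type terms (the policy-mismatch terms of the general \cref{lemma:val-diff-efroni}) vanish, which is exactly why this specialized form has only reward and transition-difference terms; I would remark that the lemma is precisely the instantiation of \cref{lemma:val-diff-efroni} with $\pi'=\pi$ and appropriate relabeling, so it could also be cited as an immediate corollary rather than reproved from scratch.
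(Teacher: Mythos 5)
Your proposal is correct and follows essentially the same route as the paper's proof: Bellman's equations at layer $h$, adding and subtracting the cross term $\sum_{s'} P(s'|s,\pi(s))\,V^\pi_{\overline{M},h+1}(s')$, and closing the recursion by backward induction from the base case $V^\pi_{M,H}=V^\pi_{\overline{M},H}=0$, collapsing the nested one-step expectations into $\E_{\pi,P}[\,\cdot\mid s_h=s]$. Your closing remark is also consistent with the paper, which presents this lemma as a well-known specialization (proved only for completeness) of \cref{lemma:val-diff-efroni}, obtainable by setting $\pi'=\pi$ so the policy-mismatch inner-product terms vanish and relabeling the two MDPs.
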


\begin{remark}
    Since from the final state (i.e., the state at time $H$) there are no transitions or rewards, for completeness we define
    \[
        V^\pi_{M,H}(s) = V^\pi_{\overline{M},H}(s) = 0
        ,\;\; \forall s \in S.
    \]
\end{remark}

\begin{proof}
    We prove the lemma by backwards induction on $h$.
    \\
    \underline{Base case:} $h=H-1$.
    By Bellman equations for every state $s \in S$ the following holds.
    \begingroup
    \allowdisplaybreaks
    \begin{align*}
        V^\pi_{M,H-1}(s) -  V^\pi_{\overline{M},H-1}(s)
        = &
        r(s, \pi(s)) + \underbrace{\mathop{\E}_{s' \sim P(\cdot | s, \pi(s))}\left[ V^\pi_{M,H}(s') \right]}_{=0} 
        \\
        & - 
        \left( \overline{r}(s, \pi(s)) + \underbrace{\mathop{\E}_{s' \sim \overline{P}(\cdot | s, \pi(s))}\left[ V^\pi_{\overline{M},H}(s') \right]}_{=0} \right)
        \\
        = &  
        r(s, \pi(s)) - \overline{r}(s, \pi(s))
        \\
        \underbrace{=}_{(1)} &
        \E_{\pi,P}[r(s_{H-1}, \pi(s_{H-1})) - \overline{r}(s_{H-1}, \pi(s_{H-1}))| s_{H-1} = s]
        \\
        \underbrace{=}_{(2)} &
        \E_{\pi,P}\Big[r(s_{H-1}, \pi(s_{H-1})) - \overline{r}(s_{H-1}, \pi(s_{H-1}))
        \\
        & + \sum_{s' \in S} (P(s'|s, \pi(s)) - \overline{P}(s'|s,\pi(s)))\cdot V^\pi_{\overline{M},H}(s') \Big| s_{H-1} = s \Big],
    \end{align*}
    \endgroup
    where identity $(1)$ is since given that $s_{H-1} = s$, the expectation over $\pi$ and $P$ has no effect on both terms. $(2)$ is since $V^\pi_{\overline{M},H}(s) = 0$ for all $s' \in S$.
    \\
    \underline{Induction step:} we assume the induction hypothesis holds for all $k \in [h+1, H]$ and prove for $h$.
    Consider the following derivation for any state $s \in S$.
    \begingroup
    \allowdisplaybreaks
    \begin{align*}
        &V^\pi_{M,h}(s) -  V^\pi_{\overline{M},h}(s) 
        \\
        \underbrace{=}_{(1)} & 
        r(s, \pi(s)) + \mathop{\E}_{s' \sim P(\cdot | s, \pi(s))}\left[ V^\pi_{M,h+1}(s') \right]
        - \left( \overline{r}(s, \pi(s)) + \mathop{\E}_{s' \sim \overline{P}(\cdot | s, \pi(s))}\left[ V^\pi_{\overline{M},h+1}(s') \right] \right)
        \\
        \underbrace{=}_{(2)} & 
        r(s, \pi(s)) -  \overline{r}(s, \pi(s)) 
        + 
        \mathop{\E}_{s' \sim P(\cdot | s, \pi(s))}\left[ V^\pi_{M,h+1}(s') \right]
        -
        \mathop{\E}_{s' \sim P(\cdot | s, \pi(s))}\left[ V^\pi_{\overline{M},h+1}(s') \right]
        \\
        & +
        \mathop{\E}_{s' \sim P(\cdot | s, \pi(s))}\left[ V^\pi_{\overline{M},h+1}(s') \right]
        -
        \mathop{\E}_{s' \sim \overline{P}(\cdot | s, \pi(s))}\left[ V^\pi_{\overline{M},h+1}(s') \right]
        \\
        \underbrace{=}_{(3)} & 
        r(s, \pi(s)) -  \overline{r}(s, \pi(s)) 
        + 
        \mathop{\E}_{s' \sim P(\cdot | s, \pi(s))}\left[ V^\pi_{M,h+1}(s') -  V^\pi_{\overline{M},h+1}(s') \right]
        \\
        & +
        \mathop{\E}_{s' \sim P(\cdot | s, \pi(s))}\left[ V^\pi_{\overline{M},h+1}(s') \right]
        -
        \mathop{\E}_{s' \sim \overline{P}(\cdot | s, \pi(s))}\left[ V^\pi_{\overline{M},h+1}(s') \right]
        \\
        \underbrace{=}_{(4)} &
        r(s, \pi(s)) -  \overline{r}(s, \pi(s)) 
         + 
        \mathop{\E}_{s' \sim P(\cdot | s, \pi(s))} \Big [\E_{\pi,P} \Big[ \sum_{h' = h+1} \Big(r(s_{h'}, a_{h'}) - \overline{r}(s_{h'}, a_{h'})  \\
        & + \sum_{s'' \in S}(P(s'' |s_{h'}, a_{h'}) - \overline{P}(s'' |s_{h'}, a_{h'}) V^\pi_{\overline{M},h'+1}(s'')\Big)\Big| s_{h+1} = s' \Big]\Big] \\
        & +\sum_{s'' \in S}(P(s'' |s, \pi(s)) - \overline{P}(s'' |s, \pi(s)) V^\pi_{\overline{M},h+1}(s'')
        \\
        \underbrace{=}_{(5)} &
         \mathop{\E}_{\pi,P} \left[ \sum_{h' = h}^{H-1} \left(r(s_{h'}, a_{h'}) - \overline{r}(s_{h'}, a_{h'})  + \sum_{s' \in S}\left(P(s' |s_{h'}, a_{h'}) - \overline{P}(s' |s_{h'}, a_{h'})\right) V^\pi_{\overline{M},h'+1}(s')\right) \Big| s_{h} = s \right].
    \end{align*}
    \endgroup
     Here, 
     $(1)$ is by Bellman equations for the value functions.
     $(2)$ is by adding and subtracting $\mathop{\E}_{s \sim P(\cdot | s, \pi(s))}\left[ V^\pi_{\overline{M},h+1}(s') \right] $ and re-organizing.
    $(3)$ is by linearity of expectation.
    $(4)$ is by the induction hypothesis.
    $(5)$ is since the expectation over $s' \sim P(\cdot| s,\pi(s))$ translates to the expectation induced by $\pi$ and $P$ when applied on $s_{h+1}$ given that $s_h = s$. Hence, given that $s_h = s$ and since $\pi$ is deterministic, taking expectation over $\pi$ and $P$ has no influence on both $r(s_{h}, a_{h})= r(s, \pi(s))$ and $\overline{r}(s_{h}, a_{h}) = \overline{r}(s, \pi(s))$.
\end{proof} 


\begin{lemma}[value difference where the dynamics is $\overline{P}$]\label{lemma:val-diff-simple-P-bar}
    Let $\pi$ be a deterministic policy. Let $M = (S,A,P,r,s_0,H)$ and $\overline{M} = (S,A,\overline{P},\overline{r},s_0,H)$ be two $H$-finite horizon MDPs. Then, for any $s \in S$ and $h \in [H-1]$ it holds that
    \begin{align*}
        &V^\pi_{M,h}(s) -  V^\pi_{\overline{M},h}(s) =\\
        &\E_{\pi, \overline{P}}
        \left[ \sum_{h'=h}^{H-1} 
        \left(r(s_{h'}, a_{h'}) - \overline{r}(s_{h'}, a_{h'})    + \sum_{s' \in S}(P(s'|s_{h'}, a_{h'}) - \overline{P}(s'|s_{h'}, a_{h'}))V^\pi_{M,{h+1}(s')}\right) \Big| s_h = s \right].
    \end{align*}
\end{lemma}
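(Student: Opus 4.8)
The plan is to derive this lemma directly from \cref{lemma:val-diff-simple-P} by exploiting the symmetry of that statement under interchange of the two MDPs. The key observation is that \cref{lemma:val-diff-simple-P} holds for \emph{any} ordered pair of $H$-finite horizon MDPs sharing the same state and action spaces, with the convention that the expectation is taken under the \emph{first} MDP's dynamics while the value function appearing in the correction term belongs to the \emph{second} MDP. The target statement differs from \cref{lemma:val-diff-simple-P} only in that the expectation is over $\overline{P}$ and the surviving value function is $V^\pi_M$; this is precisely what one gets by swapping the roles of $M$ and $\overline{M}$ and negating.

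Concretely, I would apply \cref{lemma:val-diff-simple-P} to the ordered pair $(\overline{M}, M)$, i.e.\ treating $\overline{M} = (S,A,\overline{P},\overline{r},s_0,H)$ as the ``first'' MDP and $M = (S,A,P,r,s_0,H)$ as the ``second''. This yields, for every $s \in S$ and $h \in [H-1]$,
\begin{align*}
    V^\pi_{\overline{M},h}(s) - V^\pi_{M,h}(s)
    = \E_{\pi,\overline{P}}\Bigg[ \sum_{h'=h}^{H-1}\Big( &\overline{r}(s_{h'},a_{h'}) - r(s_{h'},a_{h'}) \\
    &+ \sum_{s' \in S}\big(\overline{P}(s'|s_{h'},a_{h'}) - P(s'|s_{h'},a_{h'})\big) V^\pi_{M,h'+1}(s') \Big) \Big| s_h = s \Bigg],
\end{align*}
because in this application the expectation is governed by the first MDP's dynamics $\overline{P}$ and the value function inside the correction term is that of the second MDP, namely $V^\pi_M$.

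Negating both sides and distributing the minus sign through each summand turns $\overline{r}-r$ into $r-\overline{r}$ and $\overline{P}-P$ into $P-\overline{P}$, while leaving the expectation operator $\E_{\pi,\overline{P}}$ and the value function $V^\pi_M$ untouched; this is exactly the claimed identity. Since the argument merely rearranges an already-established equality, there is no genuine obstacle here — the only point requiring care is tracking the sign flip and confirming that it is $V^\pi_M$, not $V^\pi_{\overline{M}}$, that appears in the correction term, which is forced by the first/second-MDP convention of \cref{lemma:val-diff-simple-P}. As an alternative that matches the self-contained style of the preceding lemma, one could instead reprove the statement from scratch by backwards induction on $h$, copying the structure of the proof of \cref{lemma:val-diff-simple-P} verbatim but replacing every occurrence of $P$ by $\overline{P}$ and every $V^\pi_{\overline{M}}$ by $V^\pi_M$: the base case $h=H-1$ again reduces to $r(s,\pi(s))-\overline{r}(s,\pi(s))$ via Bellman's equations, and the induction step proceeds by adding and subtracting $\E_{s'\sim\overline{P}(\cdot|s,\pi(s))}[V^\pi_{M,h+1}(s')]$ and invoking the induction hypothesis under $\overline{P}$.
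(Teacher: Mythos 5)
Your proposal is correct, and your primary argument takes a genuinely different (and shorter) route than the paper. The paper proves \cref{lemma:val-diff-simple-P-bar} from scratch by backwards induction on $h$, mirroring the proof of \cref{lemma:val-diff-simple-P} step by step: the base case $h=H-1$ reduces to $r(s,\pi(s))-\overline{r}(s,\pi(s))$ via Bellman's equations, and the induction step adds and subtracts $\E_{s'\sim\overline{P}(\cdot|s,\pi(s))}\left[V^\pi_{M,h+1}(s')\right]$ before invoking the hypothesis under $\overline{P}$ — i.e., exactly the fallback alternative you sketch at the end. Your main argument instead observes that \cref{lemma:val-diff-simple-P} is stated for an arbitrary ordered pair of MDPs sharing $(S,A,s_0,H)$, so instantiating it with the pair $(\overline{M},M)$ and multiplying by $-1$ yields the target identity verbatim: the expectation is forced to be under the first MDP's dynamics ($\overline{P}$) and the surviving value function to be the second MDP's ($V^\pi_M$), and negation turns $\overline{r}-r$ and $\overline{P}-P$ into $r-\overline{r}$ and $P-\overline{P}$ while leaving $\E_{\pi,\overline{P}}$ and $V^\pi_M$ fixed. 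This is a valid derivation and buys brevity plus the conceptual point that the two lemmas are one identity up to relabeling; the paper's induction buys a self-contained proof in the same style as its neighbor (consistent with the authors' remark that these proofs are included only for completeness), at the cost of essentially duplicating the earlier argument. The only point of care, which you correctly flag implicitly, is that both statements contain the same index typo ($V^\pi_{\cdot,{h+1}(s')}$ should read $V^\pi_{\cdot,h'+1}(s')$); under that reading your swap-and-negate step matches the target exactly.
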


\begin{proof}
    We prove the lemma by backwards induction on $h$.
    \\
    \underline{Base case:} $h=H-1$.
    By Bellman equations we have for every state $s \in S$: 
    \begingroup
    \allowdisplaybreaks
    \begin{align*}
        V^\pi_{M,H-1}(s) -  V^\pi_{\overline{M},H-1}(s)
        = &
        r(s, \pi(s)) + \underbrace{\mathop{\E}_{s' \sim P(\cdot | s, \pi(s))}\left[ V^\pi_{M,H}(s') \right]}_{=0} 
        \\
        & - 
        \left( \overline{r}(s, \pi(s)) + \underbrace{\mathop{\E}_{s' \sim \overline{P}(\cdot | s, \pi(s))}\left[ V^\pi_{\overline{M},H}(s') \right]}_{=0} \right)
        \\
        = &  
        r(s, \pi(s)) - \overline{r}_{H-1}(s, \pi(s))
        \\
        \underbrace{=}_{(1)} &
        \E_{\pi,\overline{P}}[r(s_{H-1}, \pi(s_{H-1})) - \overline{r}_{H-1}(s_{H-1}, \pi(s_{H-1}))| s_{H-1} = s]
        \\
        \underbrace{=}_{(2)} &
        \E_{\pi,\overline{P}}\Big[r(s_{H-1}, \pi(s_{H-1})) - \overline{r}_{H-1}(s_{H-1}, \pi(s_{H-1}))
        \\
        & + \sum_{s' \in S} (P(s'|s, \pi(s)) - \overline{P}(s'|s,\pi(s)))\cdot V^\pi_{M,H}(s') \Big| s_{H-1} = s \Big],
    \end{align*}
    \endgroup
    where identity $(1)$ is since given that $s_{H-1} = s$, the expectation over $\pi$ and $\overline{P}$ has no effect on both terms. $(2)$ is since $V^\pi_{M,H}(s) = 0$ for all $s' \in S$.
    \\
    \underline{Induction step:} we assume correctness for all $k \in [h+1, H]$ and prove for $h$.
    Consider the following derivation for any state $s \in S$.
    \begingroup
    \allowdisplaybreaks
    \begin{align*}
        &V^\pi_{M,h}(s) -  V^\pi_{\overline{M},h}(s) 
        \\
        \underbrace{=}_{(1)} & 
        r(s, \pi(s)) + \mathop{\E}_{s' \sim P(\cdot | s, \pi(s))}\left[ V^\pi_{M,h+1}(s') \right]
        - \left( \overline{r}(s, \pi(s)) + \mathop{\E}_{s' \sim \overline{P}(\cdot | s, \pi(s))}\left[ V^\pi_{\overline{M},h+1}(s') \right] \right)
        \\
        \underbrace{=}_{(2)} & 
        r(s, \pi(s)) -  \overline{r}(s, \pi(s)) 
        + 
        \mathop{\E}_{s' \sim P(\cdot | s, \pi(s))}\left[ V^\pi_{M,h+1}(s') \right]
        -
        \mathop{\E}_{s' \sim \overline{P}(\cdot | s, \pi(s))}\left[ V^\pi_{M,h+1}(s') \right]
        \\
        & +
        \mathop{\E}_{s' \sim \overline{P}(\cdot | s, \pi(s))}\left[ V^\pi_{M,h+1}(s') \right]
        -
        \mathop{\E}_{s' \sim \overline{P}(\cdot | s, \pi(s))}\left[ V^\pi_{\overline{M},h+1}(s') \right]
        \\
        \underbrace{=}_{(3)} & 
        r(s, \pi(s)) -  \overline{r}(s, \pi(s)) 
        + 
        \mathop{\E}_{s' \sim P(\cdot | s, \pi(s))}\left[ V^\pi_{M,h+1}(s') \right]
        -
        \mathop{\E}_{s' \sim \overline{P}(\cdot | s, \pi(s))}\left[ V^\pi_{M,h+1}(s') \right]
        \\
        & +
        \mathop{\E}_{s' \sim \overline{P}(\cdot | s, \pi(s))}\left[ V^\pi_{M,h+1}(s') -  V^\pi_{\overline{M},h+1}(s') \right]
        \\
        \underbrace{=}_{(4)} &
        r(s, \pi(s)) -  \overline{r}(s, \pi(s)) 
        +\sum_{s'' \in S}(P(s'' |s, \pi(s))- \overline{P}(s'' |s, \pi(s))\cdot V^\pi_{M,h+1}(s'')
        \\
         & +
         \mathop{\E}_{s' \sim \overline{P}(\cdot | s, \pi(s))} \Big [\E_{\pi,\overline{P}} \Big[ \sum_{h' = h+1} \Big(r(s_{h'}, a_{h'}) - \overline{r}(s_{h'}, a_{h'})  \\
        & + \sum_{s'' \in S}(P(s'' |s_{h'}, a_{h'}) - \overline{P}(s'' |s_{h'}, a_{h'})\cdot V^\pi_{M,h'+1}(s'')\Big)\Big| s_{h+1} = s' \Big]\Big] \\
        \underbrace{=}_{(5)} &
         \mathop{\E}_{\pi,\overline{P}} \left[ \sum_{h' = h}^{H-1} \left(r(s_{h'}, a_{h'}) - \overline{r}(s_{h'}, a_{h'})  + \sum_{s' \in S}\left(P(s' |s_{h'}, a_{h'}) - \overline{P}(s' |s_{h'}, a_{h'})\right)\cdot V^\pi_{M,h'+1}(s')\right) \Big| s_{h} = s \right].
    \end{align*}
    \endgroup
     Here, 
     $(1)$ is by Bellman equations for the value functions.
     $(2)$ is by adding and subtracting $\mathop{\E}_{s \sim \overline{P}(\cdot | s, \pi(s))}\left[ V^\pi_{M,h+1}(s') \right] $ and re-organizing.
    $(3)$ is by linearity of expectation.
    $(4)$ is by the induction hypothesis.
    $(5)$ is since the expectation over $s' \sim \overline{P}(\cdot| s,\pi(s))$ translates to the expectation induced by $\pi$ and $\overline{P}$ when applied on $s_{h+1}$ given that $s_h = s$. Hence given that $s_h = s$ and $\pi$ is deterministic, taking expectation over $\pi$ and $\overline{P}$ has no influence on both $r(s_{h}, a_{h})= r(s, \pi(s))$ and $\overline{r}(s_{h}, a_{h}) = \overline{r}(s, \pi(s))$.
\end{proof} 

\begin{lemma}[Bretagnolle Huber-Carol inequality, see e.g.,~\citet{MannorMT-RLbook}]\label{lemma:Bretagnolle-Huber-Carol}
    Let $X$ be a random variable taking values in
    $\{1,2,\ldots,k\}$ where $\Prob[X = i] = p_i$. Assume we sample $X$ for $n$ times and observe the value $i$ in $\hat{n}_i$ outcomes. Then,
    \begin{align*}
        \Prob\left[\sum_{i=1}^k \Big| \frac{\hat{n}_i}{n} -p_i \Big|  \geq \lambda \right] \leq 2^{k+1} \exp{(-n\lambda^2/2)}. 
    \end{align*}
\end{lemma}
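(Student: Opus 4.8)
The plan is to control the $\ell_1$ deviation of the empirical distribution by a variational identity, reduce to scalar deviations by a union bound over sign patterns, and finish with Hoeffding's inequality. Throughout, let $X_1,\dots,X_n$ denote the $n$ i.i.d.\ draws of $X$, so that $\hat n_i = \sum_{j=1}^n \I[X_j = i]$.

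First I would rewrite the $\ell_1$ distance as a maximum of a linear functional. For any $v\in\R^k$ one has $\sum_{i=1}^k |v_i| = \max_{\epsilon\in\{-1,+1\}^k}\sum_{i=1}^k \epsilon_i v_i$, and applying this to $v_i = \hat n_i/n - p_i$ gives
\[
    \sum_{i=1}^k \Big|\tfrac{\hat n_i}{n}-p_i\Big|
    =
    \max_{\epsilon\in\{-1,+1\}^k}\ \sum_{i=1}^k \epsilon_i\Big(\tfrac{\hat n_i}{n}-p_i\Big).
\]
The maximizing $\epsilon$ is itself random (it depends on the realized counts), so I would \emph{not} substitute it; instead I would bound the event $\{\max_\epsilon(\cdots)\ge\lambda\}$ by the union $\bigcup_{\epsilon}\{\sum_i \epsilon_i(\hat n_i/n - p_i)\ge\lambda\}$ and take a union bound over the $2^k$ fixed sign patterns.

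Second, for each fixed $\epsilon$ I would identify the centered scalar quantity. Since $\sum_i \epsilon_i \hat n_i/n = \tfrac1n\sum_{j=1}^n \epsilon_{X_j}$ and $\sum_i \epsilon_i p_i = \E[\epsilon_{X_j}]$, we get
\[
    \sum_{i=1}^k \epsilon_i\Big(\tfrac{\hat n_i}{n}-p_i\Big)
    =
    \frac1n\sum_{j=1}^n\big(\epsilon_{X_j}-\E[\epsilon_{X_j}]\big),
\]
an average of $n$ i.i.d., mean-zero random variables, each valued in $\{-1,+1\}$ and hence of range $2$. Hoeffding's inequality (with $\sum_j(b_j-a_j)^2 = 4n$ and threshold $n\lambda$) then yields, for each fixed $\epsilon$,
\[
    \Prob\Big[\tfrac1n\textstyle\sum_{j=1}^n(\epsilon_{X_j}-\E[\epsilon_{X_j}])\ge\lambda\Big]
    \le
    \exp\Big(-\frac{2(n\lambda)^2}{4n}\Big)
    =
    \exp\Big(-\frac{n\lambda^2}{2}\Big).
\]
Summing over the $2^k$ sign patterns gives $\Prob[\sum_i|\hat n_i/n-p_i|\ge\lambda]\le 2^k\exp(-n\lambda^2/2)\le 2^{k+1}\exp(-n\lambda^2/2)$, which is the claim (indeed with a factor of $2$ to spare).

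This argument is short, so there is no deep obstacle; the two points that require care are precisely where one tends to slip. In the variational step one must union-bound over all $2^k$ \emph{deterministic} sign patterns rather than plug in the sample-dependent maximizer. In the Hoeffding step one must track the bounded-range constant correctly: the summands $\epsilon_{X_j}$ lie in $[-1,1]$, so each has range $2$ and $\sum_j(b_j-a_j)^2=4n$, which is exactly what produces the exponent $n\lambda^2/2$ rather than a weaker constant. Everything else is routine bookkeeping. (An equivalent route replaces sign patterns by subsets $S\subseteq[k]$, using $\sum_i|\hat n_i/n-p_i| = 2\max_S(\hat P(S)-P(S))$ with $\hat P(S)=\tfrac1n\sum_j\I[X_j\in S]$ and one-sided Hoeffding at threshold $\lambda/2$ on the Bernoulli averages $\I[X_j\in S]$; this again gives the factor $2^k$.)
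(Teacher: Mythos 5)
Your proof is correct. Note that the paper itself does not prove this lemma at all: it is stated in the auxiliary-lemmas section as a known concentration result with a citation to a textbook, so there is no in-paper argument to compare against. Your argument is the standard one (sometimes attributed to Weissman et al.\ in the subset form), and all the delicate points check out: the variational identity $\sum_i |v_i| = \max_{\epsilon\in\{-1,+1\}^k}\sum_i \epsilon_i v_i$ is applied via a union bound over the $2^k$ \emph{fixed} sign patterns rather than by plugging in the data-dependent maximizer; the identity $\sum_i \epsilon_i \hat n_i = \sum_{j=1}^n \epsilon_{X_j}$ correctly reduces each event to a sum of i.i.d.\ mean-zero variables with range $2$; and Hoeffding with $\sum_j (b_j-a_j)^2 = 4n$ at threshold $n\lambda$ gives exactly $\exp(-n\lambda^2/2)$, so the union bound yields $2^k \exp(-n\lambda^2/2)$, which is even slightly stronger than the stated $2^{k+1}\exp(-n\lambda^2/2)$. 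Your parenthetical alternative via subsets $S\subseteq[k]$ and $\sum_i|\hat n_i/n - p_i| = 2\max_S(\hat P(S)-P(S))$ (valid because both $\hat P$ and $P$ are probability vectors) is equally sound and delivers the same constant. In short: the statement is imported in the paper as a black box, and you have supplied a correct, self-contained proof of it.
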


\begin{corollary}
    With probability at least $1-\delta$  we have
    \[
        \sum_{i=1}^k \Big| \frac{\hat{n}_i}{n} -p_i \Big|  \leq \lambda
    \]
    for any
    \[
        \lambda \geq 
        \sqrt{\frac{2}{n}\ln(1/\delta)+ (k+1)\ln(2)}.
    \]
\end{corollary}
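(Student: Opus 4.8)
The plan is to derive this corollary directly from the Bretagnolle–Huber–Carol inequality (the stated Lemma~\ref{lemma:Bretagnolle-Huber-Carol}) by a standard ``set the failure probability to $\delta$ and invert'' argument. The target is to show that for $\lambda \geq \sqrt{\frac{2}{n}(\ln(1/\delta) + (k+1)\ln 2)}$, we have $\sum_{i=1}^k |\hat{n}_i/n - p_i| \leq \lambda$ with probability at least $1 - \delta$. First I would take the contrapositive viewpoint: the inequality bounds the probability of the \emph{bad} event $\{\sum_{i=1}^k |\hat{n}_i/n - p_i| \geq \lambda\}$ by $2^{k+1}\exp(-n\lambda^2/2)$, so the complementary \emph{good} event has probability at least $1 - 2^{k+1}\exp(-n\lambda^2/2)$. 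It therefore suffices to force the failure bound $2^{k+1}\exp(-n\lambda^2/2)$ to be at most $\delta$.

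The main computation is to solve $2^{k+1}\exp(-n\lambda^2/2) \leq \delta$ for $\lambda$. Rewriting $2^{k+1} = \exp((k+1)\ln 2)$, the left-hand side becomes $\exp\bigl((k+1)\ln 2 - n\lambda^2/2\bigr)$, so the requirement is $(k+1)\ln 2 - n\lambda^2/2 \leq \ln \delta$, i.e. $n\lambda^2/2 \geq (k+1)\ln 2 + \ln(1/\delta)$, which rearranges to $\lambda \geq \sqrt{\frac{2}{n}\bigl(\ln(1/\delta) + (k+1)\ln 2\bigr)}$. This is exactly the threshold stated in the corollary (modulo a harmless reading of the displayed expression, where the $(k+1)\ln 2$ term sits inside the same $\frac{2}{n}(\cdot)$ scaling as $\ln(1/\delta)$). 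I would then note that since the failure probability $2^{k+1}\exp(-n\lambda^2/2)$ is monotonically decreasing in $\lambda$, any $\lambda$ at least as large as this threshold also satisfies the bound, which justifies stating the conclusion for all $\lambda \geq \sqrt{\frac{2}{n}(\ln(1/\delta) + (k+1)\ln 2)}$ rather than only at equality.

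Concretely, the proof would read: fix $\lambda \geq \sqrt{\frac{2}{n}(\ln(1/\delta) + (k+1)\ln 2)}$. Then $n\lambda^2/2 \geq \ln(1/\delta) + (k+1)\ln 2$, so
\[
    2^{k+1}\exp(-n\lambda^2/2)
    = \exp\bigl((k+1)\ln 2 - n\lambda^2/2\bigr)
    \leq \exp(-\ln(1/\delta))
    = \delta.
\]
Applying Lemma~\ref{lemma:Bretagnolle-Huber-Carol} with this $\lambda$ gives $\Prob\bigl[\sum_{i=1}^k |\hat{n}_i/n - p_i| \geq \lambda\bigr] \leq \delta$, and taking complements yields the claim.

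There is no genuine obstacle here; the result is an elementary inversion of a tail bound. The only point requiring a little care is the reading of the displayed threshold: as typeset, $\sqrt{\frac{2}{n}\ln(1/\delta) + (k+1)\ln 2}$ could be parsed with $(k+1)\ln 2$ outside the $\frac{2}{n}$ factor, but the algebra above shows the intended and correct statement places both $\ln(1/\delta)$ and $(k+1)\ln 2$ inside the $\frac{2}{n}$ scaling; I would make this explicit so the derivation is airtight. A secondary routine check is that the bound is only meaningful (i.e. $\delta < 1$) in the regime where $\lambda$ is large enough, which is automatically guaranteed by the hypothesis on $\lambda$.
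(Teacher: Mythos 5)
Your proof is correct and is exactly the standard tail-bound inversion the paper implicitly relies on: the corollary is stated without proof as an immediate consequence of Lemma~\ref{lemma:Bretagnolle-Huber-Carol}, obtained by setting $2^{k+1}\exp(-n\lambda^2/2)\leq\delta$ and solving for $\lambda$, just as you do. You are also right that the intended threshold is $\lambda \geq \sqrt{\tfrac{2}{n}\bigl(\ln(1/\delta)+(k+1)\ln 2\bigr)}$, with both terms inside the $\tfrac{2}{n}$ scaling, so your explicit flagging of the parenthesization in the paper's display is a correct reading of a typographical slip rather than a mathematical disagreement.
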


\end{document}